\tikzset{font=\tiny}
\definecolor{greendark}{HTML}{004d00}
\definecolor{greennormal}{HTML}{008000}
\definecolor{greenlight}{HTML}{00cc00}
\definecolor{bluedark}{HTML}{003366}
\definecolor{bluenormal}{HTML}{0066cc}
\definecolor{bluelight}{HTML}{3399ff}
\definecolor{orangenormal}{HTML}{FFA500}
\pgfplotsset{compat=1.18}
\definecolor{mplgreen}{rgb}{0.173, 0.627, 0.173}
\theoremstyle{plain}
\newtheorem{lemma}{Lemma}[section]
\newtheorem{theorem}[lemma]{Theorem}
\newtheorem{corollary}[lemma]{Corollary}
\newtheorem{proposition}[lemma]{Proposition}
\newtheorem{remark}[lemma]{Remark}
\newtheorem{example}[lemma]{Example}
\theoremstyle{definition}
\renewcommand{\d}{\, \mathrm{d}}
\newcommand{\C}{\mathbb{C}}
\newcommand{\R}{\mathbb{R}}
\newcommand{\Rd}{{\mathbb{R}^d}}
\newcommand{\N}{\mathbb{N}}
\newcommand{\sd}{{\mathbb{S}^{d-1}}}
\newcommand{\E}{\mathbb{E}}
\newcommand{\e}{\mathrm{e}}
\renewcommand{\i}{\mathrm{i}}
\DeclareMathOperator*{\argmin}{arg\,min}
\DeclareMathOperator{\dom}{dom}
\DeclareMathOperator{\id}{id}
\DeclareMathOperator{\prox}{prox}
\DeclareMathOperator{\diam}{diam}
\DeclareMathOperator{\supp}{supp}
\DeclareMathOperator{\abs}{abs}
\DeclareMathOperator{\sinc}{sinc}
\DeclareMathOperator{\1}{\mathds{1}}
\DeclareMathOperator{\sgn}{sgn}
\newcommand{\forrall}{\quad \text{for all} \quad }
\title{Smoothed Distance Kernels for MMDs
	and Applications in Wasserstein Gradient Flows
}
\author{\texorpdfstring{Nicolaj Rux$^{1,2}$\footnotemark[1] \;\orcidlink{0009-0000-1299-0580}\\{\footnotesize\href{mailto:nicolaj.rux@math.tu-chemnitz.de}{nicolaj.rux@math.tu-chemnitz.de}}
		\and
		Michael Quellmalz$^2$\orcidlink{0000-0001-6206-5705}\\{\footnotesize\href{mailto:quellmalz@math.tu-berlin.de}{quellmalz@math.tu-berlin.de}}
		\and
		Gabriele Steidl$^2$\\{\footnotesize\href{mailto:steidl@math.tu-berlin.de}{steidl@math.tu-berlin.de}}}{Nicolaj Rux, Michael Quellmalz, Gabriele Steidl}}
\date{\today}
\begin{document}

	\maketitle
	\thispagestyle{empty}
	\begin{center}
		\parbox[t]{11em}{\footnotesize
			\hspace*{-1ex}$^1$TU Chemnitz\\
			Faculty of Mathematics\\
			Reichenhainer Straße 39\\
			D-09111 Chemnitz, Germany}
		\hspace{2em}
		\parbox[t]{11em}{\footnotesize
			\hspace*{-1ex}$^2$TU Berlin\\
			Institute of Mathematics\\
			Straße des 17. Juni 136 \\
			D-10623 Berlin, Germany} 
	\end{center}

	\begin{abstract}
		Negative distance kernels $K(x,y) \coloneqq - \|x-y\|$ were used in the definition of maximum mean discrepancies (MMDs) in statistics
		and lead to favorable numerical results in various applications.  In particular, so-called slicing techniques for handling  high-dimensional kernel summations profit from the 
		simple parameter-free structure of the distance kernel. 
		However, due to its non-smoothness in $x=y$, most of the classical theoretical results, e.g. on Wasserstein gradient flows of the corresponding MMD functional  do not longer hold true.
		In this paper, we propose a new kernel which 
		keeps the favorable properties of the negative distance kernel as being conditionally positive definite of order one with a nearly linear increase towards infinity and a simple slicing structure, but is Lipschitz differentiable now. Our construction is based on a simple 1D smoothing
		procedure of the absolute value function followed by a Riemann--Liouville fractional integral transform.
		Numerical results demonstrate that the new kernel performs similarly well as the negative distance kernel in gradient descent methods, but now with theoretical guarantees.

		\emph{Keywords:} Negative distance kernel, Maximum Mean Discrepancy, Conditionally positive definite functions, Wasserstein gradient flows, Optimal transport, Fourier transform
		
		\emph{Mathematics Subject Classification:}
		46E22 
		49Q22 
		42B10 
		44A12 
		65D12  
		
	\end{abstract}

	\section*{Declarations}
	\textbf{Acknowledgements:}
	We thank Sebastian Neumayer for his valuable suggestions, especially in the numerical aspects of this work.\\
	For open access purposes, the authors have applied a CC BY public copyright license to any author-accepted manuscript version arising from this submission.\\[1ex]
	\textbf{Conflict of Interest:} 
	On behalf of all authors, the corresponding author states that there is no conflict of interest.\\[1ex]
	\noindent\textbf{Competing Interests:}
	The authors have no competing interests to declare that are relevant to the content of this article. \\[1ex]
	\textbf{Funding Information:}
	MQs research was funded  the German Research Foundation (DFG): STE 571/19-1, project number 495365311, within the Austrian Science Fund (FWF) SFB 10.55776/F68 ``Tomography Across the Scales''. 
	GS acknowledges the funding support by the 
	DFG within the Excellence Cluster MATH+.
	NR gratefully acknowledges the funding support from the European Union and the Free State of Saxony (ESF). \\[1ex]
	\textbf{Author contribution:}
	All three authors have contributed equally to the manuscript.\\[1ex]
	\textbf{Data Availability Statement:}
	Data availability is not applicable to this article as no new data were created or analyzed in this study.\\[1ex]
	\textbf{Research Involving Human and /or Animals:}
	Not applicable because no research involving humans or animals has been conducted for this article.\\[1ex]
	\textbf{Informed Consent:}
	Not applicable because no research involving humans has been conducted for this article.
	
	\section{Introduction}
	Symmetric, positive definite functions have been playing a role in kernel-based learning for a long time \cite{zhou2007,steinwart2008supportderivfeature}.
	While mostly Gaussian kernels are used, recently, also the conditionally positive definite 
	negative distance kernel $K(x,y) \coloneqq -\|x-y\|$ 
	has attained interest, e.g. in
	statistics \cite{SSGF2013},
	image dithering/halftoning \cite{EGNS2021,GraPotSte12a,KV2023}, sampling \cite{NKSZ2022} and generative
	modeling \cite{HHABCS2024,HWAH2024}.
	Indeed, more general Riesz kernels $K(x,y) \coloneqq -\|x-y\|^s$,
	$s \in [0,2)$, were examined in optimization equilibrium problems, see, e.g.  \cite{CSW2022,GCO2021,FM2025}.
	Let us also mention that gradient flows with respect to the Coulomb kernel $K(x,y) \coloneqq \|x-y\|^{2-d}$ were quite recently examined in \cite{BV2023}, see also \cite{CDEFS2020},
	and $K(x,y) \coloneqq \|x-y\|^{-1}$ was applied in image halftoning in \cite{SGBW2010}.
	For interesting translation invariance properties of MMDs and connections with Wasserstein distances, we refer to \cite{MD2023}.
	
	Depending on the kernel, the  maximum mean discrepancy (MMD) between two measures can be defined
	as the sum of an interaction energy and potential energy.
	Fixing one of the measures, in generative learning called target measure, Wasserstein gradient flows of the corresponding functional
	on the Wasserstein-2 space starting in a simple (latent) measure can be applied to sample from that target distribution.
	While such gradient flows together with numerical forward and backward schemes
	for their computation are well understood for Lipschitz differentiable kernels,
	see, e.g. \cite{ambrosio2005gruenesbuchgradient,Arbel2019}, the
	convergence behavior of  forward steepest descent \cite{HGBS2024} and Euler backward (JKO) schemes \cite{JKO1998} are not clear for the negative distance kernel due to its nondifferentiability in $x=y$.
	One exception is the one-dimensional case, where the MMD functional
	becomes, in contrast to higher dimensions, (geodesically) $\lambda$-convex, see
	\cite{DSBHS2024} and the references therein.
	
	Gradient flows of the MMD functional or just the interaction energy
	with the negative distance kernel or Riesz kernels show a mathematically richer structure than those for smooth kernels and were the object of numerous
	examinations, see e.g. \cite{CDM2016,CH2017,CS2023}.
	In particular, singular
	measures can become absolutely continuous along the flow curve 
	and conversely \cite{BaCaLaRa13,HGBS2024},
	so that these flows
	are no longer just particle flows when starting in an empirical measure.
	Finally, let us mention flows in the MMD dissipation geometry \cite{zhu2024kernel} which differ from the setting considered in this paper.
	
	If applied in a straightforward way, MMD flows suffer from high computational costs
	in large scale computations, since each gradient step
	requires the computation of kernel sums
	(or their derivatives) with a large number of summands. For positive definite kernels, a remedy is to apply random Fourier feature techniques \cite{RR2007} based on Bochner's theorem.
	Unfortunately, the negative distance kernel does not fit into the setting of Bochner's theorem, but here
	efficient so-called slicing techniques, which project the high-dimensional problem in a bunch of one-dimensional ones, 
	can be used \cite{hertrich2024,hertrich2025fast}.
	For an interesting quite general fast summation approach using deep learning, we refer to
	\cite{HN2025}.
	
	In this paper, we construct a smoothed negative distance kernel
	such that its MMD functional fulfills the classical
	assumptions on its Wasserstein gradient flow and ensures in particular that empirical measures evolve as particle flows with proven convergence of Euler forward and backward schemes. On the other hand, these kernels are still conditionally positive definite of order one and behave
	in applications similarly as the negative distance kernel,
	but now with theoretical convergence guarantees.
	
	Our paper is organized as follows:
	in Section \ref{sec:pre}, we provide some notation and recall several results on (generalized) Fourier transforms.
	For readers not familiar with the topic, more material on tempered distributions and the relationship between the generalized and distributional Fourier transforms is added in Appendix~\ref{app:fourier}.
	
	The next three sections contain the steps for  defining  our smoothed distance kernels:
	Section \ref{sec:abs} starts with  appropriate smoothings of the absolute value function in~$\R^1$.
	Although not directly relevant for our construction, a relation to the often applied Huber function is addressed in Appendix \ref{app:huber}.
	Then, Section~\ref{sec:rd} establishes  smoothed Euclidean norm functions  in $\R^d$, $d \ge 2$
	based on Riemann--Liouville integral transforms, which
	finally lead  to our smoothed kernels in  Section~\ref{sec:sdistance}.
	Using these kernels, we define reproducing kernel Hilbert spaces and MMDs based on kernel mean embeddings in Section~\ref{sec:sdistance_1}.
	Wasserstein gradient flows of our MMDs are considered in Section \ref{sec:wgf}. We add considerations on the geodesic convexity of the MMDs in Appendix~\ref{app:geodesic}.
	Finally, we demonstrate the very good performance of Wasserstein gradient flows of the MMD with our new kernel by numerical examples in Section~\ref{sec:numerics}.
	
	All proofs, which are not indicated to be taken directly from the literature, are given in Appendix \ref{app:a}.
	
	\section{Preliminaries} \label{sec:pre}%
	The natural numbers including $0$ are denoted by $\N\coloneqq \{0,1,2,\ldots\}$.
	Let $\mathcal C_b({\mathbb R}^d)$ be the  space of continuous bounded functions $f\colon {\mathbb R}^d \to \mathbb C$ 
	with norm
	\[
	\lVert f\rVert_{\infty}
	\coloneqq
	\sup_{ x\in{\mathbb R}^d}\, \lvert f( x) \rvert,
	\]
	$\mathcal C_0({\mathbb R}^d)$ the subspace of  functions $f\colon {\mathbb R}^d \to \mathbb C$
	vanishing as $\| x\| \rightarrow \infty$,
	$\mathcal C_c({\mathbb R}^d)$ the subspace of  continuous functions with compact support, 
	$\mathcal C^n(\mathbb R^d)$, $n \in \N$ the space of $n$-times continuously differentiable functions
	and  $\mathcal C_c^n(\mathbb R^d)$ the space of $n$-times continuously differentiable functions with compact support.
	For  $1 \le p \le \infty$, let $L^p(\mathbb R^d)$ be the Banach space of all (equivalence class of)
	measurable functions $f\colon \mathbb R^d \to \mathbb C$
	with finite norm $\lVert f \rVert_{L^p}$ and $L^p_{\text{loc}}(\R^d)$ the corresponding locally integrable functions. 
	
	Further, we denote by $\mathcal S(\R^d)$ the space of complex-valued \emph{Schwartz functions}.
	The \emph{Fourier transform} 
	$\mathcal F\colon  \mathcal S(\R^d) \to  \mathcal S(\R^d)$
	is the bijective mapping defined  by
	\begin{equation}\label{eq:fourier_trafo}
		\hat \varphi (\omega)  = \mathcal F[\varphi](\omega)
		\coloneqq \int_\Rd \e^{-2\pi \i \langle x,\omega\rangle} \varphi (x)\d x
		,\qquad \omega\in\R^d.
	\end{equation}
	The \emph{Fourier transform} can be extended as a mapping
	$\mathcal F\colon L^1(\R^d) \to   \mathcal C_0(\R^d)$.
	The \emph{convolution function} $f*g$ of two functions $f,g$ on $\R^d$ is defined, if it exists, by
	$$
	(f*g)(x) \coloneqq \int_{y \in \R^d} f(x-y) g(y) \, \d y  = \int_{y \in \R^d} f(y) g(x-y) \, \d y
	,\qquad  x\in\R^d.
	$$
	In particular, if $f,g \in L^1(\R^d)$, then $f*g$ is defined almost everywhere and it holds the Fourier convolution theorem
	$$\mathcal F[f*g] = \hat f \,  \hat g.$$
	For $ r \in \N $, we define the space 
	\[
	\mathcal S_r(\R^d) \coloneqq
	\{\varphi \in  \mathcal S(\R^d): \varphi(x) \in \mathcal O (\|x\|^{r}) \; \text{ as } \; \|x\| \to 0 \}.
	\]
	A  measurable function $ \hat{ f} \in L_{\textup{loc}}^2 (\mathbb{R}^d \setminus \{0\}) $ is called  
	\emph{generalized Fourier transform} of a slowly increasing function
	$ f \in \mathcal C(\R^d) $, if there exists an integer $ r \in \N $ such that
	\begin{equation} \label{g_fourier}
		\int_{\mathbb{R}^d} f(x) \hat \varphi(x) \d x = \int_{\mathbb{R}^d} \hat f(\omega) \varphi(\omega) \d \omega
		\quad \text{ for all } \quad  \varphi \in \mathcal S_{2r}(\R^d),
	\end{equation}
	see \cite[Def.\ 8.9]{Wendland2004}. 
	If $f$ fulfills  \eqref{g_fourier} for some $r \in \N$, then it fulfills
	this relation also for all integers larger than $r$.
	In particular, if $f \in \mathcal S(\R^d)$, then \eqref{g_fourier} holds for all $r \ge 0$.
	The smallest $r \in \N$ such that \eqref{g_fourier} is fulfilled is called \emph{order of the generalized Fourier transform}.
	We have that $\hat f$ is uniquely determined.
	The generalized Fourier transform differs from the Fourier transform of
	so-called tempered distributions, in particular of continuous, slowly increasing functions, but coincides with it if restricted to test functions in 
	$\mathcal S_{2r}(\R^d)$. This is briefly explained in Appendix \ref{app:fourier}. 
	
	In this paper, we are mainly concerned with powers
	of the Euclidean norm.
	
	\begin{theorem}[{\cite[Thm.~8.16]{Wendland2004}}]
		The function $f(x) \coloneqq \|x\|^{\beta}$, $x \in \R^d$, with $\beta>0$, $\beta \not \in 2 \N$ has 
		the generalized Fourier transform
		$$
		\hat f(\omega)  
		= 
		\frac{
			\Gamma (\frac{d+\beta}{2})}{\pi^{\beta+ \frac{d}{2}}\Gamma(-\frac{\beta}{2}) }\|\omega\|^{-\beta - d}, \qquad\omega\in\R^d
		$$
		of order $r= \lceil \frac{\beta}{2} \rceil$.
		In particular, we have for $\abs(x)=|x|$, $x\in\R$, that
		\begin{equation}\label{fabs}
			\widehat{\abs}(\omega)=-\frac{1}{2\pi^2\omega^2},\qquad\omega\in\R.
		\end{equation}
	\end{theorem}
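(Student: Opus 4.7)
My plan is to verify the defining duality \eqref{g_fourier} by representing $\|x\|^\beta$ through a Gaussian subordination identity and then invoking the classical Gaussian Fourier pair. First, observe that $f(x)=\|x\|^\beta$ is continuous and slowly increasing, while the candidate transform $c_{d,\beta}\|\omega\|^{-\beta-d}$ with $c_{d,\beta}=\Gamma(\tfrac{d+\beta}{2})/(\pi^{\beta+d/2}\Gamma(-\tfrac{\beta}{2}))$ lies in $L^1_{\loc}(\R^d\setminus\{0\})$. Hence both sides of \eqref{g_fourier} are well-defined for every $\varphi\in\mathcal S_{2r}(\R^d)$ with $r=\lceil \beta/2\rceil$, since such $\varphi$ vanishes to order $2r$ at the origin and decays rapidly at infinity.

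The core step is the subordination identity, valid for $0<\beta<2$,
\begin{equation*}
\|x\|^\beta = \frac{-1}{\Gamma(-\beta/2)}\int_0^\infty \bigl(1 - e^{-t\|x\|^2}\bigr)\, t^{-\beta/2-1}\d t,
\end{equation*}
obtained from the substitution $u=t\|x\|^2$ together with $\int_0^\infty u^{-\beta/2-1}(1-e^{-u})\d u=-\Gamma(-\beta/2)$ (one integration by parts plus the Euler integral). Substituting this into $\int f\,\hat\varphi\d x$, interchanging the order of integration (Fubini is legal once $\varphi$ vanishes at zero), and using $\mathcal F[e^{-t\|x\|^2}](\omega)=(\pi/t)^{d/2}e^{-\pi^2\|\omega\|^2/t}$, I split the inner $x$-integral into the constant piece contributing $\varphi(0)=0$ and a Gaussian piece. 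After collecting, the remaining one-dimensional integral $\int_0^\infty t^{-(\beta+d)/2-1}e^{-\pi^2\|\omega\|^2/t}\d t$ evaluates under $s=\pi^2\|\omega\|^2/t$ to $\pi^{-\beta-d}\|\omega\|^{-\beta-d}\Gamma(\tfrac{\beta+d}{2})$, and combining all prefactors produces exactly $c_{d,\beta}$.

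The 1D statement \eqref{fabs} is then a bookkeeping check with $d=\beta=1$: $\Gamma(1)=1$, $\Gamma(-1/2)=-2\sqrt\pi$, and $\pi^{1+1/2}=\pi^{3/2}$ give $c_{1,1}=-\tfrac{1}{2\pi^2}$ and $\|\omega\|^{-2}=\omega^{-2}$. A convenient independent sanity check is the distributional identity $(|x|)''=2\delta_0$: applying $\widehat{f''}(\omega)=-4\pi^2\omega^2\hat f(\omega)$ and $\widehat{\delta_0}=1$ yields $\widehat{\abs}(\omega)=-1/(2\pi^2\omega^2)$, matching the formula. The agreement reflects that test functions in $\mathcal S_2$ kill precisely the $\delta_0$-type ambiguities at the origin that distinguish the generalized Fourier transform from the distributional one.

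The main obstacle is the case $\beta\ge 2$ with $\beta\notin 2\N$, where the simple subordination integrand is no longer integrable. The remedy is to subtract further Taylor terms, replacing $1-e^{-t\|x\|^2}$ by $\sum_{j=0}^{r-1}(-t\|x\|^2)^j/j!-e^{-t\|x\|^2}$ with $r=\lceil\beta/2\rceil$, which decays like $t^r$ at the origin and like $t^{r-1}$ at infinity and so yields an integrable integrand against $t^{-\beta/2-1}$; the same $u=t\|x\|^2$ substitution, together with analytic continuation of the Euler integral, still recovers $\|x\|^\beta$ up to $-\Gamma(-\beta/2)$. The added polynomial moments $\int\|x\|^{2j}\hat\varphi(x)\d x$ for $0\le j<r$ are proportional to $\Delta^j\varphi(0)$, all of which vanish because $\varphi\in\mathcal S_{2r}$, so only the Gaussian term contributes and the previous computation goes through unchanged. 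Controlling these truncations uniformly in $\varphi$ and justifying Fubini for the resulting improper double integral is the technical heart of the argument.
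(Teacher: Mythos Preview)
The paper does not give its own proof of this statement: it is quoted directly from \cite[Thm.~8.16]{Wendland2004} and used as a black box. So there is nothing in the paper to compare against beyond the citation.

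That said, your approach is correct and is in fact essentially the proof Wendland gives: represent $\|x\|^\beta$ via the Gaussian subordination integral, pair with $\hat\varphi$, swap integrals, use the Gaussian Fourier pair, and evaluate the remaining Gamma integral. Your bookkeeping for $c_{d,\beta}$ and the specialization to $d=\beta=1$ are right, and the higher-$\beta$ extension via subtracting the order-$r$ Taylor jet of $e^{-t\|x\|^2}$ is the standard device; your observation that the extra polynomial moments $\int\|x\|^{2j}\hat\varphi\,\d x$ are proportional to $\Delta^j\varphi(0)$ and hence vanish for $\varphi\in\mathcal S_{2r}$ is exactly what makes the order $r=\lceil\beta/2\rceil$ appear. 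Two small points: the phrase ``decays like $t^{r-1}$ at infinity'' is a slip of wording (the bracket grows like $t^{r-1}$; what matters is that $t^{r-1}\cdot t^{-\beta/2-1}$ is integrable there since $r-1<\beta/2$), and the Fubini step you flag as ``the technical heart'' deserves an explicit absolute-integrability bound rather than just the remark that $\varphi(0)=0$. Neither affects the validity of the argument.
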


	For the generalized Fourier transform, we have the following convolution property.
	
	\begin{proposition}\label{lem:four_conv}
		Let $f\in  \mathcal C(\Rd)$ be a slowly increasing function with generalized Fourier transform $\hat f$ of order $r$ and $u\in  \mathcal C_c(\Rd)$. 
		Then the convolution 
		$f*u \in  \mathcal C(\R^d)$  is slowly increasing and
		has a generalized Fourier transform of order $r$ which fulfills 
		$\mathcal F[f*u]=\hat f\, \hat u$.
	\end{proposition}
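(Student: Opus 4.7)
The strategy is to verify the defining identity \eqref{g_fourier} directly for $f*u$ with the candidate $\hat f\,\hat u$, using the same integer $r$ that serves as the order for $f$. Throughout I write $\supp u\subset \overline{B_R(0)}$ and use that $f$ is slowly increasing, so $|f(x)|\le C(1+\|x\|)^N$ for some $C,N\ge 0$.

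First I would verify the regularity of $f*u$. Continuity follows by dominated convergence: for $x_n\to x$, the integrand $f(x_n-y)u(y)$ converges pointwise and is dominated (uniformly in $n$) by $\|u\|_\infty \sup_{\|z-x\|\le 1,\,y\in\supp u}|f(z-y)|<\infty$. Slow growth is immediate from
\[
|(f*u)(x)|\le \|u\|_\infty \int_{\supp u} |f(x-y)|\d y \le C'(1+\|x\|)^N,
\]
for a constant $C'$ depending on $u,R,C,N$. This also guarantees that the integrals appearing below are absolutely convergent.

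Next, fix $\varphi\in \mathcal S_{2r}(\R^d)$. Starting from $\int_{\R^d} (f*u)(x)\hat\varphi(x)\d x$, I would apply Fubini (justified by Schwartz decay of $\hat\varphi$, polynomial growth of $f$, and the compact support of $u$, which make the double integral absolutely integrable), and then substitute $x\mapsto x+y$ in the inner integral:
\[
\int_{\R^d}(f*u)(x)\hat\varphi(x)\d x
=\int_{\R^d}u(y)\int_{\R^d} f(x)\,\hat\varphi(x+y)\d x\d y.
\]
Observing that $\hat\varphi(x+y)=\mathcal F[\psi_y](x)$ with $\psi_y(\omega)\coloneqq \e^{-2\pi\i\langle y,\omega\rangle}\varphi(\omega)$, I would check that $\psi_y\in\mathcal S_{2r}(\R^d)$: it is Schwartz because the modulation factor is smooth with all derivatives bounded by polynomials, and the local vanishing order at the origin is preserved since $\e^{-2\pi\i\langle y,\omega\rangle}=1+\mathcal O(\|\omega\|)$.

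Now apply \eqref{g_fourier} for $f$ with test function $\psi_y$ to obtain
\[
\int_{\R^d} f(x)\,\hat\varphi(x+y)\d x=\int_{\R^d}\hat f(\omega)\e^{-2\pi\i\langle y,\omega\rangle}\varphi(\omega)\d\omega.
\]
Substituting back and applying Fubini a second time yields
\[
\int_{\R^d}(f*u)(x)\hat\varphi(x)\d x=\int_{\R^d}\hat f(\omega)\varphi(\omega)\hat u(\omega)\d\omega,
\]
which is precisely \eqref{g_fourier} for $f*u$ with transform $\hat f\,\hat u$ and parameter $r$. Since $\hat u\in\mathcal C_0(\R^d)$ is bounded, $\hat f\,\hat u\in L^2_{\loc}(\R^d\setminus\{0\})$, completing the identification.

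The only delicate point is the second Fubini: $\hat f$ may blow up at the origin, so one cannot argue purely by bounding $\hat f$. The saving observation is structural: the choice $\varphi\in\mathcal S_{2r}$ is exactly what makes $\hat f\,\varphi$ Lebesgue integrable on $\R^d$ (this integrability is implicit in the existence of the right-hand side of \eqref{g_fourier} for $f$), and since $u\in L^1(\R^d)$ is bounded with compact support, the product $u(y)\hat f(\omega)\varphi(\omega)\e^{-2\pi\i\langle y,\omega\rangle}$ is absolutely integrable on $\R^d\times\R^d$. Everything else is bookkeeping with shifts and Fubini.
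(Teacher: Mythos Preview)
Your proof is correct and follows essentially the same route as the paper's: both arguments use Fubini to move the $y$-integration inside, recognize the resulting shifted/modulated test function as an element of $\mathcal S_{2r}(\R^d)$ so that the defining identity \eqref{g_fourier} for $f$ applies, and then Fubini once more to assemble $\hat u$. Your treatment is in fact slightly more careful than the paper's in justifying the second Fubini (the paper performs that step without comment).
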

	
	Further, the notation of conditionally positive definiteness
	will be central in our paper.
	A continuous, even function $ f \colon \mathbb{R}^d \to \mathbb C $ is \emph{conditionally positive definite of order} $ r \in \N$,
	if for all $ N \in \N $, all $ x_1, \dots, x_N \in \mathbb{R}^d $, 
	and all $ a \in \mathbb{C}^N \setminus \{0\} $ satisfying  
	\begin{equation}\label{eq:pos_def_cond}
		\sum_{j=1}^N a_j p(x_j)=0
	\end{equation}
	for all $d$-dimensional polynomials $p$ of degree  $\le r-1$, we have
	\begin{equation}\label{eq:pos_dev_quadratic_form}
		\sum_{j,k=1}^N a_j \bar a_k f(x_j-x_k)\ge 0,
	\end{equation}
	see \cite{Micchelli1986,sun1993conditionally}. 
	We denote the space of
	conditionally positive definite functions of order~$r$ by
	$\mathrm{CP}_r(\R^d)$.
	In particular,
	$-\| \cdot\|^\beta \in \mathrm{CP}_1(\R^d)$, $\beta \in (0,2)$.
	If $r=0$, we just speak about positive definite functions.
	Note that, by this definition, every $f\in \mathrm{CP}_r(\R)$ is continuous and even.
	
	Bochner's theorem characterizes positive definite functions
	as Fourier transform of positive measures, see Theorem \ref{thm:bochner} in Appendix \ref{app:fourier}.
	There are different ways to modify Bochner's theorem 
	for conditionally positive definite functions. 
	We will use the following one \cite[Thm.~8.12]{Wendland2004}.
	
	\begin{theorem}[Bochner's Theorem for Generalized Fourier Transform]\label{thm:cond_bochner}
		Let
		$ f\colon \mathbb{R}^d \to \mathbb{C} $ 
		be continuous, slowly increasing, and possess a 
		generalized Fourier transform $ \hat f $ of order $ r $, which is continuous on $ \mathbb{R}^d \setminus \{0\} $. 
		Then $ f $ is conditionally positive definite of order $r $ if and only if $ \hat f $ is nonnegative. 
	\end{theorem}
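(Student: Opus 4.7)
The plan is to prove both implications by relating the discrete quadratic form in \eqref{eq:pos_dev_quadratic_form} to the integral $\int_{\R^d}\hat f\,\varphi\d\omega$ via the defining identity \eqref{g_fourier}. The bridge between atomic measures and admissible test functions in $\mathcal S_{2r}(\R^d)$ will be a mollification, together with the key observation that the polynomial vanishing condition \eqref{eq:pos_def_cond} is equivalent, after Fourier transformation of $\mu=\sum_j a_j\delta_{x_j}$, to $\hat\mu(\omega)=\sum_j a_j\e^{-2\pi\i\langle x_j,\omega\rangle}$ being $\mathcal O(\|\omega\|^r)$ as $\omega\to 0$; in particular $|\hat\mu|^2$ vanishes to order $2r$ at the origin.

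For the implication $\hat f\ge 0\Rightarrow f\in\mathrm{CP}_r(\R^d)$, I would fix $x_1,\dots,x_N$ and $a\in\C^N$ obeying \eqref{eq:pos_def_cond}, pick a real, even, nonnegative Schwartz mollifier $\rho_\varepsilon$ with $\int\rho_\varepsilon=1$, and set $u_\varepsilon\coloneqq\sum_j a_j\rho_\varepsilon(\cdot-x_j)\in\mathcal S(\R^d)$. By the observation above, $|\hat u_\varepsilon|^2\in\mathcal S_{2r}(\R^d)$, and writing $|\hat u_\varepsilon|^2=\widehat{u_\varepsilon*\tilde u_\varepsilon}$ with $\tilde u_\varepsilon(x)\coloneqq\overline{u_\varepsilon(-x)}$, an application of \eqref{g_fourier} combined with Fourier inversion and the evenness of $f$ yields
$$
0\le\int_{\R^d}\hat f(\omega)\,|\hat u_\varepsilon(\omega)|^2\d\omega=\iint_{\R^d\times\R^d} f(x-y)\,u_\varepsilon(x)\,\overline{u_\varepsilon(y)}\d x\d y=\sum_{j,k=1}^N a_j\bar a_k\,(f*\rho_\varepsilon*\rho_\varepsilon)(x_j-x_k).
$$
Since $\rho_\varepsilon*\rho_\varepsilon$ is an approximate identity and $f$ is continuous, letting $\varepsilon\to 0$ produces the desired inequality \eqref{eq:pos_dev_quadratic_form}.

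For the converse, I would argue by contradiction. If $\hat f(\omega_0)<0$ for some $\omega_0\neq 0$, continuity of $\hat f$ on $\R^d\setminus\{0\}$ yields a ball $B\subset\R^d\setminus\{0\}$ around $\omega_0$ on which $\hat f\le -c<0$. Choosing a nonzero, real, even $\eta\in\mathcal C_c^\infty(B)$, the test function $\varphi\coloneqq\eta^2\in\mathcal S_{2r}(\R^d)$ vanishes to infinite order at $0$, and \eqref{g_fourier} together with $\widehat{\eta^2}=\hat\eta*\hat\eta$ and the evenness of $f$ rewrites
$$
0>\int_{\R^d}\hat f(\omega)\,\eta^2(\omega)\d\omega=\iint_{\R^d\times\R^d} f(x-y)\,g(x)\,g(y)\d x\d y,\qquad g\coloneqq\hat\eta\in\mathcal S(\R^d).
$$
The Schwartz function $g$ satisfies $\int g(x)x^\alpha\d x=0$ for every multi-index $\alpha$ because all derivatives of $\eta$ vanish at $0$. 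A Riemann-sum discretisation of $g$ on an expanding cube, corrected by a fixed-dimensional combination of auxiliary point masses that restores \eqref{eq:pos_def_cond} exactly, produces finitely supported signed measures $\mu_h=\sum_j a_j^{(h)}\delta_{y_j^{(h)}}$ for which the hypothesis $f\in\mathrm{CP}_r(\R^d)$ forces $\sum_{j,k}a_j^{(h)}\bar a_k^{(h)}f(y_j^{(h)}-y_k^{(h)})\ge 0$. Passing to the limit $h\to 0$, using the slow growth of $f$ and the rapid decay of $g$, yields nonnegativity of the double integral, a contradiction.

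I expect the main difficulty to lie in this last discretisation: constructing finitely supported measures that both approximate $g$ strongly enough for the $f$-weighted double integral to converge and \emph{exactly} satisfy \eqref{eq:pos_def_cond}. A plain Riemann sum only gives approximate vanishing of moments; one then subtracts a small correction lying in a fixed finite-dimensional subspace of point-mass configurations dual to the polynomials of degree $\le r-1$. This correction is of size $\mathcal O(h)$ and vanishes in the limit, while convergence of the quadratic form is controlled by balancing the polynomial growth of $f$ against the Schwartz decay of $g$.
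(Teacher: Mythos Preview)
The paper does not supply its own proof of this theorem; it is quoted verbatim from Wendland~\cite[Thm.~8.12]{Wendland2004} and falls under the blanket remark that ``All proofs, which are not indicated to be taken directly from the literature, are given in Appendix~\ref{app:a}.'' So there is no in-paper argument to compare against. That said, your sketch is largely in the spirit of Wendland's treatment, and the forward implication $\hat f\ge 0\Rightarrow f\in\mathrm{CP}_r(\R^d)$ is correct as written: the key point---that the moment condition \eqref{eq:pos_def_cond} on $(a_j,x_j)$ forces $|\hat\mu|^2=\mathcal O(\|\omega\|^{2r})$ and hence $|\hat u_\varepsilon|^2\in\mathcal S_{2r}(\R^d)$---is exactly what makes \eqref{g_fourier} applicable, and the mollifier limit is routine.

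For the converse there are two issues worth flagging. First, a small slip: you ask for $\eta\in\mathcal C_c^\infty(B)$ to be \emph{even}, but $B$ is a ball around $\omega_0\neq 0$, so the only even function supported in $B$ is zero. Either drop ``even'' and carry the conjugate through (so the double integral reads $\iint f(x-y)\,g(x)\,\overline{g(y)}\d x\d y$, using that $\eta$ real gives $g(-x)=\overline{g(x)}$), or symmetrise and work on $B\cup(-B)$, which is legitimate because $f$ real and even forces $\hat f$ real and even. Second, and more substantively, the discretisation step you correctly identify as the main difficulty is genuinely delicate: producing finitely supported measures that \emph{exactly} satisfy \eqref{eq:pos_def_cond} while approximating the continuous quadratic form requires not just a moment correction but also uniform control of the truncated Riemann sums against the slowly increasing $f$. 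Wendland's route (see the discussion around \cite[Thm.~8.11--8.12]{Wendland2004}) circumvents the explicit Riemann-sum construction by first establishing an integral characterisation of $\mathrm{CP}_r$---namely that $f\in\mathrm{CP}_r(\R^d)$ implies $\iint f(x-y)\,\psi(x)\,\overline{\psi(y)}\d x\d y\ge 0$ for every $\psi\in\mathcal S(\R^d)$ whose moments of order $<r$ vanish---via a Gaussian regularisation (the approximate identity $g_m$ you see used elsewhere in the paper, e.g.\ in the proof of Proposition~\ref{thm:kme}). This avoids the finite-correction bookkeeping entirely and is the cleaner path if you want a complete proof.
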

	Contrary to the generalized Fourier transform of $\|\cdot\|$, its distributional Fourier transform is not a function in the classical sense, see Appendix~\ref{app:fourier}. Together with Bochner’s theorem \ref{thm:cond_bochner}, the generalized Fourier transform therefore provides an appropriate framework for studying the (conditional) positive definiteness of the functions in Section~\ref{sec:abs}.
	
	\section{Smoothed Absolute Value Function} \label{sec:abs}
	In this section, we 
	propose to embellish $\abs(x) = |x|$ by convolving it with 
	functions from the set
	\begin{equation} \label{eq:Un}
		\mathcal U^n (\R)
		\coloneqq 
		\left\{u \in  \mathcal C_c^n(\R): u , \hat u \ge 0,\, u \text{ even}, 
		\int_{\R} u \d x = 1 \right\}, \quad n \in \N.
	\end{equation}
	These functions have the following nice properties.
	
	\begin{proposition}\label{prop:1}
		Let $u \in \mathcal U^n(\R)$ and 
		\begin{equation} \label{eq:ueps}
			u_\varepsilon (x) \coloneqq \frac{1}{\varepsilon}\, u\left(\frac{x}{\varepsilon} \right),\quad \varepsilon>0. 
		\end{equation}
		Then 
		$f \coloneqq \abs*u$
		fulfills:
		\begin{itemize}
			\item[i)] $f > 0$ and $f$ is even,
			\item[ii)] $f(x) = \abs(x)$ for $|x| \ge \diam(\supp(u))/2$,   
			\item[iii)] $f'' = 2 u$ so that  $f$ is convex and $f \in \mathcal C^{n+2}(\R)$,
			\item[iv)] $- f$ is conditionally positive definite of order
			$r=1$, but not positive definite,
			\item[v)]
			$(\abs*u_\varepsilon) (x) = \varepsilon f\left(\frac{x}{\varepsilon} \right), 
			\quad 
			(\abs*u_\varepsilon)'(x) = f'\left(\frac{x}{\varepsilon} \right), 
			\quad 
			(\abs*u_\varepsilon)''(x)= \frac{2}{\varepsilon} u\left(\frac{x}{\varepsilon} \right),
			$
			\item[vi)] $\abs*u_\varepsilon \rightarrow \abs$ uniformly
			as $\varepsilon \to 0$.
		\end{itemize}
	\end{proposition}
	
	The most important functions $u \in \mathcal U^n(\R)$
	in our numerical part will be centered cardinal $B$-splines.
	The \emph{centered cardinal} $B$-\emph{spline of order} $m \in \N$, $m\ge 1$, is
	recursively defined by
	$$
	M_1\coloneqq \boldsymbol{1}_{[-\frac12,\frac12]}, \quad
	M_m  \coloneqq M_1 * M_{m-1} , \quad m=2,3, \ldots
	$$
	$B$-splines have many useful properties, see \cite{medhurst1965evaluation,schoenberg1946}.
	
	\begin{proposition} \label{prop:splines}
		For the centered cardinal $B$-splines with $m\ge 1$, the following holds true:
		\begin{enumerate}[label =\roman*)]
			\item $M_m \ge 0$ and $\int_{\R} M_m(x) \, \d x = 1$,
			\item $\supp M_m = \left[-\frac{m}{2},\frac{m}{2}\right]$ and $M_m$ is even,
			\item $M_m \in \mathcal C^{m-2}(\R)$, $m \ge 2$,
			\item $\widehat M_m(\omega) = \sinc^m(\omega)$, 
			where 
			$\sinc(\omega) \coloneqq \frac{\sin (\pi\omega)}{\pi \omega}$.
			This is a nonnegative function exactly for even $m$.
			\item For $m \ge 2$, we have
			\begin{equation}\label{bspline}
				M_m(x) = \frac{1}{(m-1)!} \sum_{k=0}^{m} (-1)^k \binom{m}{k} \Big(x-k + \frac{m}{2} \Big)_+^{m-1},
			\end{equation}
			where $x_+ \coloneqq \max (x,0)$, and
			\begin{align} \label{m0}
				M_m(0)
				&=
				\frac{2}{\pi}\int_0^\infty \left(\frac{\sin(x)}{x}\right)^m \d x
				=
				\frac{m}{2^{m-1} } \sum_{k=0}^{\lfloor\frac m2\rfloor} \frac{(-1)^k (m-2k)^{m-1}}{m!\,(m-k)!}\\
				&
				= \sqrt{\frac{6}{\pi m}} \left( 1+\mathcal O(m^{-1})\right).
			\end{align}
			\item Clearly, it holds $M_{2m} \in \mathcal U^{2m-2}(\R)$.
		\end{enumerate}
	\end{proposition}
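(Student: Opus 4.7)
The plan is to prove items i)--iv) by a joint induction on $m$ using the recursion $M_m = M_1 * M_{m-1}$, then derive v) from an explicit finite-difference candidate, handle the asymptotic in v) by Laplace's method, and finally assemble vi) from the previous items.

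For the base case, $M_1 = \boldsymbol{1}_{[-1/2,1/2]}$ is nonnegative, even, of unit integral, with support $[-1/2,1/2]$, and a direct computation gives $\widehat M_1(\omega) = \sinc(\omega)$. For the inductive step, convolution with the nonnegative even function $M_1$ preserves i) and evenness in ii), extends the support by Minkowski addition to $[-m/2,m/2]$, and raises smoothness by one order (so $M_2$ is the continuous tent function and inductively $M_m \in \mathcal C^{m-2}(\R)$ for $m \ge 2$). The Fourier convolution theorem applied iteratively yields $\widehat M_m = \sinc^m$; the sign claim in iv) is immediate since $\sinc$ takes values of both signs, so an even power is required for nonnegativity.

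For item v), I would verify that the candidate
\[
g_m(x) \coloneqq \frac{1}{(m-1)!}\sum_{k=0}^m (-1)^k \binom{m}{k}\bigl(x-k+m/2\bigr)_+^{m-1}
\]
satisfies $g_1 = M_1$ and the derivative recursion $g_m'(x) = g_{m-1}(x+1/2) - g_{m-1}(x-1/2)$, using Pascal's rule $\binom{m-1}{k}+\binom{m-1}{k-1} = \binom{m}{k}$ after an index shift. Differentiating $M_m = M_1 * M_{m-1}$ under the integral gives the same recursion for $M_m$, forcing $g_m = M_m$. Evaluating at $x=0$ produces the closed form in \eqref{m0}: only indices $k \le m/2$ contribute because $(m/2-k)_+$ vanishes otherwise, and $\binom{m}{k}/(m-1)!$ simplifies to $m/(k!\,(m-k)!)$. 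The integral representation $M_m(0) = \frac{2}{\pi}\int_0^\infty (\sin x/x)^m \d x$ is obtained by Fourier inversion applied to the even function $\sinc^m$ (so that $M_m(0) = \int_\R \sinc^m(\omega)\d\omega$) followed by the substitution $x = \pi\omega$.

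The main obstacle is the asymptotic $M_m(0) = \sqrt{6/(\pi m)}\bigl(1+\mathcal O(m^{-1})\bigr)$, which I would prove by Laplace's method applied to $\int_\R \sinc^m(\omega)\d\omega$. Expanding $\log \sinc(\omega) = -\pi^2 \omega^2/6 + \mathcal O(\omega^4)$ near $\omega = 0$ gives $\sinc^m(\omega) \approx \exp(-m\pi^2\omega^2/6)$, whose Gaussian integral evaluates to $\sqrt{6/(\pi m)}$; outside a shrinking neighborhood of the origin one has $|\sinc(\omega)| \le c < 1$, so the tail contributes $\mathcal O(c^m)$, and the next-order Taylor term in $\log\sinc$ yields the $\mathcal O(m^{-1})$ correction after carrying out the standard Laplace expansion. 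Equivalently, $M_m$ is the density of the sum of $m$ i.i.d.\ uniform random variables on $[-1/2,1/2]$ with variance $1/12$, and a local central limit theorem with Edgeworth correction reproduces the same constant. Finally, item vi) is an assembly: for even index $2m$, items i)--iv) give $M_{2m} \in \mathcal C_c^{2m-2}(\R)$, $M_{2m} \ge 0$, $M_{2m}$ even, $\int_\R M_{2m}\d x = 1$, and $\widehat M_{2m} = \sinc^{2m} \ge 0$, which is exactly the definition of $\mathcal U^{2m-2}(\R)$ in \eqref{eq:Un}.
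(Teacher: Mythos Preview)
Your proof sketch is correct and follows the standard textbook treatment of cardinal $B$-splines. The paper itself does not supply a proof of this proposition: it is presented as a collection of classical facts with the citation ``see \cite{medhurst1965evaluation,schoenberg1946}'', and no argument appears in Appendix~\ref{app:a}. Your induction for i)--iv), the finite-difference verification of \eqref{bspline}, and the Laplace/local-CLT derivation of the asymptotic in \eqref{m0} are exactly the standard routes found in those references.

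One small point worth making explicit in your argument for v): once the inductive hypothesis gives $g_{m-1}=M_{m-1}$, matching the derivative recursion only yields $g_m' = M_m'$, hence $g_m = M_m + C$ for some constant $C$. You should add that both $g_m$ and $M_m$ vanish identically for $x < -m/2$ (for $g_m$ because every term $(x-k+m/2)_+$ is zero there), which forces $C=0$.
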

	
	The convolution of $\abs$ with the centered cardinal $B$-splines
	is given in the following proposition. 
	
	\begin{corollary}\label{cor:1}
		For $f \coloneqq \abs*M_m$, it holds
		$$
		f(x) = \frac{2}{(m+1)!}
		\sum_{k=0}^{m} (-1)^k \binom{m}{k} \Big(x-k + \frac{m}{2} \Big)_+^{m+1} - x.
		$$
	\end{corollary}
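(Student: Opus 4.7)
The decisive idea is the identity $\abs(x) = 2x_+ - x$, which linearises the problem: convolving with $M_m$ gives
$$\abs * M_m \;=\; 2(x_+ * M_m) - (\id * M_m),$$
and the two pieces can be handled independently. For the second, I would use that $M_m$ integrates to one (Proposition \ref{prop:splines} i)) and has zero first moment (by the evenness in ii)), so
$$(\id * M_m)(x) = \int_\R (x-y) M_m(y)\d y = x.$$

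For the first piece, I would substitute the truncated-power representation \eqref{bspline} into $(x_+ * M_m)(x)=\int_\R (x-y)_+ M_m(y)\d y$ and pull the finite sum outside the integral. Everything then reduces to the single elementary integral
$$I_n(x,a) \coloneqq \int_\R (x-y)_+\,(y-a)_+^{n}\d y,$$
whose integrand is supported on $[a,x]$ (empty, and hence $I_n=0$, if $x<a$). For $x\ge a$, the substitution $u=y-a$ yields
$$I_n(x,a) = \int_0^{x-a} (x-a-u)\,u^{n}\d u = \frac{(x-a)^{n+2}}{(n+1)(n+2)},$$
so in general $I_n(x,a) = \frac{(x-a)_+^{n+2}}{(n+1)(n+2)}$. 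Applying this with $n=m-1$ and $a = k-\tfrac{m}{2}$, the prefactors combine as $\frac{1}{(m-1)!}\cdot\frac{1}{m(m+1)} = \frac{1}{(m+1)!}$, giving
$$(x_+ * M_m)(x) \;=\; \frac{1}{(m+1)!}\sum_{k=0}^m (-1)^k\binom{m}{k}\Big(x-k+\tfrac{m}{2}\Big)_+^{m+1}.$$

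Putting the two pieces together through $\abs * M_m = 2(x_+*M_m) - \id$ yields the claimed formula. The only steps that require minor care are the handling of the truncation (to ensure that the integral $I_n$ really is supported on a bounded interval and the elementary antiderivative is applied on the correct range) and the collapse of the factorial constants; there is no substantive obstacle, since the Fubini/interchange step is trivial for a finite sum.
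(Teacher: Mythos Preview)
Your proof is correct, but it takes a different route from the paper's. The paper uses Proposition~\ref{prop:1}\,iii), namely $f'' = 2M_m$, substitutes the truncated-power formula \eqref{bspline} for $M_m$, and integrates twice to obtain
\[
f(x) = \frac{2}{(m+1)!}\sum_{k=0}^{m} (-1)^k \binom{m}{k}\Big(x-k+\tfrac{m}{2}\Big)_+^{m+1} + ax + b;
\]
the constants $a,b$ are then fixed by evaluating on $x\le -\tfrac{m}{2}$, where all truncated powers vanish and $f(x)=|x|=-x$ by Proposition~\ref{prop:1}\,ii), forcing $a=-1$, $b=0$.

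Your decomposition $\abs = 2(\cdot)_+ - \id$ bypasses the second-derivative identity entirely and computes the convolution directly via the elementary integral $I_n(x,a)=\frac{(x-a)_+^{n+2}}{(n+1)(n+2)}$. This is a genuinely more self-contained argument: it does not rely on Proposition~\ref{prop:1} at all, whereas the paper's proof leans on both parts ii) and iii). The paper's version is marginally shorter because it reuses already-established facts; yours has the advantage of making the truncated-power structure emerge from a single explicit convolution rather than from an antiderivative plus a boundary-matching step.
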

	
	Here are two examples.
	
	\begin{example}\label{ex:spline1}
		From
		\begin{equation}
			M_2 = \left\{
			\begin{array}{ll}
				1-|x|, & |x| \le 1,\\
				0,&\text{otherwise,}
			\end{array}
			\right.
			\quad
			M_4 =
			\frac16\begin{cases}
				3|x|^3-6x^2+4, & |x|\le 1,\\
				(2-|x|)^3,& 1<|x|\le 2,\\
				0, & \text{otherwise,}
			\end{cases}
		\end{equation}
		we get
		\begin{equation} \label{eq:abs*M2}
			(\abs*M_2)(x)=\begin{cases}
				\frac{1}{3}(-|x|^3+3|x|^2+1), & |x|\le 1,\\
				|x|, &  \text{otherwise,}
			\end{cases}
		\end{equation}
		and
		\begin{equation}
			(\abs*M_4)\,(x) =\begin{cases}
				\frac{1}{20}|x|^5 - \frac{1}{6}x^4 + \frac{2}{3}x^2 + \frac{7}{15}, & 0\le |x| <1, \\[0.5ex]
				\frac{1}{60}(2-|x|)^5 + |x|, &  1\le |x| <2,\\[0.5ex]
				|x|, & \text{otherwise.}
			\end{cases}
		\end{equation}
		For a plot of $\abs*M_2$ with its first and second order derivatives see Figure \ref{fig:fandF_derivs}.
	\end{example}
	Analogously to \eqref{eq:ueps}, we write for $m\in \N$ with $m\ge 1$ and $\varepsilon>0$
	\begin{equation*}
		M_{m,\varepsilon}(x)\coloneqq \frac{1}{\varepsilon}\, M_m\left(\frac{x}{\varepsilon}\right),\quad x\in \R.
	\end{equation*}
	
	Asking for smoothed absolute value functions, the Huber function may first come into one’s mind. Unfortunately, by the following corollary, the negative Huber function is not conditionally positive definite.
	
	\begin{corollary}\label{cor:huber}
		The Huber function
		$$
		f(x) \coloneqq
		\left\{
		\begin{array}{ll}
			\frac12 x^2, &  |x| \le \lambda,\\
			\lambda(|x| - \frac{\lambda}{2}),& \text{otherwise,}
		\end{array}
		\right.
		$$
		for $\lambda>0$
		can be rewritten as $f = \lambda \, (\abs*M_{1,2\lambda}) - \frac{\lambda^2}{2}$ and
		has the generalized Fourier transform 
		\begin{equation}
			\hat f (\omega)= - \frac{\lambda}{2\pi^2 \omega^2} \, \sinc(2\lambda \omega),
		\end{equation}
		which takes positive and negative values, so that
		$-f$ is not conditionally positive definite.
	\end{corollary}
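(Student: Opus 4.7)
The plan is to first verify the closed-form identity $f = \lambda\,(\abs*M_{1,2\lambda}) - \lambda^2/2$, then compute the generalized Fourier transform of the right-hand side via the convolution rule, and finally invoke Bochner's theorem to observe that sign changes of $\sinc$ prevent $-f$ from being conditionally positive definite of order one.

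For the decomposition, I read $M_{1,2\lambda}(x) \coloneqq \frac{1}{2\lambda}M_1\!\left(\frac{x}{2\lambda}\right) = \frac{1}{2\lambda}\mathbf 1_{[-\lambda,\lambda]}(x)$, so that $(\abs*M_{1,2\lambda})(x) = \frac{1}{2\lambda}\int_{-\lambda}^{\lambda}|x-y|\,\d y$. A two-case split --- $|x|\le\lambda$, where the integrand changes sign at $y=x$, versus $|x|>\lambda$, where it does not --- gives the values $(x^2+\lambda^2)/(2\lambda)$ and $|x|$ respectively. Multiplying by $\lambda$ and subtracting $\lambda^2/2$ recovers exactly the two branches $x^2/2$ and $\lambda(|x|-\lambda/2)$ of the Huber function.

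For the Fourier transform, I apply Proposition~\ref{lem:four_conv} to the convolution $\abs*M_{1,2\lambda}$, using $\widehat{\abs}(\omega) = -1/(2\pi^2\omega^2)$ from \eqref{fabs} together with the scaling rule $\widehat{M_{1,2\lambda}}(\omega) = \widehat{M_1}(2\lambda\omega) = \sinc(2\lambda\omega)$ coming from Proposition~\ref{prop:splines}. This yields $\mathcal F[\abs*M_{1,2\lambda}](\omega) = -\sinc(2\lambda\omega)/(2\pi^2\omega^2)$. The additive constant $-\lambda^2/2$ is slowly increasing and has vanishing generalized Fourier transform on $\R\setminus\{0\}$: for any $\varphi\in\mathcal S_2(\R)$ one has $\varphi(0)=0$, so $\int_\R c\,\hat\varphi\,\d x = c\,\varphi(0) = 0$, and uniqueness on $\R\setminus\{0\}$ forces $\hat c\equiv 0$ there. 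Combining both contributions delivers the claimed formula for $\hat f$.

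Finally, with the paper's convention $\sinc(\omega)=\sin(\pi\omega)/(\pi\omega)$, the factor $\sinc(2\lambda\omega)$ is positive on a neighborhood of $\omega=0$ but strictly negative on $\omega\in(1/(2\lambda),1/\lambda)$, so $\hat f$ changes sign. Since $\hat f$ is continuous on $\R\setminus\{0\}$ and of order $r=1$, Theorem~\ref{thm:cond_bochner} rules out $-f$ being conditionally positive definite of order one. The only mildly delicate step is the treatment of the additive constant in the generalized Fourier transform; the remainder is a direct computation.
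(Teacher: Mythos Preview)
Your proof is correct and follows essentially the same route as the paper's: verify the convolution identity, apply the convolution rule for the generalized Fourier transform together with \eqref{fabs} and the $B$-spline transform from Proposition~\ref{prop:splines}, and then invoke Theorem~\ref{thm:cond_bochner}. The paper establishes the identity by first computing $\abs*M_1$ and then scaling via Proposition~\ref{prop:1}\,v), whereas you compute $\abs*M_{1,2\lambda}$ directly; this is a cosmetic difference only.

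Two small remarks. First, Proposition~\ref{lem:four_conv} as stated assumes $u\in\mathcal C_c(\R)$, but $M_{1,2\lambda}$ is merely bounded with compact support, not continuous; the convolution identity still holds (the proof goes through verbatim for $u\in L^\infty_c(\R)$), and the paper makes the same tacit extension, so this is not a gap specific to your argument. Second, you conclude that $-f\notin\mathrm{CP}_1(\R)$; since the same $\hat f$ serves as the generalized Fourier transform of every order $r\ge1$, Theorem~\ref{thm:cond_bochner} in fact yields $-f\notin\mathrm{CP}_r(\R)$ for all $r\ge1$, which is what the paper states in Appendix~\ref{app:huber}.
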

	
	The proof  follows from formula \eqref{fhuber} in Appendix \ref{app:huber}. The Huber function is the so-called Moreau envelope of the absolute value function. Moreau envelopes play an important role in convex analysis.
	Appendix \ref{app:huber} contains
	more results on the relation of $\abs*M_m$ 
	to Moreau envelopes, which are interesting on their own.
	
	\section{Smoothed Euclidean Norm}\label{sec:rd}
	Our aim is to approximate the Euclidean norm on $\R^d$
	by a function which on the one hand keeps its desirable properties, 
	in particular radial symmetry, simple computation and conditional positive definiteness of order 1,
	and on the other hand gives rise to Lipschitz differentiable kernels in the next section.
	First ideas could be the following two:
	\begin{itemize}
		\item[-] Convolve the Euclidean norm in $\R^d$ with some smooth filter. 
		Unfortunately, this is numerically expensive in high dimensions.
		\item[-] Use $f(\|\cdot\|)$ with 
		$f = \abs*u$ and $u \in \mathcal U^n(\R)$. 
		Unfortunately, this function is in general not conditionally positive definite, as the following lemma shows. 
	\end{itemize}
	
	\begin{lemma}\label{ex:not_cond_pos}
		For $f=\abs*M_2$, it holds that $-f(\|\cdot\|) \not \in \mathrm{CP}_r(\R^d)$ for any $d \ge 2$ and $r\in\N$.
	\end{lemma}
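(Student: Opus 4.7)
The plan is to apply Bochner's theorem for the generalized Fourier transform (Theorem~\ref{thm:cond_bochner}): I would construct a generalized Fourier transform of $-F \coloneqq -f(\|\cdot\|)$ and exhibit a point at which it takes a negative value. First, using Example~\ref{ex:spline1}, write
\[
F(x) = \|x\| + h(\|x\|), \qquad h(s) \coloneqq \begin{cases} \tfrac{1}{3} - |s| + s^2 - \tfrac{1}{3}|s|^3, & |s|\le 1,\\ 0, & |s|\ge 1, \end{cases}
\]
so that $h\in\mathcal{C}^2(\R)$ is supported in $[-1,1]$ with $h(1)=h'(1)=h''(1)=0$ and $h(0)=1/3$. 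Since $h(\|\cdot\|)\in L^1(\R^d)\cap\mathcal{C}_b(\R^d)$ has an ordinary continuous Fourier transform (which is its GFT of order~$0$), combining with the GFT of $\|\cdot\|^\beta$ (the theorem just after \eqref{fabs}, for $\beta=1$) yields a GFT of $F$ of order~$1$,
\[
\hat F(\omega) = -C_d \|\omega\|^{-1-d} + \widehat{h(\|\cdot\|)}(\omega), \qquad C_d \coloneqq \frac{\Gamma((d+1)/2)}{2\pi^{(d+3)/2}} > 0,
\]
continuous on $\R^d\setminus\{0\}$; thus $\widehat{-F}=-\hat F$ is the GFT of $-F$.

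Second, since conditional positive definiteness is preserved under restriction to linear subspaces (any polynomial on $\R^2$ of degree $\le r-1$ extends to one on $\R^d$ of the same degree), it suffices to find $\omega^*\in\R^2\setminus\{0\}$ with $\hat F(\omega^*)>0$. For $d=2$ the Hankel-type representation
\[
\widehat{h(\|\cdot\|)}(\omega) = 2\pi \int_0^1 h(r)\, J_0(2\pi r \|\omega\|)\, r \,\mathrm d r
\]
reduces the sign question to a one-dimensional oscillatory integral. Using the large-argument asymptotic $J_0(z)\sim\sqrt{2/(\pi z)}\cos(z-\pi/4)$ together with integration by parts exploiting the vanishing $h(1)=h'(1)=0$, one extracts the oscillatory fine structure of $\widehat{h(\|\cdot\|)}$ and identifies $\omega^*$ at which $\widehat{h(\|\cdot\|)}(\omega^*) > C_2\|\omega^*\|^{-3}$, so that $\hat F(\omega^*)>0$. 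Theorem~\ref{thm:cond_bochner} then yields $-F\notin\mathrm{CP}_r(\R^2)$ for every $r\ge 1$ (Bochner's criterion $\widehat{-F}\ge 0$ fails at any admissible order $r\ge 1$), and the restriction argument lifts this to all $d\ge 2$.

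The main obstacle is the third step: both $C_d\|\omega\|^{-1-d}$ and $\widehat{h(\|\cdot\|)}(\omega)$ decay at the \emph{same} order $\|\omega\|^{-1-d}$ as $\|\omega\|\to\infty$ (several leading integration-by-parts terms cancel thanks to $h(0)\neq 0$ combined with $h(1)=h'(1)=0$), so one must track the precise oscillatory constants rather than rely on crude magnitude bounds. An alternative that avoids these asymptotics is to compute $\widehat{h(\|\cdot\|)}$ in $\R^2$ in closed form (reducing to polynomial-times-Bessel integrals, for which Gradshteyn--Ryzhik tables apply) and to exhibit $\omega^*$ by direct calculation.
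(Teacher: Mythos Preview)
Your outline is sound, but the route differs from the paper's and you leave the decisive step unfinished. The paper works in $d=3$, where the radial Fourier transform is related to the already-known one-dimensional GFT $\hat f(r)=-\frac{\sinc^2(r)}{2\pi^2 r^2}$ by the simple dimension-walk formula $g(r)=-\frac{1}{2\pi r}\frac{\d}{\d r}\hat f(r)$. This produces an explicit closed expression for the 3D GFT whose sign change can be read off directly (e.g.\ $g<0$ near $0$ and $g(5/4)>0$), and Theorem~\ref{thm:cond_bochner} plus the restriction lemma then finish the job for $d\ge3$. No Bessel asymptotics, no cancellation analysis.

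Your decomposition $F=\|\cdot\|+h(\|\cdot\|)$ with compactly supported $h$ is correct, and so is your observation that the two pieces of $\hat F$ decay at the \emph{same} rate $\|\omega\|^{-d-1}$ (indeed, to leading order $\widehat{h(\|\cdot\|)}(\omega)\sim C_2\|\omega\|^{-3}$ exactly in $d=2$, so the sign of $\hat F$ is governed entirely by sub-leading oscillatory terms). But you do not actually produce the point $\omega^*$; you only assert that asymptotics or closed-form Hankel integrals would yield one. That is the whole content of the lemma, and in $d=2$ it is genuinely delicate --- there is no integer-step dimension-walk from $d=1$ to $d=2$, which is precisely why the paper chose $d=3$. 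If you want to stay in $d=2$, you must carry out the sub-leading Bessel analysis or the explicit Hankel computation you allude to; as written, the proposal is a plan rather than a proof. One compensating advantage of your choice: establishing the sign change in $d=2$ would cover all $d\ge2$ via restriction, whereas the paper's $d=3$ argument formally delivers only $d\ge3$.
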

	
	Since the above approaches do not provide the desired functions,
	we propose to use the 
	Riemann--Liouville fractional integral transform, which we consider next.
	
	\subsection{Riemann--Liouville Fractional Integral and Slicing in \texorpdfstring{$\R^d$}{R\^d}}
	
	For $d \in \N$, $d \ge 2$,
	the \textit{Riemann--Liouville fractional integral} 
	$\mathcal I_d\colon L^\infty_\mathrm{loc}(\R) \to \mathcal C^n(\R)$, $n \coloneqq \lfloor \frac{(d-2)}{2} \rfloor$ is defined by
	\begin{equation}\label{eq:RiemannLFRI}
		F(s)= \mathcal I_d[f](s)\coloneqq  c_d \int_0^1 f(ts)(1-t^2)^\frac{d-3}{2}\d t\forrall s\in \R,
	\end{equation}
	where $c_d\coloneqq\frac{2w_{d-2}}{w_{d-1}}$ and 
	$w_{d-1} \coloneqq \frac{2\pi^{\frac{d}{2}}}{\Gamma(\frac{d}{2})}$ denotes the surface area of the sphere $\mathbb S^{d-1}$. For $d=2$, the term $(1-t^2)^\frac{d-3}{2}$ is not bounded, but integrable, so that we require $f$ to be locally bounded in order for \eqref{eq:RiemannLFRI} to exist. 
	For $d\ge 3$, the term $(1-t^2)^\frac{d-3}{2}$ is bounded and we can define $\mathcal I_d$ on  $L^1_\mathrm{loc}(\R)$.
	
	Our approach is motivated by the slicing techniques for fast kernel summation in \cite{hertrich2024,hertrich2025fast}. 
	In particular, the 
	Riemann--Liouville fractional integral has the following
	useful property, which relates a high-dimensional radial function to a function on one-dimensional projections of its inputs, see \cite{rux2024slicing}.
	
	\begin{theorem} \label{thm:slicing_b}
		Let $d \in \N$,  $d \ge 2$ and $f \in L^\infty_\mathrm{loc}(\R)$ be even. 
		Then the even function $F\colon  \R \to \mathbb R$ 
		defined by the Riemann--Liouville fractional integral
		\eqref{eq:RiemannLFRI}
		fulfills the projection/slicing condition
		\begin{equation}\label{eq:slicingRLFI}
			F(\|x\|)= \frac{1}{\omega_{d-1}}\int_\sd f(\langle \xi,x\rangle )\d x =
			\E_{\xi\sim \mathcal U_{\mathbb S^{d-1}}} \left[ f \left(\langle x ,\xi \rangle \right) \right] ,
		\end{equation} 
		where $\mathcal U_{\mathbb S^{d-1}}$ denotes the uniform distribution on the sphere.
		Further, if $f$ is positive definite, then $F(\| \cdot\|)$ is also positive definite for all  $d\ge 2$. Conversely, if $F(\| \cdot\|)$  is positive definite for some $d \ge 2$, then there exists an even positive definite function $f$
		on $\mathbb R$ such that \eqref{eq:RiemannLFRI} is fulfilled.
	\end{theorem}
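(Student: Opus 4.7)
The plan is three-fold: derive the slicing identity from spherical coordinates, pass positive definiteness through it in the forward direction, and handle the converse via a Schoenberg-type representation.

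\emph{Slicing identity.} I would fix $x \in \R^d$ with $r = \|x\|$ and use rotation invariance of $\mathcal U_{\mathbb S^{d-1}}$ to reduce to $x = r e_1$, so that $\langle \xi, x\rangle = r\xi_1$. The standard decomposition of the surface measure on $\mathbb S^{d-1}$ into a height slice times a copy of $\mathbb S^{d-2}$ yields the marginal density of $\xi_1$ as $p(t) = \omega_{d-2}\,\omega_{d-1}^{-1}(1-t^2)^{(d-3)/2}$ on $[-1,1]$. Evenness of $f$ then folds the integral onto $[0,1]$, the factor $2\omega_{d-2}/\omega_{d-1}$ matches $c_d$, and \eqref{eq:slicingRLFI} emerges directly; evenness of $F$ is immediate. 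The only subtlety is in $d=2$, where the weight $(1-t^2)^{-1/2}$ is only integrable, which is why the theorem imposes local \emph{boundedness} rather than mere local integrability of $f$.

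\emph{Forward implication.} Assuming $f$ is positive definite on $\R$, for arbitrary $x_1,\ldots,x_N \in \R^d$ and $a \in \C^N$, I would apply the slicing identity to each translate $x_j-x_k$ and swap the sum and the expectation by Fubini:
$$\sum_{j,k=1}^N a_j\bar a_k\,F(\|x_j - x_k\|) \;=\; \E_\xi\Big[\sum_{j,k=1}^N a_j\bar a_k\,f\bigl(\langle \xi,x_j\rangle - \langle\xi,x_k\rangle\bigr)\Big].$$
For every $\xi \in \mathbb S^{d-1}$, the inner sum is a positive-definite quadratic form for $f$ evaluated at the real scalars $\langle\xi,x_j\rangle$, hence nonnegative, and the expectation inherits this.

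\emph{Converse.} This is the main obstacle. Given $F(\|\cdot\|)$ positive definite on $\R^d$, I would invoke Schoenberg's characterization of radial positive definite functions to obtain a finite nonnegative measure $\mu$ on $[0,\infty)$ with $F(r)=\int_0^\infty \Omega_d(rs)\,\d\mu(s)$, where $\Omega_d$ is the radial characteristic function of $\R^d$. Applying the already-proved slicing identity to the even function $u\mapsto \cos(su)$ identifies $\Omega_d = \mathcal I_d[\cos(s\,\cdot)]$, so defining $f(u) \coloneqq \int_0^\infty \cos(us)\,\d\mu(s)$ yields an even bounded function on $\R$ that is positive definite by Bochner's theorem. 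Interchanging the order of integration in $\mathcal I_d[f]$, justified by finiteness of $\mu$ and boundedness of the cosine, gives $\mathcal I_d[f]=F$. The delicate points are the availability of the Schoenberg representation (which needs enough regularity of $F$, to be extracted from the positive-definiteness hypothesis) and the Fubini step in dimension two where the Riemann-Liouville weight is unbounded.
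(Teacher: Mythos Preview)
The paper does not prove Theorem~\ref{thm:slicing_b} itself; it is quoted from \cite{rux2024slicing} with the remark that the slicing formula is a special case of the adjoint Radon transform. So there is no in-paper proof to compare against line by line. That said, your argument is correct and self-contained.

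Your derivation of the slicing identity via rotation invariance and the marginal law of $\xi_1$ is the standard direct route and is exactly what underlies the Radon-transform interpretation the paper alludes to. Your forward implication is precisely the $r=0$ case of the argument the paper gives for Proposition~\ref{prop:cpd1}: swap sum and spherical average, then use positive definiteness of $f$ at the projected points $\langle\xi,x_j\rangle$.

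For the converse you go through Schoenberg's representation $F(r)=\int_0^\infty\Omega_d(rs)\,\d\mu(s)$ and set $f(u)=\int_0^\infty\cos(us)\,\d\mu(s)$. This works cleanly: positive definiteness in the paper's sense includes continuity, so Schoenberg applies; $f$ is bounded, continuous, even, and positive definite by Bochner; and the Fubini step is harmless because $|\cos|\le1$, $\mu$ is finite, and the spherical weight is integrable even for $d=2$. By contrast, the paper's own treatment of the converse direction in the \emph{conditionally} positive definite setting (Proposition~\ref{prop:cpd2}) avoids Schoenberg and instead works on the Fourier side, recovering $f$ via $\hat f(\omega)=\tfrac{\omega_{d-1}}{2}\rho(\omega)|\omega|^{d-1}$ from the radial generalized Fourier transform $\rho$ of $F(\|\cdot\|)$, using operator machinery from \cite{rux2024slicing}. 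Your Schoenberg route is more elementary and perfectly adequate for the $r=0$ case stated here, while the Fourier-side approach is what generalizes to $r\ge1$.
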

	
	The slicing formula \eqref{eq:slicingRLFI} is a special case of the adjoint Radon transform, see \cite{rux2024slicing}.
	The following two propositions extend the last property of Theorem~\ref{thm:slicing_b} to conditionally positive functions.
	
	\begin{proposition}\label{prop:cpd1}
		Let $d \in \N$,  $d \ge 2$ and $f \in L^\infty_\mathrm{loc}(\R)$ be even. 
		Further, let $f\in\mathrm{CP}_r(\R)$ 
		for $r \in \N$
		and
		$F=\mathcal I_d [f]$.
		Then $F(\|\cdot\|)\in\mathrm{CP}_r(\R^d)$.
	\end{proposition}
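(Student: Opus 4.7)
The plan is to exploit the slicing identity \eqref{eq:slicingRLFI} from Theorem \ref{thm:slicing_b} to reduce the conditional positive definiteness of $F(\|\cdot\|)$ on $\R^d$ to that of $f$ on $\R$, which is given by hypothesis.

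Concretely, I would fix $N\in\N$, points $x_1,\dots,x_N\in\R^d$ and coefficients $a\in\C^N\setminus\{0\}$ satisfying the moment condition
\[
\sum_{j=1}^N a_j\, p(x_j)=0
\]
for every $d$-variate polynomial $p$ of degree $\le r-1$. The task is to show $\sum_{j,k} a_j\bar a_k F(\|x_j-x_k\|)\ge 0$. By Theorem \ref{thm:slicing_b} applied to the even function $f$, we have
\[
F(\|x_j-x_k\|)=\E_{\xi\sim\mathcal U_{\mathbb S^{d-1}}}\bigl[f(\langle \xi,x_j\rangle-\langle \xi,x_k\rangle)\bigr],
\]
and after interchanging the finite sum and the expectation (which is unproblematic since $f$ is locally bounded and $\langle\xi,x_j-x_k\rangle$ is bounded uniformly in $\xi\in\mathbb S^{d-1}$), the quadratic form becomes
\[
\sum_{j,k=1}^N a_j\bar a_k F(\|x_j-x_k\|)=\E_{\xi}\!\left[\sum_{j,k=1}^N a_j\bar a_k f\bigl(\langle \xi,x_j\rangle-\langle \xi,x_k\rangle\bigr)\right].
\]

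The key step is then to verify, for each fixed $\xi\in\mathbb S^{d-1}$, that the $N$ real numbers $y_j^\xi\coloneqq\langle \xi,x_j\rangle$ inherit the appropriate moment condition needed to apply $f\in\mathrm{CP}_r(\R)$. This is the heart of the argument: for any univariate polynomial $q$ of degree $\le r-1$, the composition $x\mapsto q(\langle \xi,x\rangle)$ is a $d$-variate polynomial in $x$ of degree $\le r-1$, so the hypothesis on $a$ yields
\[
\sum_{j=1}^N a_j\, q(y_j^\xi)=\sum_{j=1}^N a_j\, q(\langle \xi,x_j\rangle)=0.
\]
Therefore $f\in\mathrm{CP}_r(\R)$ guarantees that the inner sum in the above expectation is nonnegative for every $\xi\in\mathbb S^{d-1}$, and taking expectations preserves nonnegativity. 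This gives the claim.

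The main obstacle, as such, is minor: one must be careful that the moment condition really does transfer from $d$-variate polynomials of degree $\le r-1$ to univariate polynomials of degree $\le r-1$ evaluated along a direction $\xi$. This is precisely where the definition of conditional positive definiteness via polynomial annihilators fits together cleanly with the slicing representation. No additional regularity beyond local boundedness of $f$ is needed, since only a finite sum and a bounded integrand appear inside the expectation.
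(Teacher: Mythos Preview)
Your proposal is correct and follows essentially the same route as the paper's proof: both invoke the slicing identity of Theorem~\ref{thm:slicing_b}, observe that for each fixed $\xi\in\mathbb S^{d-1}$ the composition $x\mapsto q(\langle\xi,x\rangle)$ of a univariate polynomial $q$ of degree $\le r-1$ is a $d$-variate polynomial of the same degree (so the moment condition transfers), apply $f\in\mathrm{CP}_r(\R)$ to the projected points $\langle\xi,x_j\rangle$, and then integrate the resulting nonnegative quantity over the sphere.
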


	\begin{proposition}\label{prop:cpd2}
		Let $d \ge 3$ and let the $\lfloor\frac{d}{2}\rfloor$-th derivative of $F\in   \mathcal C^{\lfloor\frac{d}{2}\rfloor}([0,\infty))$ be slowly
		increasing. Moreover, assume that $F(\|\cdot\|)\in \mathrm{CP}_r(\R^d)$ has a generalized Fourier transform $\rho(\|\cdot\|)\in \mathcal C(\R^d\setminus\{0\})$. 
		Then the function $f\in \mathrm{CP}_r(\R)$ with generalized Fourier transform
		\begin{equation*}
			\hat f\in \mathcal C(\R\setminus\{0\})
			,\qquad
			\hat f(\omega)=\frac{w_{d-1}}{2}\rho(\omega)|\omega|^{d-1},
		\end{equation*}
		fulfills \eqref{eq:RiemannLFRI}.
	\end{proposition}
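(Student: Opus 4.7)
\emph{Approach.} I would construct $f$ by inverting the Riemann--Liouville transform on $F$, identify its generalized Fourier transform with the prescribed expression via a Fourier-slice argument, and finally invoke Bochner's theorem in one dimension to conclude $f\in\mathrm{CP}_r(\R)$.

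\emph{Construction of $f$.} The hypothesis $F\in\mathcal C^{\lfloor d/2\rfloor}([0,\infty))$ with slowly increasing $\lfloor d/2\rfloor$-th derivative is precisely tailored so that the Abel-type integral operator $\mathcal I_d$ can be inverted by a classical formula involving derivatives of $F$ up to order $\lfloor d/2\rfloor$. Doing so produces a continuous, even, slowly increasing function $f\colon\R\to\R$ satisfying $F=\mathcal I_d[f]$ by construction, which already gives \eqref{eq:RiemannLFRI}.

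\emph{Identification of $\hat f$.} For any $\varphi\in\mathcal S_{2r}(\R^d)$, Theorem~\ref{thm:slicing_b} together with the decomposition $x=u\xi+y$, $y\in\xi^\perp$, gives
\[
\int_{\R^d} F(\|x\|)\,\hat\varphi(x)\d x
=\frac{1}{w_{d-1}}\int_{\sd}\int_\R f(u)\,R_\xi[\hat\varphi](u)\d u\d\xi,
\]
where $R_\xi[\hat\varphi](u)\coloneqq\int_{\xi^\perp}\hat\varphi(u\xi+y)\d y$ is the Radon transform. The projection-slice identity rewrites $R_\xi[\hat\varphi]$ as $\hat{\alpha}_\xi$ with $\alpha_\xi(s)\coloneqq\varphi(s\xi)$, and $\alpha_\xi\in\mathcal S_{2r}(\R)$ because $\varphi\in\mathcal S_{2r}(\R^d)$. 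Applying the defining identity of the generalized Fourier transform of $f$, using the evenness of $\hat f$ and the invariance of $\d\xi$ under $\xi\mapsto -\xi$, and passing to polar coordinates $\omega=r\xi$ on $\R^d$ turn the right-hand side into
\[
\frac{2}{w_{d-1}}\int_{\R^d}\hat f(\|\omega\|)\,\|\omega\|^{1-d}\,\varphi(\omega)\d\omega.
\]
Comparison with $\int_{\R^d}\rho(\|\omega\|)\,\varphi(\omega)\d\omega$ and uniqueness of the generalized Fourier transform force $\hat f(\omega)=\tfrac{w_{d-1}}{2}\rho(\omega)|\omega|^{d-1}$. Since $\rho\ge 0$ by Theorem~\ref{thm:cond_bochner} applied to $F(\|\cdot\|)$, also $\hat f\ge 0$, and a second use of Theorem~\ref{thm:cond_bochner} in dimension one yields $f\in\mathrm{CP}_r(\R)$.

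\emph{Main obstacle.} The delicate step is the rigorous execution of the Fourier-slice calculation inside the generalized-Fourier-transform framework: checking that the family $\alpha_\xi$ lies in $\mathcal S_{2r}(\R)$ uniformly in $\xi$, justifying Fubini to exchange the spherical integral with the pairing against $\hat\varphi$, and confirming that the $f$ produced by the Riemann--Liouville inversion has growth (inherited from $F^{(\lfloor d/2\rfloor)}$) low enough for its generalized Fourier transform to exist of order $r$.
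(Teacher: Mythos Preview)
Your proposal is correct and follows essentially the same route as the paper: construct $f$ by inverting $\mathcal I_d$, identify $\hat f$ via a Fourier--slice computation, and conclude with Theorem~\ref{thm:cond_bochner}. The packaging differs slightly: the paper defines $f$ via the distributional formula $f=(\mathcal F_1\circ\mathcal R_d^\star\circ\mathcal F_d^{-1})[\mathcal R_d F]$ and cites \cite[Cor.~4.9]{rux2024slicing} for the Riemann--Liouville inversion (which is where the $\lfloor d/2\rfloor$-differentiability hypothesis is consumed), whereas you invoke the Abel-type inversion directly; and the paper tests only against even $\psi\in\mathcal S_{2r}(\R)$ lifted to radial $\psi(\|\cdot\|)\in\mathcal S_{2r}(\R^d)$, which slightly streamlines the bookkeeping compared with your general $\varphi\in\mathcal S_{2r}(\R^d)$ and the Radon transform $R_\xi$. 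Your flagged obstacle---the Fubini justification and the growth control on $f$ needed for its generalized Fourier transform to exist of order~$r$---is exactly the technical content absorbed by the cited result in the paper's version.
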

	
	\subsection{Riemann--Liouville Fractional Integral of Smoothed Absolute Value}\label{sec:seuclid}
	Next, we are interested in the Riemann--Liouville fractional integral of the smoothed absolute value function $f \coloneqq \abs *u$,
	$u \in \mathcal U^n(\R)$.
	First of all, the absolute value function is an eigenfunction of $\mathcal I_d$, see, e.g. \cite{hertrich2025fast}.
	\begin{lemma}\label{lem:eigenf}
		The functions $\abs^\beta$, $\beta > -1$ are eigenfunctions of $\mathcal I_d$ with eigenvalues
		$
		\frac{\Gamma(\frac{d}{2})
			\Gamma(\frac{\beta+1}{2})}{\sqrt{\pi} \Gamma(\frac{d+\beta}{2})}
		$.
	\end{lemma}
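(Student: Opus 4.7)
The plan is a direct computation: substitute $f(s)=|s|^\beta$ into the defining integral \eqref{eq:RiemannLFRI}, pull the factor $|s|^\beta$ outside, and identify the remaining one-dimensional integral as a Beta function. The whole argument is elementary; the only thing to track carefully is the normalization constant $c_d$.

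First I would note that for $t\in[0,1]$ and $s\in\R$, we have $|ts|^\beta = t^\beta |s|^\beta$, so
\begin{equation}
\mathcal I_d[|\cdot|^\beta](s) \;=\; c_d\,|s|^\beta\!\int_0^1 t^\beta(1-t^2)^{\frac{d-3}{2}}\d t.
\end{equation}
Integrability of the integrand at $t=0$ requires $\beta>-1$, which is exactly our hypothesis; at $t=1$ it is integrable for every $d\ge 2$ (as already noted right after \eqref{eq:RiemannLFRI}). This already exhibits $|\cdot|^\beta$ as an eigenfunction; it only remains to evaluate the constant.

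Next I would apply the substitution $u=t^2$, $\d t=\tfrac{1}{2}u^{-1/2}\d u$, giving
\begin{equation}
\int_0^1 t^\beta(1-t^2)^{\frac{d-3}{2}}\d t
\;=\;
\tfrac12\int_0^1 u^{\frac{\beta-1}{2}}(1-u)^{\frac{d-3}{2}}\d u
\;=\;
\tfrac12\,B\!\left(\tfrac{\beta+1}{2},\tfrac{d-1}{2}\right)
\;=\;
\tfrac12\,\frac{\Gamma\!\left(\tfrac{\beta+1}{2}\right)\Gamma\!\left(\tfrac{d-1}{2}\right)}{\Gamma\!\left(\tfrac{d+\beta}{2}\right)}.
\end{equation}

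Finally I would simplify $c_d$. Using the formula $w_{k}=\frac{2\pi^{(k+1)/2}}{\Gamma((k+1)/2)}$ for both $k=d-1$ and $k=d-2$ gives
\begin{equation}
c_d \;=\; \frac{2w_{d-2}}{w_{d-1}} \;=\; \frac{2\,\Gamma(d/2)}{\sqrt{\pi}\,\Gamma((d-1)/2)}.
\end{equation}
Multiplying the three pieces together, the factors $\Gamma((d-1)/2)$ cancel and the factors of $2$ collapse, yielding
\begin{equation}
\mathcal I_d[|\cdot|^\beta](s) \;=\; \frac{\Gamma(d/2)\,\Gamma\!\left(\tfrac{\beta+1}{2}\right)}{\sqrt{\pi}\,\Gamma\!\left(\tfrac{d+\beta}{2}\right)}\;|s|^\beta,
\end{equation}
which is the claimed identity. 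There is no real obstacle in this proof; the only care needed is in checking the endpoint integrability (explaining why the hypothesis is exactly $\beta>-1$) and in not dropping a factor of $2$ when converting $c_d$ and the Beta-function substitution into their final form.
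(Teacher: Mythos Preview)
Your proof is correct and is exactly the standard direct Beta-function computation one would expect; the paper does not give its own proof of this lemma but simply cites the literature, so there is nothing further to compare.
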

	
	The Riemann--Liouville fractional integral of $\abs*u$ has the following properties.
	
	\begin{proposition}\label{prop:of_F}
		Let $n,d\in \N$ with $d\ge 2$ and $u \in \mathcal U^n(\R)$. Then the function $F\coloneqq \mathcal I_d[\abs*u]$
		is even,  convex, positive and $(n+2)$-times continuously differentiable. Further, it satisfies  for $s \to \infty$
		the relation
		$$
		F(s)= C_d \, |s|+\mathcal O\left(\frac{1}{s}\right), \quad
		C_d \coloneqq \frac{\Gamma(\frac{d}{2})}{\sqrt{\pi} \Gamma( \frac{d+1}{2})}.
		$$
		In particular, $F - C_d \abs\in \mathcal C_0(\R) \cap L^2(\R)$ and $F'\in  \mathcal C_b(\R)$ with $F'(0)=0$.
		\\
		The function $F^\varepsilon \coloneqq \mathcal I_d[\abs *u_\varepsilon]$
		converges in $L^2(\R)$ and also pointwise  to $C_d\abs $ as $\varepsilon \to 0$.
	\end{proposition}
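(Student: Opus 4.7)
The plan is to set $f \coloneqq \abs \ast u$ and unpack $F(s) = c_d \int_0^1 f(ts)(1-t^2)^{(d-3)/2} \d t$, using the structural properties of $f$ recorded in Proposition \ref{prop:1}: $f$ is even, positive, convex, $\mathcal C^{n+2}$, and coincides with $\abs$ outside the compact set where $u$ lives. Evenness, positivity, and convexity of $F$ are immediate: the integrand inherits evenness and positivity from $f$, and for each fixed $t \in [0,1]$ the map $s \mapsto f(ts)$ is convex, so $F$ is a positive-weight integral average of convex functions. For the $\mathcal C^{n+2}$ regularity I would differentiate under the integral: $F^{(k)}(s) = c_d \int_0^1 t^k f^{(k)}(ts)(1-t^2)^{(d-3)/2}\d t$ for $k \le n+2$. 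The differentiation is justified because $f'$ is bounded by $1$ (write $f' = \sgn \ast u$) and $f^{(k)}$ for $k \ge 2$ is compactly supported (since $f'' = 2u$ and $u \in \mathcal C_c^n$), while $(1-t^2)^{(d-3)/2}$ is integrable on $[0,1]$ for every $d \ge 2$.

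For the asymptotic expansion I would decompose $f(x) = |x| + g(x)$, where $g = f - \abs$ is compactly supported in $[-R,R]$ with $R = \diam(\supp u)/2$ by Proposition \ref{prop:1}(ii). Then
\begin{equation*}
F(s) = c_d \int_0^1 |ts|(1-t^2)^{\frac{d-3}{2}}\d t + c_d \int_0^1 g(ts)(1-t^2)^{\frac{d-3}{2}}\d t.
\end{equation*}
The first term equals $C_d |s|$ either by a direct substitution $u = 1-t^2$ together with the identification $c_d/(d-1) = C_d$, or more cleanly by invoking Lemma \ref{lem:eigenf} with $\beta=1$. For the second term, $g(ts)$ vanishes unless $|t| < R/|s|$, and on that range both $|g|$ and $(1-t^2)^{(d-3)/2}$ are bounded (in $d=2$, $\int_0^{R/|s|}(1-t^2)^{-1/2}\d t = \arcsin(R/|s|) = \mathcal O(1/s)$; in $d\ge 3$ the weight is bounded by $1$). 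This gives $F(s) - C_d|s| = \mathcal O(1/s)$, which yields membership in $\mathcal C_0(\R) \cap L^2(\R)$ at once. For the derivative: $F'(s) = c_d \int_0^1 t f'(ts)(1-t^2)^{(d-3)/2} \d t$ is bounded by $c_d \int_0^1 t(1-t^2)^{(d-3)/2}\d t = C_d$ since $|f'|\le 1$, and $f'(0) = 0$ by evenness of $f$ gives $F'(0) = 0$ directly.

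The convergence of $F^\varepsilon$ as $\varepsilon \to 0$ reduces to a scaling identity: from $(\abs\ast u_\varepsilon)(x) = \varepsilon f(x/\varepsilon)$ in Proposition \ref{prop:1}(v), we read off $F^\varepsilon(s) = \varepsilon F(s/\varepsilon)$. Combining with the asymptotics $F(r) = C_d|r| + \mathcal O(1/r)$ immediately gives $F^\varepsilon(s) = C_d|s| + \mathcal O(\varepsilon^2/|s|) \to C_d|s|$ pointwise for $s \neq 0$ (and trivially at $s=0$). For the $L^2$ statement, the substitution $r = s/\varepsilon$ yields $\|F^\varepsilon - C_d\abs\|_{L^2}^2 = \varepsilon^3 \|F - C_d\abs\|_{L^2}^2$, which vanishes as $\varepsilon \to 0$ because $F - C_d\abs \in L^2(\R)$ was just established.

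The main obstacle is the $\mathcal O(1/s)$ remainder estimate, which is the pivot of the whole proposition because the $L^2$, $\mathcal C_0$, and convergence claims all cascade from it. The only technical care needed there is handling $d=2$ separately, where the weight $(1-t^2)^{-1/2}$ blows up at the endpoint but is still integrable; everywhere else a careful but routine application of dominated convergence and differentiation under the integral sign suffices.
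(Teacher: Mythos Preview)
Your argument is correct and essentially identical to the paper's: differentiate under the integral for regularity, split $f = \abs + g$ with compactly supported $g$ and invoke Lemma~\ref{lem:eigenf} for the $\mathcal O(1/s)$ remainder (the paper does the same via the substitution $t\mapsto t/s$ rather than your arcsine bound in $d=2$, but the estimates are equivalent), and use the scaling $F^\varepsilon(s) = \varepsilon F(s/\varepsilon)$ together with $\|F^\varepsilon - C_d\abs\|_{L^2}^2 = \varepsilon^3\|F-C_d\abs\|_{L^2}^2$ for the convergence. Note that you correctly identify the limit of $F^\varepsilon$ as $C_d\abs$ rather than $F$ as the statement literally reads; the paper's own proof shows exactly the same thing, so this appears to be a typo in the proposition.
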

	
	For the special case of $B$-splines $u \coloneqq M_m$,
	we have the following result.
	
	\begin{proposition}\label{prop:id_spline}
		For $m\in\N$ with $m\ge2$, let $f \coloneqq abs*M_m$. Then we have for $d\ge2$
		\begin{equation}
			\mathcal I_d[f](s)
			=
			{c_d} \sum_{k=0}^{m} (-1)^k \binom{m}{k} 
			\sum_{n=0}^{m+1} \frac{(\tfrac m2-2)^{m+1-n}}{n!(m+1-n)!} s^n  q_d(n,k-\tfrac m2;s) - \frac{\pi c_{d+1}}{2}s,
			\qquad s>0,
		\end{equation}
		where
		\begin{equation}
			q_d(n,a;s)
			\coloneqq
			\begin{cases}
				\frac{\Gamma (\frac{d-1}{2}) \Gamma (\frac{n+1}{2})}{\Gamma (\frac{d+n}{2})},
				& a\le0,
				\\
				\frac{\Gamma (\frac{d-1}{2}) \Gamma (\frac{n+1}{2})}{\Gamma (\frac{d+n}{2})}-B_{a^2/s^2}(\tfrac{n+1}{2},\tfrac{d-1}{2}), 
				& 0<a<s,
				\\
				0,& a\ge s
			\end{cases}
		\end{equation}
		with the incomplete Beta function $B_x(a,b)\coloneqq\int_0^xt^{a-1}(1-t)^{b-1} \d t$ for $a,b>-1$ and $x\in[0,1]$.
	\end{proposition}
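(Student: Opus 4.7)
Since $F=\mathcal I_d[f]$ is even by Proposition \ref{prop:of_F}, it suffices to verify the formula for $s>0$. The plan is a direct calculation: substitute the explicit form of $f=\abs*M_m$ from Corollary \ref{cor:1} into the definition \eqref{eq:RiemannLFRI} of $\mathcal I_d$, split off the linear term $-x$, and evaluate the resulting one-dimensional $t$-integrals arising from the truncated-power terms via a Beta-function substitution.

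For the linear piece, one computes $c_d\int_0^1(-ts)(1-t^2)^{(d-3)/2}\d t = -\tfrac{c_d}{d-1}\,s$ via the substitution $u=1-t^2$, which after applying the recurrence $(d-1)\Gamma(\tfrac{d-1}{2}) = 2\Gamma(\tfrac{d+1}{2})$ together with $c_{d+1}=\tfrac{2\Gamma((d+1)/2)}{\sqrt\pi\,\Gamma(d/2)}$ reproduces the constant times $-s$ claimed in the formula; this is also consistent with Lemma \ref{lem:eigenf} applied to $\beta=1$. For the truncated-power terms, set $a_k\coloneqq k-\tfrac m2$. The support of $(ts-a_k)_+$ over $t\in[0,1]$ then splits into three cases, in direct bijection with the three branches of $q_d$: (i) $a_k\le 0$, where the integrand is nonzero on all of $[0,1]$; (ii) $0<a_k<s$, where it is supported on $(a_k/s,1]$; (iii) $a_k\ge s$, where it vanishes.

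In cases (i) and (ii), expand $(ts-a_k)^{m+1}$ by the binomial theorem, interchange sum and integral, and reduce each term to the template
\[
\int_\alpha^1 t^n (1-t^2)^{\tfrac{d-3}{2}} \d t, \qquad \alpha=\max(0,a_k/s).
\]
The substitution $u=t^2$ turns this into $\tfrac12\int_{\alpha^2}^1 u^{(n-1)/2}(1-u)^{(d-3)/2}\d u$, which evaluates to $\tfrac12 B(\tfrac{n+1}{2},\tfrac{d-1}{2})$ in case (i) and to $\tfrac12\bigl[B(\tfrac{n+1}{2},\tfrac{d-1}{2})-B_{a_k^2/s^2}(\tfrac{n+1}{2},\tfrac{d-1}{2})\bigr]$ in case (ii); rewriting the complete Beta function as a $\Gamma$-ratio, both branches match $\tfrac12 q_d(n,a_k;s)$ term-by-term. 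The combinatorial prefactor $\tfrac{2}{(m+1)!}\binom{m+1}{n}\cdot\tfrac12=\tfrac{1}{n!(m+1-n)!}$ from Corollary \ref{cor:1}, combined with the factor $(-a_k)^{m+1-n}=(\tfrac m2-k)^{m+1-n}$ produced by the binomial expansion, yields exactly the summand in the claim.

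The only genuine obstacle is bookkeeping: keeping the three $a_k$-case distinctions consistent across the binomial expansion, matching the truncation $\max(0,a_k/s)$ to the incomplete-Beta definition of $q_d$ on each branch, and verifying that all normalization constants ($c_d$, $c_{d+1}$, and the various $\Gamma$-ratios) line up. No analytic subtlety arises, since on each fixed branch the integrand is a polynomial against an integrable weight on a compact interval, so Fubini and term-by-term integration are automatic.
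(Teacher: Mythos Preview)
Your approach is essentially identical to the paper's: both substitute the explicit form from Corollary~\ref{cor:1}, split off the linear term (handled via Lemma~\ref{lem:eigenf} or the equivalent direct integral), then treat each truncated-power term $b_{m+1,a_k}(x)=(x-a_k)_+^{m+1}$ by the same three-case distinction on $a_k$, binomial expansion, and the substitution $u=t^2$ reducing to (incomplete) Beta functions. The paper's write-up is slightly more compact in that it names the building block $b_{m,a}$ explicitly and computes $\mathcal I_d[b_{m,a}]$ once, but the underlying computation and case structure are the same as yours.
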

	
	Note that for odd $d$, the incomplete beta function in $q_d(n,a;s)$ is a polynomial of degree $n+d-4$ in $1/s$,
	and hence $\mathcal I_d[f](s)$ is a rational function of $|s|$.
	In particular, we obtain for $u= M_2$  and $u=M_4$ the following functions $F$.
	
	\begin{example}\label{ex:spline_2}
		For $d=3$, it holds
		\begin{equation} \label{eq:I3M2}
			\mathcal I_3[\abs*M_2](s)
			=\frac{1}{12} \begin{cases}
				-|s|^3+4s^2+4, & |s|\le 1, \\
				6 |s| + \frac{1}{|s|} , &\text{otherwise,}
			\end{cases}
		\end{equation}
		and
		\begin{equation*}
			\mathcal I_3[\abs*M_4](s)
			=\frac{1}{360}\begin{cases}
				3|s|^5-12s^4+80s^2+168, & |s|\le 1, \\
				-|s|^5 + 12s^4 - 60 |s|^3 + 160 s^2 - 60|s| + 192 -\frac{4}{|s|} ,&1\le|x|\le 2, \\
				180|s|+\frac{60}{|s|}, & \text{otherwise.}
			\end{cases} 
		\end{equation*}
		For an illustration of the first function, see Figure~\ref{fig:fandF_derivs}.
		We have $\mathcal I_3[\abs*M_2]\in \mathcal C^3(\R)$ and $\mathcal I_3[\abs*M_4]\in \mathcal C^5(\R)$.
	\end{example}

	\begin{figure}[h]
		\centering
		\begin{tikzpicture}[scale=1.1]
			\begin{axis}[
				axis lines = middle,
				xlabel = {$x$},
				ylabel = {},
				title = {},
				samples = 100,
				axis equal image,
				enlarge y limits=false,
				ymin =-1.2,
				ymax =3.4,
				]
				\addplot[bluenormal, very thick, domain=-1:1] {1/3*(-abs(x)^3 + 3*abs(x)^2 + 1)};
				\addplot[bluenormal, very thick, domain=1:3.25] {abs(x)};
				\addplot[bluenormal, very thick, domain=-3.25:-1] {abs(x)};
				
				\addplot[greennormal, very thick, domain=-1:1] {1/3*(-3*(x/abs(x))*x^2 + 6*x)};
				\addplot[greennormal, very thick, domain=1:3.25] {sign(x)};
				\addplot[greennormal, very thick, domain=-3.25:-1] {sign(x)};

				\addplot[orange, very thick, domain=-1:1] {1/3*(-6*abs(x) + 6)};
				\addplot[orange, very thick, domain=1:3.15] {0};
				\addplot[orange, very thick, domain=-3.25:-1] {0};

				
				\addplot[bluenormal, very thick, dashed, domain=-1:1] {2* 1/12*(-abs(x)^3+4*x^2+4)};
				\addplot[bluenormal, very thick, dashed, domain=1:3.25] {2* 1/12*(6*abs(x)+1/abs(x))};
				\addplot[bluenormal, very thick, dashed, domain=-3.25:-1] {2* 1/12*(6*abs(x)+1/abs(x))};

				\addplot[greennormal, very thick, dashed, domain=-1:1] {2* 1/12*(-3*sign(x)*x^2+8*x)};
				\addplot[greennormal, very thick, dashed, domain=1:3.25] {2* 1/12*(6*sign(x)-sign(x)/x^2)};
				\addplot[greennormal, very thick, dashed, domain=-3.25:-1] {2* 1/12*(6*sign(x)-sign(x)/x^2)};

				\addplot[orangenormal, very thick, dashed, domain=-1:1] {2* 1/12*(-6*abs(x)+8)};
				\addplot[orangenormal, very thick, dashed, domain=1:3.15] {2* 1/12*(2/abs(x)^3)};
				\addplot[orangenormal, very thick, dashed, domain=-3.25:-1] {2* 1/12*(2/abs(x)^3)};

			\end{axis}
		\end{tikzpicture}
		\caption{Smoothed absolute value $f=\abs*M_2$ (solid, blue) 
			with its first (solid, green) and second (solid, orange) derivatives, the latter being equal to $2M_2$;
			and 
			$F=2\mathcal I_3[f]$ (dashed blue)  with its first (dashed, green) and second (dashed, orange) derivatives.}
		\label{fig:fandF_derivs}
	\end{figure}
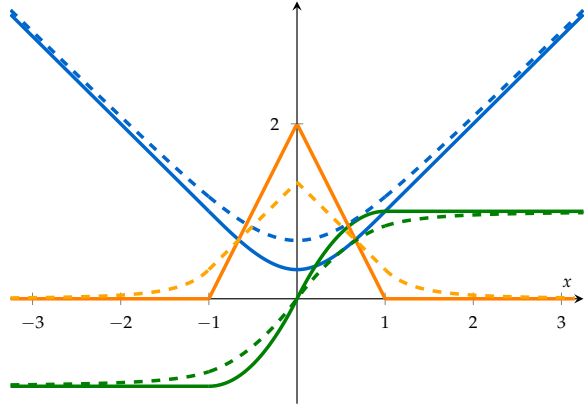

	Based on the previous results, we propose 
	to approximate the  negative Euclidean norm on $\R^d$ by
	\begin{equation}\label{def}
		\Phi = F(\| \cdot\|) \coloneqq \mathcal I_d [f] (\| \cdot\|), \quad f \coloneqq -\abs *u, \quad u \in \mathcal U^n(\R),\quad n \in \N.
	\end{equation}
	Summarizing Propositions \ref{prop:1} and \ref{prop:cpd1}, this function has the following properties.
	
	\begin{theorem}
		\label{thm:prop_of Phi}
		The function $\Phi$ in \eqref{def} has the following properties:
		\begin{enumerate}[label=\roman*)]
			\item $\Phi$ is conditionally positive definite of order one on $\R^d$.
			\item $\Phi(x) < 0$ for all $x \in \R ^d$.
			\item  $\Phi(x) = -C_d \, \|x \|+\varphi(\|x\|)$ with $\varphi\in  \mathcal C_0(\R)$ and $\varphi(s)\in \mathcal O(\frac{1}{s})$ as $ s\to \infty$.
			\item $\Phi$ is $n+2$ times continuously differentiable.
			\item $\nabla \Phi$ is Lipschitz-$L$ continuous with $L\coloneqq 2\sqrt{d}\|u\|_\infty $
			\item $\Phi$ is concave and $(-L)$-convex, i.e., for all $\lambda \in [0,1]$ and all $x,y \in \R^d$, we have
			$$
			\Phi(\lambda x + (1-\lambda) y) \le \lambda \Phi(x) + (1-\lambda) \Phi(y) + \tfrac{L}{2} \lambda (1-\lambda) \|x-y\|^2.
			$$
		\end{enumerate}
	\end{theorem}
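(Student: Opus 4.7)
The plan is to assemble the six claims by invoking Propositions \ref{prop:1}, \ref{prop:cpd1}, and \ref{prop:of_F}, using the slicing representation from Theorem \ref{thm:slicing_b} to lift the one-dimensional calculus of $f = -\abs * u$ to $\R^d$. Items i)--iii) are essentially bookkeeping: i) combines Proposition \ref{prop:1} iv), which gives $f \in \mathrm{CP}_1(\R)$, with Proposition \ref{prop:cpd1}; ii) follows from positivity of $\mathcal I_d[\abs*u]$ in Proposition \ref{prop:of_F}, so that $F = -\mathcal I_d[\abs*u] < 0$; and iii) reads off the asymptotic of that same proposition with the opposite sign, giving $C_d = \Gamma(\tfrac d2)/(\sqrt\pi\,\Gamma(\tfrac{d+1}{2}))$.

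For iv)--vi), I would work with the slicing identity $\Phi(x) = \E_{\xi \sim \mathcal U_{\mathbb S^{d-1}}}[f(\langle x,\xi\rangle)]$. Since $f \in \mathcal C^{n+2}(\R)$ by Proposition \ref{prop:1} iii) and every derivative of $f$ up to order $n+2$ is uniformly bounded on $\R$---because $f'$ is bounded ($\abs*u$ being linear outside a compact set) and $f^{(k)} = -2 u^{(k-2)}$ is compactly supported for $2 \le k \le n+2$---I can differentiate under the expectation by dominated convergence. This yields iv) together with
\[
\nabla \Phi(x) = \E_\xi\bigl[f'(\langle x,\xi\rangle)\,\xi\bigr], \qquad \nabla^2 \Phi(x) = \E_\xi\bigl[f''(\langle x,\xi\rangle)\,\xi\xi^\top\bigr],
\]
which will drive v) and vi).

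The main work, and the main obstacle, is v). Using $f'' = -2u$, the Hessian $A \coloneqq \nabla^2\Phi(x)$ has entries $A_{ij} = -2\,\E_\xi[u(\langle x,\xi\rangle)\,\xi_i\xi_j]$, which I would estimate by row sums: since $|\xi_i|\le 1$ and $\|\xi\|_1 \le \sqrt d\,\|\xi\|_2 = \sqrt d$ on $\mathbb S^{d-1}$,
\[
\sum_{j=1}^d |A_{ij}| \le 2\|u\|_\infty\,\E_\xi\bigl[|\xi_i|\,\|\xi\|_1\bigr] \le 2\sqrt d\,\|u\|_\infty.
\]
By symmetry of $A$ the column sums obey the same bound, so the standard estimate $\|A\|_{\mathrm{op}}^2 \le \bigl(\max_i\sum_j|A_{ij}|\bigr)\bigl(\max_j\sum_i|A_{ij}|\bigr)$ delivers $\|A\|_{\mathrm{op}} \le 2\sqrt d\,\|u\|_\infty = L$, which is exactly the asserted Lipschitz constant of $\nabla\Phi$. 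Item vi) is then a clean consequence: concavity of $\Phi$ follows from concavity of $f$ (because $\abs*u$ is convex by Proposition \ref{prop:1} iii)) together with Jensen's inequality applied pointwise in $\xi$ inside the slicing formula, and combining concavity ($\nabla^2\Phi \preceq 0$) with v) ($\|\nabla^2\Phi\|_{\mathrm{op}} \le L$) gives $-L\,I \preceq \nabla^2\Phi \preceq 0$, which is equivalent to the stated $(-L)$-convexity inequality.
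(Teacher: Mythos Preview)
Your proposal is correct and follows essentially the same route as the paper: invoke Propositions~\ref{prop:1}, \ref{prop:cpd1}, \ref{prop:of_F} for i)--iii), then differentiate under the slicing integral for iv)--vi). The only difference is in v), where the paper bounds each component $|\partial_i\Phi(x)-\partial_i\Phi(y)|\le 2\|u\|_\infty\|x-y\|$ directly from the $2\|u\|_\infty$-Lipschitz continuity of $f'$ and then takes the $\ell^2$-sum over $i$, rather than bounding the Hessian operator norm via row sums; both arguments land on the same constant $L=2\sqrt d\,\|u\|_\infty$.
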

	
	\section{Smoothed Distance Kernels}\label{sec:sdistance}
	In this section, we show how the above functions $\Phi$
	induce characteristic kernels with nice Lipschitz properties. These kernels can be used to define
	MMDs between measures and the MMDs can then serve as
	functionals for Wasserstein gradient flows.
	
	We call a symmetric function $K\colon\R^d \times \R^d \to \R$  a \emph{kernel}. A kernel is \emph{positive definite}, if
	for all $ N \in \N $, all $ x_1, \dots, x_N \in \mathbb{R}^d $, 
	and all $ a \in \mathbb{C}^N $ it holds
	\begin{equation}
		\sum_{j,k=1}^N a_j  a_k K(x_j,x_k )\ge 0.
	\end{equation}
	Unfortunately, the kernel $K(x,y) := F(\|x-y\|)$ with $F$ in \eqref{def} is  not positive definite, since $ F(\|\cdot\|)$
	is only conditionally positive definite of order $r=1$. 
	However, we have the following 
	proposition, see \cite[Thm~10.18]{Wendland2004}.
	Here $\Pi_{r-1}(\R^d)$ denotes the linear space of $d$-variate polynomials of degree $\le r-1$ which has dimension
	\smash{$N \coloneqq \binom{d + r -1}{r-1}$}.
	
	\begin{proposition}\label{cond_kernel}
		Let $\Phi\colon \R^d \to \R$ be a conditionally positive definite function
		of order $r \in \N$.
		Let $\Xi \coloneqq \{\xi_k: k=1,\ldots,N\}$ 
		be a set of points such that 
		$p(\xi_k) = 0$ for all $k=1,\ldots,N$ and any $p\in\Pi_{r-1}(\R^d)$ implies that $p$ is the 
		the zero polynomial. 
		Denote by $p_j,$ $j=1,\ldots,N$ the set of Lagrangian basis polynomials with respect to $\Xi$,
		i.e., $p_j(\xi_k) = \delta_{j,k}$.
		Then
		\begin{equation}
			\begin{aligned}
				K(x,y) 
				\coloneqq {}&
				\Phi(x-y) - \sum_{j=1}^N p_j(x) \Phi(\xi_j-y) 
				-  \sum_{k=1}^N p_k(y) \Phi(x-\xi_j )
				\\&+ \sum_{j,k=1}^N p_j(x) p_k(y)\Phi (\xi_j-\xi_k)
			\end{aligned}
			\label{make_pos_def}
		\end{equation}
		is a positive definite kernel. In particular, we have in case $r=1$ that
		$$
		\Phi(x-y) - \Phi(x) - \Phi(y) + \Phi(0)
		$$
		is positive definite, where we can skip the constant third term if $\Phi(0) \le 0$.
	\end{proposition}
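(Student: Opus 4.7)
The plan is to reduce positive definiteness of $K$ to the conditional positive definiteness of $\Phi$ by cleverly augmenting any given sample by the $N$ auxiliary points $\xi_1, \ldots, \xi_N$ so that the combined coefficient vector satisfies the polynomial vanishing condition \eqref{eq:pos_def_cond}.

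Fix arbitrary $x_1, \dots, x_M \in \R^d$ and $a_1, \dots, a_M \in \C$. I introduce auxiliary coefficients
$$
c_j \coloneqq -\sum_{i=1}^M a_i\, p_j(x_i),\qquad j=1,\dots,N,
$$
attached to the points $\xi_j$. The first step is to verify that the combined vector $(a_1,\dots,a_M,c_1,\dots,c_N)$ on the combined point cloud $\{x_1,\dots,x_M,\xi_1,\dots,\xi_N\}$ satisfies the moment condition \eqref{eq:pos_def_cond}. This uses the unisolvency assumption on $\Xi$: every $p \in \Pi_{r-1}(\R^d)$ admits the Lagrange expansion $p(x)=\sum_{j=1}^N p(\xi_j)\, p_j(x)$, so that
$$
\sum_{i=1}^M a_i p(x_i)+\sum_{j=1}^N c_j p(\xi_j)=\sum_{i=1}^M a_i\Bigl[p(x_i)-\sum_{j=1}^N p(\xi_j) p_j(x_i)\Bigr]=0.
$$
Applying \eqref{eq:pos_dev_quadratic_form} to $\Phi$ on the combined cloud then yields
$$
\sum_{i,l=1}^M a_i \bar a_l \Phi(x_i-x_l)+2\,\Re\sum_{i,j} a_i \bar c_j \Phi(x_i-\xi_j)+\sum_{j,k=1}^N c_j \bar c_k \Phi(\xi_j-\xi_k)\ge 0,
$$
using evenness of $\Phi$ to collapse the two mixed sums.

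The second step is a direct algebraic check that this nonnegative quantity is precisely $\sum_{i,l} a_i \bar a_l K(x_i,x_l)$. Expanding the four-term definition \eqref{make_pos_def} of $K$ and substituting the identities $-\sum_i a_i p_j(x_i) = c_j$ into each of the correction sums reproduces exactly the left-hand side above. Hence $K$ is positive definite.

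For the $r=1$ specialization, $\Pi_0(\R^d)$ consists of the constants, so $N=1$, any single point $\xi_1\in\R^d$ is unisolvent, and the unique Lagrange basis polynomial is $p_1\equiv 1$; choosing $\xi_1=0$ and exploiting evenness of $\Phi$ reduces \eqref{make_pos_def} to $\Phi(x-y)-\Phi(x)-\Phi(y)+\Phi(0)$. Finally, dropping the constant $\Phi(0)$ changes $\sum_{i,l} a_i\bar a_l K(x_i,x_l)$ by $-\Phi(0)\bigl|\sum_i a_i\bigr|^2$, which is nonnegative precisely when $\Phi(0)\le 0$, so the reduced kernel remains positive definite. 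The only subtle point in the whole argument is the bookkeeping in the second step: making the bijection between the four pieces of $K$ and the three pieces of the conditional-positivity inequality correct, which is where the evenness of $\Phi$ and the sign of the $c_j$ must be carefully tracked.
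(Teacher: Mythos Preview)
Your proof is correct and is essentially the standard argument: the paper does not prove this proposition itself but cites \cite[Thm.~10.18]{Wendland2004}, and your augmentation-by-auxiliary-points trick with $c_j = -\sum_i a_i p_j(x_i)$ is exactly Wendland's proof. The bookkeeping in your second step and the $r=1$ specialization (including the remark on dropping $\Phi(0)$) are all handled correctly.
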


	For our kernel from the function in  \eqref{def}, we obtain directly by Theorem \ref{thm:prop_of Phi}~v) and Proposition \ref{cond_kernel}
	the following corollary.
	
	\begin{corollary} \label{cor:grad_S_is_lip}
		Let $F(\| \cdot\|)$ be
		defined by \eqref{def}. Then 
		\begin{equation}\label{nice_kernel}
			K\colon\R^d \times \R^d \to \R,\qquad
			K(x,y) \coloneqq F(\|x-y\|) 
			- F(\|x\|) - F(\|y\|)
		\end{equation}
		is a positive definite kernel and
		\begin{equation} \label{kxx}
			K(x,x)
			= F(0) -2 F(\|x\|)
			\in \mathcal O(\|x\|).
		\end{equation}
		Moreover, $K$ is continuously differentiable with Lipschitz continuous gradient, i.e.,
		\begin{equation} \label{eq:dK-Lip}
			\|\nabla K(x,x')-\nabla K(y,y')\|\le L(\|x-y\|+\|x'-y'\|) \quad \text{for all} \quad
			x,x',y,y'\in \R^d.
		\end{equation}
	\end{corollary}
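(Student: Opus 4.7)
The plan is to assemble the three claims from results already in hand: positive definiteness comes from Proposition \ref{cond_kernel} applied at order $r=1$, the diagonal estimate \eqref{kxx} is an immediate substitution combined with the far-field asymptotics of Theorem \ref{thm:prop_of Phi}~iii), and the Lipschitz gradient property \eqref{eq:dK-Lip} follows from the $L$-Lipschitz bound on $\nabla\Phi$ in Theorem \ref{thm:prop_of Phi}~v) via the chain rule. Continuous differentiability of $K$ is automatic, since Theorem \ref{thm:prop_of Phi}~iv) gives $\Phi\in\mathcal C^{n+2}(\R^d)$ with $n\ge0$, and the addends in \eqref{nice_kernel} inherit this regularity jointly in $(x,y)$.

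For positive definiteness, I would apply Proposition \ref{cond_kernel} with $\Phi=F(\|\cdot\|)$, which is conditionally positive definite of order one by Theorem \ref{thm:prop_of Phi}~i). Because $\dim \Pi_0(\R^d)=1$, a single interpolation node $\xi_1=0$ with Lagrangian polynomial $p_1\equiv 1$ suffices, and \eqref{make_pos_def} collapses to $\widetilde K(x,y)=\Phi(x-y)-\Phi(x)-\Phi(y)+\Phi(0)$. By Theorem \ref{thm:prop_of Phi}~ii) we have $\Phi(0)=F(0)<0$, so dropping the constant term in $\widetilde K$ amounts to adding the non-negative constant kernel $-\Phi(0)$ to a positive definite kernel, which preserves positive definiteness. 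This produces exactly the $K$ of \eqref{nice_kernel}. Setting $y=x$ in \eqref{nice_kernel} immediately gives $K(x,x)=F(0)-2F(\|x\|)$, and substituting the expansion $F(s)=-C_d|s|+\varphi(s)$ with $\varphi\in\mathcal C_0(\R)$ from Theorem \ref{thm:prop_of Phi}~iii) yields $K(x,x)=F(0)+2C_d\|x\|-2\varphi(\|x\|)$, which is $\mathcal O(\|x\|)$ since $\varphi$ is bounded.

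For the Lipschitz bound on $\nabla K$, I would compute the partial gradients $\nabla_1 K(x,y)=\nabla\Phi(x-y)-\nabla\Phi(x)$ and $\nabla_2 K(x,y)=-\nabla\Phi(x-y)-\nabla\Phi(y)$, and then apply the Lipschitz estimate for $\nabla\Phi$ from Theorem \ref{thm:prop_of Phi}~v) twice, using $\|(x-x')-(y-y')\|\le\|x-y\|+\|x'-y'\|$ to control the translation argument. This gives $\|\nabla_1 K(x,x')-\nabla_1 K(y,y')\|\le L(2\|x-y\|+\|x'-y'\|)$ and a symmetric bound for $\nabla_2 K$; combining both partials by the triangle inequality on $\R^{2d}$ yields a bound of the form $3L(\|x-y\|+\|x'-y'\|)$, which is \eqref{eq:dK-Lip} with the constant relabelled.

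There is no substantial obstacle; the only point that requires a moment's care is the sign convention in \eqref{def}. The negative sign embedded in $f=-\abs*u$ makes $F$, and hence $\Phi$, negative-valued, which is precisely what legitimises discarding the constant term produced by Proposition \ref{cond_kernel}. Every other step is a routine appeal to the properties of $\Phi$ already catalogued in Theorem \ref{thm:prop_of Phi}.
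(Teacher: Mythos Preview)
Your proposal is correct and matches the paper's approach exactly: the paper states that the corollary follows ``directly by Theorem~\ref{thm:prop_of Phi}~v) and Proposition~\ref{cond_kernel}'' without giving further detail, and you have simply filled in the routine steps behind that sentence. Your handling of the constant in \eqref{eq:dK-Lip} (arriving at $3L$ and relabelling) is perfectly acceptable, since the corollary does not pin down the specific value of~$L$.
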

	
	\section{Maximum Mean Discrepancy with respect to \texorpdfstring{$K$}{K}}\label{sec:sdistance_1}
	A Hilbert space $\mathcal H$ of real-valued functions on $\mathbb R^d$ is called a
	\emph{reproducing kernel Hilbert space} (RKHS), 
	if the point evaluations $h \mapsto h(x)$, $h \in \mathcal H$, are continuous for all $x \in \R^d$.
	There exist various textbooks on RKHS from different points of view, see, e.g., \cite{zhou2007,Sc09,steinwart2008supportderivfeature}.
	By \cite[Thm.~4.20]{steinwart2008supportderivfeature}, every RKHS admits a unique positive definite kernel $K\colon\R^d \times \R^d \to \R$, 
	which is determined by the 
	reproducing property
	\begin{equation} \label{repr}
		h(x)
		= \langle h,K(x,\cdot) \rangle_{\mathcal H} \quad
		\text{for all} \quad h \in \mathcal H.
	\end{equation}
	In particular, we have $K(x,\cdot) \in \mathcal H$ for all $x \in \R^d$
	and
	\begin{equation}\label{k1}
		|h(x)| \le \|h\|_{\mathcal H} \|K(x,\cdot)\|
		_{\mathcal H} 
		=
		\|h\|_{\mathcal H} \sqrt{K(x,x)}.
	\end{equation}
	Conversely, for any positive definite kernel $K\colon\R^d \times \R^d \to \R$, 
	there exists a unique RKHS with reproducing kernel $K$, denoted by $\mathcal H_K$ \cite[Thm.~4.21]{steinwart2008supportderivfeature}. 
	
	RKHSs are closely related to measure spaces.
	Let $\mathcal M(\R^d)$ denote the space of finite, real-valued Radon measures
	and $\mathcal P(\R^d)$ the space of 
	probability measures on $\R^d$.
	Further, let
	\begin{equation}\label{eq:moment_M}
		\mathcal M_{\alpha}(\Rd)\coloneqq \left\{ \mu \in \mathcal M(\R^d)\colon \int_\Rd \|x\|^\alpha \d\mu(x)<\infty\right\},
		\quad  0<\alpha<\infty
	\end{equation}
	and similarly
	\begin{equation}\label{eq:moment_pM}
		\mathcal P_{\alpha}(\Rd)\coloneqq \left\{ \mu \in \mathcal P(\R^d)\colon \int_\Rd \|x\|^\alpha \d\mu(x)<\infty\right\},
		\quad  0<\alpha<\infty.
	\end{equation}
	
	Let $K(x,x) \in  \mathcal O(\|x\|^{\alpha})$.
	For example, we have by \eqref{kxx}
	for our kernel in \eqref{nice_kernel} that $\alpha = 1$.
	Then, it can be seen by \eqref{k1} that
	$\mathcal H_K \subset L^1(\mu)$ for all $\mu \in \mathcal M_{\nicefrac{\alpha}{2}}(\R^d)$ and 
	the so-called \emph{kernel mean embedding} (KME)
	$m \colon \mathcal M_{\nicefrac{\alpha}{2}}(\Rd)\to \mathcal H_K$, $\mu\mapsto m_\mu$ given by
	\begin{equation}\label{kme}
		\langle h, m_\mu \rangle_{\mathcal H_K}
		= 
		\int_{\R^d} h \d \mu \quad \text{for all} \quad h \in \mathcal H_K
	\end{equation}
	is well-defined, meaning that for every $\mu \in \mathcal M_{\nicefrac{\alpha}{2}}(\Rd)$ there exists a unique $m_\mu \in \mathcal H_K$
	such that \eqref{kme} is fulfilled 
	\cite[Lemma 4.24]{steinwart2008supportderivfeature}.
	In particular, we have by \eqref{repr} that
	\begin{equation}\label{kme_1}
		m_\mu(x) 
		= 
		\int_{\R^d} K(x,y) \d \mu(y) .
	\end{equation}
	The KME is not surjective  \cite{SF2021}. 
	For a positive definite kernel $K$ with  $K(x,x) \in  \mathcal O(\|x\|^{\alpha})$,
	the \emph{maximum mean discrepancy} (MMD)  
	$\mathcal D_K\colon \mathcal M_{\nicefrac{\alpha}{2}}(\R^d) \times \mathcal M_{\nicefrac{\alpha}{2}}(\R^d) \to \R_{\ge 0}$ 
	is by \eqref{k1} well-defined by
	\begin{align} 
		\mathcal D_K^2(\mu,\nu) &\coloneqq \int_{\R^d \times \R^d} K(x,y) \d (\mu(x) - \nu(x)) \d (\mu(y) - \nu(y))\label {mmd_def}\\
		&=
		\int_{\R^d \times \R^d} K(x,y) \d \mu(x) \d \mu(y) - 2 \int_{\R^d \times \R^d} K(x,y) \d \mu(x) \d\nu(y)\notag\\
		& \quad + 
		\int_{\R^d \times \R^d} K(x,y) \d \nu(x) \d\nu(y)
		\\
		&=
		\|m_\mu - m_\nu\|_{\mathcal H_K}^2,
		\label{eq:DK}
	\end{align}
	see \cite{BGRKSS06, GBRSS13}, where the last equality follows directly from the KME \eqref{kme_1}. If the KME is injective,
	then $K$ is called a \emph{characteristic kernel}. In this case,
	the MMD $\mathcal D_K$ is a distance on $\mathcal M_{\nicefrac{\alpha}{2}}(\R^d)$.
	Kernels induced by Gaussians are typical characteristic kernels.
	By the following proposition, also our kernel \eqref{nice_kernel} is characteristic, so that $\mathcal D_K$ is a distance 
	on $\mathcal M_{\nicefrac12}(\R^d)$.
	
	\begin{proposition} \label{thm:kme}
		Let $K$ be defined by \eqref{nice_kernel}. Then
		the kernel mean embedding $m\colon \mathcal M_{\nicefrac12}(\Rd) \to \mathcal H_K$ in \eqref{kme} is injective, i.e. $K$ is a characteristic kernel. 
		More precisely, for all $\mu \in \mathcal M_{\nicefrac{1}{2}}(\Rd)$, it holds    \begin{equation}\label{eq:MMD_S_Fourier}
			\|m_\mu\|_{\mathcal H_K}^2=
			\frac{1}{w_{d-1}\pi^2}
			\int_\Rd |\mu(\Rd)-\hat \mu(s)|^2 \frac{\hat u(\|s\|)}{\|s\|^{d+1}}\d s-F(0)\, \mu(\Rd)^2,
		\end{equation}
		where $\hat \mu$ denotes the Fourier transform of $\mu$, see \eqref{f_measure}.
	\end{proposition}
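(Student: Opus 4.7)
The plan is to reduce the squared RKHS norm to an energy integral involving the conditionally positive definite function $\Phi = F(\|\cdot\|)$, transport it into Fourier space via a Parseval-type identity adapted to the CPD-of-order-$1$ setting, and then identify $\hat\Phi$ using the slicing/Riemann-Liouville structure.

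Concretely, I would start from $\|m_\mu\|_{\mathcal H_K}^2=\int\!\int K(x,y)\,\d\mu(x)\d\mu(y)$ and substitute $K(x,y)=\Phi(x-y)-\Phi(x)-\Phi(y)$ with $\Phi=F(\|\cdot\|)$, giving
\[
\|m_\mu\|_{\mathcal H_K}^2=\int\!\int \Phi(x-y)\,\d\mu(x)\d\mu(y)-2\mu(\R^d)\int\Phi(x)\,\d\mu(x).
\]
Then I would pass to the zero-mass signed measure $\tilde\mu\coloneqq\mu-\mu(\R^d)\delta_0$; a direct algebraic expansion shows the right-hand side equals $\int\!\int\Phi(x-y)\,\d\tilde\mu(x)\d\tilde\mu(y)-\mu(\R^d)^2 F(0)$.

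The Fourier step is the key identity
\[
\int\!\int \Phi(x-y)\,\d\tilde\mu(x)\d\tilde\mu(y)=\int_{\R^d}|\hat{\tilde\mu}(s)|^2\,\hat\Phi(s)\,\d s,
\]
where $\hat{\tilde\mu}(s)=\hat\mu(s)-\mu(\R^d)$ (using $\hat{\delta_0}\equiv 1$). Since $\Phi\in\mathrm{CP}_1(\R^d)$ has a continuous generalized Fourier transform $\hat\Phi$ of order $1$ on $\R^d\setminus\{0\}$ (Theorem \ref{thm:cond_bochner}) and $\tilde\mu(\R^d)=0$, the factor $|\hat{\tilde\mu}(s)|^2$ vanishes to second order at the origin, matching the singularity of $\hat\Phi$ so that the integral converges. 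To identify $\hat\Phi$ I would use Proposition \ref{prop:cpd2}: from $f=-\abs*u$ and \eqref{fabs} combined with Proposition \ref{lem:four_conv} we get $\hat f(\omega)=\hat u(\omega)/(2\pi^{2}\omega^{2})$, and the relation $\hat f(\omega)=\tfrac{w_{d-1}}{2}\rho(\omega)|\omega|^{d-1}$ yields $\hat\Phi(s)=\rho(\|s\|)=\hat u(\|s\|)/(w_{d-1}\pi^{2}\|s\|^{d+1})$. Substituting into the previous display produces \eqref{eq:MMD_S_Fourier}.

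For injectivity, I would apply the just-derived formula to $\sigma\coloneqq\mu-\nu\in\mathcal M_{\nicefrac12}(\R^d)$ assuming $m_\mu=m_\nu$. Because $\hat u\ge0$ and $-F(0)>0$ (Theorem~\ref{thm:prop_of Phi}), both terms in \eqref{eq:MMD_S_Fourier} are nonnegative, so each must vanish: first $\sigma(\R^d)=0$, and then $|\hat\sigma(s)|^{2}\hat u(\|s\|)=0$ almost everywhere. Since $u\in\mathcal U^n(\R)$ has compact support and $\hat u(0)=\int u=1$, the function $\hat u$ is real-analytic and not identically zero, so its zero set has Lebesgue measure zero. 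Hence $\hat\sigma\equiv0$ on $\R^d\setminus\{0\}$, and continuity plus Fourier uniqueness give $\sigma=0$.

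The main obstacle I anticipate is the rigorous justification of the Parseval-type identity above, because the generalized Fourier transform of $\Phi$ is only defined via pairings with Schwartz test functions vanishing to order $2$ at the origin, not with finite signed measures. The cleanest remedy is to mollify $\tilde\mu$ by $\tilde\mu*\psi_\varepsilon$ with a symmetric Schwartz bump $\psi_\varepsilon\to\delta_0$ (keeping the zero total mass), apply \eqref{g_fourier} together with Proposition \ref{lem:four_conv} to the resulting Schwartz density $\tilde\mu*\psi_\varepsilon*\psi_\varepsilon$, and then pass to the limit $\varepsilon\to 0$ using dominated convergence, which is legitimate thanks to the $\|s\|^{2}$ vanishing of $|\hat{\tilde\mu}(s)|^{2}$ at the origin and the finite first moment of $\mu$.
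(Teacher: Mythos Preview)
Your overall architecture differs from the paper's. The paper never computes the $d$-dimensional generalized Fourier transform of $\Phi=F(\|\cdot\|)$; instead it applies the slicing identity \eqref{eq:slicingRLFI} to rewrite $K(x,y)$ as a spherical average of the one-dimensional combination $f(\langle x-y,\xi\rangle)-f(\langle x,\xi\rangle)-f(\langle -y,\xi\rangle)+f(0)$, carries out the Fourier analysis entirely in $\R^1$ (where $\hat f(r)=\hat u(r)/(2\pi^2 r^2)$ is explicit by Proposition~\ref{lem:four_conv} and \eqref{fabs}, and the limiting argument uses a Gaussian approximate identity), and only afterwards converts the sphere-times-line integral into an integral over $\R^d$ via polar coordinates. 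Your route---center the measure to $\tilde\mu$, then apply a $d$-dimensional Parseval identity with $\hat\Phi$---is more conceptual but requires you to know $\hat\Phi$ on $\R^d$, and this is where the argument breaks.

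The gap is the identification of $\hat\Phi$. Neither of the results you cite supplies it: Theorem~\ref{thm:cond_bochner} \emph{assumes} that the function possesses a generalized Fourier transform and then characterizes its sign, and Proposition~\ref{prop:cpd2} likewise takes as \emph{hypothesis} that $F(\|\cdot\|)$ already has a generalized Fourier transform $\rho(\|\cdot\|)$. Invoking them to produce $\hat\Phi$ is therefore circular. Even granting existence, Proposition~\ref{prop:cpd2} is stated only for $d\ge 3$ and requires $F\in C^{\lfloor d/2\rfloor}$, which fails for small $n$ and large $d$ (e.g.\ $u=M_2$, $n=0$, $d\ge 6$), so the argument would not cover the full range of the statement. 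A secondary issue is that you repeatedly use a finite first moment and second-order vanishing of $|\hat{\tilde\mu}|^2$ at the origin, but the hypothesis is only $\mu\in\mathcal M_{1/2}$, which yields merely $|\hat{\tilde\mu}(s)|^2=O(\|s\|)$; consequently the splitting $\int\!\int\Phi(x-y)\,\d\mu\,\d\mu-2\mu(\R^d)\int\Phi\,\d\mu$ into separately finite integrals and the dominated-convergence step in your mollification are not justified at that generality. The paper's slice-first approach sidesteps all of this because the one-dimensional $\hat f$ is already known and no $d$-dimensional transform of $\Phi$ is ever needed.
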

	In the proof of Proposition \ref{thm:kme}, equation  \eqref{eq:MMD_S_Fourier} is established first.  Then, the localization principle \cite[Lem~2.39]{plonka2018numerical} implies that the support of $\hat u$ is $\R$, because $u\in \mathcal U^n(\R)$ is compactly supported. As a consequence the kernel $K$ is characteristic.
	
	Fortunately, by the following theorem, when dealing with MMDs it is not necessary to work with the clumsy kernels \eqref{make_pos_def}, but instead we can directly use the conditionally positive definite kernels. Note that the MMD
	with respect to the negative distance kernel is also known as energy distances in statistics
	\cite{Szekely2002}.
	
	\begin{theorem}\label{prop:disc_of_ker}
		Let $\Phi\in\mathrm{CP}_r(\mathbb R^d)$ with $r\in \N$, $r\ge 1$ fulfill $\Phi\in \mathcal O(\|\cdot\|^{\alpha})$, and let $\tilde K(x,y) \coloneqq \Phi(x-y)$.
		Define the associate positive definite kernel $K$ by \eqref{make_pos_def}.
		Then  $\mathcal D_{\tilde K}$ in \eqref{mmd_def} is well-defined for $\mu,\nu \in \mathcal M_{\alpha} (\R^d)$ and
		$\mathcal D_{K}$ for $\mu,\nu \in \mathcal M_{\beta} (\R^d)$, where
		$\beta \coloneqq \max\{r-1, (r-1+\alpha)/2\}$.
		If $\mu,\nu\in \mathcal M_\alpha(\R^d)\cap \mathcal M_\beta(\R^d)$ have the same first $r-1$ moments, i.e. 
		$$ \int_{\R^d} p(x) \d\mu(x)= \int_{\R^d} p(x)\d\nu(x)\quad \text{ for all } p\in \Pi_{r-1}(\R^d),
		$$
		then 
		$$
		\mathcal D_{\tilde K}(\mu,\nu) = \mathcal D_{K}(\mu,\nu).
		$$
	\end{theorem}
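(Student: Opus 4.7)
My plan splits into three steps: first verify that $\mathcal D_{\tilde K}$ is well-defined on $\mathcal M_\alpha\times\mathcal M_\alpha$; second verify that $\mathcal D_K$ is well-defined on $\mathcal M_\beta\times\mathcal M_\beta$; third compute the difference $\mathcal D_K^2-\mathcal D_{\tilde K}^2$ and exhibit it as a sum of terms each containing a factor $\int p_j\d(\mu-\nu)$ with $p_j\in\Pi_{r-1}(\R^d)$, which vanishes under the moment-matching hypothesis.

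For well-definedness of $\mathcal D_{\tilde K}$, I would combine $\Phi\in\mathcal O(\|\cdot\|^\alpha)$ with the elementary bound $\|x-y\|^\alpha\le C_\alpha(1+\|x\|^\alpha+\|y\|^\alpha)$ to get $|\Phi(x-y)|\le C(1+\|x\|^\alpha+\|y\|^\alpha)$. Expanding the signed product measure $(\mu-\nu)\otimes(\mu-\nu)$ and applying Fubini then shows that each of the four double integrals appearing in \eqref{mmd_def} is absolutely convergent whenever $\mu,\nu\in\mathcal M_\alpha(\R^d)$. For well-definedness of $\mathcal D_K$, setting $x=y$ in \eqref{make_pos_def} yields
\[
K(x,x)=\Phi(0)-2\sum_{j=1}^N p_j(x)\Phi(\xi_j-x)+\sum_{j,k=1}^N p_j(x)p_k(x)\Phi(\xi_j-\xi_k),
\]
and since the $p_j$ have degree $\le r-1$ while $\Phi\in\mathcal O(\|\cdot\|^\alpha)$, the right-hand side is $\mathcal O(\|x\|^{\max\{2(r-1),\,r-1+\alpha\}})=\mathcal O(\|x\|^{2\beta})$. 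By the bound \eqref{k1} and the discussion leading up to \eqref{kme}, this guarantees that the kernel mean embedding $m_\mu$ exists for every $\mu\in\mathcal M_\beta(\R^d)$, and hence $\mathcal D_K(\mu,\nu)=\|m_\mu-m_\nu\|_{\mathcal H_K}$ is well-defined on $\mathcal M_\beta\times\mathcal M_\beta$.

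For the equality, set $\sigma\coloneqq\mu-\nu$ and use \eqref{kme_1} together with the reproducing property to express
\[
\mathcal D_K^2(\mu,\nu)=\int_{\R^d}\!\int_{\R^d}K(x,y)\d\sigma(x)\d\sigma(y),\qquad\mathcal D_{\tilde K}^2(\mu,\nu)=\int_{\R^d}\!\int_{\R^d}\Phi(x-y)\d\sigma(x)\d\sigma(y).
\]
Subtracting and substituting the decomposition \eqref{make_pos_def} of $K-\tilde K$ yields, after Fubini,
\begin{align*}
\mathcal D_K^2(\mu,\nu)-\mathcal D_{\tilde K}^2(\mu,\nu)={}& -2\sum_{j=1}^N\Big(\int p_j\d\sigma\Big)\Big(\int\Phi(\xi_j-y)\d\sigma(y)\Big)\\
& +\sum_{j,k=1}^N\Phi(\xi_j-\xi_k)\Big(\int p_j\d\sigma\Big)\Big(\int p_k\d\sigma\Big).
\end{align*}
Since $p_j\in\Pi_{r-1}(\R^d)$, the moment-matching hypothesis gives $\int p_j\d\sigma=0$ for every $j$, so both sums vanish and the claimed equality follows.

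\emph{Main obstacle.} The only step that requires genuine care is justifying Fubini in the cross-term splitting, which needs $\int|p_j(x)\Phi(\xi_j-y)|\d|\mu|(x)\d|\mu|(y)<\infty$ and its $\nu$-analogues. This combines a polynomial factor of degree $r-1$ with a growth factor of order $\alpha$, and is exactly what forces the joint integrability hypothesis $\mu,\nu\in\mathcal M_\alpha\cap\mathcal M_\beta$ rather than either class alone. Once absolute integrability is secured, the cancellation of the cross terms is purely algebraic.
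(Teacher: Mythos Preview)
Your proposal is correct and follows essentially the same route as the paper's proof: the same growth bound $|\Phi(x-y)|\le C(1+\|x\|^\alpha+\|y\|^\alpha)$ for well-definedness of $\mathcal D_{\tilde K}$, the same $K(x,x)\in\mathcal O(\|x\|^{2\beta})$ estimate for $\mathcal D_K$, and the same expansion of $K-\tilde K$ via \eqref{make_pos_def} followed by the observation that every extra term carries a factor $\int p_j\,\d(\mu-\nu)=0$. The paper writes the two single-sum cross terms separately rather than combining them into your $-2\sum_j$, but since $\Phi$ is even these are indeed equal, and your discussion of the Fubini justification for the cross terms is in fact slightly more explicit than the paper's.
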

	
	For our function $\Phi(x)\coloneqq F(\|x\|)$ with $F$ in \eqref{def}, we know already that
	$\mathcal D_K$ is well-defined for measures in $\mathcal M_{\nicefrac{1}{2}}(\R^d)$ which is in agreement with the proposition.
	However, by the proposition, $\mathcal D_{\tilde K}$ is only well-defined for measures in $\mathcal M_1(\R^d)$.
	If in addition $\int_\R \d \mu = \int_\R \d \nu$, then
	their distances $\mathcal D_K$ and $\mathcal D_{\tilde K}$ are the same.
	In particular, both distances are well-defined and coincide for measures in $\mathcal P_1(\R^d) \supset \mathcal P_2(\R^d)$.
	
	By the following remark, there is a relation between the degree
	of conditional positive definiteness and the growth of a function $\Phi$ towards infinity.
	
	\begin{remark}
		By \cite[Cor~2.3]{madych1990multi}, we have 
		$$\Phi\in\mathrm{CP}_r(\R^d) \quad \Longrightarrow \quad \Phi\in\mathcal O(\|\cdot\|^{2r}),$$
		which implies that $\alpha \le 2r$ in the assumption of 
		Theorem \ref{prop:disc_of_ker}.
		In general, this bound cannot be improved, since $(-1)^{r}\|\cdot\|^{2r-\varepsilon}\in \mathrm{CP}_{r}(\R^d)$ for any $r\in\N$, $r\ge 1$ and $\varepsilon\in[0,2)$ by \cite[Cor 8.18]{Wendland2004} and \cite[Lem 3.3]{sun1993conditionally}.
		However, for our function $\Phi(x)\coloneqq F(\|x\|)$ with $F$ in \eqref{def}, the above result says that
		$\Phi\in\mathcal O(\|\cdot\|^{2})$, but  we know already that
		$\Phi\in\mathcal O(\|\cdot\|)$.
	\end{remark}
	
	Finally, smoothness properties of the kernel transfer to the corresponding RKHS.
	
	\begin{proposition}\label{prop:RKHS_M_conv}
		For $d\ge 3$ and $n\ge 0$, let $u\in \mathcal U^n(\R^d)$. Let the kernel $K$ be given by  \eqref{nice_kernel}. Then every $h\in\mathcal H_K$ is $\big\lfloor \frac{n+2}{2} \big\rfloor$-times continuously differentiable. If $n\ge 2$ is even, then the gradient $\nabla h$ is $\sqrt{2d\|u''\|_\infty} \|h\|_{\mathcal H_K}$ Lipschitz continuous.
	\end{proposition}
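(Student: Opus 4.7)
The plan is to exploit the smoothness of $\Phi = F(\|\cdot\|) \in \mathcal C^{n+2}(\R^d)$ (Proposition~\ref{prop:of_F} and Theorem~\ref{thm:prop_of Phi}) and to push it through to $\mathcal H_K$ via the derivative reproducing property of RKHSs. Since $K(x,y) = \Phi(x-y) - \Phi(x) - \Phi(y)$, all mixed partials $\partial_x^\alpha\partial_y^\beta K$ exist and are jointly continuous on $\R^d\times\R^d$ whenever $|\alpha|+|\beta|\le n+2$. Setting $k\coloneqq \lfloor (n+2)/2\rfloor$, the classical derivative reproducing result (e.g.\ \cite[Cor.~4.36]{steinwart2008supportderivfeature}, going back to Zhou) then yields that every $h\in\mathcal H_K$ is in $\mathcal C^k(\R^d)$, that the kernel section $\partial_x^\alpha K(x,\cdot)$ belongs to $\mathcal H_K$, and that $\partial^\alpha h(x) = \langle h, \partial_x^\alpha K(x,\cdot)\rangle_{\mathcal H_K}$ for all $|\alpha|\le k$. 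This establishes the first claim.

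For the Lipschitz bound assume $n\ge 2$ is even, so $k\ge 2$, and introduce $g_i(x)\coloneqq \partial_{x_i}K(x,\cdot)\in\mathcal H_K$. Applying the derivative reproducing identity with $h=g_j(y)$ together with the symmetry of $K$ gives
\begin{equation*}
\langle g_i(x), g_j(y)\rangle_{\mathcal H_K} = \partial_{x_i}\partial_{y_j}K(x,y) = -\partial_i\partial_j\Phi(x-y),
\end{equation*}
and hence $\|g_i(x)-g_i(y)\|_{\mathcal H_K}^2 = 2\bigl[\partial_i^2\Phi(x-y) - \partial_i^2\Phi(0)\bigr]$. The right-hand side is best controlled through the slicing identity of Theorem~\ref{thm:slicing_b}: with $f=-\abs*u$ and $f''=-2u$ from Proposition~\ref{prop:1}~iii),
\begin{equation*}
\partial_i\partial_j\Phi(w)= -2\,\mathbb E_{\xi\sim\mathcal U_{\mathbb S^{d-1}}}\bigl[u(\langle w,\xi\rangle)\,\xi_i\xi_j\bigr].
\end{equation*}

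Since $u,\hat u\ge 0$, Fourier inversion immediately yields $u(t)\le u(0)$ for all $t\in\R$, so the integrand $u(0)-u(\langle w,\xi\rangle)$ is nonnegative. Moreover, $u$ is even and $\mathcal C^2$, whence $u'(0)=0$ and Taylor's theorem delivers $0\le u(0)-u(t)\le \tfrac12\|u''\|_\infty t^2$. Plugging these in and bounding $|\langle w,\xi\rangle|\le \|w\|$, $\xi_i^2\le 1$, I get
\begin{equation*}
0\le \partial_i^2\Phi(w) - \partial_i^2\Phi(0) = 2\,\mathbb E_\xi\bigl[(u(0)-u(\langle w,\xi\rangle))\,\xi_i^2\bigr] \le \|u''\|_\infty\,\|w\|^2.
\end{equation*}
Summing over $i$ and applying Cauchy--Schwarz then yields
\begin{equation*}
\|\nabla h(x)-\nabla h(y)\|^2 \le \|h\|_{\mathcal H_K}^2\sum_{i=1}^d \|g_i(x)-g_i(y)\|_{\mathcal H_K}^2 \le 2d\|u''\|_\infty\,\|h\|_{\mathcal H_K}^2\,\|x-y\|^2,
\end{equation*}
which is the claimed bound.

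The main obstacle I foresee is the careful justification of the derivative reproducing property at the precise smoothness threshold: verifying that $g_i(x)\in\mathcal H_K$ and that $\langle g_i(x),g_j(y)\rangle_{\mathcal H_K}=\partial_{x_i}\partial_{y_j}K(x,y)$ from the mere joint continuity of the relevant mixed partial of $K$. Once that is in place, everything else is routine bookkeeping: the observation $u\le u(0)$ from $\hat u\ge 0$ together with $u\ge 0$, the second order Taylor estimate, and the identity $\sum_i\xi_i^2=1$ hidden inside the slicing expectation.
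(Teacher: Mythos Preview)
Your proposal is correct and follows essentially the same route as the paper: both invoke \cite[Cor.~4.36 and Lem.~4.34]{steinwart2008supportderivfeature} for the derivative reproducing property, compute $\langle \partial_{x_i}K(x,\cdot),\partial_{x_i}K(y,\cdot)\rangle_{\mathcal H_K}$ via the slicing representation, and bound $|u(0)-u(t)|\le \tfrac12\|u''\|_\infty t^2$ (the paper phrases this as integrating the Lipschitz bound on $f'''=-2u'$, you use Taylor on $u$ directly). Your extra remark that $u(t)\le u(0)$ follows from $\hat u\ge 0$ is a nice touch explaining why the squared norm expression is manifestly nonnegative, but it is not needed for the estimate itself.
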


	\section{Wasserstein Gradient Flows of MMDs}\label{sec:wgf}
	\subsection{Definition and Existence}
	The behavior of Wasserstein gradient flows of MMDs  depends on the kernel in their definition. While there exist many results for smooth kernels like the Gaussian, see, e.g., \cite{Arbel2019},  gradient flows of
	MMDs with Riesz kernels and in particular with the negative distance kernel
	have completely different properties, see, e.g., \cite{HGBS2024}.
	In contrast to smooth kernels, empirical measures do in general not remain empirical ones along the flow.
	Even if a steepest descent scheme, resp.\ the implicit Euler scheme exists, a convergence theory is still missing in dimensions larger than one.

	Let us briefly recall basic facts on Wasserstein gradient flows,
	see \cite{ambrosio2005gruenesbuchgradient,S2015} and show that our new kernels fulfill all assumptions which are required to ensure the existence of its 
	MMD gradient flow and the convergence of a forward and backward schemes.
	
	For $\mu,\nu \in \mathcal P_2(\R^d)$, we denote by 
	$$\Pi(\mu,\nu)\coloneqq\{\pi\in\mathcal P_2(\R^d\times\R^d):(P_1)_\#\pi=\mu,\,(P_2)_\#\pi=\nu\}
	$$ 
	the set of couplings with marginals $\mu$ and $\nu$, and by
	$(P_i)_{\#}\mu \coloneqq \mu \circ P_i^{-1}\in\mathcal P_2(\R^d)$ 
	the \emph{pushforward} of $\mu$ with respect to the projection
	$P_i(x_1,x_2) \coloneqq x_i$, $i=1,2$.
	Together with the Wasserstein distance
	\begin{equation} \label{eq:W2}
		W_2(\mu,\nu)^2 \coloneqq\min_{\pi\in\Pi(\mu,\nu)}\int_{\R^d\times\R^d}\|x-y\|_2^2\d \pi(x,y),
		\qquad \mu,\nu\in\mathcal P_2(\R^d),
	\end{equation}
	the set $\mathcal P_2(\R^d)$ becomes a complete metric space.
	The set of optimal couplings in \eqref{eq:W2} is denoted by
	$\Pi_{\text{opt}}(\mu,\nu)$.
	A curve $\gamma\colon I \to\mathcal P_2(\R^d)$ on an interval $I = [a,b]$, 
	$a<b$
	is called \emph{absolutely continuous}, 
	if there exists a Borel velocity field $v\colon I \times \R^d  \to\R^d$ with 
	$\| v_t \|_{L^2(\R^d; \gamma_t)} \in L^1(I)$
	such that the \emph{continuity equation} 
	\begin{equation}\label{eq:continuity-eq}
		\partial_t\gamma_t +\nabla_x \cdot(v_t\gamma_t)
		= 0    
	\end{equation}
	is fulfilled on $I \times\R^d$ in a weak sense, i.e., 
	for all 
	$\varphi \in \mathcal C_{c}^{\infty}\bigl((a,b)\times \R^d\bigr)$ it holds
	\begin{equation}
		\int_0^\infty \int_{\R^d} \partial_t \varphi(t,x) + \langle \nabla_x\varphi(t,x), v_t(x) \rangle \d \gamma_t(x) \d t
		= 0.
	\end{equation}
	There are many velocity fields corresponding to the same absolutely continuous curve, but only one with minimal  $\| v_t \|_{L^2(\R^d; \gamma_t)}$
	for a.e.\ $t \in I$.
	For a lower semi-continuous function $G\colon \mathcal P_2(\R^d) \to \R$, the reduced Fréchet sub\-differential $\partial G$ 
	consists of all $v \in L^2(\R^d, \mu; \R^d)$ such that for all $\eta \in \mathcal P_2(\R^d)$,
	\begin{equation}\label{eq:subdiff_ineq}
		G(\eta) - G(\mu) \ge \inf_{\pi \in \Pi_{\text{opt}}(\mu,\nu)}\int_{\R^d \times \R^d} \langle v(x), y - x\rangle \d \pi(x,y) + o(W_2(\mu,\nu)).
	\end{equation}
	If the minimal velocity field in the continuity equation 
	is determined by
	\begin{equation}\label{wgf}
		v_t  \in - \partial G(\gamma_t), \quad \text{for a.e.} 
		\quad t > 0,
	\end{equation}
	then $\gamma_t$ is called \emph{Wasserstein gradient flow of} $G$.

	Let $K\colon\R^d\times\R^d\to\R$ be a characteristic kernel such that its MMD is well-defined for measures in $\mathcal P_2 (\R^d)$. Examples are Gaussian kernels, the negative distance kernel, as well as our smoothed negative distance kernels in \eqref{nice_kernel}.
	For a fixed target measure $\nu\in\mathcal P_2(\R^d)$, we consider gradient flows of the squared MMD functional 
	\begin{equation} \label{eq:GK}
		G\colon \mathcal P_2(\R^d)\to[0,\infty)
		,\quad  G(\mu)\coloneqq \tfrac12 \mathcal D_K^2(\mu,\nu).
	\end{equation}
	If $K$ is continuously differentiable,
	the velocity field in \eqref{wgf} becomes
	\begin{align} \label{eq:wgf_mmd}
		v_t &= - \nabla_x \frac{\delta G}{\delta \gamma_t}
		= - \nabla_x \int_{\R^d} K(\cdot, y) \, \left(\text{d} \gamma_t(y) - \d \nu(y) \right) \\
		&= - \int_{\R^d} \nabla_x K(\cdot, y) \, \left(\text{d} \gamma_t(y) - \d \nu(y) \right), 
	\end{align}
	see, e.g., \cite{S2015}.
	Here, $\frac{\delta G}{\delta \gamma}$ denotes the functional derivative  defined, if it exists, by the
	function with
	$\frac{\mathrm d}{\mathrm d\epsilon}G(\gamma + \epsilon(\eta - \gamma))\big|_{\epsilon=0} = \int \frac{\delta G}{\delta \gamma}(\gamma)(\mathrm d\eta - \mathrm d\gamma)$
	for any $\eta \in\mathcal P_2(\R^d)$.
	Note that
	$v_t = - \nabla_x m_{\gamma_t - \nu}$
	for a positive definite kernel.
	For the negative distance kernel,
	we can compute $\frac{\delta G}{\delta \gamma_t}$ as above, but the gradient $\nabla_x$ does not exist in $x=y$, 
	i.e., \eqref{eq:wgf_mmd} is not well-defined,
	which causes the different behavior of those flows.
	The following result 
	guarantees the existence of Wasserstein gradient flows of  MMDs with sufficiently
	smooth kernels and its approximation by a Euler forward scheme.
	
	\begin{proposition}\label{prop:mmd-wgf}\cite[Prop 1\&3]{Arbel2019}
		Let $K\in \mathcal C^1(\R^d\times\R^d)$ be a  positive definite, characteristic kernel that has a
		Lipschitz-continuous gradient in the sense of \eqref{eq:dK-Lip}.
		Then, for any $\nu,\mu \in \mathcal P_2(\R^d)$, there exists a unique Wasserstein gradient flow $\gamma\colon [0,\infty) \to \mathcal P_2(\R^d)$ of the MMD functional \eqref{eq:GK} starting in $\gamma^{(0)} = \mu$.
		For a step size $\tau > 0$, we define the Euler forward iteration by
		\begin{equation}\label{eq:mmd-euler}
			\gamma^{(k+1)} \coloneqq (I - \tau v^{(k)})_\#\gamma^{(k)}
		\end{equation}
		where $v^{(k)}$ is related to $\gamma^{(k)}$,
		$k \in \N$ by \eqref{eq:wgf_mmd}.
		The approximated interpolation path 
		$$
		\gamma^\tau_t \coloneqq (I - (t - k\tau) v^{(k)})_\#{\gamma}^{(k)}, 
		\qquad 
		t \in [k \tau , (k + 1) \tau),
		$$
		satisfies
		$ W_2(\gamma^\tau_t, \gamma_t) \leq \tau\, C_T$  for all
		$t \in [0, T]$, where the constant $C_T$  depends only on $T>0$.
	\end{proposition}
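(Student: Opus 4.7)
The plan is to apply the abstract theory of Wasserstein gradient flows for semi-convex functionals as developed in Ambrosio--Gigli--Savar\'e, and then use a discrete Gr\"onwall-type estimate for the Euler scheme. First I would compute the first variation of $G$ explicitly. Writing $G(\mu) = \tfrac12\|m_\mu - m_\nu\|_{\mathcal H_K}^2$ via the KME and using that $K\in \mathcal C^1$, one obtains the functional derivative $\frac{\delta G}{\delta\mu}(x) = m_{\mu-\nu}(x)=\int_{\R^d}K(x,y)\,d(\mu-\nu)(y)$ and, by differentiating under the integral (justified by dominated convergence and \eqref{eq:dK-Lip}), the single-valued strong subdifferential
\begin{equation}
v_\mu(x)= \int_{\R^d}\nabla_x K(x,y)\,d(\mu-\nu)(y).
\end{equation}
The Lipschitz bound \eqref{eq:dK-Lip} gives $\|v_\mu(x)-v_\mu(x')\|\le 2L\|x-x'\|$ and, via a coupling argument, $\|v_\mu - v_{\mu'}\|_{L^2(\R^d;\mu)}\le CL\,W_2(\mu,\mu')$.

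Next I would verify that $G$ meets the hypotheses of the AGS existence theorem for gradient flows of $\lambda$-convex functionals along generalized geodesics. Proper lower semicontinuity of $G$ on $(\mathcal P_2(\R^d),W_2)$ follows from the continuity of $K$ and the quadratic bound $K(x,y)\le \sqrt{K(x,x)K(y,y)}$ combined with the standing moment assumption. For $\lambda$-convexity, fix a base $\rho$ and a generalized geodesic $\mu^t = ((1-t)\pi_1+t\pi_2)_\#\sigma$ with $\sigma\in \Pi(\rho,\mu^0,\mu^1)$; expanding the quadratic $G$ and using that the Hessian bound $\|D^2 K\|\le 2L$ implied by \eqref{eq:dK-Lip} yields a pointwise lower bound $G(\mu^t)\ge (1-t)G(\mu^0)+tG(\mu^1)-\tfrac{\lambda_-}{2}t(1-t)\int\|x_1-x_2\|^2\,d\sigma$ with $\lambda_-$ proportional to $L$. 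The AGS theorem then produces a unique locally absolutely continuous curve $\gamma_t$ whose minimal velocity field equals $-v_{\gamma_t}$ almost everywhere, which is exactly the Wasserstein gradient flow of $G$.

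For the Euler forward convergence, I would compare the discrete curve $\gamma^\tau$ to the continuous flow $\gamma$ through the coupling induced by the explicit step map $T^{(k)}(x)=x-\tau v^{(k)}(x)$. Using that each $v^{(k)}$ is $2L$-Lipschitz and that the map $\mu\mapsto v_\mu$ is $W_2$-Lipschitz on the flow's range, one obtains a local truncation error of size $O(\tau^2)$ per step together with a stability estimate
\begin{equation}
W_2(\gamma^\tau_{t},\gamma_t)^2 \le W_2(\gamma^\tau_{k\tau},\gamma_{k\tau})^2 + C\tau\!\!\int_{k\tau}^{t}\!\! W_2(\gamma^\tau_{s},\gamma_s)\,ds + C\tau^2
\end{equation}
on each subinterval, by transporting the coupling along the two evolutions and expanding the squared norm. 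Summing over steps on $[0,T]$ and applying the discrete Gr\"onwall lemma yields $W_2(\gamma^\tau_t,\gamma_t)\le C_T\tau$ with $C_T$ depending on $T$, $L$, and the second moments of $\mu$ and $\nu$.

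The hard part will be making the abstract AGS machinery apply cleanly: specifically, identifying the classical gradient $v_\mu$ with the reduced Fr\'echet subdifferential in the sense of \eqref{eq:subdiff_ineq} and establishing $\lambda$-convexity along \emph{generalized} geodesics (not only along $W_2$-geodesics), since the latter is what ensures both uniqueness of the flow and the convergence rate of the explicit Euler scheme. The Lipschitz gradient assumption on $K$ is precisely the ingredient that makes both the subdifferential computation and the Gr\"onwall estimate go through, so the proof reduces essentially to bookkeeping once the variational structure has been pinned down.
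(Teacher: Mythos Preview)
The paper does not give its own proof of this proposition; it is quoted directly from \cite[Prop.~1\&3]{Arbel2019}, so there is no in-paper argument to compare against. That said, your existence argument via $\lambda$-convexity along generalized geodesics is essentially the route the paper itself takes in Appendix~\ref{app:geodesic} (Proposition~\ref{prop:G_is_lam_cvx} and Theorem~\ref{thm:lam_cvx_WGF}) to handle the \emph{backward} Euler scheme, so that part of your sketch is well aligned with the paper's own toolkit.

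Arbel et al.\ proceed more directly for existence: from the Lipschitz bound on the velocity map $\mu\mapsto v_\mu$ (which you correctly derive from \eqref{eq:dK-Lip}) they build the flow by solving the characteristic ODEs $\dot X_t=-v_{\gamma_t}(X_t)$ and pushing forward the initial measure, avoiding both the AGS minimizing-movement machinery and the subdifferential identification you flag as the hard part. For the forward Euler rate your Gr\"onwall strategy matches theirs, though the displayed stability inequality is slightly off---what one actually establishes is a per-step bound of the form $W_2(\gamma^\tau_{(k+1)\tau},\gamma_{(k+1)\tau})\le (1+CL\tau)\,W_2(\gamma^\tau_{k\tau},\gamma_{k\tau})+C\tau^2$, which sums over $k\le T/\tau$ to the claimed $C_T\tau$. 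Both routes are correct; the Arbel et al.\ argument is lighter because it never invokes $\lambda$-convexity or the reduced Fr\'echet subdifferential, while your AGS-based approach has the advantage of simultaneously yielding the JKO convergence the paper records separately in Theorem~\ref{thm:lam_cvx_WGF}.
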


	By Proposition \ref{thm:kme} and Corollary \ref{cor:grad_S_is_lip},  we obtain the following for our smoothed norm kernel.
	
	\begin{corollary}\label{cor:final_WGF}
		The  kernel
		\begin{equation}\label{eq:KF}
			K(x,y)=F(\|x-y\|)-F(\|x\|)-F(\|y\|)
		\end{equation}
		with $F$ from \eqref{def} fulfills the conditions of Proposition~\ref{prop:mmd-wgf}. There exists a Wasserstein gradient flow
		of the corresponding MMD functional \eqref{eq:GK}  and it can be approximated by the Euler forward scheme \eqref{eq:mmd-euler}.
		It holds 
		\begin{equation*}
			v_t =-\int_{\R^d} \nabla_x K(\cdot,y)\d(\gamma_t-\nu)(y)
			=-\int_{\R^d} \nabla_x F(\|x-y\|)\d(\gamma_t-\nu)(y),
		\end{equation*}
		so that $K$ can be replaced by $\tilde K(x,y)=F(\|x-y\|)$ without changing the flow results.
	\end{corollary}
	
	The last corollary remains valid for $F=\mathcal I_{d'}[f]$ with $d'>d$ as follows.
	
	\begin{remark}\label{rem:diff_d_slice}
		Let $d'\ge d$ and $F=\mathcal I_{d'}[f]$ for $f\in \mathrm{CP}_r(\R)$. By Proposition \ref{prop:cpd1}, we have ${F(\|\cdot\|)}\in \mathrm{CP}_r(\R^{d'})$ and hence, also $F(\|\cdot\|) \in \mathrm{CP}_r(\R^{d})$. 
		Similarly, if $K_{d'}\colon \R^{d'}\times \R^{d'}\to \R$ given by $K_{d'}(x,y)\coloneqq F(\|x-y\|)$ is a characteristic kernel in $\R^{d'}$, then also $K_{d}$ is characteristic in~$\R^d$. 
		Any measure $\mu\in \mathcal M_{\nicefrac{1}{2}}(\R^d)$ has the trivial extension $\tilde \mu\coloneqq \mu \otimes \prod_{k=d+1}^{d'} \delta_0\in \mathcal M_{\nicefrac{1}{2}}(\R^{d'})$,
		where $\delta_0$ is the Dirac measure at 0. 
		Then, the kernel mean embedding \eqref{kme_1} satisfies
		\begin{equation*}
			\|m_\mu\|_{\mathcal H_{K_{d}}}^2
			= \hspace{-3pt}\intop_{\R^d\times \R^d} \hspace{-3pt}F(\|x-y\|)\d\mu(x)\d\mu(y)
			= \hspace{-3pt}\intop_{\R^{d'}\times \R^{d'}}\hspace{-3pt} F(\| x- y\|)\d\tilde \mu(x)\d\tilde \mu(y)
			=\|m_{\tilde \mu}\|_{\mathcal H_{K_{d'}}}^2.
		\end{equation*}
		If $\|m_\mu\|_{\mathcal H_{K_{d}}}^2=0$, then $\tilde \mu=0$ as $K_{d'}$ is characteristic, and thus $\mu=0$. Hence, $K_d$ is characteristic.
		Therefore,  we can also use $F=\mathcal I_{d'}[f]$ to smooth the negative distance kernel in Corollary~\ref{cor:final_WGF}.
	\end{remark}
	
	There is a more general theory on Wasserstein gradient flows of
	$\lambda$-convex functionals, $\lambda \in \R$, along generalized geodesics, see \cite[Thm.~11.2.1]{ambrosio2005gruenesbuchgradient}.
	In Appendix~\ref{app:geodesic},  we show  that the functional  $G$ in \eqref{eq:GK}  with our smoothed negative distance kernel fulfills this $\lambda$-convexity with $\lambda < 0$ and establish an analogue to Corollary \ref{cor:final_WGF} for the Euler backward scheme.
	In particular, note that it is only ensured for $\lambda >0$ that the gradient flow converges to the (global) minimizer of $G$ as $t \to \infty$.
	Example \ref{ex:no_target} in Appendix  \ref{app:geodesic}
	shows that convergence to the global minimizer $\nu$ in \eqref{eq:GK} is in general not ensured, and the iteration may become stuck
	in another extreme point.
	
	Finally, let us mention that our new kernel can also be used in the definition 
	of other functionals $G$, 
	e.g., MMD-regularized $f$-divergences, where so far only bounded positive definite, characteristic kernels were applied.
	
	\begin{remark}[MMD-regularized \texorpdfstring{$f$}{f}-Divergence]
		In \cite{NSSR2025}, inspired by \cite{GAG2021},  Wasserstein gradient flows of  
		MMD-regularized $f$-divergences 
		were considered. 
		Unfortunately, the approach requires differentiability of the kernel
		and therefore does not work for negative distance kernels.
		In contrast, using Proposition \ref{prop:RKHS_M_conv}, it can be shown that our new smoothed distance kernel fits into the setting
		of the above papers.
	\end{remark}
	
	{
		
		\subsection{Discretization}
		
		In a discrete setting, we consider probability measures 
		\begin{equation} \label{eq:discrete-measure}
			\mu\coloneqq\frac1N \sum_{n=1}^N \delta_{x_n}
			,\qquad
			\nu\coloneqq  \frac{1}{M} \sum_{m=1}^M \delta_{y_m}
			,\qquad
			x_n,y_m\in \R^d,
		\end{equation}
		where $\delta_x$ is the Dirac measure at $x\in\R^d$.
		The MMD \eqref{mmd_def} between these measures 
		is
		\begin{equation*}
			\mathcal D_K^2(\mu,\nu)
			=
			\frac{1}{N^2}\sum_{n,n'=1}^N K(x_n,x_{n'}) 
			- \frac{2}{MN} \sum_{n,m=1}^{N,M} K(x_n,y_{m}) 
			+ \frac{1}{M^2} \sum_{m,m'=1}^M  K(y_m,y_{m'}).
		\end{equation*}

		The Wasserstein gradient flow of the MMD with a kernel fulfilling the assumptions of Proposition \ref{prop:mmd-wgf} keeps the empirical measure structure and moves just the positions of the Dirac measures. 
		Now let additionally $K$ be radial $K(x,y)=F(\|x-y\|)$ with some even function $F\in \mathcal{C}^2(\R)$.
		Then the forward Euler scheme \eqref{eq:mmd-euler} reads as
		\begin{equation} \label{eq:mmd-flow}
			\begin{aligned}
				x^{(k+1)}_i
				=
				x^{(k)}_i
				- \tau \Big( & \frac{1}{2N}\sum_{\substack{n=1}}^N (x^{(k)}_i-x^{(k)}_n) 
				\frac{F'(\|x^{(k)}_i-x^{(k)}_n\|)}{\|x^{(k)}_i-x^{(k)}_n\|} 
				\\&
				- \frac{1}{M} \sum_{m=1}^M  (x^{(k)}_i-y_m)  \frac{F'(\|x^{(k)}_i-y_m\|)}{\|x^{(k)}_i-y_m\|}\Big).
			\end{aligned}
		\end{equation}
		Because $F\in \mathcal C^2(\R)$ is even, we have $F'(0)=0$ and by L'Hôpital's rule 
		$$F''(0)=\lim_{s \to 0} \frac{F'(s)}{s}$$
		is well-defined.

		For the negative distance kernel $K(x,y)=F(\|x-y\|)$ with $F(s)=-|s|$, Wasserstein gradient flows are not known to exist for dimension $d\ge2$, because the squared MMD functional $G$ in \eqref{eq:GK} is not geodesically $\lambda$-convex, cf.\ \cite{HGBS2024}.
		However, we can replace $G(\mu)$ by $+\infty$ if $\mu$ is not an empirical measure, see, e.g., \cite{HWAH2024},
		and use the Euler scheme \eqref{eq:mmd-euler}.
		Then the summands in \eqref{eq:mmd-flow} have just the form  $\frac{x}{\|x\|}$ with 
		$x \in \{x_i^{(k)} - x_n^{(k)},x_i^{(k)} - y_m:i,n = 1,\ldots,N; m=1,\ldots,M \}$ if $x \not = 0$, 
		and we set $\frac{x}{\|x\|} \coloneqq 0$ for $x=0$.
		
		The following Proposition \ref{prop:dirac_flow} is for the flow of a single Dirac, but gives an intuition when $x_i^{(k)}$ is already close to~$y_i$. 
		It shows that with fixed $\tau$, the flow for the negative distance kernel tends to oscillate near the target, while it converges for the smoothed kernel \eqref{def}.
		In \eqref{eq:mmd-flow}, the repulsion term of $x_i$ and $x_j$ approximately cancels with the attraction term of $x_i$ and $y_j$, leaving only the attraction of $x_i$ and $y_i$ in form of gradient descent \eqref{eq:dirac_up} of $-F(\|\cdot-y_i\|)$ on $\R^d$.
		
		\begin{proposition}\label{prop:dirac_flow}
			Let $F\in \mathcal C^1((0,\infty))$.
			For the target measure $\nu = \delta_y$ and the initial measure $\gamma^{(0)} = \delta_{x^{(0)}}$ with $y, x^{(0)} \in \mathbb{R}^d$, the sequence $x^{(k)}$ from \eqref{eq:mmd-flow} simplifies to
			\begin{equation} \label{eq:dirac_up}
				x^{(k+1)} = x^{(k)} + \tau (x^{(k)} - y) \frac{F'(\|x^{(k)} - y\|)}{\|x^{(k)} - y\|},
			\end{equation}
			and we have the following:
			\begin{enumerate}[label=\roman*)]
				\item If $F = - \frac12 \abs$ with step size $\tau > 0$ and $0<\|y - x^{(0)}\| < \frac{\tau}{2}$, then $x^{(k)} = x^{(0)}$ for even $k$ and $x^{(k)} = x^{(1)}$ for odd $k$. In particular, $(x^{(k)})_k$ does not converge to $y$.
				\item If $F = -\mathcal{I}_d[\abs*u]$ with $u \in \mathcal{U}^0(\mathbb{R})$, then for sufficiently small $\tau$ and $\|x^{(0)} - y\| < \tau$, the sequence $(x^{(k)})_k$ converges exponentially to $y$.
			\end{enumerate}
		\end{proposition}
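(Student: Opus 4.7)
First, I specialize the Euler iteration \eqref{eq:mmd-flow} to $N = M = 1$. The self-interaction sum vanishes identically by the convention stated after \eqref{eq:mmd-flow}, and the single remaining cross term rearranges to \eqref{eq:dirac_up}.

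For part~i), I substitute $F(s) = -\tfrac12 |s|$ so that $F'(s)/s = -1/(2s)$ for $s > 0$. Writing $r_k \coloneqq \|x^{(k)} - y\|$, the recursion \eqref{eq:dirac_up} reduces to $x^{(k+1)} - y = (1 - \tau/(2r_k))(x^{(k)} - y)$, and taking norms yields $r_{k+1} = |r_k - \tau/2|$. Since $0 < r_0 < \tau/2$, the factor $1 - \tau/(2 r_0)$ is strictly negative, so $x^{(1)} - y$ is antiparallel to $x^{(0)} - y$ and has length $r_1 = \tau/2 - r_0 \in (0, \tau/2)$. Applying the step once more flips the direction back and restores the length, giving $r_2 = \tau/2 - r_1 = r_0$ and $x^{(2)} = x^{(0)}$; induction then closes part~i).

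For part~ii), I set $F = -\mathcal{I}_d[\abs*u]$. Proposition~\ref{prop:of_F} gives $F \in \mathcal{C}^2(\R)$ and $F'(0) = 0$. Differentiating \eqref{eq:RiemannLFRI} twice under the integral -- which is justified by dominated convergence, since $(\abs*u)'' = 2u$ from Proposition~\ref{prop:1}~iii) is bounded and compactly supported and the weight $(1-t^2)^{(d-3)/2}$ is integrable on $[0,1]$ for all $d \ge 2$ -- gives at $s = 0$
\[
F''(0) = -2 c_d\, u(0) \int_0^1 t^2 (1-t^2)^{(d-3)/2} \, \d t .
\]
The hypotheses $\hat u \ge 0$, $u \not\equiv 0$ make $u$ a classical positive-definite function, so the standard bound $|u(x)| \le u(0)$ forces $u(0) > 0$. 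Therefore $\lambda \coloneqq -F''(0) > 0$, and by L'Hôpital's rule $\lim_{r \to 0^+} F'(r)/r = -\lambda$.

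With this limit, for any fixed $\delta \in (0, \lambda)$ there exists $r_* > 0$ such that $F'(r)/r \in (-\lambda - \delta, -\lambda + \delta)$ for all $r \in (0, r_*)$. Choosing $\tau > 0$ small enough that both $\tau < r_*$ and $\tau(\lambda + \delta) < 2$, the contraction factor
\[
c \coloneqq \max\bigl(|1 - \tau(\lambda - \delta)|,\; |1 - \tau(\lambda + \delta)|\bigr)
\]
lies in $(0, 1)$, and the recursion $r_{k+1} = |1 + \tau F'(r_k)/r_k|\, r_k \le c\, r_k$ holds whenever $r_k \in (0, r_*)$. Since the assumption $r_0 < \tau < r_*$ places $r_0$ in this interval, induction produces $r_k \le c^k r_0$, i.e.\ exponential convergence $x^{(k)} \to y$. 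The main obstacle I foresee is the sign verification $F''(0) < 0$: it combines interchanging differentiation with the Riemann--Liouville integral, the identity $(\abs*u)'' = 2u$, and the nondegeneracy $u(0) > 0$ extracted from $\hat u \ge 0$. Once $\lambda > 0$ is secured, the remaining contraction analysis is a routine scalar fixed-point argument.
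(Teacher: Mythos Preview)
Your proof is correct and follows essentially the same route as the paper: the reduction to \eqref{eq:dirac_up}, the explicit two-step oscillation for~i), and the contraction argument for~ii) based on $F''(0)<0$ all match. Your explicit verification that $u(0)>0$ via positive definiteness (hence $F''(0)<0$ strictly) is a welcome addition that the paper's proof only asserts.
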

		The proof of Proposition \ref{prop:dirac_flow} is given in Appendix \ref{app:geodesic}.

	}
	
	\section{Numerical Results} \label{sec:numerics}
	{
		We compare the gradient flows \eqref{eq:mmd-flow} of $G = \frac12 \mathcal D_K^2(\cdot,\nu)$ for $K(x,y) = F(\|x-y\|)$ with different functions $F$.
		For the first part of the numerics, we use the two-dimensional targets $\nu$ as in Figure \ref{fig:datasets}. In the second part, we use the high-dimensional MNIST dataset.
		All computations are performed using PyTorch on an Intel i7-10700 CPU with 32\,GB memory and an NVIDIA GeForce RTX 2060 GPU.

		\subsection{Examples in 2D}
		
		For the two-dimensional examples ($d=2$), we use the following kernels:
		\begin{enumerate}
			\item[a)] \textbf{Gaussian}  $F(s)\coloneqq\exp(\frac{-s^2}{2\sigma^2})$ for $\sigma>0$, 
			\item[b)]  \textbf{SND}: smoothed negative distance  
			\begin{equation} \label{eq:snd}
				F\coloneqq -\mathcal I_3[\abs*u_\varepsilon]
				\text{ with }
				u_\varepsilon (x) \coloneqq \tfrac{1}{\varepsilon}\,M_2(\tfrac{x}{\varepsilon}),
			\end{equation}
			\item[c)] \textbf{ND}: negative distance  
			$F \coloneqq  -\frac12 \text{abs}$.
		\end{enumerate}
		
		The usage of $\mathcal I_3$ instead of $\mathcal I_2$ for the SND kernel is justified by Remark~\ref{rem:diff_d_slice}.
		The reason is the simple structure of $\mathcal I_d[\abs * M_m]$ for $d=3$, see Example~\ref{ex:spline_2}, as opposed to $d=2$. 
		The constant in the ND kernel is chosen so that it is the limit of the SND kernel for $\varepsilon\to0$, see Proposition~\ref{prop:of_F}.
		
		For the SND kernel, we found choices $\varepsilon\in [10^{-4}, 10^{-2}]$ to work generally well. During testing, we did not encounter numerical issues due to small $\varepsilon$,
		instead the behavior of the SND approaches to that of the ND kernel. Large $\varepsilon$ over-smooth the kernel, hurting the numerical performance.
	}

	\subsubsection{Three-Rings Target}
	
	\begin{figure}
		\centering
		\begin{subfigure}{0.44\textwidth}
			\centering
			\includegraphics[scale=0.27]{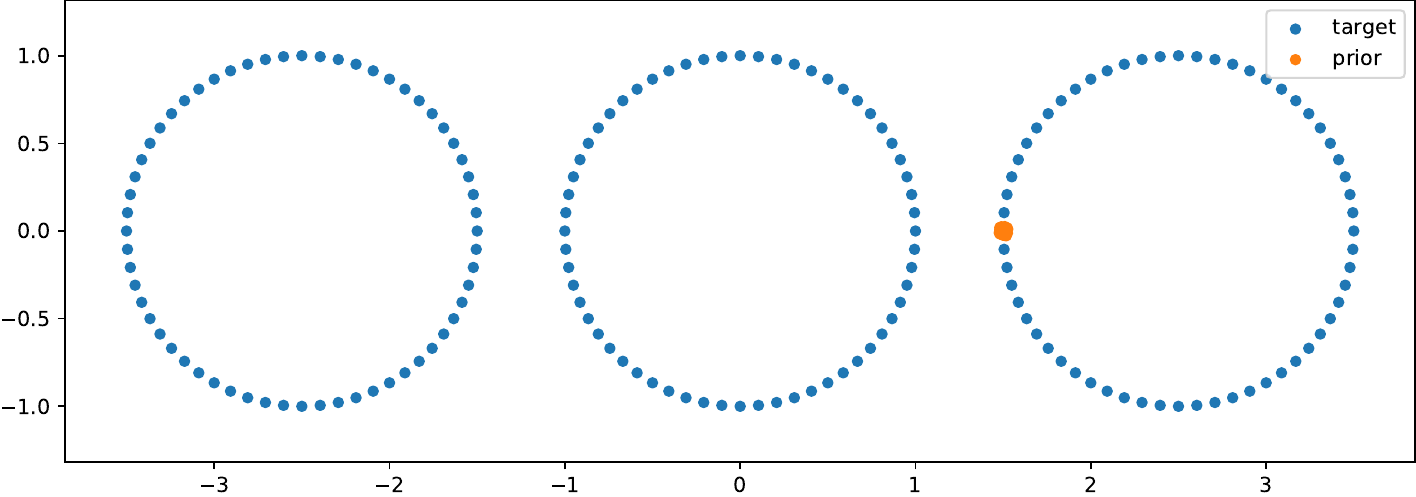}
			\caption{Three-Rings}  \label{fig:3Rings}
		\end{subfigure}
		\hfill
		\begin{subfigure}{0.27\textwidth}
			\centering
			\includegraphics[scale=0.27]{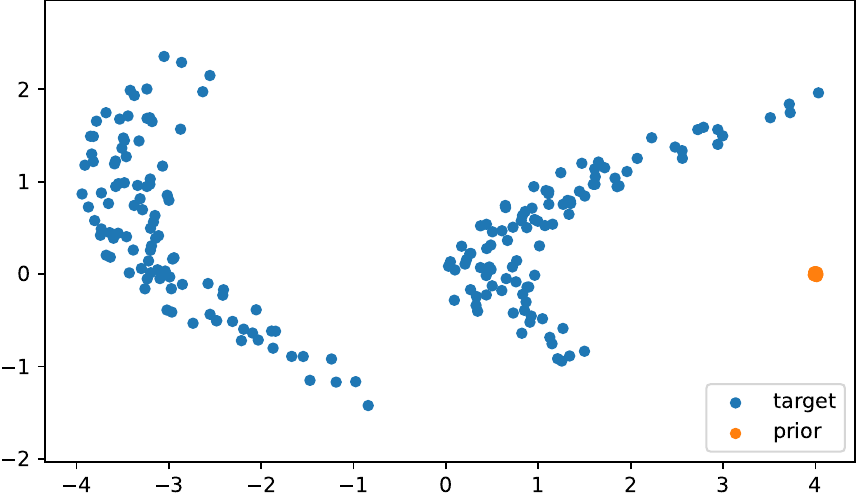}
			\caption{Bananas}  \label{fig:Bananas}
		\end{subfigure}
		\hfill
		\begin{subfigure}{0.26\textwidth}
			\centering
			\includegraphics[scale=0.27]{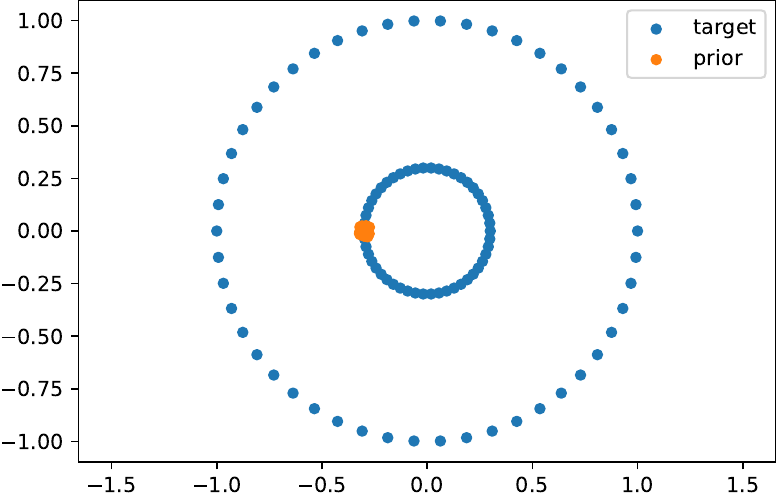}
			\caption{Annulus}  \label{fig:Annulus}
		\end{subfigure}
		\caption{Target measures $\nu$ (blue) and initialization $\gamma^{(0)}$ (orange).}
		\label{fig:datasets}
	\end{figure}
	The Three-Rings target $\nu$ in Figure~\ref{fig:3Rings} from \cite[Fig.~1]{GAG2021} consists of three circles in $\R^2$ with radius $1$ and midpoints $(-2.5,0),(0,0)$ and $(2.5,0)$ discretized with $M= {3\cdot 40} = 120$ points.
	The initialization $\gamma^{(0)}$ is a highly localized Gaussian with standard deviation $10^{-4}$, see Figure~\ref{fig:3Rings}.
	
	We computed the iteration \eqref{eq:mmd-euler}
	with step size $\tau=0.01$ in double precision and display the flow after $k\in \{1\, 000,5\, 000,10\, 000, 50\, 000\}$ iterations or equivalently after time $t=\tau k$.
	Figure~\ref{fig:3Rings_GAUSS} shows the flows for the Gaussian kernel with standard deviation $\sigma\in \{0.06,\, 0.3,\, 1\}$. Here, the quality of the result heavily depends on the choice of~$\sigma$. If $\sigma$ is too small, the points cover only two circles; if too large, the points do not lie on the circles. The sweet spot is around $\sigma=0.3$, but even then some particles get stuck far from the target. 
	Figure~\ref{fig:3Rings_SND} shows the flows for the SND kernel \eqref{eq:snd} for $\varepsilon\in \{1,\, 0.1,\, 0.01\}$. Here, it is preferable to choose a small~$\varepsilon$, then all three circles are recovered well.
	Figure \ref{fig:3Rings_ND} depicts the results with the ND kernel.
	The flows in Figure~\ref{fig:3Rings_ND} and Figure \ref{fig:3Rings_SND} for $\varepsilon=0.01$ are almost identical.

	\begin{figure}
		\centering
		\begin{subfigure}{\textwidth}
			\centering\footnotesize
			\begin{tabular}{c c c c c}
				\vspace{-4pt}
				\rotatebox{90}{\parbox{2cm}{\centering $\sigma=0.06$}} &
				\includegraphics[width=3cm]{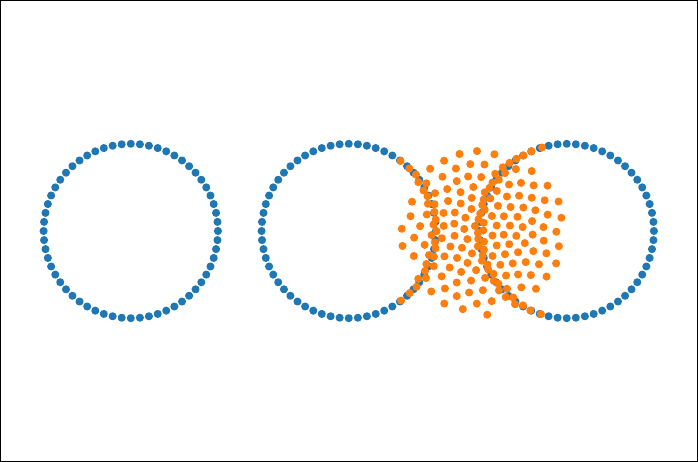} &
				\includegraphics[width=3cm]{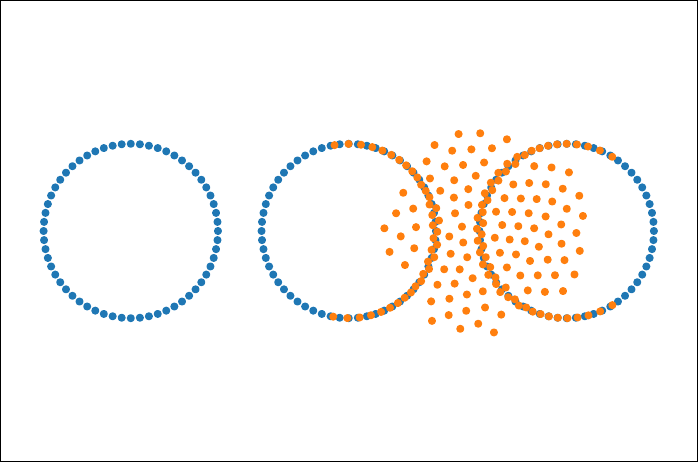} &
				\includegraphics[width=3cm]{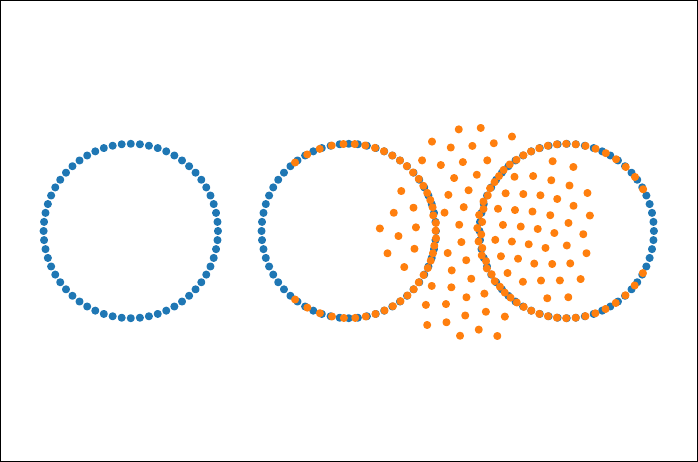} &
				\includegraphics[width=3cm]{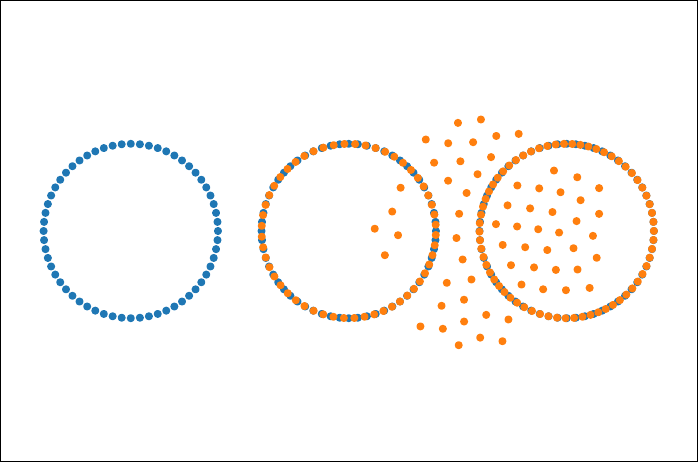} \\
				\vspace{-4pt}
				\rotatebox{90}{\parbox{2cm}{\centering $\sigma=0.3$}} &
				\includegraphics[width=3cm]{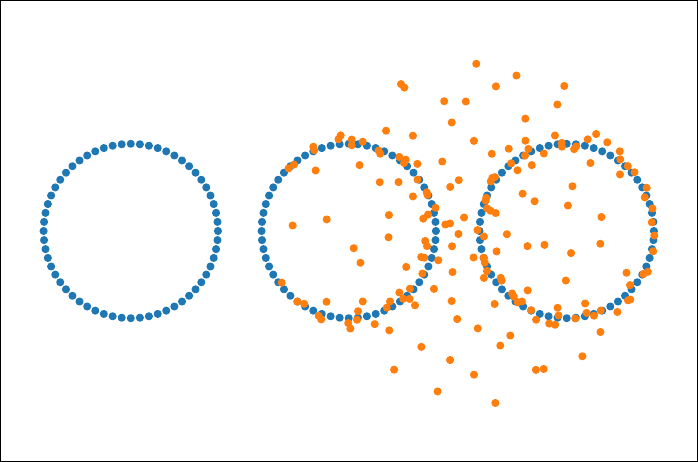} &
				\includegraphics[width=3cm]{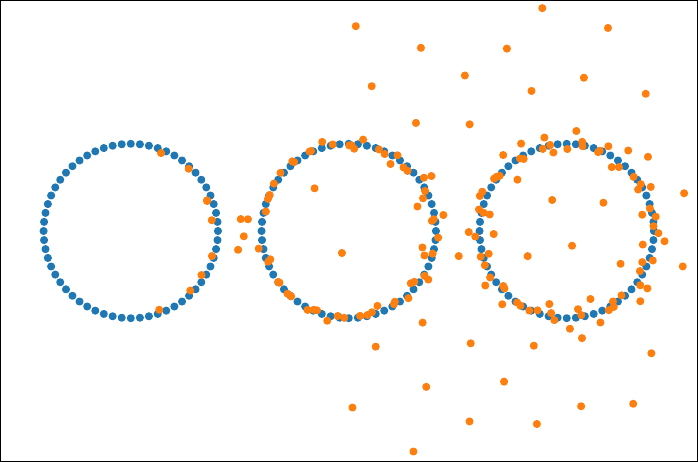} &
				\includegraphics[width=3cm]{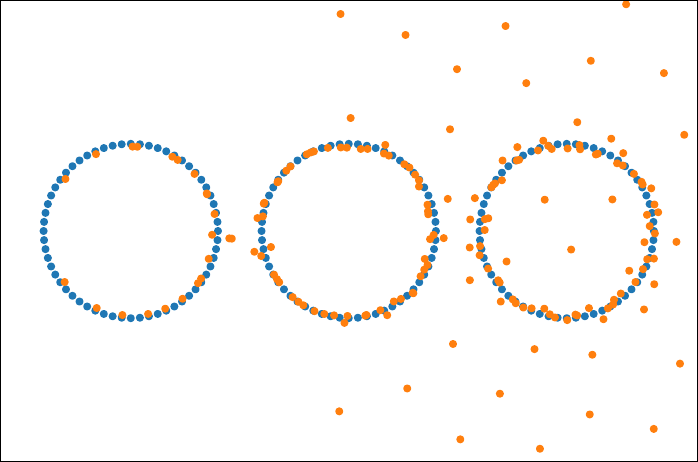} &
				\includegraphics[width=3cm]{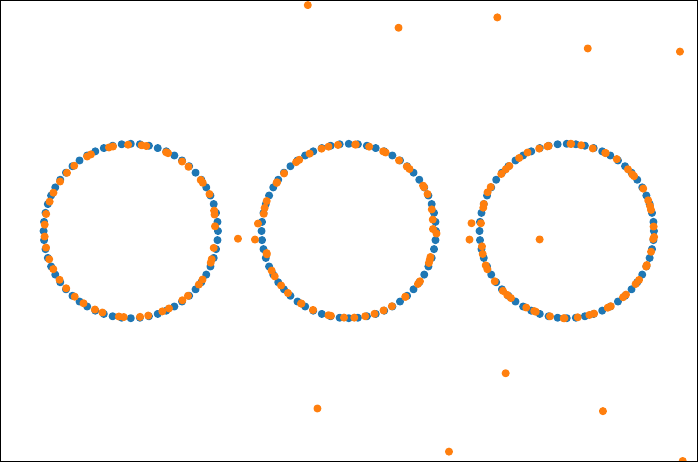} \\
				\vspace{-2pt}
				\rotatebox{90}{\parbox{2cm}{\centering $\sigma=1$}} &
				\includegraphics[width=3cm]{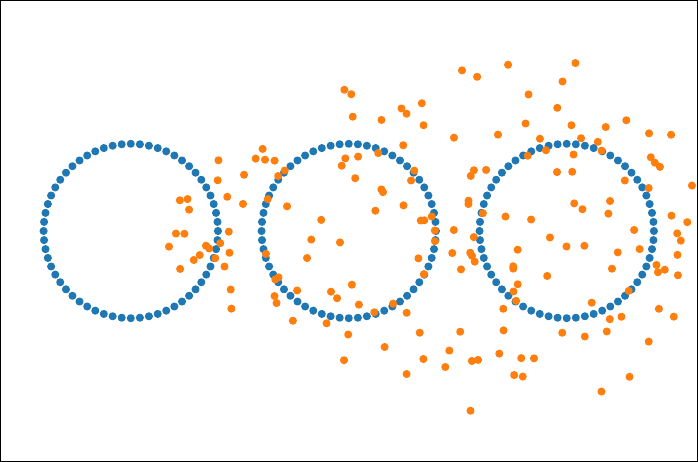} &
				\includegraphics[width=3cm]{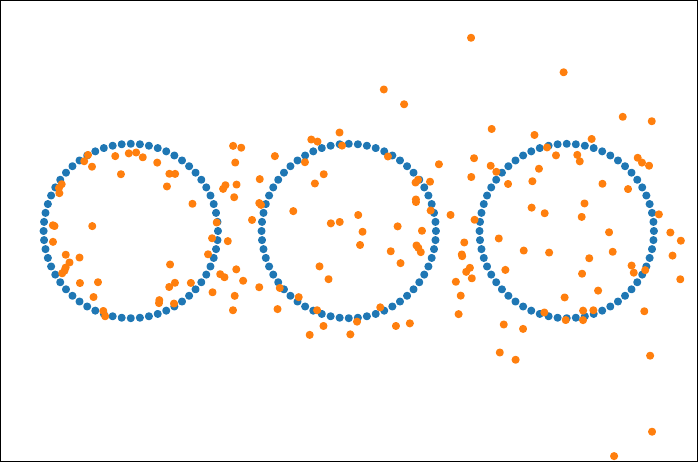} &
				\includegraphics[width=3cm]{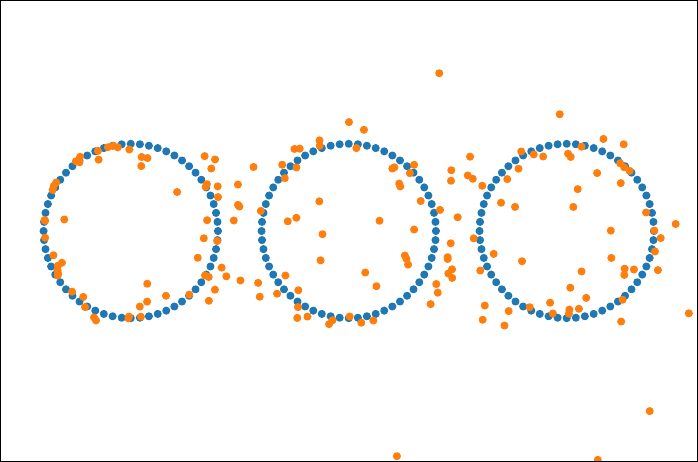} &
				\includegraphics[width=3cm]{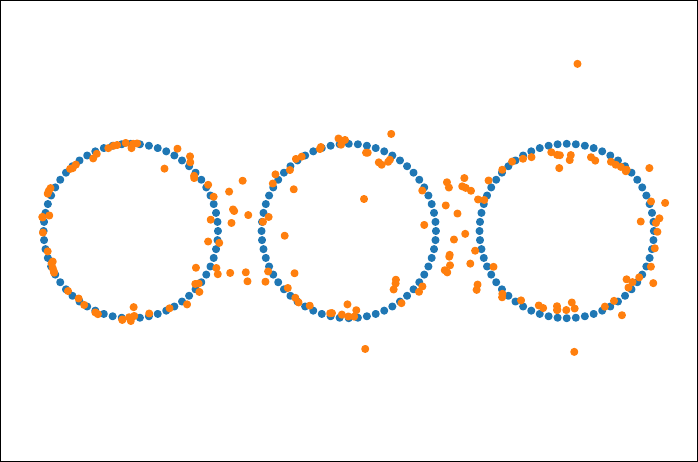} \\

				& \multicolumn{1}{c}{$t=10$} & \multicolumn{1}{c}{$t=50$} & \multicolumn{1}{c}{$t=100$} & \multicolumn{1}{c}{$t=500$} \\
			\end{tabular}
			\caption{Gaussian}
			\label{fig:3Rings_GAUSS}
		\end{subfigure}
		
		\begin{subfigure}{\textwidth}
			\centering\footnotesize
			\begin{tabular}{c c c c c}
				\vspace{-4pt}
				\rotatebox{90}{\parbox{2cm}{\centering $\varepsilon=1$}} &
				\includegraphics[width=3cm]{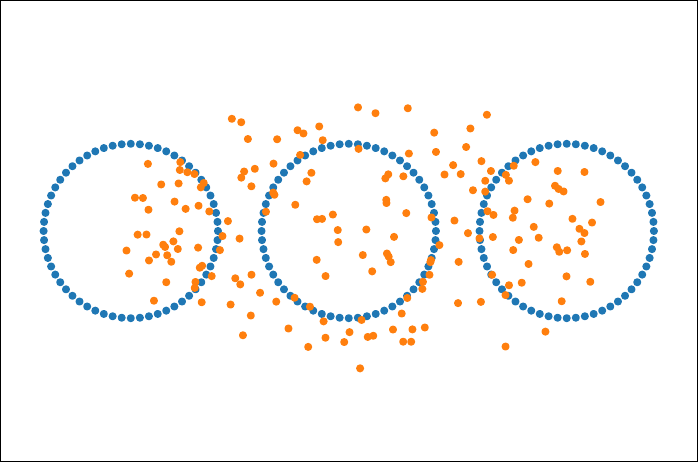} &
				\includegraphics[width=3cm]{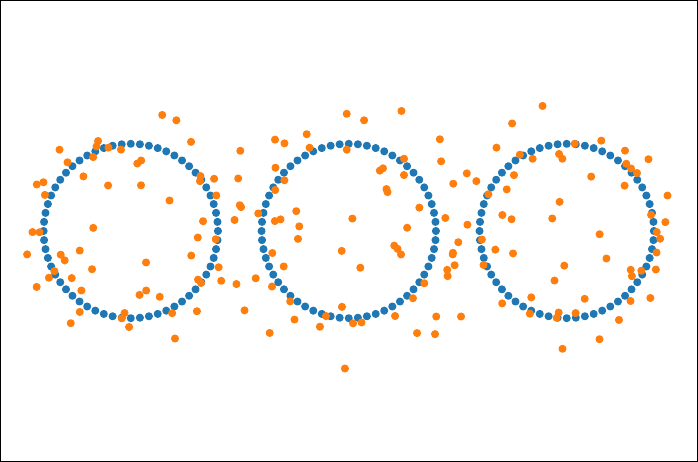} &
				\includegraphics[width=3cm]{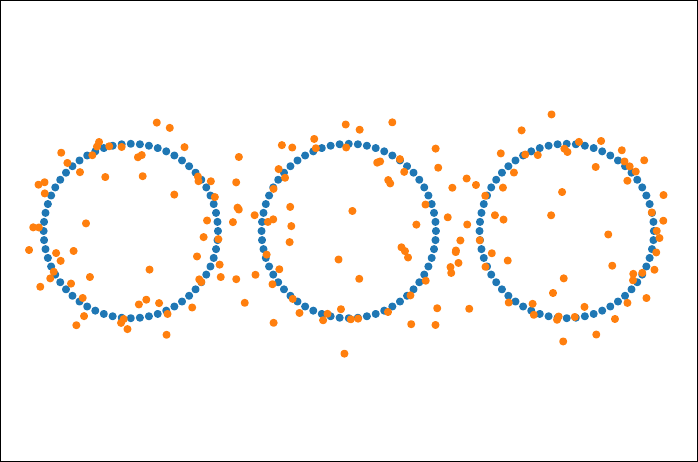} &
				\includegraphics[width=3cm]{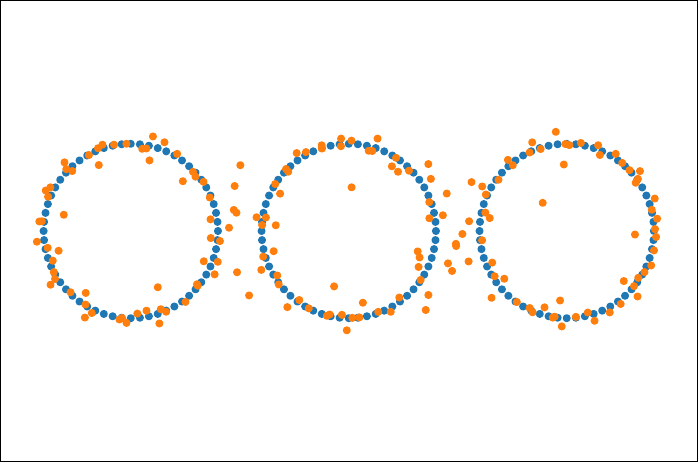} \\
				\vspace{-4pt}
				
				\rotatebox{90}{\parbox{2cm}{\centering $\varepsilon=0.1$}} &
				\includegraphics[width=3cm]{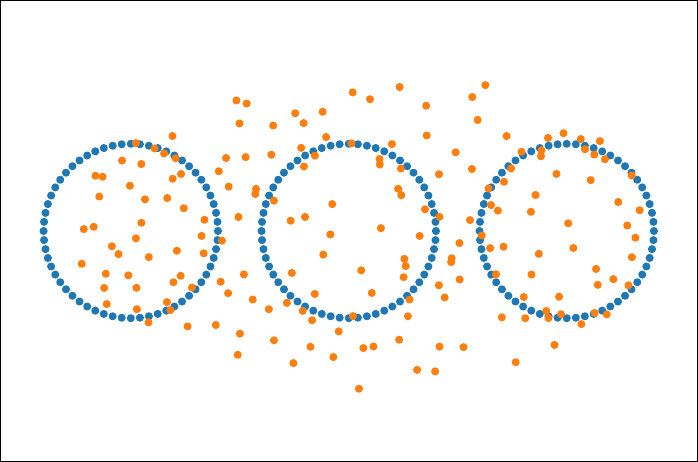} &
				\includegraphics[width=3cm]{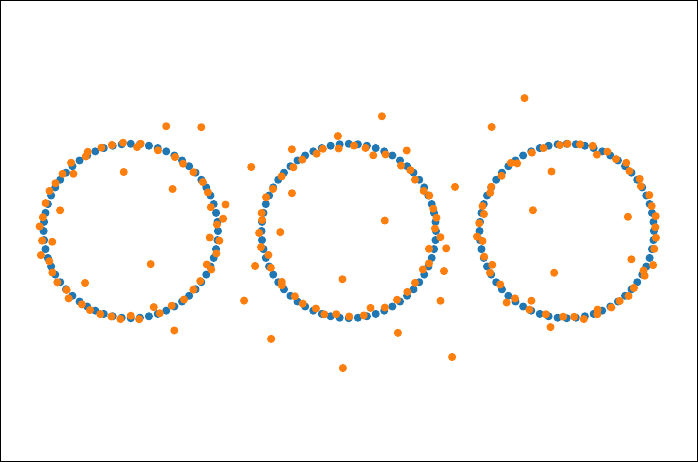} &
				\includegraphics[width=3cm]{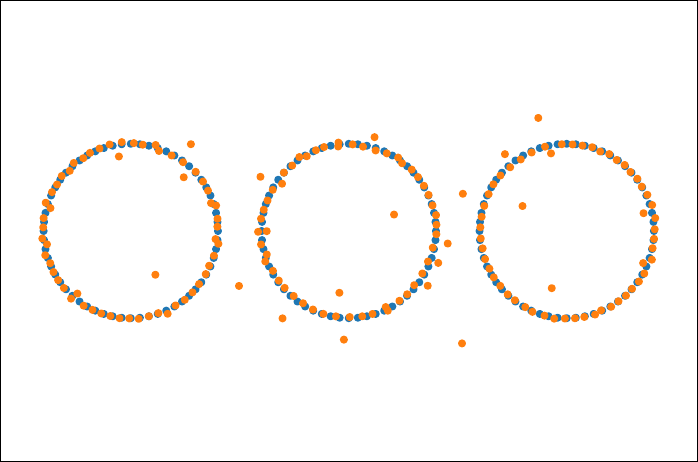} &
				\includegraphics[width=3cm]{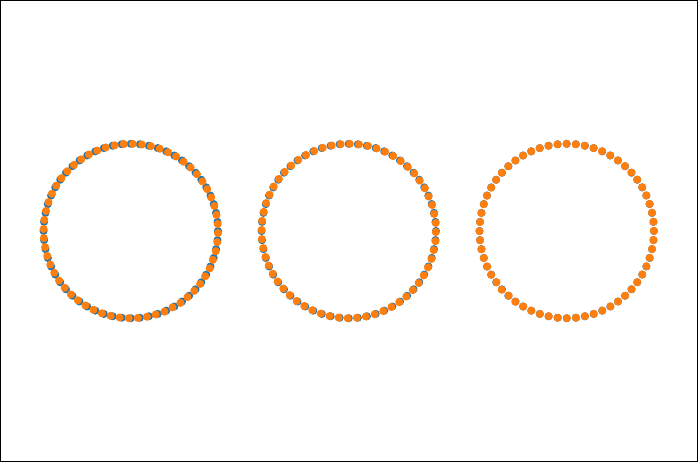} \\
				\vspace{-2pt}
				
				\rotatebox{90}{\parbox{2cm}{\centering $\varepsilon=0.01$}} &
				\includegraphics[width=3cm]{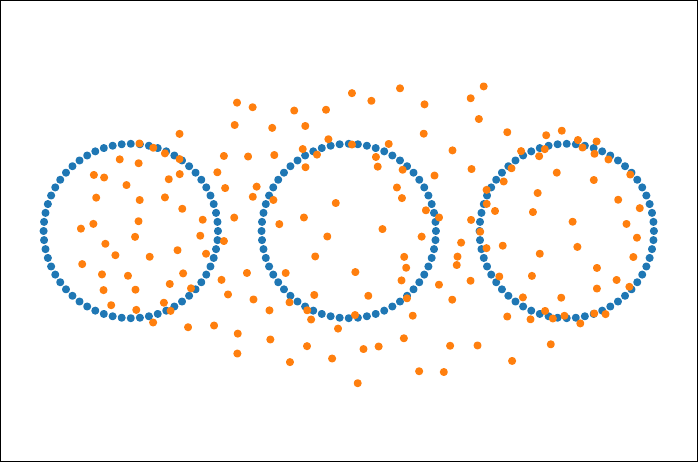} &
				\includegraphics[width=3cm]{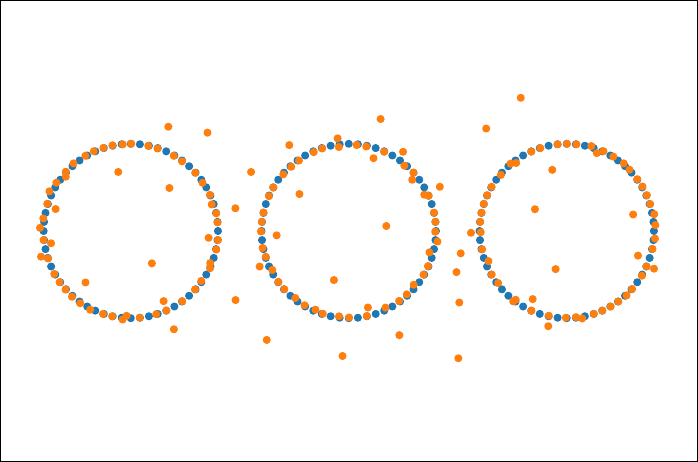} &
				\includegraphics[width=3cm]{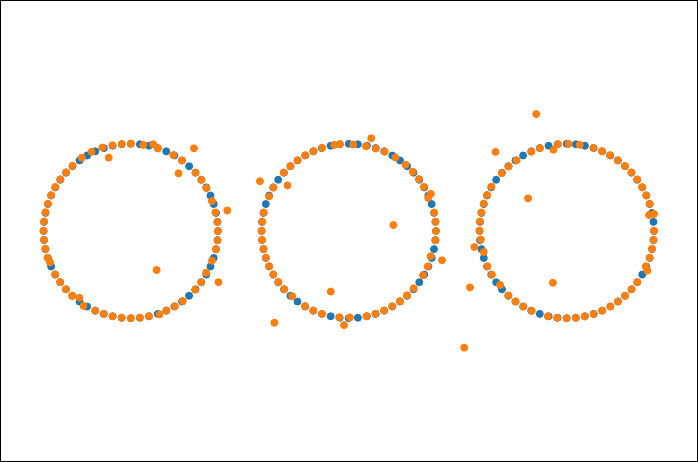} &
				\includegraphics[width=3cm]{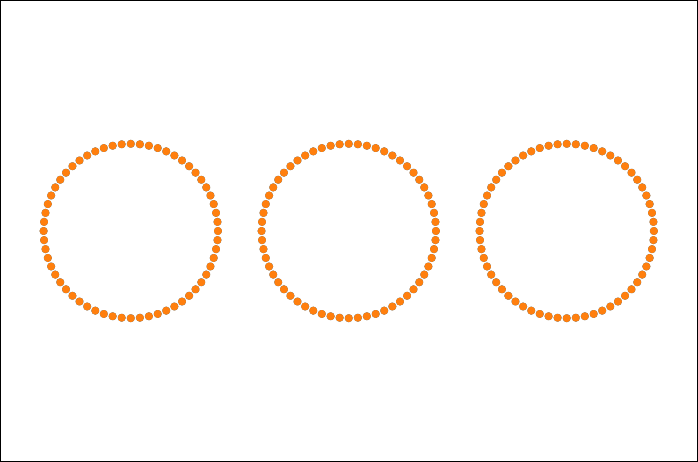} \\
				
				& \multicolumn{1}{c}{$t=10$} & \multicolumn{1}{c}{$t=50$} & \multicolumn{1}{c}{$t=100$} & \multicolumn{1}{c}{$t=500$} \\
			\end{tabular}
			\caption{SND}
			\label{fig:3Rings_SND}
		\end{subfigure}
		
		\begin{subfigure}{\textwidth}
			\centering\footnotesize
			\begin{tabular}{c c c c c} 
				\rotatebox{90}{\parbox{2cm}{}} &
				\includegraphics[width=3cm]{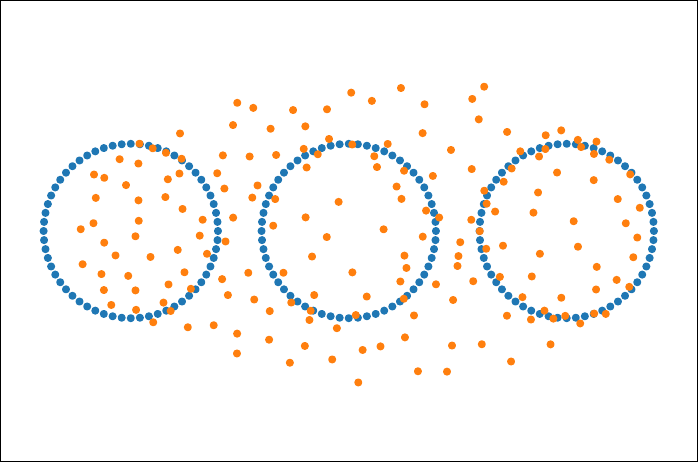} &
				\includegraphics[width=3cm]{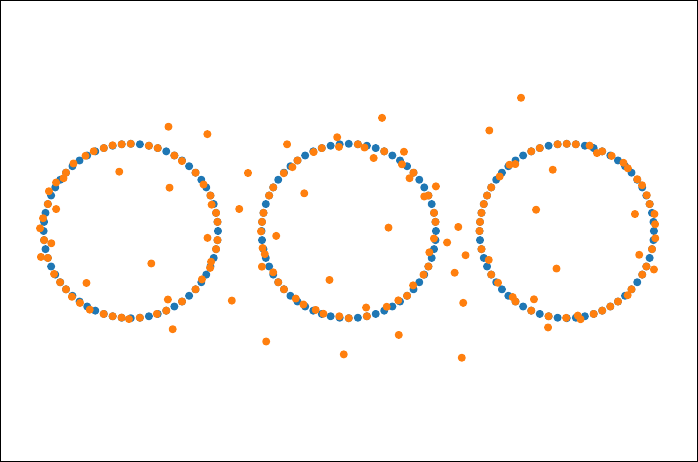} &
				\includegraphics[width=3cm]{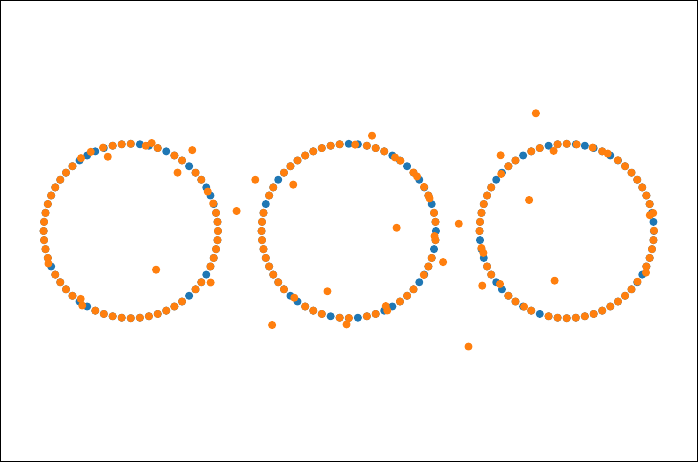} &
				\includegraphics[width=3cm]{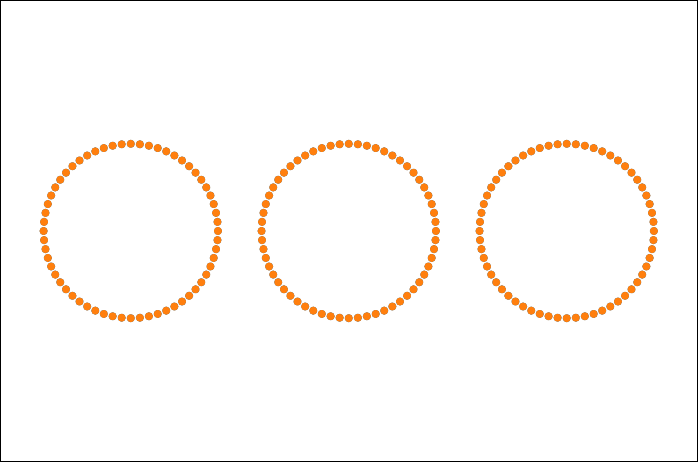} \\
				
				& \multicolumn{1}{c}{$t=10$} & \multicolumn{1}{c}{$t=50$} & \multicolumn{1}{c}{$t=100$} & \multicolumn{1}{c}{$t=500$} \\
			\end{tabular}
			\caption{ND}
			\label{fig:3Rings_ND}
		\end{subfigure}
		
		\caption{MMD flow \eqref{eq:mmd-flow} with step size $\tau=0.01$. For the Gaussian kernel, the result depends heavily on the choice of the parameter $\sigma$. For our SND kernel with small $\varepsilon$, the performance is as good as for the ND kernel, which is better than for the Gaussians.}
		\label{fig:3Rings_Flow}
	\end{figure}
	
	In Figure~\ref{fig:3Rings_W2}, we plot the Wasserstein error $W_2(\gamma_t^\tau, \nu)$ between the Three-Rings target measure $\nu$ and the discretized Wasserstein gradient flow $\gamma_t^\tau$ at time $t$ computed with PythonOT \cite{flamary2021pot}.  
	The first plot in Figure~\ref{fig:3Rings_W2} corresponds to the flow $\gamma_t^\tau$ shown in Figures \ref{fig:3Rings_GAUSS}, \ref{fig:3Rings_SND}, and \ref{fig:3Rings_ND}.  
	The remaining plots in Figure~\ref{fig:3Rings_W2} depict the same experiment with different step sizes $\tau$ and machine precision, where we always used the same random seed.  
	Regardless of precision, step size, or bandwidth $\sigma$, the Gauss kernel stagnates away from the target measure $\nu$.
	
	Proposition \ref{prop:dirac_flow} gives an intuition for the behavior of the ND and SND kernel close to the target.
	For the ND kernel, Proposition \ref{prop:dirac_flow} i) indicates that $\mu_t^\tau$ oscillates around the target for any $\tau > 0$ without convergence. 
	In contrast, Proposition \ref{prop:dirac_flow} ii) states that for the SND kernel, $W_2(\mu_t^\tau, \nu)$ decays exponentially if $\tau$ is sufficiently small.
	Numerically, Figure~\ref{fig:3Rings_W2} confirms this behavior. In single precision, ND and SND with $\varepsilon = 0.01$ plateau at $\approx  10^{-3}$.
	With double precision, ND oscillates at the same error, while SND drops to ${\approx 10^{-7}}$. Thus, SND matches ND globally but exhibits better local convergence for fixed $\tau > 0$ due to its smoothness.
	
	In Appendix~\ref{app:annulus}, we 
	consider the SND with $M_4$ instead of $M_2$,
	provide an additional example with two concentric circles,
	and report computation times.

	\begin{figure}[!htp]
		\centering\small
		\begin{tabular}{c c c}
			\rotatebox{90}{\parbox{5cm}{\centering float32}} &
			\includegraphics[width=6cm]{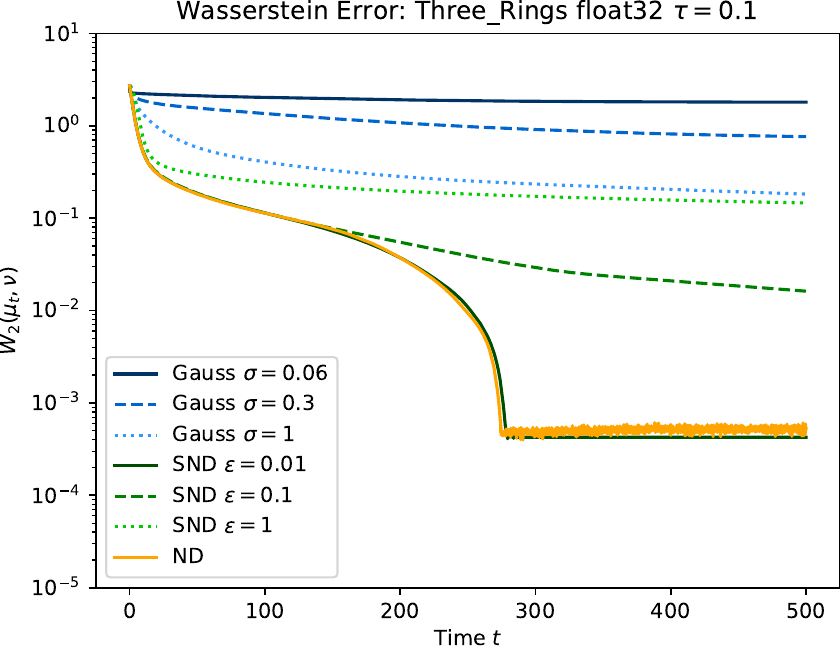} &
			\includegraphics[width=6cm]{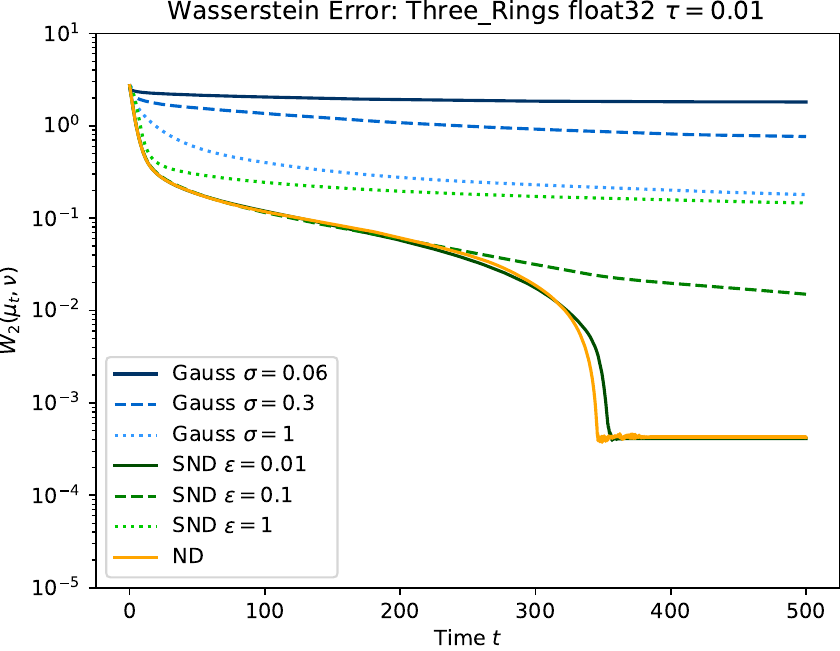} \\
			
			\rotatebox{90}{\parbox{5cm}{\centering float64}} &
			\includegraphics[width=6cm]{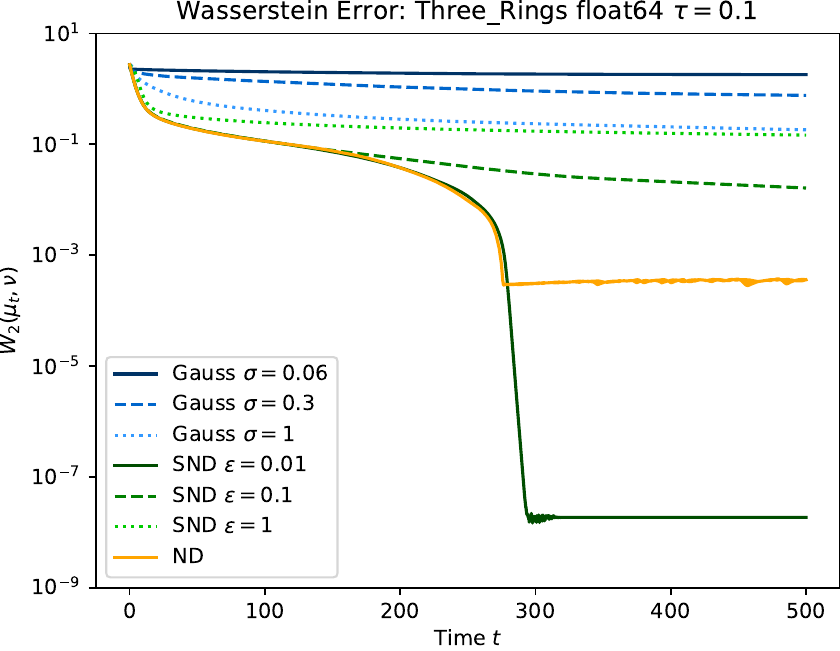}&
			\includegraphics[width=6cm]{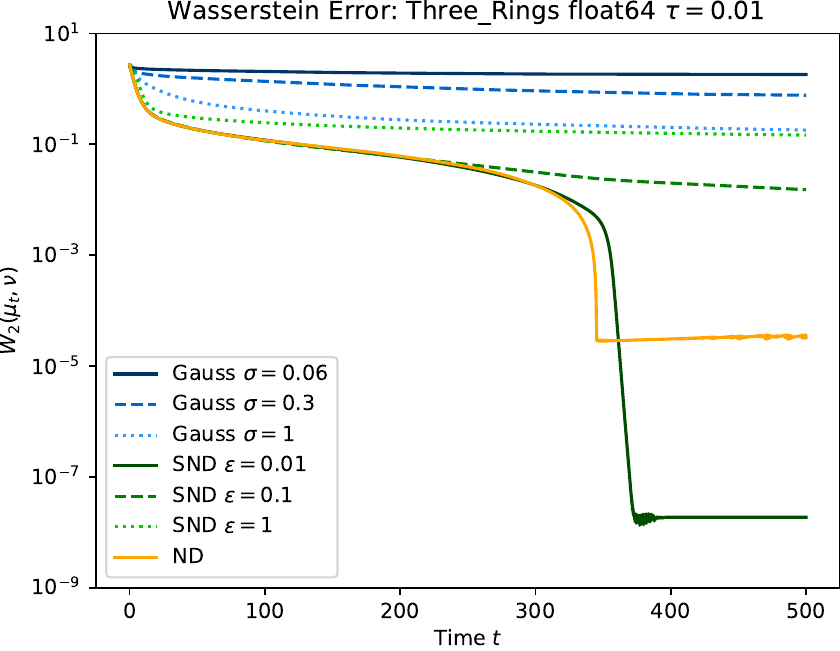}\\
			
			& \multicolumn{1}{c}{$\tau =0.1$} & \multicolumn{1}{c}{$\tau =0.01$}
		\end{tabular}
		\caption{$W_2$ error between Three-Rings target $\nu$ and flow $\gamma_t^\tau$ after $k$ with time $t=\tau k$. We compare single precision (first row) and double precision (second row) for step sizes $\tau=0.1$ (left) and $\tau=0.01$ (right).
			In single precision, SND with $\varepsilon = 0.01$ and ND have the smallest error which gets stuck in $\approx 10^{-3}$. In double precision, SND with $\varepsilon = 0.01$ even outperforms ND. For some explanation, see Proposition \ref{prop:dirac_flow}.  }
		\label{fig:3Rings_W2}
	\end{figure}
	
	\subsubsection{Bananas Target}
	The Bananas target $\nu$ in Figure~\ref{fig:Bananas} is inspired by the talk from Aude Genevay\footnote{MIFODS Workshop on Learning with Complex Structure 2020, see \url{https://youtu.be/TFdIJib_zEA}.} and its implementation by Viktor Stein\footnote{\href{https://github.com/ViktorAJStein/Regularized_f_Divergence_Particle_Flows}{https://github.com/ViktorAJStein/Regularized\_f\_Divergence\_Particle\_Flows}.}.
	The target consists of two banana shaped clusters in $\R^2$, where each banana consists of $100$ points, so $M= 200$.
	
	We compute the flows with step size $\tau=0.02$ in double precision for the Gauss kernel with $\sigma\in \{0.06,0.3,1\}$, the SND kernel with $\varepsilon\in \{0.1,0.01,0.001\}$, and the ND kernel, see Figure~\ref{fig:Bananas_flow}. For small $\sigma=0.06$ the Gauss kernel struggles to reach the bananas. When $\sigma=0.3$, the right banana is reached, but some particles blow up and leave the frame. For $\sigma=1$, the flows reaches the bananas, but collapses in the modes and do not recover the structure of the target.
	In contrast, the SND flow always manages to reach both bananas without blowing up, while a smaller $\varepsilon$ again gives more desirable results.
	The respective Wasserstein errors in Figure~\ref{fig:Bananas_W2} show a similar behavior as for the rings.

	\begin{figure}
		\centering
		\begin{subfigure}{\textwidth}
			\centering\footnotesize
			\begin{tabular}{c c c c c}
				\vspace{-4pt}
				\rotatebox{90}{\parbox{2cm}{\centering $\sigma=0.06$}} &
				\includegraphics[width=3cm]{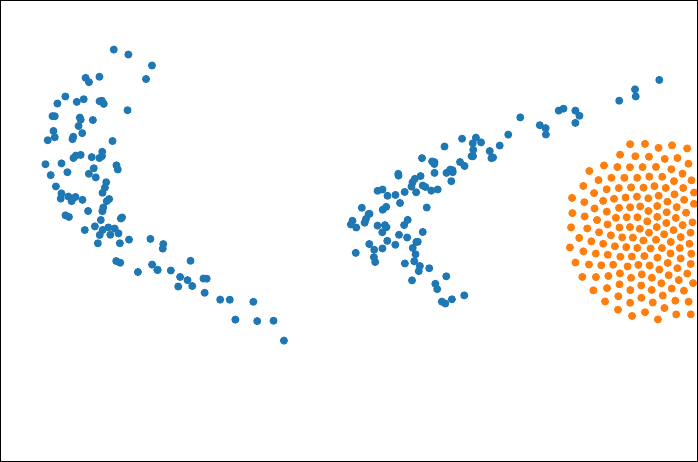} &
				\includegraphics[width=3cm]{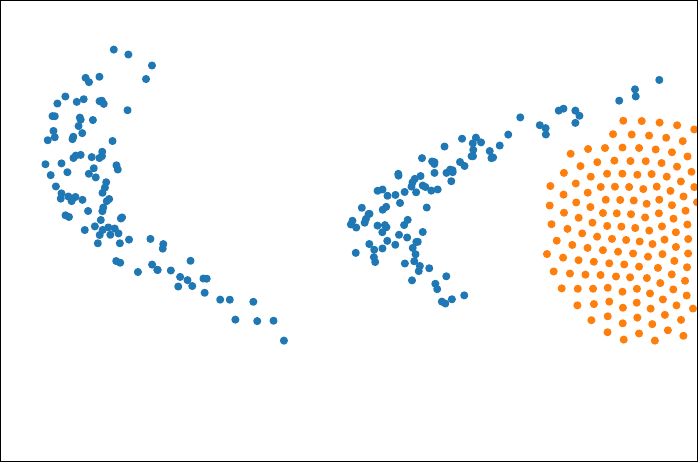} &
				\includegraphics[width=3cm]{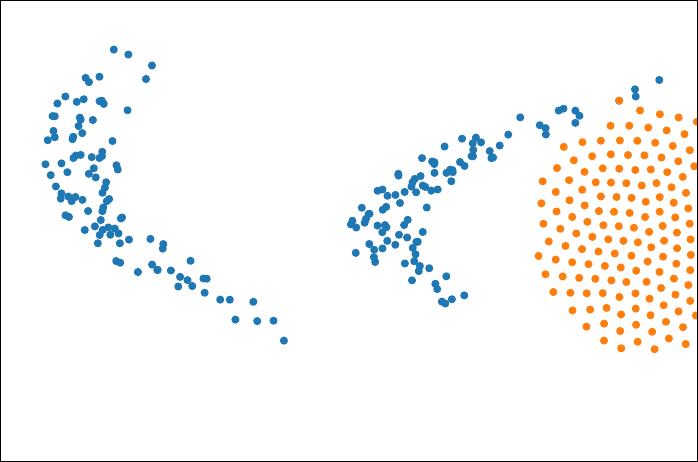} &
				\includegraphics[width=3cm]{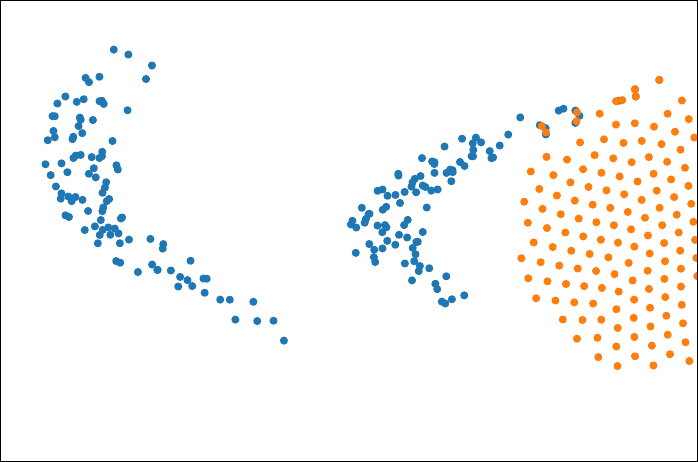} \\
				\vspace{-4pt}
				
				\rotatebox{90}{\parbox{2cm}{\centering $\sigma=0.3$}} &
				\includegraphics[width=3cm]{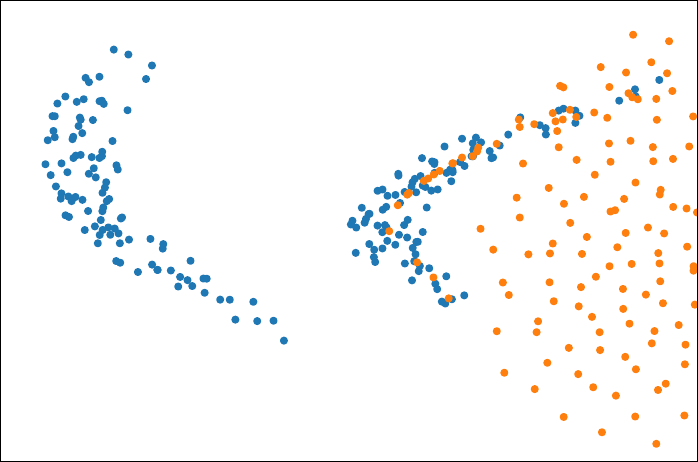} &
				\includegraphics[width=3cm]{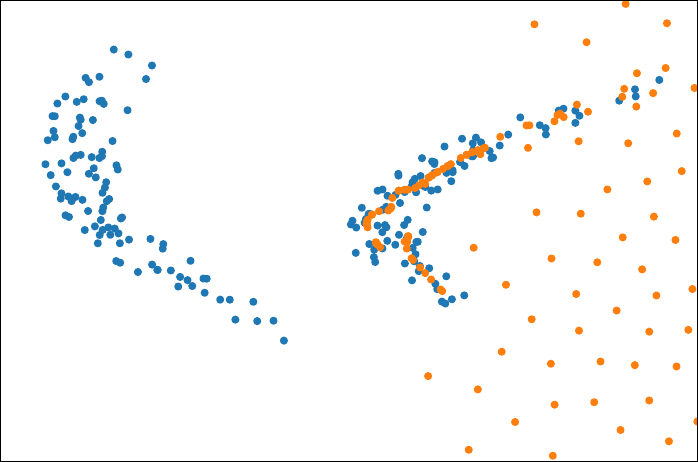} &
				\includegraphics[width=3cm]{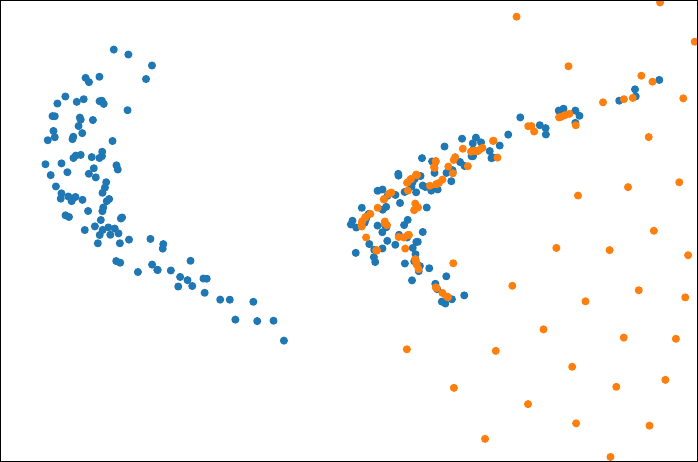} &
				\includegraphics[width=3cm]{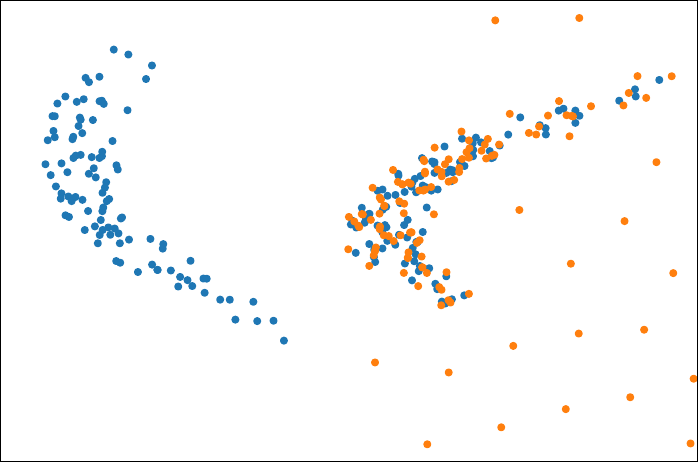} \\
				\vspace{-2pt}

				\rotatebox{90}{\parbox{2cm}{\centering $\sigma=1$}} &
				\includegraphics[width=3cm]{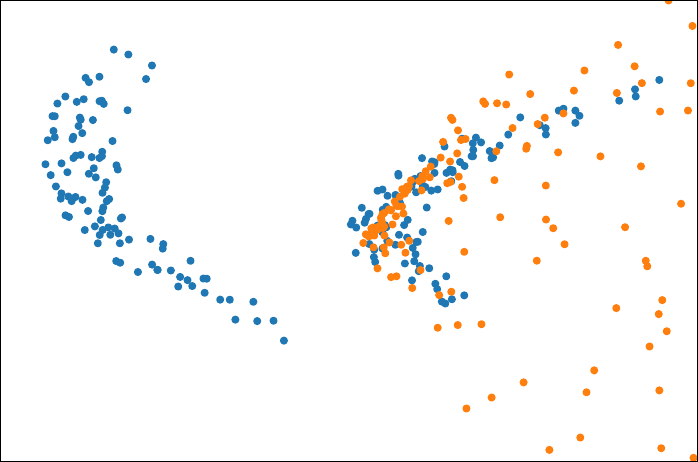} &
				\includegraphics[width=3cm]{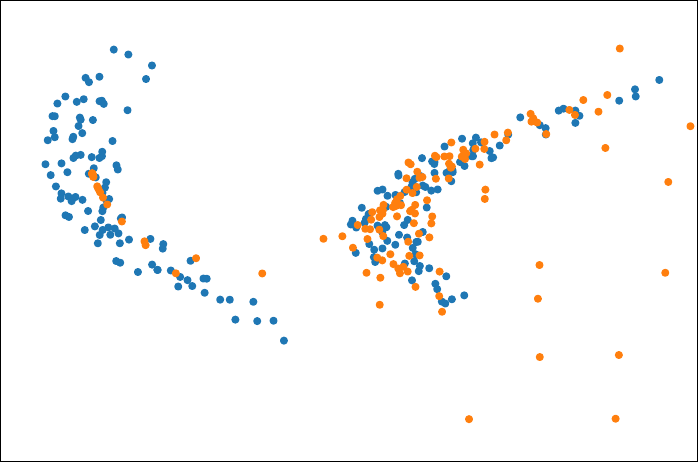} &
				\includegraphics[width=3cm]{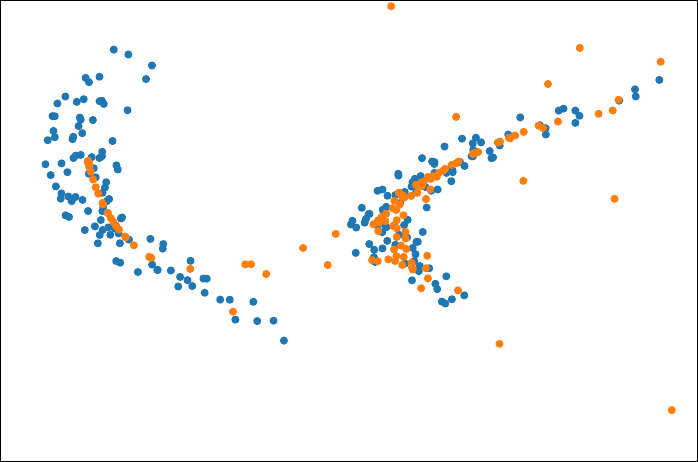} &
				\includegraphics[width=3cm]{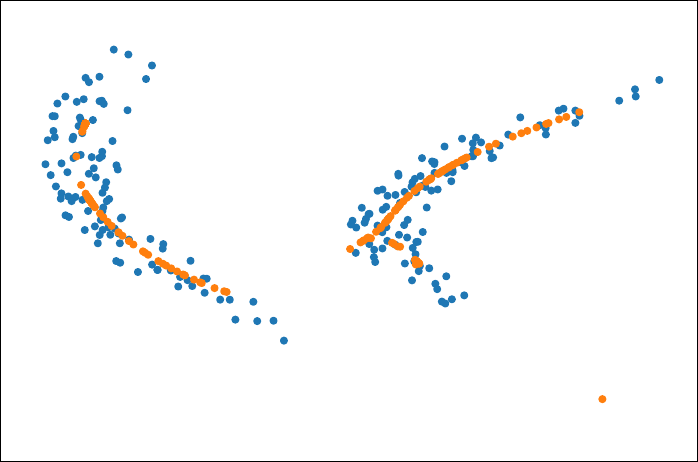} \\

				& \multicolumn{1}{c}{$t=20$} & \multicolumn{1}{c}{$t=100$} & \multicolumn{1}{c}{$t=200$} & \multicolumn{1}{c}{$t=1\,000$} \\
			\end{tabular}
			\caption{Gaussian}
			\label{fig:Bananas_GAUSS}
		\end{subfigure}
		
		\begin{subfigure}{\textwidth}
			\centering\footnotesize
			\begin{tabular}{c c c c c}  %
				\vspace{-4pt}
				\rotatebox{90}{\parbox{2cm}{\centering $\varepsilon=1$}} &
				\includegraphics[width=3cm]{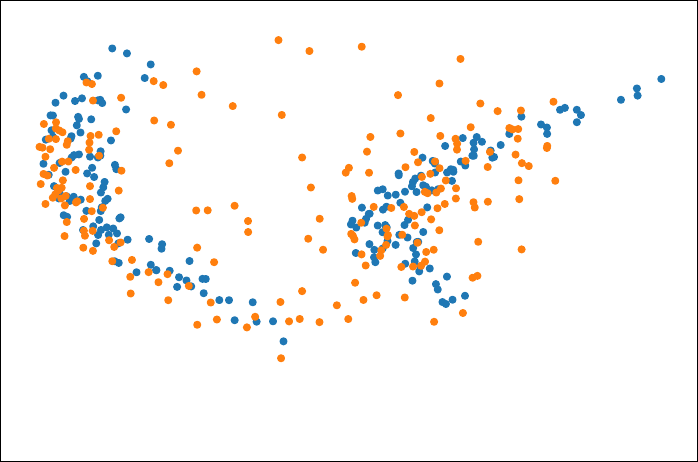} &
				\includegraphics[width=3cm]{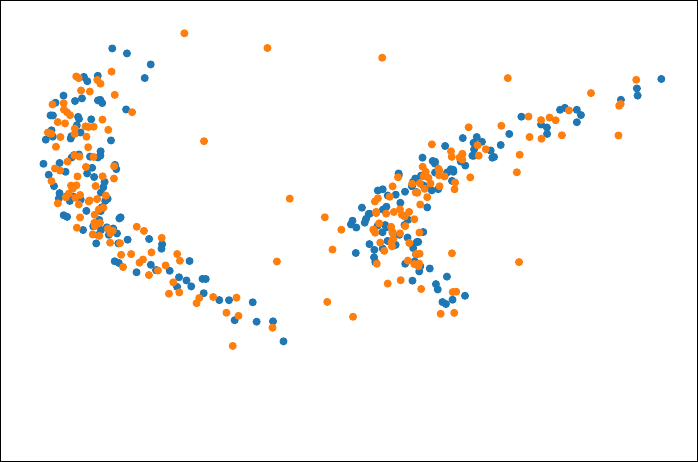} &
				\includegraphics[width=3cm]{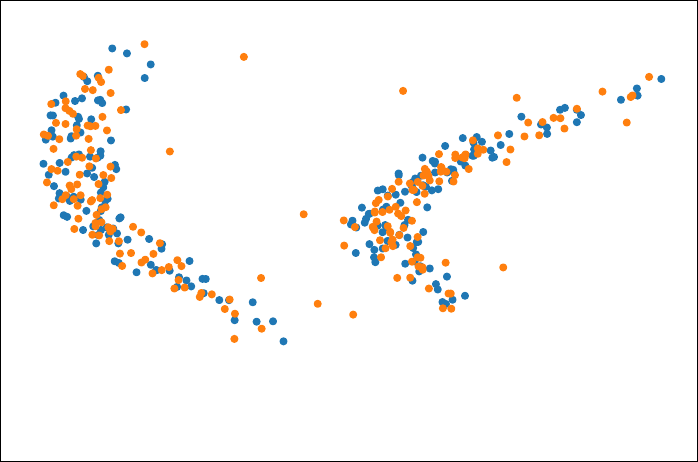} &
				\includegraphics[width=3cm]{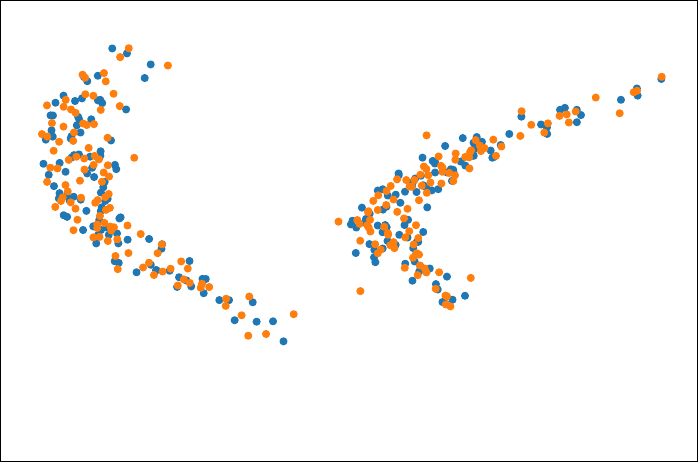} \\
				\vspace{-4pt}
				
				\rotatebox{90}{\parbox{2cm}{\centering $\varepsilon=0.1$}} &
				\includegraphics[width=3cm]{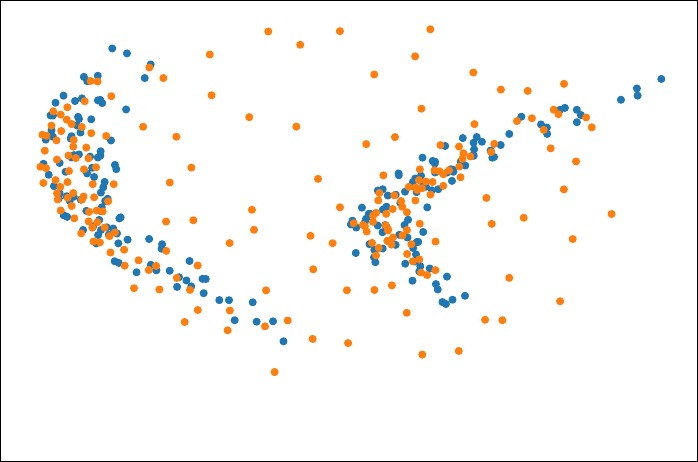} &
				\includegraphics[width=3cm]{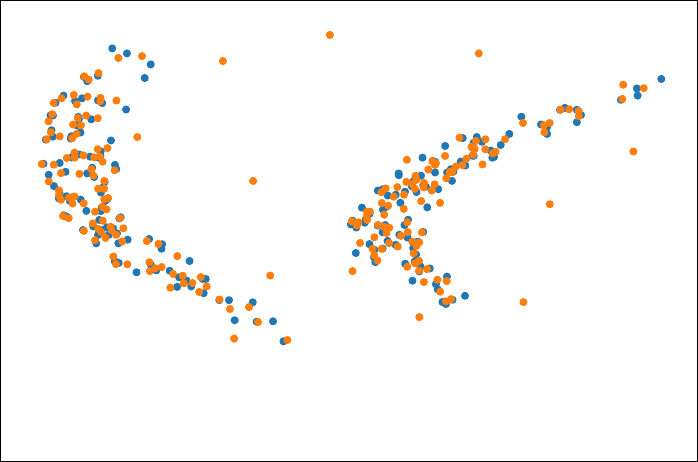} &
				\includegraphics[width=3cm]{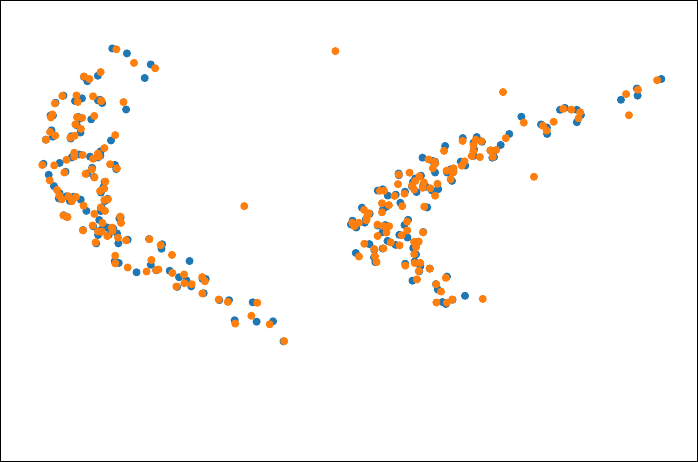} &
				\includegraphics[width=3cm]{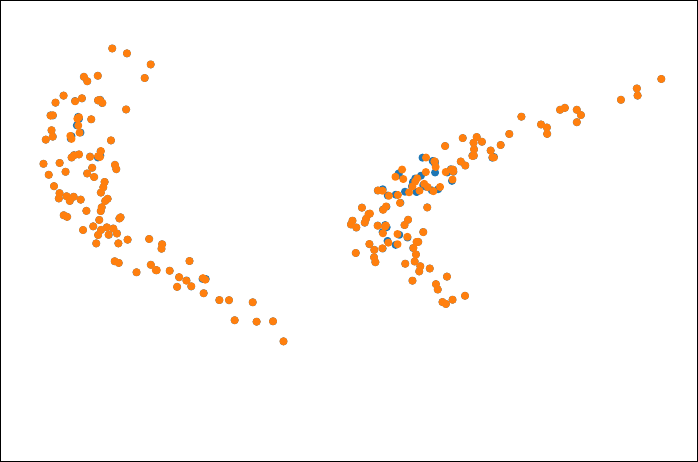} \\
				\vspace{-2pt}
				
				\rotatebox{90}{\parbox{2cm}{\centering $\varepsilon=0.01$}} &
				\includegraphics[width=3cm]{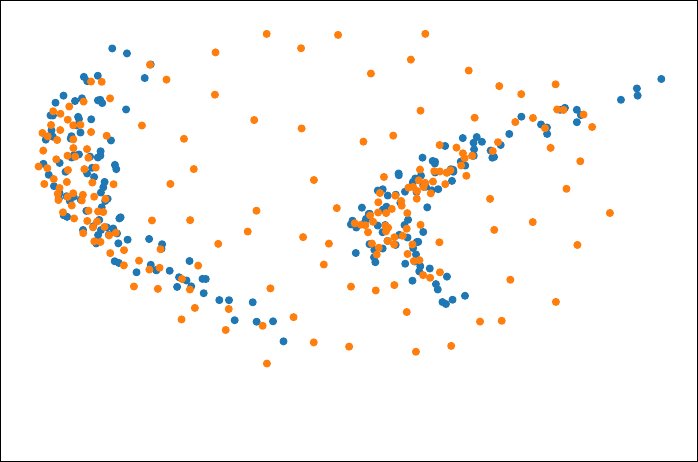} &
				\includegraphics[width=3cm]{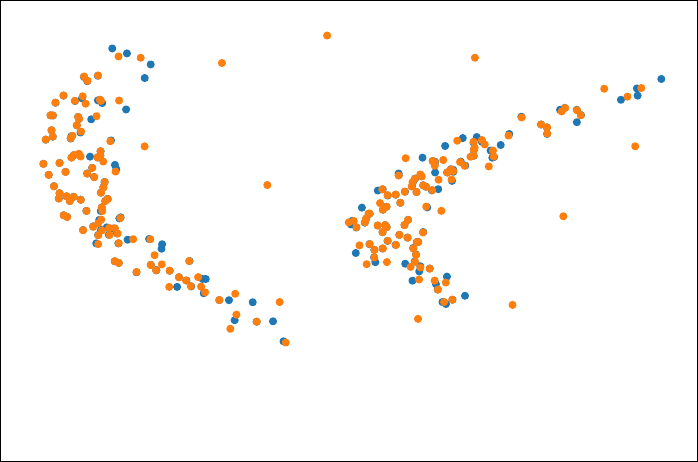} &
				\includegraphics[width=3cm]{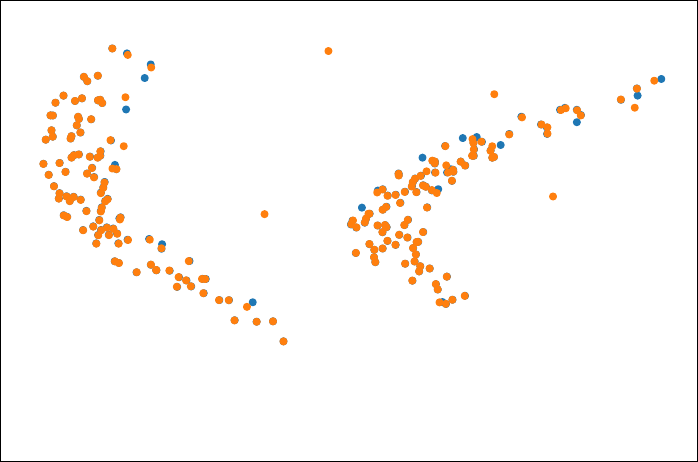} &
				\includegraphics[width=3cm]{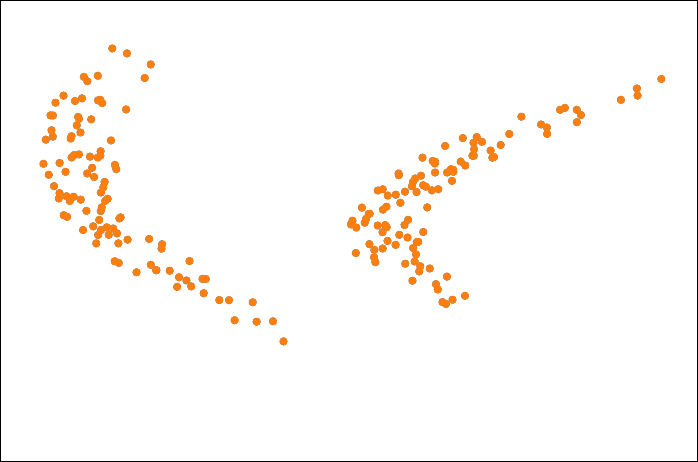} \\

				& \multicolumn{1}{c}{$t=20$} & \multicolumn{1}{c}{$t=100$} & \multicolumn{1}{c}{$t=200$} & \multicolumn{1}{c}{$t=1\,000$} \\
			\end{tabular}
			\caption{SND}
			\label{fig:Bananas_SND}
		\end{subfigure}
		
		\begin{subfigure}{\textwidth}
			\centering\footnotesize
			\begin{tabular}{c c c c c}  
				\rotatebox{90}{\parbox{2cm}{}} &
				\includegraphics[width=3cm]{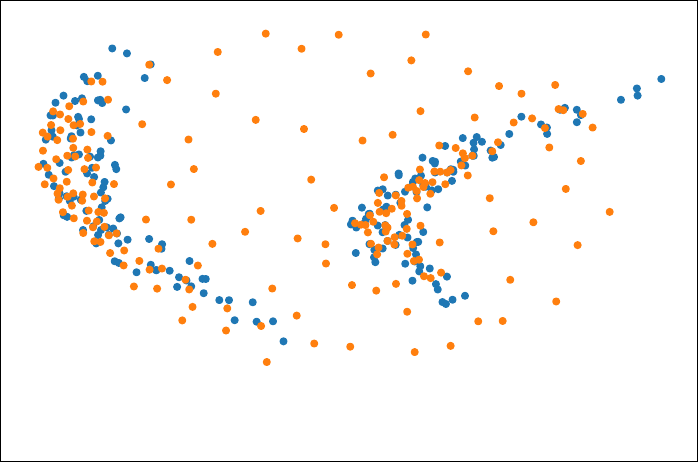} &
				\includegraphics[width=3cm]{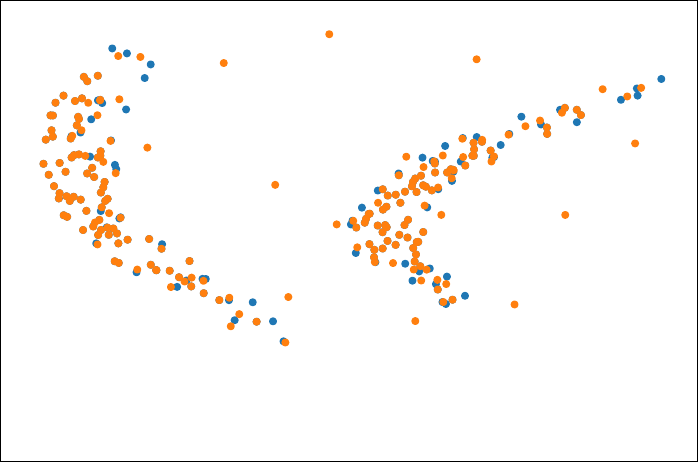} &
				\includegraphics[width=3cm]{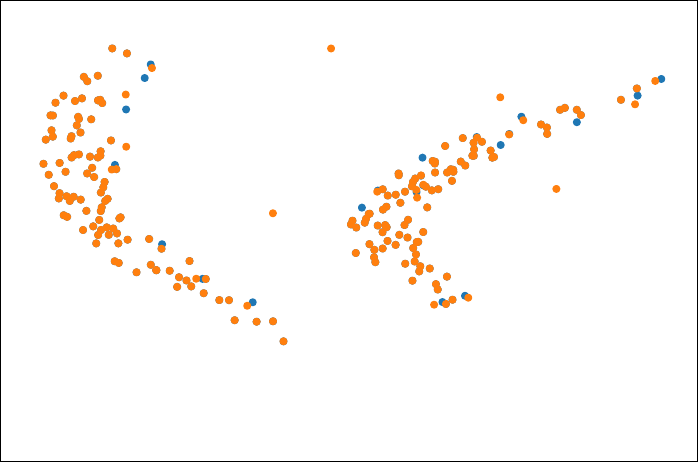} &
				\includegraphics[width=3cm]{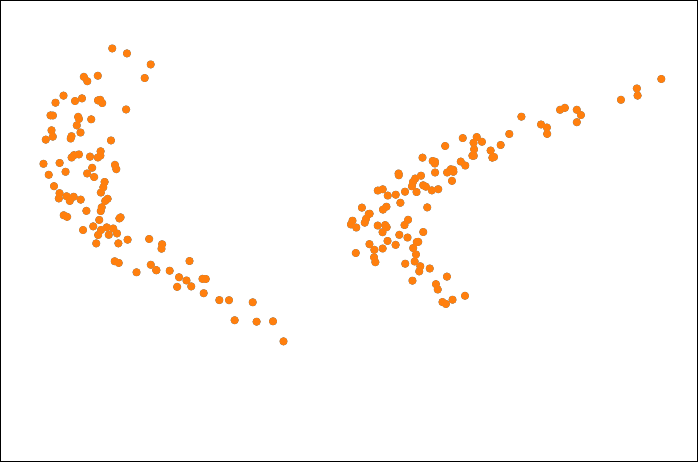} \\
				
				& \multicolumn{1}{c}{$t=20$} & \multicolumn{1}{c}{$t=100$} & \multicolumn{1}{c}{$t=200$} & \multicolumn{1}{c}{$t=1\,000$} \\
			\end{tabular}
			\caption{ND}
			\label{fig:Bananas_ND}
		\end{subfigure}
		\caption{MMD flow \eqref{eq:mmd-flow} with step size $\tau=0.02$. For the Gaussian kernel, the result depends heavily on the choice of the parameter $\sigma$. For our SND kernel with small $\varepsilon$, the performance is as good as for the ND kernel, which is better than for the Gaussians.}
		\label{fig:Bananas_flow}
	\end{figure}

	\begin{figure}
		\centering
		\includegraphics[width=6cm]{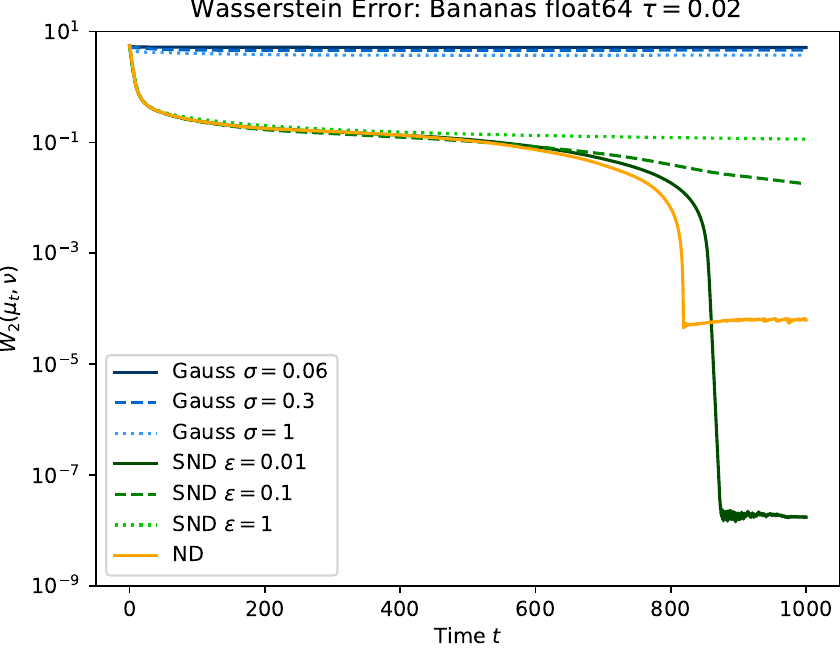}
		\caption{$W_2$ error between Bananas target $\nu$ and flow  $\gamma_k^\tau$ after $k$ iterations with time $t=\tau k$. Computation with double precision with step size $\tau=0.02$ shows a similar behavior as in Figure \ref{fig:3Rings_W2}. }
		\label{fig:Bananas_W2}
	\end{figure}

	\subsection{MNIST Dataset}
	We consider as target the MNIST dataset, where each $28\times28$ image is considered a point $y\in\R^d$ with $d=784=28^2$.
	We use $N=M=100$ images as flow and target.
	\paragraph{Fast Summation by Slicing.}
	The computation of \eqref{eq:mmd-flow} includes the summation of kernel values of the form $s_m=\sum_{n=1}^N w_n F'(\|x_n-y_m\|)$ for $m=1,\dots,M$ with some weights $w_n\in\C$.
	This summation requires $O(NM)$ arithmetic operations.
	If $F=\mathcal I_d[f]$, then we have by \eqref{eq:slicingRLFI} that
	$
	F'(x) = \E_{\xi\sim\mathcal U_{\mathbb S^{d-1}}} [\xi f'(\langle x,\xi\rangle)].
	$
	In order to speed up the computation, the sum $s_m$ can be approximated by slicing \cite{hertrich2024,HWAH2024} via
	\begin{equation} \label{eq:slicing-sum}
		s_m
		=
		\E_{\xi\sim\mathcal U_{\mathbb S^{d-1}}} \Big[ \sum_{n=1}^N w_n \xi f'(\langle x_n-y_m,\xi\rangle) \Big]
		\approx
		\frac{1}{P} \sum_{p=1}^P \xi_p \sum_{n=1}^N w_n f'(\langle x_n-y_m,\xi_p\rangle),
	\end{equation}
	where $(\xi_p)_{p=1}^P\in(\mathbb S^{d-1})^P$ are equidistributed quadrature nodes on $\mathbb S^{d-1}$.
	This is a collection of $P$ one-dimensional kernel sums. Each of them can be computed efficiently in $O((N+M)\log(N+M))$ operations, e.g.\ via fast Fourier summation \cite{plonka2018numerical} or, if $F$ is the ND kernel just by sorting \cite{hertrich2024}.
	Hence, \eqref{eq:slicing-sum} is more efficient if $P$ is considerably smaller than the number of points.
	More details on the slicing summation and errors estimates are provided in \cite{hertrich2025fast}.
	Furthermore, for our SND kernel, the explicit expression for $F$ in Proposition~\ref{prop:id_spline} is somewhat cumbersome for large dimension~$d$, while the slicing summation \eqref{eq:slicing-sum} only requires to evaluate $f'$.

	\paragraph*{Setup.
	}
	The initialization $x^{(0)}_n$, $n=1,\dots,N$, are iid samples from a uniform distribution on $[0,1]^d$.
	We compute the MMD flows \eqref{eq:mmd-flow} with $2^{15}=32768$ iterations for the SND kernel $F = - C_{784} \mathcal I_{784}[\abs*M_{2,\varepsilon}]$ with $\varepsilon\in\{0.001,0.01,0.1\}$ and the ND kernel $F=- \abs$.
	The step size is $\tau=1$ and the computations are performed in single precision.
	We use slicing summation \eqref{eq:slicing-sum} with $P=785$ directions $\xi_p$ that are the vertices of the centrally symmetric simplex, to which we apply a random rotation in each iteration step, cf.\ \cite{hertrich2025fast}.
	The slicing summation requires only the sliced kernel $\abs*M_{2,\varepsilon}$ given in \eqref{eq:abs*M2}, but not the representation of $F$, which becomes quite clumsy in general, see Proposition~\ref{prop:id_spline}.
	
	The resulting images are shown in Figure~\ref{fig:mnist}, where we see the MMDs for the SND with small $\varepsilon$ and the Riesz kernel work comparably well and converge to the target measure $\nu$.
	For larger smoothing parameter $\varepsilon$, the flow needs more iterations to converge.
	
	The distance to the target measure in the Wasserstein and MMD metrics is shown in Figure~\ref{fig:mnist-dist}. Here we use the sliced approximation of the MMD. Note that the MMD, which is the objective we minimize, still depends on the kernel $K$.
	We see a similar behavior as for the previous low-dimensional examples with the error plateauing at some level, which becomes better for smaller $\varepsilon$ even slightly beating the ND kernel.
	
	\begin{figure}[!ht]
		\begin{subfigure}{.31\textwidth}
			\centering
			\includegraphics[width=\linewidth]{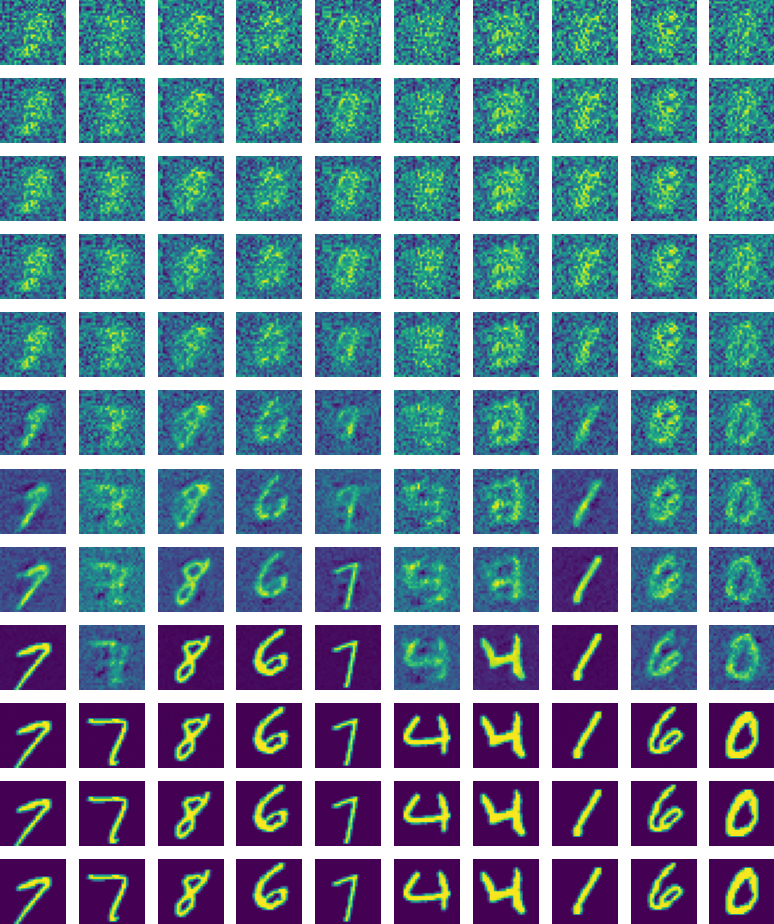}
			\caption{ND}
		\end{subfigure}\hfill
		\begin{subfigure}{.31\textwidth}
			\centering
			\includegraphics[width=\linewidth]{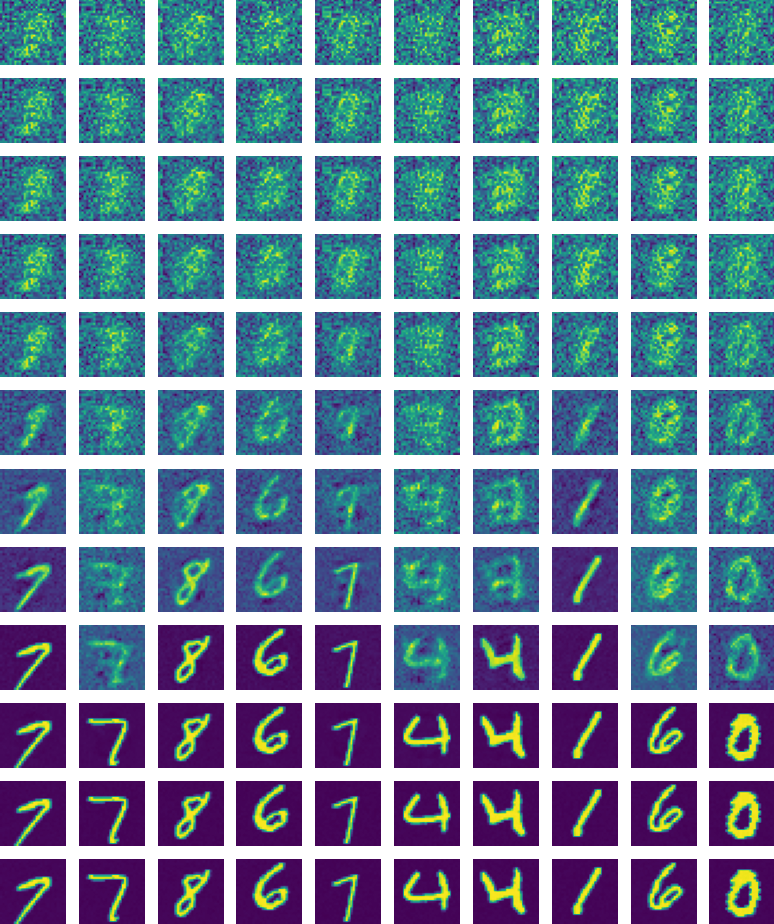}
			\caption{SND  ($\varepsilon=0.01$).}
		\end{subfigure}\hfill
		\begin{subfigure}{.31\textwidth}
			\centering
			\includegraphics[width=\linewidth]{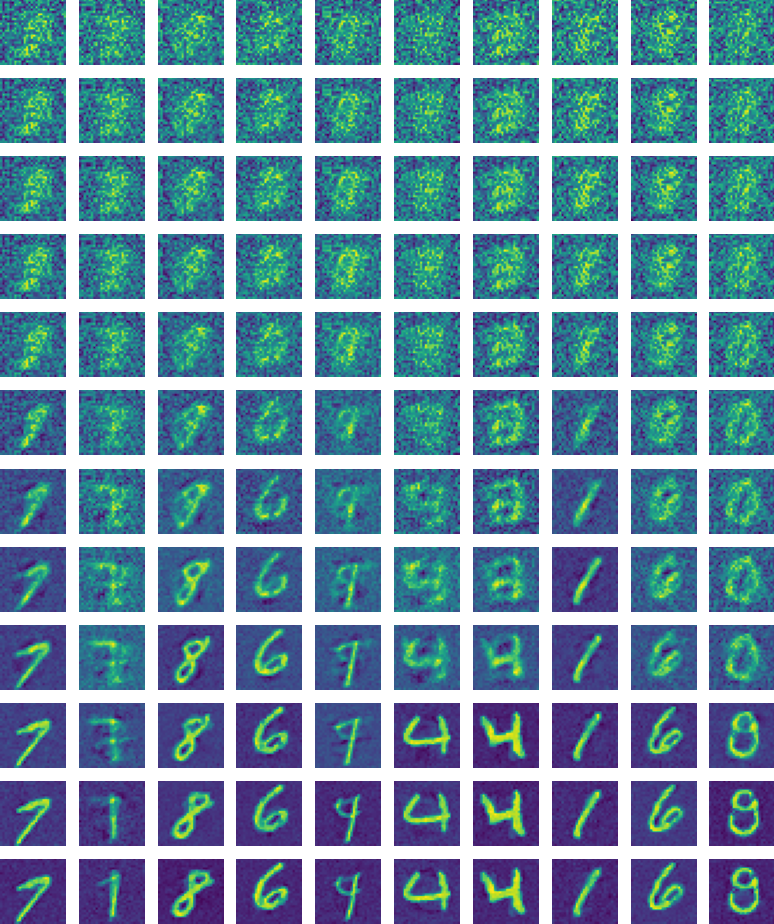}
			\caption{SND ($\varepsilon=0.1$).}
		\end{subfigure}
		\caption{MMD flow for MNIST target with different kernels. Each row shows the first 10 images $x_n\in\R^{28\times28}$, the $\ell$-th row corresponds to the iteration $k=2^{3+\ell}$, $\ell=1,\dots,12$.} 
		\label{fig:mnist}
	\end{figure}
	
	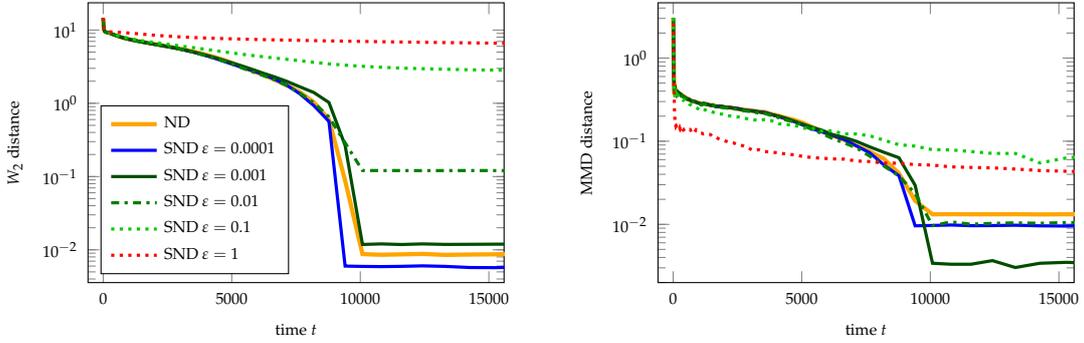
\begin{figure}[!ht]
		\centering
		\begin{tikzpicture}
			\begin{semilogyaxis}[xlabel=time $t$, ylabel=$W_2$ distance, width=.48\textwidth, height=0.36\textwidth, enlargelimits={abs=2mm}, xmin=0,xmax=15000, scaled ticks=false, /pgf/number format/.cd, 1000 sep={},
				legend pos=south west, legend style={cells={anchor=west}},
				]
				\addplot[ultra thick, color=orangenormal, mark=none 
				] table[x index=0,y index=2] {mnist_N100_it32769_m0_P785_loss.txt};
				\addplot[very thick, color=blue, mark=none 
				] table[x index=0,y index=2] {mnist_N100_smootheps0.0001_it32769_m0_P785_loss.txt};
				\addplot[very thick, color=greendark, mark=none 
				] table[x index=0,y index=2] {mnist_N100_smootheps0.001_it32769_m0_P785_loss.txt};
				\addplot[very thick, color=greennormal, dashdotted, mark=none 
				] table[x index=0,y index=2] {mnist_N100_smootheps0.01_it32769_m0_P785_loss.txt};
				\addplot[very thick, color=greenlight, dotted, mark=none 
				] table[x index=0,y index=2] {mnist_N100_smootheps0.1_it32769_m0_P785_loss.txt};
				\addplot[very thick, color=red, dotted, mark=none 
				] table[x index=0,y index=2] {mnist_N100_smootheps1_it32769_m0_P785_loss.txt};
				\legend{ND, SND $\varepsilon=0.0001$, SND $\varepsilon=0.001$, SND $\varepsilon=0.01$, SND $\varepsilon=0.1$, SND $\varepsilon=1$};
			\end{semilogyaxis}
			\begin{scope}[xshift=7.5cm]
				\begin{semilogyaxis}[xlabel=time $t$, ylabel=MMD distance, width=.48\textwidth, height=0.36\textwidth, xmin=10, xmax=68000, enlargelimits={abs=2mm}, xmin=0,xmax=15000, scaled ticks=false, /pgf/number format/.cd, 1000 sep={},
					legend pos=south west, legend style={cells={anchor=west}}, 
					]
					\addplot[ultra thick, color=orangenormal, mark=none 
					] table[x index=0,y index=1] {mnist_N100_it32769_m0_P785_loss.txt};
					\addplot[very thick, color=blue, mark=none 
					] table[x index=0,y index=1] {mnist_N100_smootheps0.0001_it32769_m0_P785_loss.txt};
					\addplot[very thick, color=greendark, mark=none 
					] table[x index=0,y index=1] {mnist_N100_smootheps0.001_it32769_m0_P785_loss.txt};
					\addplot[very thick, color=greennormal, dashdotted, mark=none  
					] table[x index=0,y index=1] {mnist_N100_smootheps0.01_it32769_m0_P785_loss.txt};
					\addplot[very thick, color=greenlight, dotted, mark=none 
					] table[x index=0,y index=1] {mnist_N100_smootheps0.1_it32769_m0_P785_loss.txt};
					\addplot[very thick, color=red, dotted, mark=none 
					] table[x index=0,y index=1] {mnist_N100_smootheps1_it32769_m0_P785_loss.txt};
				\end{semilogyaxis}
			\end{scope}
		\end{tikzpicture}
		\caption{MMD flow for MNIST target.
			Left: Wasserstein distance $W_2(\gamma^{(k)},\nu)$.
			Right:  MMD distance $\frac12 \mathcal D_K^2(\gamma^{(k)},\nu)$ for the respective kernels $K$.
		}
		\label{fig:mnist-dist}
	\end{figure}

	\section{Conclusions}\label{sec:conclusions}
	We introduced a smoothed negative distance kernel as an alternative to the negative distance kernel in MMDs.
	The novel kernel retains desired numerical properties of the negative distance kernel, but comes with well-defined
	gradient expressions and theoretical convergence guarantees 
	of the corresponding gradient flow schemes. Therefore our novel kernel appears to be well suited for various applications.
	
	Concerning our future work, it may be interesting to examine if our kernel can be also used in Stein variational gradient descent \cite{kama2021,NR2021},
	where negative distance kernels do neither theoretically nor
	practically work.

	\bibliographystyle{abbrv}
	\bibliography{ref}

\begin{thebibliography}{10}

\bibitem{abramowitz1964handbook}
M.~Abramowitz and I.~A. Stegun.
\newblock {\em Handbook of Mathematical Functions with Formulas, Graphs, and
  Mathematical Tables}.
\newblock Dover Publications, New York, 10th edition, 1964.

\bibitem{ambrosio2005gruenesbuchgradient}
L.~Ambrosio, N.~Gigli, and G.~Savar\'e.
\newblock {\em Gradient Flows in Metric Spaces and in the Space of Probability
  Measures}.
\newblock Lectures in Mathematics ETH Z\"urich. Birkh\"auser, Basel, second
  edition, 2008.

\bibitem{Arbel2019}
M.~Arbel, A.~Korba, A.~Salim, and A.~Gretton.
\newblock Maximum mean discrepancy gradient flow.
\newblock In H.~Wallach, H.~Larochelle, A.~Beygelzimer, F.~d\textquotesingle
  Alch\'{e}-Buc, E.~Fox, and R.~Garnett, editors, {\em Advances in Neural
  Information Processing Systems}, volume~32. Curran Associates, 2019.

\bibitem{BaCaLaRa13}
D.~Balagué, J.~A. Carrillo, T.~Laurent, and G.~Raoul.
\newblock Dimensionality of local minimizers of the interaction energy.
\newblock {\em Archive for Rational Mechanics and Analysis}, 209:1055--1088,
  2013.

\bibitem{BHC2017}
H.~H. Bauschke and P.~L. Combettes.
\newblock {\em Correction to: Convex Analysis and Monotone Operator Theory in
  Hilbert Spaces}.
\newblock Springer International Publishing, Cham, 2017.

\bibitem{BGRKSS06}
K.~M. Borgwardt, A.~Gretton, M.~J. Rasch, H.-P. Kriegel, B.~Schölkopf, and
  A.~J. Smola.
\newblock Integrating structured biological data by kernel maximum mean
  discrepancy.
\newblock {\em Bioinformatics}, 22(14):e49--e57, 07 2006.

\bibitem{BV2023}
S.~Boufad{\`e}ne and F.-X. Vialard.
\newblock {On the global convergence of Wasserstein gradient flow of the
  Coulomb discrepancy}.
\newblock HAL preprint hal-04282762, 2023.

\bibitem{CDM2016}
J.~Carrillo, M.~Delgadino, and A.~Mellet.
\newblock Regularity of local minimizers of the interaction energy via obstacle
  problems.
\newblock {\em Communications on Mathematical Physics}, 343(3):747--781, 2016.

\bibitem{CDEFS2020}
J.~Carrillo, M.~Di~Francesco, A.~Esposito, S.~Fagioli, and M.~Schmidtchen.
\newblock Measure solutions to a system of continuity equations driven by
  newtonian nonlocal interactions.
\newblock {\em Discrete and Continuous Dynamical Systems}, 40(2):1191--1231,
  2020.

\bibitem{CH2017}
J.~Carrillo and Y.~Huang.
\newblock Explicit equilibrium solutions for the aggregation equation with
  power-law potentials.
\newblock {\em Kinetic and Related Models}, 10(1):171--192, 2017.

\bibitem{CS2023}
J.~Carrillo and R.~Shu.
\newblock From radial symmetry to fractal behavior of aggregation equilibria
  for repulsive-attractive potentials.
\newblock {\em Calculus of Variations and Partial Differential Equations},
  62(1), 2023.

\bibitem{CSW2022}
D.~Chafai, E.~B. Saff, and R.~S. Womersley.
\newblock On the solution of a {R}iesz equilibrium problem and integral
  identities for special functions.
\newblock {\em Journal of Mathematical Analysis and Applications}, 515:126367,
  2022.

\bibitem{criscuolo1997}
G.~Criscuolo.
\newblock A new algorithm for {C}auchy principal value and {H}adamard
  finite-part integrals.
\newblock {\em Journal of Computational and Applied Mathematics}, 78:255--275,
  1997.

\bibitem{zhou2007}
F.~Cucker and D.~Zhou.
\newblock {\em Learning Theory: An {A}pproximation {T}heory Viewpoint}.
\newblock Cambridge University Press, 2007.

\bibitem{DSBHS2024}
R.~Duong, V.~Stein, R.~Beinert, J.~Hertrich, and G.~Steidl.
\newblock Wasserstein gradient flows of {MMD} functionals with distance kernel
  and {C}auchy problems on quantile functions.
\newblock {\em ArXiv Prerprint 2408.07498}, 2024.

\bibitem{EGNS2021}
M.~Ehler, M.~Gr\"af, S.~Neumayer, and G.~Steidl.
\newblock Curve based approximation of measures on manifolds by discrepancy
  minimization.
\newblock {\em Foundations of Computational Mathematics}, 21(6):1595--1642,
  2021.

\bibitem{flamary2021pot}
R.~Flamary, N.~Courty, A.~Gramfort, M.~Z. Alaya, A.~Boisbunon, S.~Chambon,
  L.~Chapel, A.~Corenflos, K.~Fatras, N.~Fournier, L.~Gautheron, N.~T. Gayraud,
  H.~Janati, A.~Rakotomamonjy, I.~Redko, A.~Rolet, A.~Schutz, V.~Seguy, D.~J.
  Sutherland, R.~Tavenard, A.~Tong, and T.~Vayer.
\newblock {POT}: {P}ython optimal transport.
\newblock {\em Journal of Machine Learning Research}, 22(78):1--8, 2021.

\bibitem{FM2025}
R.~L. Frank and R.~M. Matzke.
\newblock Minimizers for an aggregation model with attractive-repulsive
  interaction.
\newblock {\em Archive for Rational Mechanics and Analysis}, 249(15), 2025.

\bibitem{gelfand1964generalized}
I.~Gelfand and G.~Shilov.
\newblock {\em Generalized Functions, Vol I}.
\newblock Academic Press, New York, 1964.

\bibitem{GAG2021}
P.~Glaser, M.~Arbel, and A.~Gretton.
\newblock {KALE} flow: A relaxed {KL} gradient flow for probabilities with
  disjoint support.
\newblock In {\em Advances in Neural Information Processing Systems},
  volume~34, pages 8018--8031, 2021.

\bibitem{GraPotSte12a}
M.~Gr{\"a}f, D.~Potts, and G.~Steidl.
\newblock Quadrature rules, discrepancies and their relations to halftoning on
  the torus and the sphere.
\newblock {\em SIAM Journal on Scientific Computing}, 34(5):2760--2791, 2012.

\bibitem{Grafakos_2012}
L.~Grafakos and G.~Teschl.
\newblock On {F}ourier transforms of radial functions and distributions.
\newblock {\em Journal of Fourier Analysis and Applications}, 19(1):167–179,
  2012.

\bibitem{GBRSS13}
A.~Gretton, K.~M. Borgwardt, M.~J. Rasch, B.~Sch{{\"o}}lkopf, and A.~Smola.
\newblock A kernel two-sample test.
\newblock {\em J. Mach. Learn. Res.}, 13(25):723--773, 2012.

\bibitem{GCO2021}
T.~S. Gutleb, J.~A. Carrillo, and S.~Olver.
\newblock Computation of power law equilibrium measures on balls of arbitrary
  dimension.
\newblock {\em Constructive Approximation}, 58:75--120, 2023.

\bibitem{HHABCS2024}
P.~Hagemann, J.~Hertrich, F.~Altekrüger, R.~Beinert, J.~Chemseddine, and
  G.~Steidl.
\newblock Posterior sampling based on gradient flows of the {MMD} with negative
  distance kernel.
\newblock {\em International Conference on Learning Representations (ICLR)},
  2024.

\bibitem{hertrich2024}
J.~Hertrich.
\newblock Fast kernel summation in high dimensions via slicing and {F}ourier
  transforms.
\newblock {\em SIAM Journal on Mathematics of Data Science}, 6:1109--1137,
  2024.

\bibitem{HGBS2024}
J.~Hertrich, M.~Gräf, R.~Beinert, and G.~Steidl.
\newblock Wasserstein steepest descent flows of discrepancies with {R}iesz
  kernels.
\newblock {\em Journal of Mathematical Analysis and Applications}, 531(1, Part
  1):127829, 2024.

\bibitem{hertrich2025fast}
J.~Hertrich, T.~Jahn, and M.~Quellmalz.
\newblock Fast summation of radial kernels via {QMC} slicing.
\newblock {\em International Conference on Learning Representations (ICLR)},
  2025.

\bibitem{HN2025}
J.~Hertrich and S.~Neumayer.
\newblock Generative feature training of thin 2-layer networks.
\newblock {\em Transactions on Machine Learning}, accepted 2025.

\bibitem{HWAH2024}
J.~Hertrich, C.~Wald, F.~Altekrüger, and P.~Hagemann.
\newblock Generative sliced {MMD} flows with riesz kernels.
\newblock In {\em The Twelfth International Conference on Learning
  Representations (ICLR)}, 2024.

\bibitem{JKO1998}
R.~Jordan, D.~Kinderlehrer, and F.~Otto.
\newblock The variational formulation of the {F}okker--{P}lanck equation.
\newblock {\em SIAM Journal on Mathematical Analysis}, 29(1):1--17, 1998.

\bibitem{kama2021}
A.~Korba, P.~Aubin-Frankowski, S.~Majewski, and P.~Ablin.
\newblock Kernel {S}tein discrepancy descent.
\newblock In {\em Proceedings of the 38-th International Conference on Machine
  Learning (ICML)}, pages 5719--5730. PMLR, 2021.

\bibitem{KV2023}
F.~Krahmer and A.~Veselovska.
\newblock Enhanced digital halftoning via weighted sigma-delta modulation.
\newblock {\em SIAM Journal on Imaging Sciences}, 16(3):1727--1761, 2023.

\bibitem{madych1990multi}
W.~R. Madych and S.~A. Nelson.
\newblock Multivariate interpolation and conditionally positive definite
  functions. {II}.
\newblock {\em Mathematics of Computation}, 54(189):211–230, 1990.

\bibitem{medhurst1965evaluation}
R.~G. Medhurst and J.~H. Roberts.
\newblock Evaluation of the integral ${I}_n(b) = \frac{2}{\pi} \int^\infty_0
  (\frac{\sin x}{x})^n \cos (bx) dx$.
\newblock {\em Mathematics of Computation}, 19(90):123--126, 1965.

\bibitem{Micchelli1986}
C.~A. Micchelli.
\newblock Interpolation of scattered data: distance matrices and conditionally
  positive definite functions.
\newblock {\em Constructive Approximation}, 2:11--22, 1986.

\bibitem{MD2023}
T.~Modeste and C.~Dombry.
\newblock {Characterization of translation invariant {MMD} on $R^d$ and
  connections with {W}asserstein distances}.
\newblock {\em Journal of Machine Learning Research}, 25(237):1--39, 2024.

\bibitem{moraeu1965}
J.~Moreau.
\newblock Proximité et dualité dans un espace hilbertien.
\newblock {\em Bulletin de la Société Mathématique de France}, 93:273--299,
  1965.

\bibitem{NKSZ2022}
Y.~Nemmour, H.~Kremer, B.~Sch\"olkopf, and J.-J. Zhu.
\newblock Maximum mean discrepancy distributionally robust nonlinear
  chance-constrained optimization with finite-sample guarantee.
\newblock In {\em 61st IEEE Conference on Decision and Control (CDC)}. 2022.

\bibitem{Yurii2018}
Y.~Nesterov.
\newblock {\em Lectures on Convex Optimization}.
\newblock Springer, Cham, 2018.

\bibitem{NSSR2025}
S.~Neumayer, V.~Stein, G.~Steidl, and N.~Rux.
\newblock Wasserstein gradient flows for {M}oreau envelopes of f-divergences in
  reproducing kernel {H}ilbert spaces.
\newblock {\em Analysis and Applications}, 2025.

\bibitem{NR2021}
N.~N\"usken and D.~M. Renger.
\newblock Stein variational gradient descent: many-particle and long-time
  asymptotics.
\newblock {\em Foundations of Data Science}, 5(3), 2023.

\bibitem{plonka2018numerical}
G.~Plonka, D.~Potts, G.~Steidl, and M.~Tasche.
\newblock {\em Numerical Fourier Analysis}.
\newblock Springer, second edition, 2023.

\bibitem{Rag74}
D.~L. Ragozin.
\newblock Rotation invariant measure algebras on {E}uclidean space.
\newblock {\em Indiana University Mathematics Journal}, 23(12):1139–54, 1974.

\bibitem{RR2007}
A.~Rahimi and B.~Recht.
\newblock Random features for large-scale kernel machines.
\newblock {\em Advances in Neural Information Processing Systems}, 20, 2007.

\bibitem{rux2024slicing}
N.~Rux, M.~Quellmalz, and G.~Steidl.
\newblock Slicing of radial functions: a dimension walk in the {F}ourier space.
\newblock {\em Sampling Theory, Signal Processing, and Data Analysis}, 23(6),
  2025.

\bibitem{S2015}
F.~Santambrogio.
\newblock {\em Optimal Transport for Applied Mathematicians}.
\newblock Birkh\"{a}user, Basel, 2015.

\bibitem{SGBW2010}
C.~Schmaltz, P.~Gwosdek, A.~Bruhn, and J.~Weickert.
\newblock Electrostatic halftoning.
\newblock {\em Computer Graphics Forum}, 29(8):2313--2327, 2010.

\bibitem{schoenberg1946}
I.~Schoenberg.
\newblock Contributions to the problem of approximation of equidistant data by
  analytic functions.
\newblock {\em Quarterly of Applied Mathematics}, 4:45--99, 1946.

\bibitem{SSGF2013}
D.~Sejdinovic, B.~Sriperumbudur, A.~Gretton, and K.~Fukumizu.
\newblock {Equivalence of distance-based and RKHS-based statistics in
  hypothesis testing}.
\newblock {\em The Annals of Statistics}, 41(5):2263 -- 2291, 2013.

\bibitem{Sc09}
J.~Shawe-Taylor and N.~Cristianini.
\newblock {\em Kernel Methods for Pattern Analysis}.
\newblock Cambridge University Press, fourth edition, 2009.

\bibitem{steinwart2008supportderivfeature}
I.~Steinwart and A.~Christmann.
\newblock {\em Support Vector Machines}.
\newblock Information Science and Statistics. Springer, New York, 2008.

\bibitem{SF2021}
I.~Steinwart and J.~Fasciati-Ziegel.
\newblock Strictly proper kernel scores and characteristic kernels on compact
  spaces.
\newblock {\em Applied and Computational Harmonic Analysis}, 51:510--542, 2021.

\bibitem{sun1993conditionally}
X.~Sun.
\newblock Conditionally positive definite functions and their application to
  multivariate interpolations.
\newblock {\em Journal of Approximation Theory}, 74(2):159--180, 1993.

\bibitem{Szekely2002}
G.~Sz\'ekely.
\newblock E-statistics: The energy of statistical samples.
\newblock {\em Techical Report, Bowling Green University}, 2002.

\bibitem{villani2009optimal}
C.~Villani.
\newblock {\em Optimal Transport: Old and New}.
\newblock Springer, Berlin, 2009.

\bibitem{Wendland2004}
H.~Wendland.
\newblock {\em Scattered Data Approximation}.
\newblock Cambridge University Press, 2004.

\bibitem{zhu2024kernel}
J.-J. Zhu and A.~Mielke.
\newblock Kernel approximation of {F}isher--{R}ao gradient flows.
\newblock {\em Arxiv prepint 2410.20622}, 2024.

\end{thebibliography}
	
	\appendix
	\section{Fourier Transform of Tempered Distributions}\label{app:fourier}
	Let  $\mathcal S'(\R^d)$ denote the space of tempered distributions, i.e., of linear
	functionals $T$ on $\mathcal S(\R^d)$ fulfilling
	$$\varphi_k \xrightarrow[{\mathcal S}]{~} \varphi 
	\quad
	\Longrightarrow
	\quad
	\lim_{k\to \infty}\langle T, \, \varphi_k \rangle = \langle T, \, \varphi \rangle,
	$$
	where $\xrightarrow[{\mathcal S}]{~}$ denotes the convergence with respect to  \begin{equation} \label{eq:seminormsS}
		\| \varphi \|_m :=
		\max\limits_{|{ \beta}| \le m}
		\|(1 + \|  x \|_2)^m \,D^{ \beta} \varphi ( x)\|_{\mathcal C_0(\mathbb R^d)} \quad \text{for all} \quad 
		m\in \N.
	\end{equation}
	In particular, $\mathcal S'(\R^d)$ contains all slowly increasing functions $f$, i.e. the functions fulfilling $|f(x)| \le C(1 + \|x\|^N)$ for some $N \in \N$
	and all functions in $L^p(\R^ d)$, $p \in [1,\infty)$.
	As usual, for distributions of function type, the distribution $T_f$ is identified with the function itself and the dual pairing becomes 
	$$\langle T_f,\varphi\rangle = \int_{\R^d} f \, \varphi \d x
	\quad \text{for all} \quad 
	\varphi \in \mathcal S(\R^d).
	$$
	The Fourier transform  
	$\mathcal F:  \mathcal S'(\R^d) \to  \mathcal S'(\R^d)$,
	$T \mapsto \hat T$
	is defined by
	\begin{equation} \label{eq:FS'}
		\langle T , \hat \varphi \rangle = \langle \hat T ,  \varphi \rangle 
		\quad \text{for all} \quad 
		\varphi \in \mathcal S(\R^d).
	\end{equation}
	In particular, we have for $f \in L^1(\R^d)$ in the above sense that $\hat T_f = T_{\hat f}$ with $\hat f$ given by \eqref{eq:fourier_trafo}.
	
	\begin{example}[Distributional versus Generalized Fourier transform of polynomials]
		Let $p\colon\R\to\R$, $p(x) \coloneqq \sum_{k=0}^{r-1} p_k x^k$. 
		Then
		$$
		\int_{\R^d} p(x) \hat \varphi(x) \d x = 0 
		\quad \text{for all} \quad
		\varphi \in \mathcal S_r,
		$$
		so the Generalized Fourier transform of $p$ 
		of order $r$ is the zero function,
		see \cite[Prop. 8.10]{Wendland2004}.
		In contrast, the distributional Fourier transform of $p$ is given by
		$$
		\hat p =  \sum_{k=0}^{r-1} \left(\frac{i}{2\pi}\right)^k p_k
		\delta^{(k)} .
		$$
		If we test only against functions in $\mathcal S_r$ both approaches coincide.
	\end{example}

	\begin{example}[Distributional Fourier transform of $\abs$]
		Since $\abs$ is slowly increasing, it is a tempered distribution. Its distributional Fourier transform can be written as the distributional derivative of the Cauchy principal value, 
		$$\widehat\abs = \frac{1}{2\pi^2} \left(\operatorname{pv}\left(\frac{1}{\cdot} \right)\right)',
		$$
		where
		$$
		\left\langle\operatorname{pv}\left(\frac1\cdot\right),\varphi\right\rangle 
		\coloneqq \lim_{\varepsilon\searrow0}\int_{|x|>\varepsilon} \frac{\varphi(x)}{x}\d x
		=
		\int_{\R} \frac{\varphi(x)-\varphi(0)}{x} \d x
		,\qquad \varphi\in\mathcal S(\R),
		$$
		see \cite[Sect 4.3]{plonka2018numerical}, \cite{gelfand1964generalized}.
		This can also be represented as the so-called Hadamard finite part 
		$\operatorname{H}\left(\frac{-1}{2\pi^2(\cdot)^2}\right)$, see \cite{criscuolo1997}, given by
		$$
		\left\langle{\widehat\abs,\varphi}\right\rangle
		=
		\left\langle\operatorname{H}\left(\frac{-1}{2\pi^2(\cdot)^2}\right), \varphi\right\rangle
		\coloneqq
		-\int_{\R} \frac{\varphi(\omega)-\varphi(0)-\varphi'(0)\omega}{2\pi^2\omega^2} \d\omega.
		$$
		If we test only against functions from $\varphi \in \mathcal S_2(\R)$,
		we have $\varphi(0) = \varphi'(0) = 0$, so that
		this coincides 
		with the generalized Fourier transform \eqref{fabs}. 
		\hfill $\Box$
	\end{example}
	
	Another special case of tempered distributions are finite Borel 
	measures $\mathcal M(\R^d)$, see \cite[Sect. 4.4]{plonka2018numerical}.
	More precisely, since $\mathcal S(\mathbb R^d)$ is a dense subspace of $\left(\mathcal C_0(\mathbb R^d),\|\cdot\|_\infty\right)$, we know by the Riesz representation theorem
	that $\mu \in \mathcal M(\mathbb R^d)$ can be identified  with 
	a tempered distribution $T_\mu \colon \mathcal S(\mathbb R^d) \to \mathbb C$ which acts on any $\varphi \in \mathcal S(\mathbb R^d)$ by 
	\begin{equation*}
		\langle T_\mu, \varphi \rangle
		\coloneqq
		\int_{\mathbb R^d} \varphi \d \mu. 
	\end{equation*}
	The \emph{Fourier transform}
	on $\mathcal M(\mathbb R^d)$ is defined by
	$\mathcal F \colon \mathcal M(\mathbb R^d) \to \mathcal C_b(\mathbb R^d)$ with
	\begin{equation}\label{f_measure}
		\mathcal{F} \mu( \omega)
		=
		\hat{\mu} ( \omega)
		:=
		\int_{\mathbb R^d} \e^{- 2 \pi i \omega \cdot} \d \mu
		,\qquad
		\omega \in \mathbb R^d,
	\end{equation}
	and we have $\hat T_{\mu} = T_{\hat \mu}$.
	For positive measures $\mu \in \mathcal M(\mathbb R^d)$, i.e.\
	$\mu(B) \ge 0$ for all Borel sets $B \subseteq \mathbb R^d$, we obtain a
	one-to-one mapping to positive definite functions by Bochner's theorem.
	
	\begin{theorem}[Bochner]  \label{thm:bochner}
		Any positive definite function $f \colon \R^d \to \C$ is the Fourier
		transform of a positive measure and conversely. If in addition $f(0) = 1$, then it is the Fourier transform of a probability measure.
	\end{theorem}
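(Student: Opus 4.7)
I will prove the two implications separately, treating the normalization $f(0)=1$ as an easy add-on at the end. Throughout, I take ``positive definite'' to include continuity at the origin (and hence uniform continuity on $\R^d$, which follows from the standard $|f(x)-f(y)|^2 \le 2 f(0) \operatorname{Re}(f(0)-f(x-y))$ inequality extracted from \eqref{eq:pos_dev_quadratic_form} using $N=3$).

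\textbf{Easy direction ($\Leftarrow$).} Suppose $f=\hat\mu$ for a finite positive measure $\mu\in\mathcal M(\R^d)$. For any $N\in\N$, $x_1,\dots,x_N\in\R^d$, and $a\in\C^N$, I simply plug in the definition \eqref{f_measure} and swap sum and integral:
\begin{equation*}
\sum_{j,k=1}^N a_j\bar a_k\, f(x_j-x_k)
=\int_{\R^d}\Big|\sum_{j=1}^N a_j\, \e^{-2\pi\i\langle x_j,\omega\rangle}\Big|^2\d\mu(\omega)\ge 0.
\end{equation*}
Continuity of $\hat\mu$ is immediate from dominated convergence, so $f$ is positive definite.

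\textbf{Hard direction ($\Rightarrow$).} Given a continuous positive definite $f$, I want to reconstruct $\mu$. The plan is the classical mollify-and-take-a-limit argument. First, I note that positive definiteness with $N=2$ yields $|f(x)|\le f(0)$, so $f$ is bounded and tempered. For $\varepsilon>0$ set
\begin{equation*}
g_\varepsilon(\omega)\coloneqq \int_{\R^d} f(x)\,\e^{-\varepsilon\pi\|x\|^2}\,\e^{-2\pi\i\langle x,\omega\rangle}\d x,
\end{equation*}
which is a well-defined continuous function since the Gaussian damps $f$. The key claim is $g_\varepsilon\ge 0$. To prove it, fix $\omega$ and discretize the defining integral by Riemann sums over a grid $h\Z^d\cap[-R,R]^d$ with weights $a_j\coloneqq h^d\e^{-\varepsilon\pi\|x_j\|^2/2}\e^{-2\pi\i\langle x_j,\omega\rangle}$. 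Positive definiteness applied to these finitely many points yields $\sum_{j,k}a_j\bar a_k\,f(x_j-x_k)\ge 0$, and the limit $h\to 0$, $R\to\infty$ (using continuity and the Gaussian factor for uniform integrability) is exactly $g_\varepsilon(\omega)\ge 0$.

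\textbf{Extracting the measure.} The inverse Fourier formula on Schwartz functions, together with Fubini applied to the absolutely integrable $f(x)\e^{-\varepsilon\pi\|x\|^2}$, gives
\begin{equation*}
f(x)\,\e^{-\varepsilon\pi\|x\|^2}=\int_{\R^d} g_\varepsilon(\omega)\,\e^{2\pi\i\langle x,\omega\rangle}\d\omega,
\end{equation*}
so that $\mathrm d\mu_\varepsilon(\omega)\coloneqq g_\varepsilon(\omega)\,\mathrm d\omega$ is a positive measure with total mass $\mu_\varepsilon(\R^d)=f(0)$. Thus $\{\mu_\varepsilon\}_{\varepsilon>0}$ is a bounded family; for tightness I use that $|\hat\mu_\varepsilon(x)|=|f(x)\e^{-\varepsilon\pi\|x\|^2}|\to |f(x)|$ uniformly on compact sets as $\varepsilon\to 0$, with $\hat\mu_\varepsilon(0)=f(0)$ converging. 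By a standard consequence of Prokhorov (equivalently Lévy's continuity argument: uniform continuity of the limit of $\hat\mu_\varepsilon$ at $0$ implies tightness), there is a subsequence $\mu_{\varepsilon_n}\rightharpoonup\mu$ for some positive finite $\mu$. Taking limits in $\hat\mu_{\varepsilon_n}(x)=f(x)\e^{-\varepsilon_n\pi\|x\|^2}$ yields $\hat\mu=f$.

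\textbf{Normalization.} Plugging $x=0$ into $\hat\mu=f$ gives $\mu(\R^d)=f(0)$, so $f(0)=1$ is equivalent to $\mu$ being a probability measure.

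\textbf{Main obstacle.} The technical heart is proving $g_\varepsilon\ge 0$ from the abstract positive definiteness condition and then upgrading the family $\{\mu_\varepsilon\}$ to a single limiting measure. The nonnegativity of $g_\varepsilon$ is essentially a Riemann-sum limit of the positive definite quadratic form; the subtlety is justifying the passage to the limit uniformly in $\omega$ using the Gaussian weight. The extraction of $\mu$ then requires a tightness argument (Lévy/Prokhorov), which is where continuity of $f$ at $0$ enters crucially.
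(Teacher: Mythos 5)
Your argument is the standard textbook proof of Bochner's theorem, and it is essentially sound; note, however, that the paper itself does not prove Theorem~\ref{thm:bochner} at all — it is stated in Appendix~\ref{app:fourier} as classical background (with the surrounding discussion pointing to the literature, e.g.\ \cite{plonka2018numerical}), and the Bochner-type result the paper actually uses is the generalized-Fourier-transform version for conditionally positive definite functions, Theorem~\ref{thm:cond_bochner}, quoted from Wendland. So you are supplying a proof where the paper cites one: easy direction by expanding the quadratic form into $\int_{\R^d}\big|\sum_j a_j\e^{-2\pi\i\langle x_j,\omega\rangle}\big|^2\d\mu(\omega)\ge 0$, hard direction by Gaussian mollification, positivity of $g_\varepsilon$ via Riemann-sum approximation of \eqref{eq:pos_dev_quadratic_form}, and a L\'evy/Prokhorov tightness argument — and you correctly make continuity part of the hypotheses (it is built into the paper's definition of positive definiteness). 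Three small repairs would be needed in a full write-up, none of them substantive. First, with your weights the Riemann sums converge to the double integral $\int_{\R^d}\int_{\R^d} f(x-y)\,\e^{-\varepsilon\pi(\|x\|^2+\|y\|^2)/2}\e^{-2\pi\i\langle x-y,\omega\rangle}\d x\d y=\varepsilon^{-d/2}g_{\varepsilon/4}(\omega)$, not ``exactly $g_\varepsilon(\omega)$''; this is harmless because $\varepsilon>0$ is arbitrary. Second, the inversion formula for $f\,\e^{-\varepsilon\pi\|\cdot\|^2}$ does not follow from Fubini alone: you must first know $g_\varepsilon\in L^1(\R^d)$, and that is precisely where the nonnegativity of $g_\varepsilon$ enters (Gaussian-regularized inversion gives $(f_\varepsilon * G_\delta)(x)$ on one side, and monotone convergence at $x=0$ yields $\int g_\varepsilon=f(0)<\infty$, after which dominated convergence gives the inversion formula and $\mu_\varepsilon(\R^d)=f(0)$). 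Third, with the paper's sign convention \eqref{f_measure} the measure with density $g_\varepsilon$ satisfies $\hat\mu_\varepsilon(x)=f(-x)\e^{-\varepsilon\pi\|x\|^2}=\overline{f(x)}\,\e^{-\varepsilon\pi\|x\|^2}$, so in the limit you obtain $\hat\mu=\bar f$; either define $g_\varepsilon$ with $\e^{+2\pi\i\langle x,\omega\rangle}$ or push $\mu$ forward under $\omega\mapsto-\omega$ (for the even, real-valued functions the paper works with this is immaterial). With these routine adjustments your proof is complete and matches the classical argument the paper implicitly defers to.
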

	Note that, by our definition, positive definite functions automatically are continuous.

	\section{Relation with Moreau Envelopes}\label{app:huber}
	For a proper, convex, lower semi-continuous function $g\colon \R^d \to \R$
	and $\lambda >0$, the \emph{proximal function} $\text{prox}\colon\R^d \to \R^d$ is defined by 
	$$
	\prox_{\lambda g} (x) = \argmin_{y \in \R^d} \left\{\frac12 \|x-y\|^2+ \lambda g(y) \right\}
	$$
	and its \emph{Moreau envelope} by 
	$$
	H_{\lambda g} (x) = \min_{y \in \R^d} \left\{\frac12 \|x-y\|^2 + \lambda g(y)\right\}.
	$$
	The Moreau envelope is differentiable and
	\begin{align} \label{moreau_diff}
		\nabla H_{\lambda g}(x) &= x - \prox_{\lambda g} (x),
	\end{align}
	so that
	\begin{align} \label{needed}
		\prox_{\lambda g} (x) &= x - \nabla H_{\lambda g}(x) = \nabla 
		\underbrace{\left( \frac12 \|x\|^2 - H_{\lambda g}(x)\right)}_{\psi(x)}
		.
	\end{align}
	Conversely, we have the following  result of Moreau \cite[Cor~10c]{moraeu1965}.
	
	\begin{proposition}\label{prop:moreau}
		A function $G\colon \R^d \to \R^d$ is the proximal function of a proper, convex, lower semi-continuous function if and only if i) there exists a convex differentiable function~$\psi$ such that $G = \nabla \psi$, and ii) $G$ is nonexpansive, i.e., $\|G(x)-G(y)\|\le\|x-y\|$ for all $x,y\in\R^d$.
	\end{proposition}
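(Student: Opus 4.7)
The plan is to prove each direction separately, using the Moreau envelope identity \eqref{needed} for the forward implication and Fenchel--Legendre conjugation for the converse.

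For the forward direction, suppose $G = \prox_{\lambda g}$. Equation \eqref{needed} already supplies the candidate $\psi(x) \coloneqq \tfrac12 \|x\|^2 - H_{\lambda g}(x)$ with $\nabla\psi = G$, so we only need to verify that $\psi$ is convex, equivalently that $\nabla\psi = \prox_{\lambda g}$ is monotone on $\R^d$. This is a consequence of the firm nonexpansiveness of proximal operators of proper convex lsc functions, a classical fact that follows from the variational inequality characterizing $\prox_{\lambda g}$. The same firm nonexpansiveness immediately yields nonexpansiveness, i.e.~(ii).

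For the converse, let $\psi$ be convex differentiable with $G = \nabla\psi$ nonexpansive. Expanding the quadratic in the definition of the Moreau envelope gives, for any admissible $g$, the conjugate representation
\begin{equation*}
H_g(x) = \tfrac12 \|x\|^2 - \bigl(g + \tfrac12 \|\cdot\|^2\bigr)^*(x).
\end{equation*}
Reading this backward, we \emph{define}
\begin{equation*}
g \coloneqq \psi^* - \tfrac12 \|\cdot\|^2.
\end{equation*}
Assuming for the moment that this $g$ is proper, convex and lsc, biconjugation yields $\bigl(g + \tfrac12\|\cdot\|^2\bigr)^* = \psi^{**} = \psi$, so that $H_g = \tfrac12\|\cdot\|^2 - \psi$ and hence, by \eqref{moreau_diff}, $\prox_g = \id - \nabla H_g = \nabla\psi = G$.

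The main obstacle is the convexity of $g$, which amounts to $\psi^*$ being $1$-strongly convex. Via the standard duality between strong convexity of a function and Lipschitz continuity of the gradient of its conjugate, this is equivalent to $\nabla\psi = G$ being firmly nonexpansive, i.e.\ $1$-cocoercive. For a convex function $\psi$ whose gradient is merely $1$-Lipschitz, this upgrade from Lipschitz to cocoercive is exactly the content of the Baillon--Haddad theorem, so hypothesis (ii) supplies precisely what is needed. Properness and lower semicontinuity of $g$ then follow from those of $\psi^*$ together with the lower bound $\psi^*(x) \ge \tfrac12\|x\|^2 + c$ inherited from $1$-strong convexity, completing the argument.
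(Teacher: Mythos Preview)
The paper does not actually prove this proposition; it is quoted from Moreau's original work \cite[Cor~10c]{moraeu1965} and used as a black box. Your argument is the standard modern proof via Fenchel conjugation together with the Baillon--Haddad theorem, and it is correct in all essential points.

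One small correction at the very end: the claimed lower bound $\psi^*(x)\ge \tfrac12\|x\|^2 + c$ does not follow from $1$-strong convexity alone (take $\psi(x)=\tfrac12\|x\|^2+\langle a,x\rangle$, for which $g=\psi^*-\tfrac12\|\cdot\|^2$ is affine and unbounded below). Fortunately you do not need it: properness and lower semicontinuity of $g$ follow directly from those of $\psi^*$, since the conjugate of a proper convex lsc function is itself proper convex lsc and never takes the value $-\infty$, and subtracting the finite continuous function $\tfrac12\|\cdot\|^2$ preserves both properties.
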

	
	In particular, we obtain for $g=\abs$ that
	$$
	\prox_{\lambda \abs} (x) = \left\{
	\begin{array}{ll}
		x-\lambda& x > \lambda,\\
		0 & x \in [-\lambda,\lambda],\\
		x+\lambda&x <-\lambda,
	\end{array}
	\right.
	$$
	and the Moreau envelope 
	$$
	H_{\lambda \abs} (x) = 
	\left\{
	\begin{array}{ll}
		\frac12 x^2 & |x| \le \lambda\\
		\lambda(|x| - \frac{\lambda}{2})& \text{otherwise}
	\end{array}
	\right.
	$$
	is known as the \emph{Huber function}.
	
	On the other hand, we have 
	$$
	(\abs*M_1)(x) = 
	\left\{
	\begin{array}{ll}
		x^2 + \frac14 & |x| \le \frac12,\\[0.5ex]
		|x|& |x|>\frac12,
	\end{array}
	\right.
	$$
	so that by Proposition \ref{prop:1}, for 
	$\varepsilon = 2 \lambda$,  
	$$
	(\abs*M_{1,2\lambda})(x)
	=
	\frac{1}{\lambda} H_{\lambda \abs} (x) + \frac{\lambda}{2} 
	= 
	\left\{
	\begin{array}{ll}
		\frac{x^2}{2\lambda} + \frac{\lambda}{2}  
		& |x| \le  \lambda,\\[0.5ex]
		|x| & |x| > \lambda.
	\end{array}
	\right.
	$$
	Thus, by \eqref{fabs} and
	Propositions \ref{lem:four_conv} and \ref{prop:splines}, the
	Generalized Fourier transform of the Huber function
	is given by
	\begin{equation}\label{fhuber}
		\hat H_{\lambda \abs} (\omega)= - \frac{\lambda}{2\pi^2 \omega^2} \, \sinc(2\lambda \omega).
	\end{equation}
	Since this function has positive and negative values, we conclude by Proposition \ref{thm:cond_bochner} that the negative Huber function is not conditionally positive definite of any order.
	
	Let us see if $\abs*M_{1,\varepsilon}$ is the Moreau envelope of some 
	function.
	Regarding \eqref{needed}, we consider
	$$
	\psi(x) \coloneqq \frac12 x^2 - (\abs*M_{1,2\lambda})(x)
	= \left\{
	\begin{array}{ll}
		\frac12 \left(1- \frac{1}{\lambda} \right)x^2 - \frac{\lambda}{2}  
		&   |x| \le  \lambda,\\[0.5ex]
		\frac{x^2}{2} - |x| & |x| > \lambda,
	\end{array}
	\right.
	$$
	which is convex for $\lambda \ge 1$ and 
	has a nonexpansive derivative 
	$$
	\psi'(x) = x- (\abs*M_{1,\varepsilon})'(x)
	=
	\left\{
	\begin{array}{ll}
		\left(1- \frac{1}{\lambda} \right) x
		& |x| \le  \lambda,
		\\[0.5ex]
		x-1 & x > \lambda,\\
		x+1 & x < -\lambda.
	\end{array}
	\right.
	$$
	Thus, by Moreau's Proposition \ref{prop:moreau}, we see that
	$\abs*M_{1,2\lambda}$ is a Moreau envelope if and only if $\varepsilon = 2 \lambda \ge 2$.
	More general, we have the following proposition
	
	\begin{proposition}
		For $m \in \N$, $m\ge 1$, the function
		$\abs*M_{m,\varepsilon}$
		is the Moreau envelope of a proper, convex, lower semi-continuous function if and only if
		$\varepsilon \ge 2 M_m(0)$.
	\end{proposition}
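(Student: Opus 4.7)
The plan is to invoke Moreau's characterization (Proposition~\ref{prop:moreau}) via relation~\eqref{needed}. Set $H \coloneqq \abs*M_{m,\varepsilon}$ and $\psi(x) \coloneqq \tfrac{1}{2}x^2 - H(x)$. By \eqref{needed}, any Moreau envelope $H_{\lambda g}$ has the form $\tfrac{1}{2}x^2 - \psi$ with $\psi$ convex and differentiable and with $\nabla \psi = \prox_{\lambda g}$ nonexpansive. Rescaling $g \mapsto \lambda g$ absorbs $\lambda$ into the choice of $g$, so $H$ is the Moreau envelope of some proper, convex, lsc function if and only if $G \coloneqq \psi' = x - H'$ satisfies conditions~(i) and~(ii) of Proposition~\ref{prop:moreau}. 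Condition~(i) (convexity of $\psi$) translates to $\psi'' = 1 - H'' \ge 0$, i.e.\ $H'' \le 1$. Condition~(ii) ($G$ nonexpansive) translates to $|1 - H''| \le 1$, i.e.\ $0 \le H'' \le 2$.

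By Proposition~\ref{prop:1}~iii) and v), the convolution $H$ is convex and at least $C^1$ with
\[
H''(x) \;=\; 2\,M_{m,\varepsilon}(x) \;=\; \tfrac{2}{\varepsilon}\, M_m\!\left(\tfrac{x}{\varepsilon}\right) \;\ge\; 0.
\]
Since $H'' \ge 0$, condition~(i) is strictly tighter than~(ii) and is the binding constraint. Using that the centered cardinal $B$-spline attains its supremum at the origin, $\|M_m\|_\infty = M_m(0)$, we obtain $\|H''\|_\infty = 2 M_m(0)/\varepsilon$, so condition~(i) reads exactly $\varepsilon \ge 2 M_m(0)$. For the converse, if $\varepsilon < 2 M_m(0)$ then $H''(0) > 1$, hence $\psi''(0) < 0$ and $\psi$ fails to be convex in a neighborhood of $0$; the rescaling argument then rules out $H$ being a Moreau envelope for any $\lambda>0$.

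The only non-routine point is justifying $\|M_m\|_\infty = M_m(0)$ for every $m \in \N$, and I expect this to be the main checkpoint (though it is classical). For even $m$ it follows at once from $\widehat{M_m} = \sinc^m \ge 0$ in Proposition~\ref{prop:splines}~iv) together with Fourier inversion, which gives $|M_m(x)| \le \int \widehat{M_m}\,\d\omega = M_m(0)$. For odd $m$, one argues inductively via the recursion $M_m(x) = \int_{x-\nicefrac12}^{x+\nicefrac12} M_{m-1}(t)\,\d t$: the evenness and unimodality of $M_{m-1}$ about $0$ (inherited from the induction) ensure that the symmetric window of length $1$ captures maximal mass precisely when centered at $x=0$. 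The remaining steps, namely verification of $C^1$-smoothness of $\psi$ and the elementary translations of (i) and~(ii) into pointwise bounds on $H''$, are routine.
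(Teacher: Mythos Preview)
Your proof is correct and follows essentially the same route as the paper: set $\psi(x)=\tfrac12 x^2-(\abs*M_{m,\varepsilon})(x)$, apply Moreau's characterization (Proposition~\ref{prop:moreau}), and reduce everything to the pointwise inequality $\psi''(x)=1-\tfrac{2}{\varepsilon}M_m(x/\varepsilon)\ge 0$, which holds iff $\varepsilon\ge 2M_m(0)$ since $M_m$ attains its maximum at $0$. The paper is terser---it treats $m=1$ separately (where $M_1\notin\mathcal U^0(\R)$, so Proposition~\ref{prop:1} does not literally apply) and takes $\|M_m\|_\infty=M_m(0)$ for granted---whereas you spell out both Moreau conditions and sketch why $M_m$ peaks at the origin; your parity split there is fine, though a single induction on unimodality via $M_m(x)=\int_{x-1/2}^{x+1/2}M_{m-1}$ works uniformly for all $m$ and avoids needing unimodality (not just the location of the maximum) of the even-order splines as input.
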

	
	\begin{proof}
		By the above considerations, the assertion is true for $m=1$.
		By Proposition \ref{prop:1}, the function
		$$
		\psi(x) \coloneqq \frac12 x^2 - (\abs*M_{m,\varepsilon})(x)
		$$
		fulfills
		\begin{align}
			\psi''(x) &= 1 - \frac{2}{\varepsilon} M_m\left(\frac{x}{\varepsilon} \right) 
			\ge 1 - \frac{2}{\varepsilon} M_m(0)
			\ge 0
		\end{align}
		if and only if $\varepsilon \ge 2 M_m(0)$,
		and exactly in this case $\psi$ is convex.
		Further, because $\psi''\le1$, we see that $\psi'$
		is nonexpansive and by Moreau's Proposition \ref{prop:moreau}, the function $\abs*M_{m,\varepsilon}$ is a Moreau envelope.
	\end{proof}
	\section{Proofs}\label{app:a}
	\subsection*{Proofs from Section 2}
	\textbf{Proof of Proposition \ref{lem:four_conv}.}
	Since $f\in \mathcal{C}(\Rd)$ is slowly increasing
	and $u\in \mathcal{C}_c(\Rd)$, we conclude
	by straightforward computations that $f*u$ is continuous and slowly increasing, too. Therefore, $\langle f*u,\hat \varphi\rangle$ exists for all $\varphi \in \mathcal S(\R^d)$. 
	Using Fubini's theorem, we obtain 
	\begin{align*}
		\langle f*u,\hat \varphi\rangle &
		=\int_\Rd  (f*u)(x)\hat \varphi(x)\d x
		=\int_\Rd \int_\Rd u(y)f(x-y)\d y \hat \varphi(x)\d x\\&
		=\int_\Rd u(y)\int_\Rd   f(x-y)\hat \varphi(x)\d x\d y.
	\end{align*}
	By the translation-modulation theorem, we know that
	\begin{align*}
		\hat \varphi(x)=\mathcal F[\e^{-2\pi \i\langle \cdot ,y\rangle}\varphi](x-y),
	\end{align*}
	so that
	\begin{align*}
		\langle f*u,\hat \varphi\rangle 
		&=
		\int_\Rd u(y) \int_\Rd f(x-y) \mathcal F[\e^{-2\pi \i\langle \cdot,y\rangle}\varphi](x-y)\d x\d y\\
		&=\int_\Rd u(y) \int_\Rd f(x) \mathcal F[\e^{-2\pi \i\langle \cdot,y\rangle}\varphi](x)\d x\d y.
	\end{align*}        
	Since $f$ has a generalized Fourier transform $\hat f$ of order $r$, this implies for $\varphi \in \mathcal{S}_{2r}(\R^d)$   that
	\begin{align*}
		\langle f*u,\hat \varphi\rangle 
		&=\int_\Rd \hat f(x) \varphi(x)\int _\Rd u(y)\e^{-2\pi \i\langle x,y\rangle} \d y\d x\\
		&= \int_\Rd \hat f(x) \varphi(x) \hat u(x)\d x.
	\end{align*}
	Hence, $f*u$ has a generalized Fourier transform of order $r$, namely $\hat f \, \hat u$.
	\hfill\qedsymbol{\parfillskip0pt\par}
	
	\subsection*{Proofs from Section 3}
	\textbf{Proof of Proposition \ref{prop:1}.}
	i) follows directly by definition of $f$ and since $u$ is even.
	
	To show ii), let $x>R \coloneqq \textup{diam} (\supp u)/2$. The case $x<-R$ follows similarly. 
	Then we obtain
	\begin{align*}
		(\abs* u)(x)&
		=\int\limits_{-R}^R  u(y)(x-y)  \d y
		=x\int\limits_{-R}^R  u(y)  \d y -\int\limits_{-R}^R y \,  u(y)  \d y
		= x\cdot 1-0 =x.
	\end{align*}
	
	In iii), we only have to show that $f''= 2u$. Then the smoothness of $f$ follows by $u \in \mathcal C^n(\R)$.
	Using Lebesgue's dominated convergence theorem, we conclude
	\begin{align*}
		\frac{\d}{\d x}(\abs*u)(x)
		&= \lim_{h \to 0} \int\limits_{-R}^R \underbrace{\frac{|x+h-y| - |x-y|}{h} u(y) }_{|g_{x,h}| \le u } \d y
		=\int\limits_{-R}^R \sgn(x-y)u(y)\d y \\
		&=(\sgn *u)(x),
	\end{align*}
	where 
	$$\sgn(x) \coloneqq \left\{
	\begin{array}{rl}
		1& x \ge 0,\\
		-1& x < 0.
	\end{array}
	\right.$$
	The right derivative of $\sgn$ is given by
	\begin{equation*}
		\lim_{h\searrow 0} \frac{\sgn(x+h)-\sgn(x)}{h}=\lim_{h\searrow 0}\frac{2}{h}\1_{[-h,0)}(x).
	\end{equation*}
	Therefore, we have by continuity of $u$ that
	\begin{align*}
		\lim_{h\searrow 0} \frac{(\sgn*u)(x+h)-(\sgn*u)(x)}{h}&
		=\lim_{h\searrow 0} \int_\R \frac{2}{h}\1_{[-h,0)}(y)u(x-y)\d y\\&
		=2\lim_{h\searrow 0} \frac{1}{h}\int_{-h}^0 u(x-y)\d y
		=2u(x).
	\end{align*}
	We obtain the same result for the left  derivative. Since $f''=2u\ge0$, the function $f$ is convex.
	
	For iv), we have by Lemma \ref{lem:four_conv}
	and \eqref{fabs} that 
	\begin{equation}
		-\hat f= \mathcal F[-\abs * u] = (-\widehat{ \abs}) \, \hat u \ge 0.
	\end{equation}
	Therefore $-f$ is conditionally positive definite of order $r=1$ by Theorem~\ref{thm:cond_bochner}.
	
	Assertion v) follows by straightforward computation.
	
	Finally, we show vi). Note that  we cannot apply the usual convergence theorems for approximate identities in vi), because $\abs\notin  \mathcal C_0(\R)$.
	
	Let $\supp u\subseteq [-R,R]$ for some $R>0$. Then, we have by v) and ii) that  
	\begin{equation} \label{eq:f_zeta=abs}
		(\abs * u_\varepsilon)(x) = |x| \quad \text{for} \quad
		|x| \ge \varepsilon R.
	\end{equation}
	For $|x|\le \varepsilon R$, we conclude using  $\int_\R u_\varepsilon (y) \d y=1$ that
	\begin{align*}
		\big| |x|- (\abs*u_\varepsilon)(x) \big|
		&=
		\left|\int_\R |x| u_\varepsilon (x-y)\d y-\int_\R |y|u_\varepsilon (x-y)\d y\right|\\
		&\le \int_\R |x-y| u_\varepsilon(x-y)\d y
		=
		\int_{-\varepsilon R}^{\varepsilon R} |y| \, u_\varepsilon(y) \d y\\
		&\le \varepsilon R \int_{-\varepsilon R}^{\varepsilon R} u_\varepsilon(y)\d y
		= \varepsilon R,
	\end{align*}
	which shows the uniform convergence.
	\hfill\qedsymbol
	\\[2ex]
	\textbf{Proof of Corollary \ref{cor:1}.}
	By \eqref{bspline} and since $f'' = 2 M_m$, we obtain 
	$$
	f(x)= \frac{2}{(m+1)!}
	\sum_{k=0}^{m} (-1)^k \binom{m}{k} \Big(x-k + \frac{m}{2} \Big)_+^{m+1} + ax + b
	$$
	with some $a,b\in\R$.
	Further, for $x \le - \frac{m}{2}$, we conclude by $f(x) = -x$ that
	$$
	f(x) =  ax + b = - x,
	$$
	so that $a=-1$ and $b=0$.
	\hfill\qedsymbol{\parfillskip0pt\par}
	\subsection*{Proofs from Section 4}
	\textbf{Proof of Lemma \ref{ex:not_cond_pos}.}
	The function $f \coloneqq \abs*M_2$ has the generalized Fourier transform of order $1$ given by 
	$\hat f(r)=- \frac{\sinc^2(r)}{2\pi^2 r^2}$.
	We define
	\begin{equation}\label{eq:FT_of_F3}
		g(r)\coloneqq -\frac{1}{2\pi}\frac{1}{r}\frac{\d}{\d r}\hat f(r) .
	\end{equation}
	For integrable functions it was proven in \cite[Thm.~1.1]{Grafakos_2012} that $g(\|\cdot\|)$ is the $3$-dimensional Fourier transform of $f(\|\cdot\|)$.
	However, since $f$ is not integrable, we  use the generalized Fourier transform to argue that $\mathcal F_3[f(\|\cdot\|)]=g(\|\cdot\|)$:
	for an even test function $\psi\in \mathcal S_{2}(\R)$, we apply \cite[Thm.~1.1]{Grafakos_2012} to obtain
	\begin{equation}
		\mathcal F_3[\psi(\|\cdot\|)](\|x\|)=-\frac{1}{2\pi}\frac{1}{\|x\|}\hat \psi'(\|x\|),
	\end{equation}
	and with the surface area $\omega_{2}=4\pi$, integration by parts gives
	\begin{align*}
		\int_{\R^3} g(\|x\|)\psi(\|x\|)\d x&
		=\omega_2 \int_0^\infty g(r)\psi(r)r^2\d r
		=-\int_0^\infty 2 \hat f'(r)\psi(r)r\d r\\&
		=-\left[2\hat f(r)\psi(r)r\right]_0^\infty+\int_0^\infty 2\hat f(r)\frac{\d}{\d r}(\psi(r)r)\d r.
	\end{align*}
	Since $\psi\in \mathcal S_2(\R)$, the first summand $[2\hat f(r)\psi(r)r]_0^\infty$ vanishes. 
	The derivative of the odd function $\psi(r)r$ is even and still in $\mathcal S_2(\R)$. Thus, we get  by \eqref{g_fourier} that
	\begin{align*}
		\int_{\R^3} g(\|x\|)\psi(\|x\|)\d x&
		= \int_\R \hat f(r) \frac{\d}{\d r}(\psi(r)r)\d r
		=-\int_\R f(r) r\frac{\d}{\d r}\hat \psi(r)\d r\\&
		=4\pi \int_0^\infty f(r) \frac{-1}{2\pi r}\frac{\d}{\d r}\hat \psi(r)r^2\d r
		= \int_{\R^3} f(\|x\|) \mathcal F_3[\psi(\|\cdot\|)](\|x\|)\d x,
	\end{align*}
	i.e. $\mathcal F_3[f(\|\cdot\|)]=g(\|\cdot\|)$.
	Next we show that for all test functions $\varphi\in \mathcal S_2(\R^3)$ it holds
	\begin{equation*}
		\int_{\R^3} g(\|x\|)\varphi(x)\d x = \int_{\R^3} f(\|x\|)\hat \varphi(x)\d x.
	\end{equation*}
	For an arbitrary  $\varphi \in \mathcal S(\R^d)$, define the radial test function 
	\begin{align*}
		\operatorname{Rad}\varphi(x)\coloneqq \frac{1}{\omega_{d-1}}\int_{\sd} \varphi(\|x\|\xi)\d\xi.
	\end{align*}
	In \cite[Thm.~4.2 i)]{rux2024slicing} it was shown, that $\textup{Rad}$ is a continuous projection of $\mathcal S(\R^d)$ to the space of radial Schwartz functions $\mathcal S_\textup{rad}(\R^d)$.
	By the uniqueness of rotational
	invariant measures on the sphere, see \cite[(2.3)]{Rag74}, we have
	$$
	\operatorname{Rad}\varphi(x)
	=
	\int_{SO(d)} f(Rx)\d\mathcal U_{SO(d)} (R),
	$$
	where $\mathcal U_{SO(d)}$ is the uniform measure on the set $SO(d)$ of $d\times d$ rotation matrices.
	Since the Fourier transform commutes with rotations, we have
	$
	\operatorname{Rad}[\mathcal F\varphi]
	=
	\mathcal F[\operatorname{Rad}\varphi]
	$ for all $\varphi\in\mathcal S(\R^d)$,
	so the operators $\mathcal F$ and $\textup{Rad}$ commute.
	It is easy to see that for $\varphi\in \mathcal S_2(\R^3)$, we also have $\operatorname{Rad}\varphi\in \mathcal S_2(\R^3)$.
	The action of the test functions $\varphi$ and $\operatorname{Rad}\varphi$ on $g(\|\cdot\|)$ is the same, because $g(\|\cdot\|)$ is radial.
	Hence we have for all $\varphi\in \mathcal S_2(\R^3)$ that
	\begin{align*}
		\int_{\R^3} g(\|x\|) \varphi(x)\d x&
		=\langle g(\|\cdot\|), \varphi\rangle 
		=\langle g(\|\cdot\|), \operatorname{Rad}\varphi\rangle 
		=\langle f(\|\cdot\|), \mathcal F[\operatorname{Rad}\varphi]\rangle\\&
		=\langle f(\|\cdot\|), \operatorname{Rad}\hat\varphi]\rangle
		=\langle f(\|\cdot\|), \hat \varphi \rangle
		=\int_{\R^3} f(\|x\|)\hat \varphi(x)\d x.
	\end{align*}
	Consequently, $f(\|\cdot\|)$ has the generalized Fourier transform $g(\|\cdot \|)$ of order $1$.
	Theorem~\ref{thm:cond_bochner} shows that $-f(\|\cdot\|)\notin \mathrm{CP}_1(\R^3)$, because $g$ changes its sign. Since $g(\|\cdot\|)$ is the generalized Fourier transform of $f(\|\cdot\|)$ for all $r\ge 1$, we see that $-f(\|\cdot\|)\notin \mathrm{CP}_r(\R^3)$ for all $r\ge 1$. Since $-f(\|\cdot\|)\notin \mathrm{CP}_1(\R^3)$,
	we have by \cite[Prop.~8.2]{Wendland2004} that $-f(\|\cdot\|)\notin \mathrm{CP}_r(\R^d)$ for all $r\ge 0$ and all $d\ge 3$.
	\hfill $\Box$
	\\[2ex]
	\textbf{Proof of Proposition \ref{prop:cpd1}.}
	Assume that $f \in \text{CP}_r(\R)$, then $f(x)\in \mathcal O(|x|^{2r})$, by  \cite[Cor~2.3]{madych1990multi}.
	The function $F$ is well-defined, because $f$ is continuous and is slowly increasing.
	Let $x_1,\ldots, x_N\in \R^d$ and $a_1, \ldots,a_n\in \R$ such that 
	\begin{equation} \label{ass}
		\sum_{j=1}^N a_j P(x_j) = 0 
	\end{equation}
	for all polynomials $P$ on $\R^d$ of degree  $< r$.
	In particular, any polynomial $p(t)=\sum_{k=0}^{r-1} c_k t^k$ on $\R$
	determines for an arbitrary fixed $\xi\in \sd$, a polynomial on $\R^d$ of degree $< r$ by
	\begin{equation*}
		P_\xi(x)\coloneqq p(\langle \xi,x\rangle)= \sum_{k=0}^{r-1} c_k \langle \xi,x\rangle^k=\sum_{k=0}^{r-1} c_k \left( \sum_{l=1}^d \xi_l x_l\right)^k.
	\end{equation*}
	By \eqref{ass}, we have 
	$$ \sum_{j=1}^N a_j P_\xi (x_j) 
	= 
	\sum_{j=1}^N a_j p(\langle \xi, x_j\rangle )
	= 0.$$
	
	Since $f$ is conditionally positive definite of order $r$, we know that
	\begin{align*}
		0 \le \sum_{j,k=1}^N a_j a_k 
		f(|\langle \xi,x_j\rangle -\langle \xi,x_k\rangle|),
	\end{align*}
	so that by Theorem \ref{thm:slicing_b} also
	\begin{align*}
		0 \le \frac{1}{w_{d-1}} \int_\sd \sum_{j,k=1}^N a_j a_k f(|\langle \xi,x_j-x_k\rangle |) \d\xi  \d\xi = \sum_{j,k=1}^N a_j a_k F(\|x_j-x_k\|). 
	\end{align*}
	Hence $F(\|\cdot\|)$ is conditionally positive definite of order $r$. 
	\hfill\qedsymbol
	\\[2ex]
	\textbf{Proof of Proposition \ref{prop:cpd2}.}
	In \cite[Eq.~(6) \& (7)]{rux2024slicing}, two operators were introduced:
	the {rotation operator} $\mathcal R_d$ acts on a function $F\colon [0,\infty)\to \R$ as $\mathcal R_d F(x)\coloneqq F(\|x\|)$, and the {spherical averaging operator} $\mathcal A_d$ assigns to a function $\Phi\colon \R^d\to \R$ integrable on the spheres $t \, \sd$ for all $t >0$ the function 
	\begin{equation*}
		\mathcal A_d\Phi(t)\coloneqq \frac{1}{\omega_{d-1}} \int_\sd \Phi(t\xi)\d\xi.
	\end{equation*}
	For $d=1$, the spherical averaging operator reduces to $\mathcal A_1 \Phi(t)=\frac{1}{2}(\Phi(t)+\Phi(-t))$. 
	Moving to distributions, the operator $\mathcal R_d^\star$ acts on a tempered distribution $T$ as
	$$ 
	\langle \mathcal R_d^\star T,\psi\rangle = \langle T, (\mathcal R_d\circ \mathcal A_1)\psi\rangle \quad \text{ for all } \quad \psi \in \mathcal S(\R).
	$$
	Since $F(\|\cdot\|)$ is continuous and slowly increasing, it can be identified with a tempered distribution.
	Let $\mathcal F_d$ denote the Fourier transform of tempered distributions.
	Since $F$ is $\lfloor\frac{d}{2}\rfloor$-times continuously differentiable, we have by \cite[Cor.~4.9]{rux2024slicing}  that 
	$$ f\coloneqq (\mathcal F_1\circ \mathcal R_d^\star \circ \mathcal F_d^{-1})[\mathcal R_dF]
	$$ 
	is a distribution arising from a continuous, even function which
	satisfies $\mathcal I_d[f]=F$. 
	
	Let $\psi\in \mathcal S_{2r}(\R)$ be an even.
	Then $(\mathcal R_d\circ \mathcal A_1)\psi=\psi(\|\cdot\|)$ is a radial Schwartz function in $\mathcal S_{2r}(\R^d)$. 
	Since $\mathcal R_d F$ has a Generalized Fourier transform $\rho(\|\cdot\|)$ of order $r$, we obtain
	\begin{align*}
		\int_\R f(r)\hat \psi(r)\d r&
		= \langle f,\hat \psi\rangle 
		=\langle \hat f, \psi\rangle 
		=\langle (\mathcal R_d^\star \circ \mathcal F_d^{-1})[\mathcal R_dF],\psi\rangle 
		=\langle\mathcal R_dF,(\mathcal F_d^{-1}\circ \mathcal R_d\circ\mathcal A_1)\psi\rangle \\&
		= \int_{\R^d} F(\|x\|) \mathcal F_d[\psi(\|\cdot\|)](x)\d x
		= \int_{\R^d} \rho(\|x\|)\psi(\|x\|)\d x\\&
		= \frac{w_{d-1}}{2} \int_\R \rho(\omega)|\omega|^{d-1}\psi(\omega)\d \omega.
	\end{align*}
	In particular, $f$ has the generalized Fourier transform $\frac{w_{d-1}}{2}\rho(\omega)|\omega|^{d-1}\in  \mathcal C(\R\setminus\{0\})$ of order $r$, which is nonnegative, so that $f$ is conditionally positive definite of order $r$. 
	\hfill $\Box$
	\\[2ex]
	\textbf{Proof of Proposition \ref{prop:of_F}.}
	For $d\ge 2$, the term $(1-t^2)^{\frac{d-3}{2}}$, $t \in [0,1]$ is integrable. Since $f=\abs*u_\varepsilon\in \mathcal C^n(\R)\subseteq L^\infty_\textup{loc}(\R)$, $n \in \N$, the function $F=\mathcal I_d[f]$ is  well-defined. By Proposition \ref{prop:1}, we know that $f$ is nonnegative and even. Hence, also $F$ is nonnegative and even.
	By Leibniz's integral rule and since 
	$f \in \mathcal C^{n+1}(\R)$, we obtain for $k=1,\ldots, n+2$
	that
	\begin{align*}
		\frac{\d^k}{\d s^k} F(s) &
		= \frac{\d^k}{\d s^k}c_d\int_0^1 f(ts)(1-t^2)^\frac{d-3}{2}\d t
		=c_d\int_0^1 \frac{\d^k}{\d s^k}f(ts)(1-t^2)^\frac{d-3}{2}\d t\\&
		=c_d\int_0^1 t^kf^{(k)}(ts)(1-t^2)^\frac{d-3}{2}\d t,
	\end{align*}
	so that $F\in  \mathcal C^{n+2}(\R)$. 
	Since $f$ is at least twice differentiable and $f(t)=\abs(t)$ for $|t|$ large enough, it follows, that $\|f'\|_\infty<\infty$.
	For the first derivative of $F$ we get
	\begin{align*}
		|F'(s)| &
		=\left|c_d\int_0^1 tf'(ts)(1-t^2)^\frac{d-3}{2}\d t\right|
		\le c_d  \|(1-t^2)^{\frac{d-3}{2}}\|_{L^1(0,1)}\|f'\|_\infty<\infty.
	\end{align*}
	The convexity of $F$ follows directly from the convexity of $f$.
	
	Let $\supp(u)\subseteq [-R,R]$ for some $R>0$. Then we have 
	by Proposition \ref{prop:1} for $s\ge R$ that $f(s)=s$. 
	Further, by Lemma \ref{lem:eigenf}, it holds $\mathcal I_d[\abs]= C_d \abs$.
	Hence, we obtain for  $s >R$ that
	\begin{align*}
		|F(s)- C_d \, \abs(s)|&
		=\left|c_d\int_0^1 (f(st)-\abs(st))(1-t^2)^\frac{d-3}{2}\d t\right|
		\\
		&\le 
		c_d\int_0^\frac{R}{s} |f(st)-st|(1-t^2)^\frac{d-3}{2}\d t
		\\
		&= \frac{c_d}{s}\int_0^R |f(t)-t|\left(1-\frac{t^2}{s^2}\right)^\frac{d-3}{2}\d t  
		\in \mathcal O\left(\frac{1}{s} \right).
	\end{align*}
	In particular, it holds $h \coloneqq F-C_d\abs \in  \mathcal C_0(\R) \cap L^2(\R)$.
	
	Finally, we  obtain by Proposition \ref{prop:1} that
	\begin{align}\label{eq:F_zeta}
		F^\varepsilon(s) &= c_d \int_0^1 (\abs*u_\varepsilon)(t s)(1-t^2)^\frac{d-3}{2}\d t\\
		&=c_d \varepsilon \int_0^1 f\left(\frac{st}{\varepsilon} \right)(1-t^2)^\frac{d-3}{2} \d t
		=\varepsilon F\left(\frac{s}{\varepsilon}\right).
	\end{align}
	and further
	\begin{align*}
		\|F^\varepsilon - C_d \abs\|_{L^2(\R)}^2
		&\le 
		\int_\R |F^\varepsilon (s )- C_d \abs(s) |^2 \d s
		=
		\varepsilon^2 \int_\R 
		\Big | F\left(\frac{s}{\varepsilon}\right)
		-
		C_d \abs\left(\frac{s}{\varepsilon}\right) \Big|^2\d s
		\\
		&=
		\varepsilon^2 \int_\R |h\left(\frac{s}{\varepsilon}\right)|^2\d s
		=
		\varepsilon^3 
		\|h\|_{L_2(\R)}^2.
	\end{align*}
	This gives us the order of convergence in $L_2(\R)$.
	The pointwise convergence of $F^\varepsilon$ directly follows from \eqref{eq:RiemannLFRI}.
	\hfill $\Box$
	\\[2ex]
	\textbf{Proof of Proposition \ref{prop:id_spline}.}
	By Corollary~\ref{cor:1}, we have
	$$
	f(x) = \frac{2}{(m+1)!}
	\sum_{k=0}^{m} (-1)^k \binom{m}{k} \Big(x-k + \frac{m}{2} \Big)_+^{m+1} - x
	,\qquad x\in\R.
	$$
	Defining for $m\in\N$ and $a\in\R$ the function 
	\begin{equation} \label{eq:bma}
		b_{m,a}(x) = (x-a)_+^m,
	\end{equation}
	we have 
	$$
	f(x) = \frac{2}{(m+1)!} \sum_{k=0}^{m} (-1)^k \binom{m}{k} b_{m+1,k-\frac m2}(x) - x.
	$$
	If $a\le0$, we have $b_{m,a}(x)=(x-a)^m$ for $x\ge0$.
	Then
	\begin{align*}
		\mathcal I_d[b_{m,k}](s)
		&=
		c_d \int_{0}^{1} f_{m,a}(st) (1-t^2)^{\frac{d-3}{2}} \d t
		=
		c_d \int_{0}^{1} (st-a)^m (1-t^2)^{\frac{d-3}{2}} \d t
		\\&=
		c_d \sum_{n=0}^m \binom{m}{n} s^n(-a)^{m-n} \int_{0}^{1} t^n (1-t^2)^{\frac{d-3}{2}} \d t
		\\&=
		\frac{c_d}{2} \sum_{n=0}^m \binom{m}{n} s^n(-a)^{m-n} \int_{0}^{1} t^{\frac{n-1}{2}} (1-t)^{\frac{d-3}{2}} \d t.
		\intertext{Since the Beta function satisfies $B(a,b)=\int_0^1t^{a-1}(1-t)^{b-1}=\frac{\Gamma(a)\Gamma(b)}{\Gamma(a+b)}$, see~\cite{abramowitz1964handbook}, we obtain}
		\mathcal I_d[b_{m,k}](s)
		&=
		\frac{c_d}{2} \sum_{n=0}^m \binom{m}{n} s^n(-a)^{m-n} \frac{\Gamma (\frac{d-1}{2}) \Gamma (\frac{n+1}{2})}{\Gamma (\frac{d+n}{2})}.
	\end{align*}
	If $a>0$ and $s\le a$, we have $\mathcal I_d[f_{m,a}](s)=0$.
	Otherwise, i.e.\ for $0<a<s$, we have
	\begin{align*}
		\mathcal I_d[b_{m,a}](s)
		&=
		c_d \int_{a/s}^{1} (st-a)^m (1-t^2)^{\frac{d-3}{2}} \d t
		\\&=
		\frac{c_d}{2} \sum_{n=0}^m \binom{m}{n} s^n(-a)^{m-n} \int_{a^2/s^2}^{1} t^{\frac{n-1}{2}} (1-t)^{\frac{d-3}{2}} \d t
		\\&=
		\frac{c_d}{2} \sum_{n=0}^m \binom{m}{n} s^n(-a)^{m-n} \left(B(\tfrac{n+1}{2},\tfrac{d-1}{2}) - B_{a^2/s^2}(\tfrac{n+1}{2},\tfrac{d-1}{2}) \right).
	\end{align*}
	The claim follows by collecting the terms and Lemma~\ref{lem:eigenf}. \hfill $\Box$
	\\[2ex]
	\textbf{Proof of Theorem \ref{thm:prop_of Phi}.}
	\begin{enumerate}[label=\roman*),noitemsep]
		\item Since $f\in \mathrm{CP}_1(\R)$ by Proposition~\ref{prop:1}, we obtain by Proposition~\ref{prop:cpd1} that $\Phi\in \mathrm{CP}_1(\R^d)$.
		\item 
		By \eqref{eq:slicingRLFI} we know that
		\begin{align*}
			\Phi(x)=F(\|x\|)=\frac{1}{\omega_{d-1}} \int_{\sd} f(\langle x,\xi\rangle)\d \xi \quad \text{ for all } x\in \R^d.
		\end{align*}
		Hence we get $\Phi(0)=F(0)=f(0)=-(\abs*M_2)(0)<0$. 
		\item 
		By Proposition~\ref{prop:of_F} we directly conclude iii).
		\item 
		Since $f$ is $n+2$ times continuously differentiable by Proposition \ref{prop:1} iii), we obtain for any multi-index $\alpha\in \N^d$ with $|\alpha|\le n+2$ that
		\begin{equation*} \label{eq:Phi-derivative}
			\partial^\alpha \Phi(x)=\frac{1}{\omega_{d-1}}\int_{\sd} \xi^\alpha f^{|\alpha|}(\langle x,\xi\rangle)\d \xi \quad \text{ for all } x\in \R^d.
		\end{equation*}
		\item 
		
		Since $f''=2u$ is bounded, the first derivative $f'$ is $2\|u\|_\infty$-Lipschitz continuous. Hence, for $x,y\in \R^d$, we can estimate by \eqref{eq:Phi-derivative}
		\begin{align*}
			|\partial_i \Phi(x)-\partial _i \Phi(y)|&
			\le \frac{1}{\omega_{d-1}} \int_\sd |\xi_i|\, \big|f'(\langle x,\xi\rangle)-f'(\langle y,\xi\rangle)\big|\d \xi\\&
			\le \frac{1}{\omega_{d-1}} \int_\sd 2\|u\|_\infty\|x-y\|\d \xi
			= 2\|u\|_\infty \|x-y\|,
		\end{align*}
		
		so that we obtain
		\begin{align*}
			\| \nabla \Phi(x)-\nabla \Phi(y)\|^2
			= \sum_{i=1}^{d} |\partial _i\Phi(x)-\partial \Phi(y)|^2
			\le {d} (2\|u\|_\infty \|x-y\|)^2. 
		\end{align*}
		\item 
		Since $\nabla \Phi$ is $L$-Lipschitz continuous with $L=\sqrt{d}2\|u\|_\infty$, we know, by \cite[Thm.\ 2.1.5]{Yurii2018} that $\Phi$ is $-L$-convex. 
		Furthermore, $\Phi$ is concave because, for $t\in [0,1]$ and $x,y\in \R^d$, it holds by the concavity of $f$ from Proposition \ref{prop:1} iii) that
		\begin{align*}
			\Phi((1-t)x+ty)
			&\ge \frac{1}{\omega_{d-1}} \int_{\sd} (1-t)f(\langle x,\xi\rangle)+tf(\langle y,\xi\rangle)\d\xi 
			\\&= (1-t)\Phi(x)+t\Phi(y).\tag*{\qedsymbol}
		\end{align*}
	\end{enumerate}
	
	\subsection*{Proofs of Section \ref{sec:sdistance_1}}
	\label{Apdx:sdistance_1}
	\textbf{Proof of Proposition \ref{thm:kme}.}
	Recall that $f=-\abs*u$ is even and $F=\mathcal I_d[f]$ satisfies  \eqref{eq:slicingRLFI}.
	Let $\mu\in \mathcal M_{\nicefrac{1}{2}}(\Rd)$, then the following integral exists
	\begin{align*}
		\|\mu\|_{\mathcal H_K}^2
		={}&\int_{\Rd}\int_{\Rd} K(x,y)\d\mu(x)\d\mu(y)
		\\={}&\int_{\Rd}\int_{\Rd} \left(F(\|x-y\|)-F(\|x\|)-F(\|y\|)\right)\d\mu(x)\d\mu(y)\\
		={}&\int_{\Rd}\int_{\Rd}\frac{1}{\omega_{d-1}} \int_\sd f(\langle x-y,\xi\rangle)-f(\langle x,\xi\rangle)-f(\langle -y,\xi\rangle)+f(0)\d\xi \d\mu(x)\d\mu(y)
		\\&-F(0)|\mu(\Rd)|^2.
	\end{align*}
	Denote by $\mathcal{T}_y$ the translation operator $ \mathcal T_y[g](y)=g(x-y)$, by $\mathcal M_y$ the modulation operator $\mathcal M_y[g](x)= \e^{-2\pi \i \langle x,y\rangle }g(x)$ and by $g_m(x)=\sqrt{\nicefrac{m}{\pi}}\e^{-mx^2}$ the Gaussian approximate identity as in \cite[Thm.~5.20]{Wendland2004}. Since $f$ is continuous and slowly increasing, \cite[Thm.~5.20 (4)]{Wendland2004} yields
	\begin{align*}
		f(\langle x-y,\xi\rangle)-f(\langle x,\xi\rangle)-f(\langle -y,\xi\rangle)+f(0)
		=\lim_{m\to \infty} \langle (\id -\mathcal T_{\langle \xi,x\rangle})[(\id-\mathcal T_{\langle \xi,-y\rangle })[g_m]],f\rangle.
	\end{align*}
	Let $\varphi_m\coloneqq (\id-\mathcal M_{-\langle \xi,x\rangle})[(\id-\mathcal M_{-\langle \xi,-y\rangle })[\hat g_m]]\in \mathcal S_2(\R^1)$. The function $f$ has the generalized Fourier transform 
	$$
	\hat f(r)=\frac{\hat u(r)}{2\pi^2 r^2}
	$$
	of order $1$ by Lemma \ref{lem:four_conv} and \eqref{fabs}. As $g_m$ is even, we have $\hat{\hat{g}}_m=g_m$ and for all $m\in \N$, $m\ge 1$ we can write
	\begin{align*}
		\langle (\id-\mathcal T_{\langle \xi,x\rangle})[(\id-\mathcal T_{\langle \xi,-y\rangle })[g_m]],f\rangle &
		=\langle \hat \varphi_m , f\rangle 
		=\langle \varphi_m,\hat f\rangle\\&
		=  \int_\R (1- \e^{2\pi \i\langle \xi,x\rangle r})(1- \e^{-2\pi \i\langle \xi,y\rangle r })\hat g_m(r) \hat f(r)\d r.
	\end{align*}
	Since $u$ is continuous with compact support, its Fourier transform is bounded. The term $r\mapsto (1- \e^{2\pi \i\langle \xi,x\rangle r})(1- \e^{-2\pi \i\langle \xi,y\rangle r })$ is bounded and has a zero of order $2$ at zero, so that
	$$\hat f(r) (1- \e^{2\pi \i\langle \xi,x\rangle r})(1- \e^{-2\pi \i \langle \xi,y\rangle r })
	=\frac{\hat u(r)}{2\pi^2r^2}(1- \e^{2\pi \i\langle \xi,x\rangle r})(1- \e^{-2\pi \i\langle \xi,y\rangle r })$$
	is integrable.
	Moreover, we have
	$$|\hat g_m(r)|=\e^{-\frac{\pi^2r^2}{m}}\le {1}\forrall r\in \R\text{ and } m\in \N, m\ge 1.$$
	Since $\hat g_m$  converges pointwise to the constant ${1}$, Lebesgue's convergence theorem yields
	\begin{align*}
		f(\langle x-y,\xi\rangle)-f(\langle x,\xi\rangle)-f(\langle -y,\xi\rangle)+f(0)
		= \int_\R (1- \e^{2\pi \i\langle \xi,x\rangle r})(1- \e^{-2\pi \i \langle \xi,y\rangle r }) \hat f(r)\d r .
	\end{align*}
	Further, we obtain using Fubini's theorem 
	\begin{align*}
		&{\omega_{d-1}}(\|\mu\|_{\mathcal H_K}^2+F(0)\mu(\Rd)^2)
		\\&
		=\int_\sd \int_\R \int_\Rd (1-\e^{2\pi \i\langle r\xi,x\rangle})\d\mu(x) \int_\Rd (1-\e^{-2\pi \i\langle r\xi,y\rangle})\d\mu(y) \hat f(r)\d r\d\xi\\&
		=\int_\sd \int_\R |\mu(\Rd)-\hat \mu(r\xi)|^2\hat f(r)\d r\d\xi\\&
		=2\int_\sd \int_0^\infty |\mu(\Rd)-\hat \mu(r\xi)|^2 \frac{\hat f(\|r\xi\|)}{\|r\xi\|^{d-1}}r^{d-1}\d r\d\xi \\&
		=2\int_\Rd |\mu(\Rd)-\hat \mu(x)|^2\frac{\hat f(\|x\|)}{\|x\|^{d-1}}\d x.
	\end{align*}
	Inserting $\hat f$, we can write
	\begin{equation} \label{eq:normHKmu}
		\|\mu\|_{\mathcal H_K}^2=\frac{1}{\pi^2\omega_{d-1}}\int_\Rd|\hat\mu(0)-\hat \mu(x)|^2\frac{\hat u(\|x\|)}{\|x\|^{d+1}}\d x -F(0) \hat \mu(0)^2.
	\end{equation}
	Now assume that $\mu\in \mathcal M_{\nicefrac{1}{2}}(\Rd)$ with $\|\mu\|_{\mathcal H_K}=0$. 
	Because $F(0)<0$ by Theorem~\ref{thm:prop_of Phi} and $\hat u\ge0$ by \eqref{eq:Un}, both summands in \eqref{eq:normHKmu} are nonnegative and therefore must vanish.
	The second term yields that $\mu(\R^d)=\hat \mu=0$.
	Since $\supp \hat u(\|\cdot\|)\|\cdot\|^{-(d+1)}=\Rd$, cf.\ \cite[Lem~2.39]{plonka2018numerical}, it follows that $\hat \mu$ is constant with $\hat \mu \equiv \hat\mu(0)=0$. This implies $\mu=0$ as the Fourier transform $\mathcal F\colon \mathcal M(\Rd)\to  \mathcal C_b(\Rd)$ is injective. Consequently, the KME is injective, which means that $K$ is characteristic.
	\hfill\qedsymbol
	\\[2ex]
	\textbf{Proof of Theorem~\ref{prop:disc_of_ker}.}
	Since $\Phi(x)\in \mathcal O(\|x\|^\alpha)$, we can estimate $|\Phi(x)|\le C(1+\|x\|^\alpha)$ for all $x\in \R^d$.
	For $\alpha\ge 1$ we have by convexity of $\|\cdot\|^\alpha$ that
	$$\|x+y\|^\alpha \le 2^{\alpha-1} (\|x\|^\alpha+\|y\|^\alpha).$$
	For $\alpha\in (0,1)$, we define the function $f\colon [0,\infty)\to \R$ by $f(x)\coloneqq x^\alpha$, which is concave and monotone increasing. Then we obtain for $x,y\ge 0$ with $x+y>0$
	that
	\begin{align*}
		f(x)&\ge \frac{y}{x+y}f(0)+\frac{x}{x+y}f(x+y),\\
		f(y)&\ge \frac{x}{x+y}f(0)+\frac{y}{x+y}f(x+y).
	\end{align*}
	Adding both equation yields 
	\begin{equation*}
		f(x)+f(y)\ge f(x+y).
	\end{equation*}
	Since $f$ is monotone increasing, we obtain by the triangle inequality
	\begin{align*}
		\|x+y\|^\alpha\le (\|x\|+\|y\|)^\alpha \le \|x\|^\alpha +\|y\|^\alpha.    
	\end{align*}
	Summarizing,  we have for $\alpha\ge 0$ that
	$$\|x+y\|^\alpha \le 2^\alpha (\|x\|^\alpha+\|y\|^\alpha).$$
	Therefore, we can guarantee the existence of the integral 
	\begin{align*}
		\int_{\R^d} \int_{\R^d} |\Phi(x-y)| \d\sigma(x)\d\sigma(y)&
		\le \int_{\R^d}\int_{\R^d} C(1+2^\alpha(\|x\|^\alpha+\|y\|^\alpha))\d\sigma(x)\d\sigma(y)\\&
		\le C \sigma(\R^d)\left(\sigma(\R^d) +2^{\alpha+1} \int_{\R^d} \|x\|^\alpha\d\sigma(x)\right)<\infty.
	\end{align*}
	Hence, the discrepancy $d_{\tilde K}(\mu,\nu)$ is well-defined
	for $\mu,\nu \in \mathcal M_\alpha(\R^d)$.
	
	By \eqref{make_pos_def}, we see that
	$K(x,x) \in \mathcal O(\|x\|^{2\beta})$, 
	$\beta \coloneqq \max\{r-1,(\alpha + r-1)/2\}$
	such that
	$d_K$ is by \eqref{k1} well-defined for measures in $\mathcal M_\beta$.
	
	Now assume additionally that the first $r-1$ moments of $\mu$ and $\nu$ coincide. This implies that for all $p_j \in \Pi_{r-1}(\R^d)$ that
	$$\int_{\R^d} p_j(x)\d(\mu-\nu)(x)=0.$$
	Then we obtain
	\begin{align*}
		d_K(\mu,\nu)^2
		=d_{\tilde K}(\mu, \nu)^2 
		&-\sum_{j=1}^N \int_{\R^d}p_j(x)\d(\mu-\nu)(x) \int_{\R^d} \Phi(\xi_j-y)\d(\mu-\nu)(y)\\&
		-\sum_{k=1}^N \int_{\R^d}p_k(y)\d(\mu-\nu)(y) \int_{\R^d} \Phi(x-\xi_k)\d(\mu-\nu)(x)\\&
		+\sum_{k,j=1}^N \Phi(\xi_j-\xi_k)\int_{\R^d}p_j(x)\d(\mu-\nu)(x) \int_{\R^d} p_k(y)\d(\mu-\nu)(y)\\&
		= d_{\tilde K}(\mu,\nu)^2. \tag*{\qedsymbol}
	\end{align*}
	\\[2ex]
	\textbf{Proof of Proposition~\ref{prop:RKHS_M_conv}.}
	1. First, we show that $K$ is $\big \lfloor \frac{n+2}{2} \big \rfloor $ times continuously differentiable.
	Let $\alpha \in \N^d$ with $|\alpha|\le  \big\lfloor \frac{n+2}{2} \big\rfloor$.
	The case $|\alpha| = 0$ is clear. 
	For $|\alpha|\ge 1$, we obtain
	\begin{align}
		\partial_x^\alpha \partial_y^\alpha K(x,y) 
		& =\partial_x^{\alpha} \partial_y^{\alpha}F(\|x-y\|)
		=\partial_x^{\alpha} \partial_y^{\alpha}\frac{1}{\omega_{d-1}} \int_{\sd} f(\langle \xi,x-y\rangle)\d\xi \\&
		= \frac{(-1)^{|\alpha|}}{\omega_{d-1}} \int_\sd \xi^{2\alpha} f^{2|\alpha|}(\langle \xi,x-y\rangle)\d \xi.\label{eq:diff_of_S}
	\end{align}
	By \cite[Cor. 4.36]{steinwart2008supportderivfeature}, this implies that every $h\in \mathcal H_K$ is at least $\big \lfloor \frac{n+2}{2} \big \rfloor $-times continuously differentiable.
	
	2. For the second part, assume that $n\ge 2$.
	By \cite[Lem 4.34]{steinwart2008supportderivfeature} and \eqref{eq:diff_of_S}, we obtain
	\begin{align*}
		\langle \partial_{x_i} K(x, \cdot),\partial_{x_i} K(y, \cdot)\rangle _{\mathcal H_K}&
		=\partial_{x_i}\partial_{y_i}K(x,y)
		=\frac{-1}{\omega_{d-1}}\int_\sd \xi_i^2 f''(\langle \xi,x-y\rangle)\d\xi.
	\end{align*}
	By Proposition \ref{prop:1}, the function $f$ is even and $f'''=2u'$. Hence $u'$ is odd and $\|u''\|_\infty$-Lipschitz continuous and $f'''(0)=2u'(0)=0$. Thus, we obtain for $s>0$ that
	\begin{align*}
		|f''(s)-f''(0)|
		\le \int_0^s| f'''(t)|\d t
		=\int_0^s |f'''(t)-f'''(0)|\d t
		\le \int_0^s 2\|u''\|_\infty t\d t
		=\|u''\|_\infty s^2.
	\end{align*}
	Hence we can estimate
	\begin{align*}
		\|\partial_{x_i} K(x,\cdot)-\partial_{x_i} K(y,\cdot)\|_{\mathcal H_K}^2&
		=\|\partial_{x_i} K(x,\cdot)\|_{\mathcal{H}_K}^2\!+\|\partial_i K(y,\cdot)\|_{\mathcal{H}_K}^2\!-2\langle \partial_{x_i}  K(x,\cdot),\partial_{x_i} K(y,\cdot)\rangle _{\mathcal H_K}
		\\
		&=-\frac{2}{\omega_{d-1}}\int_\sd \xi_i^2 \left(f''(0)-f''(\langle \xi,x-y\rangle) \right)\d\xi
		\\
		&\le \frac{2\|u''\|_\infty}{\omega_{d-1}}\int_\sd |\langle \xi,x-y\rangle|^2\d\xi 
		\\
		&\le 2\|u''\|_\infty  \|x-y\|^2.
	\end{align*}
	Therefore, $\partial_{x_i} K(x,\cdot)$ is Lipschitz continuous with constant
	$L \coloneqq \sqrt{2\|u''\|_\infty}$.
	Finally, we see again by (the proof of) \cite[Cor. 4.36]{steinwart2008supportderivfeature}, for any $h \in \mathcal H_K$, that
	\begin{align*}
		|\partial_{x_i} h(x)-\partial_{x_i} h(y)|
		&=|\langle h,\partial_{x_i} K(x,\cdot)\rangle_{\mathcal H_K}
		-\langle h,\partial_{x_i} K(y,\cdot)\rangle_{\mathcal H_K}|\\
		&=|\langle h,\partial_{x_i} K(x,\cdot) -\partial_{x_i} K(y,\cdot)\rangle_{\mathcal H_K}|
		\\
		&\le 
		\|h\|_{\mathcal H_K} L \|x-y\|,
	\end{align*}
	which gives the assertion by
	$$
	\|\nabla h(x)-\nabla h(y)\|
	\le \sqrt{d} L \|h\|_{\mathcal H_K}  \|x-y\|.
	\qquad \qedsymbol
	$$
	
	\section{Geodesic  \texorpdfstring{$\lambda$}{lambda}-Convexity of MMD Functional with Smoothed Distance Kernel}\label{app:geodesic}
	A \emph{generalized geodesic} is an interpolating curve $\gamma_t\colon [0,1]\to \mathcal P_2(\R^d)$ that connects two measures $\mu^2$ and $\mu^3\in \mathcal P_2(\R^d)$ via a three-plan $\mu$. More specifically, for a base $\mu^1\in \mathcal P(\R^d)$, this three-plan $\mu\in \mathcal P_2(\R^d\times \R^d\times \R^d)$ has marginals $\pi^i_\# {\mu} = \mu^i, i=1,2,3$ and must be optimal in the sense that $\pi^{1,i}_\#\mu\in \Pi_\textup{opt}(\mu^1,\mu^i)$ for $i=1,2$.
	A generalized geodesic $\gamma_t$ joining $\mu^2$ with $\mu^3$ via $\mu^1$ is defined as $\gamma_t\coloneqq ((1-t)\pi^2+t\pi^3)_\#\mu$. 
	For any choice of $\mu^1,\mu^2,\mu^3\in \mathcal P_2(\R^d)$, we can always find optimal plans $\mu^{1,i}\in \Pi_\textup{opt}(\mu^1,\mu^i)$ for $i=1,2$, and by the Gluing Lemma \cite{villani2009optimal} there exists a three plan $\mu\in \mathcal P_2(\R^d\times \R^d\times \R^d)$ with marginals $\pi^i_\# {\mu} = \mu^i, i=1,2,3$ and $\pi^{1,i}_\#\mu\in \Pi_\textup{opt}(\mu^1,\mu^i)$ for $i=1,2$. This means that there always exists at least one generalized geodesic joining $\mu^2$ with $\mu^3$ via $\mu^1$. However, this generalized geodesic is not necessarily unique.
	
	Given $\lambda\in \R$, a function $G\colon \mathcal P_2(\R^d)\to [0,\infty]$ is \emph{$\lambda$-convex along generalized geodesics}, if  for any choice $\mu^1,\mu^2,\mu^3\in \dom(G)$ there always exists a generalized geodesic $\gamma_t$ joining $\mu^2$ with $\mu^3$ via $\mu^1$, such that 
	\begin{equation*}
		G(\gamma_t)\le (1-t)G(\mu^2)+tG(\mu^3)-\frac{\lambda}{2}t(1-t)\int_{\R^d\times \R^d\times \R^d} \|x_2-x_3\|^2\d\mu(x_1,x_2,x_3).
	\end{equation*}
	For a more detailed description we refer to \cite[Section 9.2]{ambrosio2005gruenesbuchgradient}.
	
	In \cite{ambrosio2005gruenesbuchgradient} sufficient conditions for the $\lambda$-convexity of the following two typical energy functionals were given.
	The \emph{potential energy}  $V\colon \R^d\to \R$ is defined by
	\begin{align*}
		\mathcal V(\mu)\coloneqq \int_{\R^d} V(x)\d\mu(x).
	\end{align*}
	
	\begin{lemma}\label{prop:pot_E}
		Let $V$ be lower semi-continuous and have quadratic grow, i.e. 
		$$
		V(x)\ge -A-B\|x\|^2 \quad \text{for all} \quad  x \in \R^d
		$$
		with $A,B\in \R$.
		If $V$ is $\lambda$-convex, then $\mathcal V$ is $\lambda$-convex along generalized geodesics.
	\end{lemma}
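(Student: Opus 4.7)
The plan is straightforward: the generalized geodesic is an explicit pushforward along an affine interpolation in $\R^d$, so the $\lambda$-convexity of $V$ on $\R^d$ pulls back to a pointwise inequality inside the integral, which can then be integrated against the three-plan $\mu$.

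First, given $\mu^1, \mu^2, \mu^3 \in \dom(\mathcal V)$, invoke the Gluing Lemma (as recalled in the preamble to this appendix) to produce a three-plan $\mu \in \mathcal P_2(\R^d \times \R^d \times \R^d)$ with marginals $\pi^i_\#\mu = \mu^i$ for $i=1,2,3$ and with $\pi^{1,i}_\#\mu \in \Pi_{\mathrm{opt}}(\mu^1,\mu^i)$ for $i=2,3$. Set $\gamma_t \coloneqq ((1-t)\pi^2 + t\pi^3)_\#\mu$, which is a generalized geodesic joining $\mu^2$ to $\mu^3$ via $\mu^1$. By the change-of-variables formula,
\begin{equation*}
\mathcal V(\gamma_t) = \int_{\R^d \times \R^d \times \R^d} V\bigl((1-t)x_2 + t x_3\bigr) \d\mu(x_1,x_2,x_3).
\end{equation*}

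Second, apply the $\lambda$-convexity of $V$ pointwise to the integrand: for every $(x_1,x_2,x_3)$,
\begin{equation*}
V\bigl((1-t)x_2 + t x_3\bigr) \le (1-t) V(x_2) + t V(x_3) - \tfrac{\lambda}{2} t(1-t) \|x_2 - x_3\|^2.
\end{equation*}
Integrating this inequality against $\mu$ and using that the marginals of $\mu$ on the second and third coordinates are $\mu^2$ and $\mu^3$ yields exactly
\begin{equation*}
\mathcal V(\gamma_t) \le (1-t)\mathcal V(\mu^2) + t \mathcal V(\mu^3) - \tfrac{\lambda}{2} t(1-t) \int_{\R^d \times \R^d \times \R^d} \|x_2 - x_3\|^2 \d\mu,
\end{equation*}
which is the defining inequality for $\lambda$-convexity along generalized geodesics.

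The only subtle point, and the place where the hypotheses beyond $\lambda$-convexity of $V$ enter, is to justify that the above integrations are meaningful. Lower semi-continuity ensures measurability of $V$ and of the composition $V((1-t)x_2 + t x_3)$. The quadratic lower bound $V(x) \ge -A - B\|x\|^2$ bounds the negative part of all three integrands by an $L^1(\mu)$ function (since $\mu \in \mathcal P_2$ so $\|x_2\|^2, \|x_3\|^2$ and $\|(1-t)x_2 + t x_3\|^2$ are $\mu$-integrable), so the integrals are well-defined in $(-\infty,\infty]$ and the monotone/pointwise inequality can be integrated without issue. The assumption $\mu^2, \mu^3 \in \dom(\mathcal V)$ then makes the right-hand side finite, giving the finite bound. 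No step presents a real obstacle; this is essentially a one-line argument once the generalized geodesic is written explicitly and the growth condition is invoked to legitimize the integration.
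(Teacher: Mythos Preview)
Your argument is correct and is the standard one: write the generalized geodesic as a pushforward, apply the $\lambda$-convexity of $V$ pointwise, and integrate, with the quadratic lower bound guaranteeing the integrals are well-defined in $(-\infty,\infty]$. The paper does not give its own proof of this lemma at all; it simply records the statement and attributes it to \cite{ambrosio2005gruenesbuchgradient}, so there is nothing to compare against beyond noting that you have supplied exactly the proof one finds there.
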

	
	For $K\colon \R^d\times \R^d\to \R$, the \emph{interaction energy} is given by
	\begin{align*}
		\mathcal K(\mu)\coloneqq \int_{\R^d\times \R^d} K(x,y)\d\mu(x)\d\mu(y).
	\end{align*}
	Since the interaction energy can be seen as a potential energy on the product space, Proposition \ref{prop:pot_E} also applies to the interaction energy.
	
	\begin{lemma}
		\label{prop:intact_E}
		Let $K$ be lower semi-continuous and have  quadratic grow, i.e. 
		$$
		K(x,y)\ge -A-B(\|x\|^2+\|y\|^2) \quad \text{for all} \quad  x,y\in \R^d
		$$
		with $A,B\in \R$.
		If $K$ is $\lambda$-convex, then $\mathcal K$ is $\lambda$-convex along generalized geodesics.
	\end{lemma}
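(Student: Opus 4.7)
My plan is to exploit the remark preceding the statement: the interaction energy $\mathcal K$ is nothing but a potential energy on the doubled space. More precisely, for every $\mu\in\mathcal P_2(\R^d)$, one has $\mathcal K(\mu)=\int_{\R^{2d}}K\,\d(\mu\otimes\mu)=\mathcal V^{K}(\mu\otimes\mu)$, where $\mathcal V^{K}$ denotes the potential energy on $\mathcal P_2(\R^{2d})$ associated with the potential $K\colon\R^{2d}\to\R$. The hypotheses on $K$ are precisely those required by Lemma~\ref{prop:pot_E} on $\R^{2d}$, so that $\mathcal V^{K}$ is $\lambda$-convex along generalized geodesics in $\mathcal P_2(\R^{2d})$.

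Given $\mu^1,\mu^2,\mu^3\in\mathcal P_2(\R^d)$ together with a 3-plan $\mu\in\mathcal P_2(\R^{3d})$ satisfying $\pi^i_{\#}\mu=\mu^i$ and $\pi^{1,i}_{\#}\mu\in\Pi_{\textup{opt}}(\mu^1,\mu^i)$ for $i=2,3$, I would form the induced generalized geodesic $\gamma_t=((1-t)\pi^2+t\pi^3)_{\#}\mu$ and lift it to $\R^{2d}$ by considering the tensor product $\widetilde\mu\coloneqq\mu\otimes\mu\in\mathcal P_2(\R^{6d})$, where $\R^{6d}$ is identified with $(\R^{2d})^3$ by a shuffle of coordinates. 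Its marginals are then $\mu^i\otimes\mu^i$ and its ``bi-marginals'' are $(\pi^{1,i}_{\#}\mu)\otimes(\pi^{1,i}_{\#}\mu)$.

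The crucial step, and in my view the only real obstacle, is to verify that $\widetilde\mu$ is admissible as a 3-plan for generalized geodesics in $\mathcal P_2(\R^{2d})$, i.e.\ that $\widetilde\pi^{1,i}_{\#}\widetilde\mu\in\Pi_{\textup{opt}}(\mu^1\otimes\mu^1,\mu^i\otimes\mu^i)$. This is a tensorization property of optimal transport for the squared Euclidean cost: because
\begin{equation*}
\|(x,y)-(x',y')\|^2=\|x-x'\|^2+\|y-y'\|^2,
\end{equation*}
the tensor product of two $W_2$-optimal couplings is optimal for the product marginals, as one sees by adding the two Kantorovich dual potentials. This is the one ingredient that cannot be read off directly from Lemma~\ref{prop:pot_E} or from the hypotheses on $K$.

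Once this is in place, $\widetilde\gamma_t\coloneqq\gamma_t\otimes\gamma_t$ coincides with $((1-t)\widetilde\pi^2+t\widetilde\pi^3)_{\#}\widetilde\mu$ for the obvious $\R^{2d}$-valued projections $\widetilde\pi^i$, so it is a generalized geodesic in $\mathcal P_2(\R^{2d})$ joining $\mu^2\otimes\mu^2$ and $\mu^3\otimes\mu^3$ via $\mu^1\otimes\mu^1$. I would then conclude by applying Lemma~\ref{prop:pot_E} to $\mathcal V^{K}$ along $\widetilde\gamma_t$ and using Fubini to express the coupling integral $\int_{\R^{6d}}\|(x_2,y_2)-(x_3,y_3)\|^2\,\d\widetilde\mu$ in terms of the original integral $\int_{\R^{3d}}\|x_2-x_3\|^2\,\d\mu$; together with $\mathcal V^{K}(\mu^i\otimes\mu^i)=\mathcal K(\mu^i)$ this yields the desired geodesic $\lambda$-convexity inequality for $\mathcal K$.
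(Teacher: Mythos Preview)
Your reduction to potential energy on $\R^{2d}$ is exactly the paper's one-line justification. However, invoking Lemma~\ref{prop:pot_E} as a black box is slightly awkward: that lemma only asserts the \emph{existence} of a suitable generalized geodesic in $\mathcal P_2(\R^{2d})$, not that the inequality holds along the specific curve $\widetilde\gamma_t=\gamma_t\otimes\gamma_t$ you construct---so certifying optimality of the tensor coupling does not by itself close the argument. The clean fix, which also renders your ``crucial step'' superfluous, is to bypass the lemma and integrate the pointwise $\lambda$-convexity of $K$ on $\R^{2d}$ directly against $\mu\otimes\mu$: for a $\lambda$-convex potential the inequality in Lemma~\ref{prop:pot_E} in fact holds along \emph{every} interpolation curve, regardless of whether the underlying 3-plan has optimal bi-marginals.

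One further remark: the Fubini step gives $\int\|(x_2,y_2)-(x_3,y_3)\|^2\d\widetilde\mu=2\int\|x_2-x_3\|^2\d\mu$, so the argument actually delivers $(2\lambda)$-convexity of $\mathcal K$ rather than $\lambda$-convexity. The paper is equally imprecise on this constant; in the application the extra factor is absorbed by the $\tfrac12$ in front of $\mathcal K$ in \eqref{eq:def_VandW}, so Proposition~\ref{prop:G_is_lam_cvx} comes out correctly.
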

	
	Let $F$ be defined as in \eqref{def} and $K(x,y) = F(\|x-y \|)$. Then, the MMD functional $G$ from \eqref{eq:GK} can be rewritten as
	\begin{equation}\label{eq:def_VandW}
		G(\mu)=\frac12\mathcal K(\mu) +\mathcal V(\mu)+ \frac12 c_\nu, \quad  V(x)\coloneqq -\int_{\R^d} F(\|x-y\|)\d\nu(y))
	\end{equation}
	where $c_\nu\ge0$ is a constant.
	Both $V$ and $K$ suffice the conditions in Lemmas \ref{prop:pot_E} and \ref{prop:intact_E}.
	
	\begin{proposition}\label{prop:G_is_lam_cvx}
		Let $F$ be defined as in \eqref{def} and $V$ and $ K$ in \eqref{eq:def_VandW}, then $\mathcal V$ and $\mathcal K$ are 
		lower semi-continuous and have quadratic grow. Moreover, $V$ is convex and $K$ is $\lambda$-convex with $\lambda = -4\sqrt{d}\|u\|_\infty $.
		In  summary, the MMD functional $G$ from \eqref{eq:def_VandW} is lower semi-continuous and $\lambda$-convex with $\lambda$ above.
	\end{proposition}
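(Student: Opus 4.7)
The plan is to verify the hypotheses of Lemma~\ref{prop:pot_E} for $V$ and of Lemma~\ref{prop:intact_E} for $K$, and then to sum the resulting semi-convexity estimates. Everything reduces to properties of $\Phi=F(\|\cdot\|)$ already collected in Theorem~\ref{thm:prop_of Phi}: $\Phi$ is continuous, nonpositive, concave, $(-L)$-convex with $L=2\sqrt{d}\,\|u\|_\infty$, and admits the decomposition $\Phi(x)=-C_d\|x\|+\varphi(\|x\|)$ with $\varphi\in\mathcal C_0(\R)$, so in particular $|\Phi(x)|\le C_d\|x\|+\|\varphi\|_\infty$.

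For the potential $V(x)=-\int_{\R^d}\Phi(x-y)\d\nu(y)$, nonpositivity of $\Phi$ gives $V\ge 0$, so the lower quadratic bound $V(x)\ge -A-B\|x\|^2$ holds with $A=B=0$. Concavity of $\Phi$ makes $x\mapsto -\Phi(x-y)$ convex for each fixed $y$, and integration against the positive measure $\nu$ preserves convexity, so $V$ is $0$-convex. Continuity of $\Phi$ together with the uniform majorant $|\Phi(x-y)|\le C_d(\|x\|+\|y\|)+\|\varphi\|_\infty$ and $\nu\in\mathcal P_2(\R^d)$ yields continuity of $V$ via dominated convergence and narrow lower semi-continuity of $\mathcal V$ on $\mathcal P_2(\R^d)$.

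For $K(x,y)=\Phi(x-y)$ viewed as a function on $\R^{2d}$, the same linear bound combined with $\|x-y\|\le 1+\tfrac12(\|x\|^2+\|y\|^2)$ gives the required quadratic lower bound on $K$, and continuity together with lower semi-continuity of $\mathcal K$ follow as above. The key step is $\lambda$-convexity of $K$ on $\R^{2d}$: applying Theorem~\ref{thm:prop_of Phi} vi) to the difference variable and using the elementary inequality $\|(x_1-y_1)-(x_2-y_2)\|^2\le 2\|(x_1,y_1)-(x_2,y_2)\|^2$, for $\alpha\in[0,1]$ and $z_i=(x_i,y_i)$,
\begin{align*}
K(\alpha z_1+(1-\alpha)z_2)
&=\Phi\bigl(\alpha(x_1-y_1)+(1-\alpha)(x_2-y_2)\bigr)\\
&\le \alpha K(z_1)+(1-\alpha)K(z_2)+L\alpha(1-\alpha)\|z_1-z_2\|^2,
\end{align*}
which is exactly $\lambda$-convexity with $\lambda=-2L=-4\sqrt{d}\,\|u\|_\infty$.

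Lemmas~\ref{prop:pot_E} and \ref{prop:intact_E} now yield that $\mathcal V$ is $0$-convex and $\mathcal K$ is $\lambda$-convex along generalized geodesics, where one may pick a single generalized geodesic (produced by the Gluing Lemma as recalled before the lemmas) that works simultaneously for both functionals. Adding the two estimates along such a geodesic as in~\eqref{eq:def_VandW}, and observing that positive scalar multiples and additive constants only rescale or ignore the semi-convexity term, shows that $G$ is $\lambda$-convex with the claimed $\lambda$; lower semi-continuity transfers from the summands. The only really delicate point in the whole argument is the factor $2$ in the doubling inequality, which is precisely what forces the constant for $K$ on $\R^{2d}$ to be twice that of $\Phi$ on $\R^d$.
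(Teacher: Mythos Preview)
Your proof is correct and follows essentially the same route as the paper's: verify continuity, the quadratic lower bound, and (semi-)convexity of $V$ and $K$ separately using the properties of $\Phi$ from Theorem~\ref{thm:prop_of Phi}, then invoke Lemmas~\ref{prop:pot_E} and~\ref{prop:intact_E}. Your argument is in fact slightly more explicit than the paper's on the one point that deserves it: the paper simply cites Theorem~\ref{thm:prop_of Phi}~v) for the $\lambda$-convexity of $K$ with $\lambda=-4\sqrt{d}\|u\|_\infty$, whereas you spell out the doubling inequality $\|(x_1-y_1)-(x_2-y_2)\|^2\le 2\|z_1-z_2\|^2$ that produces the extra factor~$2$ when passing from $(-L)$-convexity of $\Phi$ on $\R^d$ to $(-2L)$-convexity of $K$ on $\R^{2d}$.
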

	
	\begin{proof}
		By Theorem~\ref{thm:prop_of Phi} , we can write $F(s)=-C_d|s|+\varphi(s)$ with $\varphi\in \mathcal C_0(\R)$.
		For the lower semi-continuity of $V$, we have
		\begin{align*}
			|V(x_1)-V(x_2)|&
			\le \int_{\R^d} |F(\|x_1-y\|)-F(\|x_2-y\|)|\d\nu(y)\\&
			=\int_{\R^d} |C_d\|x_1-y\|+\varphi(\|x_1-y\|)-C_d\|x_2-y\|-\varphi(\|x_2-y\|)|\d\nu(y)\\&
			\le \int_{\R^d} C_d|\|x_1-y\|-\|x_2-y\||+|\varphi(\|x_1-y\|)-\varphi(\|x_2-y\|)|\d\nu(y)\\&
			\le C_d\|x_1-x_2\|+\int_{\R^d} \|\varphi(\|x_1-y\|)-\varphi(\|x_2-y\|)|\d\nu(y).
		\end{align*}
		Since $\varphi\in  \mathcal C_0(\R)$, it follows by Lebesgue's dominated convergence theorem that $V$ is continuous.
		Moreover, choosing  $A,B=0$, we see that $V$ has  quadratic grow.
		By Theorem \ref{thm:prop_of Phi} iii), the function $-F(\|x\|)$  is convex. For $x_1,x_2\in \R^d$ and $t\in [0,1]$, it holds 
		\begin{align*}
			V((1-t)x_1+tx_2)&
			=\int_{\R^d} -F(\|(1-t)(x_1-y)+t(x_2-y)\|)\d\nu(y)\\&
			\le \int_{\R^d} -(1-t)F(\|x_1-y\|)-tF(\|x_2-y)\|)\d\nu(y)\\&
			=(1-t)V(x_1)+tV(x_2).
		\end{align*}
		Hence $V$ is convex, too.
		
		For the interaction energy, it is clear that $K$ is continuous, because $F$ is continuous, and we can choose $A=\|\varphi\|_\infty+2C_d$ and $B=C_d$ to obtain
		\begin{align*}
			F(\|x-y\|)&
			=-C_d\|x-y\|+\varphi(\|x-y\|)
			\ge -\|\varphi\|_\infty -C_d(\|x\|+\|y\|)\\&
			\ge -\|\varphi\|_\infty -C_d(1+\|x\|^2+1+\|y\|^2)
			\ge -A -B(\|x\|^2+\|y\|^2).
		\end{align*}
		By Corollary \ref{thm:prop_of Phi} v), we get that $K$ is $\lambda$-convex with $\lambda = - 4\sqrt{d}\|u\|_\infty $.
	\end{proof}
	
	By \cite[11.2.1b]{ambrosio2005gruenesbuchgradient}, a lower bounded $\lambda$-convex functional is always coercive, so that \cite[Thm.11.2.1]{ambrosio2005gruenesbuchgradient} can be formulated as follows:
	\begin{theorem} \label{thm:lam_cvx_WGF}
		Let $G \colon \mathcal P_2(\R^d)\to [0,\infty]$ be proper lower semi-continuous and $\lambda$ convex. Then, for $\gamma^{(0)}\in \dom(G)$, there is a unique Wasserstein gradient flow $\gamma_t$ starting in $\gamma^{(0)}$. Moreover,
		the piecewise constant curve $\gamma^\tau_t\coloneqq \gamma^{(k)}$, $t\in ((k-1)\tau,k\tau]$
		given by the implicit Euler scheme (JKO scheme)
		\begin{equation}\label{eq:impl_eul}
			\gamma^{(k+1)} \in  \argmin_{\gamma\in \mathcal P_2(\R^d)} \frac{1}{2\tau } W_2^2(\gamma^{(k)},\gamma)+\phi(\gamma),
		\end{equation}
		converges locally uniformly to $\gamma_t$.
		In particular, this holds true for our MMD functional with smooed kernel $G$ in \eqref{eq:def_VandW}.
	\end{theorem}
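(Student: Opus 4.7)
The plan is to recognize that the first, general, part of the statement is essentially a direct citation of \cite[Thm.~11.2.1]{ambrosio2005gruenesbuchgradient}: the hypotheses I list (proper, lower semi-continuous, $\lambda$-convex along generalized geodesics) match those required there, once one applies \cite[11.2.1b]{ambrosio2005gruenesbuchgradient} to replace the usual coercivity assumption by boundedness from below, which is automatic for any nonnegative $\lambda$-convex functional. I would therefore open the proof by invoking this theorem as a black box, yielding both existence and uniqueness of the Wasserstein gradient flow and the local uniform convergence of the JKO iterates $\gamma^{(k)}$ defined in \eqref{eq:impl_eul}.

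For the ``in particular'' claim, the task reduces to verifying the three hypotheses for the functional $G$ from \eqref{eq:def_VandW}. Properness is immediate: $G \ge 0$ by definition, and $G(\nu) = \tfrac12 \mathcal D_K^2(\nu,\nu) = 0$, so $\nu \in \dom(G)$ and in particular $G \not\equiv +\infty$. For lower semi-continuity and $\lambda$-convexity along generalized geodesics, with $\lambda = -4\sqrt{d}\,\|u\|_\infty$, I would simply cite Proposition \ref{prop:G_is_lam_cvx}, which splits $G$ into its potential and interaction energy components and applies Lemmas \ref{prop:pot_E} and \ref{prop:intact_E} to each, using the concavity and the Lipschitz constant of $\nabla \Phi$ established in Theorem \ref{thm:prop_of Phi}.

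Since Proposition \ref{prop:G_is_lam_cvx} already carries out the substantive work, there is essentially no further obstacle at the level of this theorem, which follows as a brief compilation. The only conceptual point worth flagging in the write-up is the appeal to \cite[11.2.1b]{ambrosio2005gruenesbuchgradient} for the automatic coercivity, so that the hypotheses as stated in the present theorem genuinely match those of the AGS version without any additional verification step.
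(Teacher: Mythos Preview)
Your proposal is correct and matches the paper's approach exactly: the paper presents this theorem as a direct reformulation of \cite[Thm.~11.2.1]{ambrosio2005gruenesbuchgradient}, prefacing it only with the remark that \cite[11.2.1b]{ambrosio2005gruenesbuchgradient} supplies coercivity from lower boundedness, and gives no further proof. Your additional verification of properness and the explicit pointer to Proposition~\ref{prop:G_is_lam_cvx} for the ``in particular'' clause are slightly more detailed than what the paper writes, but the structure and the key citations are identical.
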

	
	It was shown in \cite[Prop.~9]{HGBS2024} that for certain functionals, e.g. $G$ in  \eqref{eq:def_VandW}, the so-called Wasserstein steepest descent flows (explicit scheme) and the Wasserstein gradient flows (implicit scheme) coincide. 
	
	Since the MMD functional with our SND kernel (same for the Gaussian kernel)
	is only $\lambda$-convex with $\lambda < 0$ it is not ensured that its Wasserstein gradient flow, resp. its approximation by an Euler forward scheme  converges towards the 
	target $\nu$. Here is an example.
	\begin{example} \label{ex:no_target}
		In general, it is not clear whether the gradient flow $\gamma_t$ the MMD functional  with smooth kernels converges towards the target measure $\nu$ as $t \to \infty$.
		To this end, consider the symmetric setup with the target and initial measures  
		\begin{align}
			\nu &\coloneqq \tfrac{1}{2} (\delta_{y_1} + \delta_{y_2}), \quad y_1 = e_1, \; y_2 = -e_1,\\
			\mu &\coloneqq \tfrac{1}{2} (\delta_{x_1} + \delta_{x_2}),
			\quad x_1 = \tfrac{1}{\sqrt{3}} e_2, \; x_2 = -\tfrac{1}{\sqrt{3}} e_2.
		\end{align}
		Then, with $\tilde F(s) \coloneqq \frac{F'(s)}{s}$, the velocity field becomes for $i = 1,2$
		\begin{align*}
			v_t(x_i) &=
			\tfrac{1}{2} (x_i - x_j) \tilde{F}(\|x_i - x_j\|)
			- \tfrac{1}{2} \left( (x_i - y_1) \tilde{F}(\|x_i - y_1\|) + (x_i - y_2) \tilde{F}(\|x_i - y_2\|) \right) \\
			&= \tfrac{1}{\sqrt{3}} e_2 \tilde{F}(\tfrac{2}{\sqrt{3}}) - \tfrac{1}{2} \left( \tfrac{2}{\sqrt{3}} e_2 - e_1 + e_1 \right) \tilde{F}(\tfrac{2}{\sqrt{3}}) 
			= 0,
		\end{align*}
		so that we get stuck in  $\gamma_t = \mu$ for all $t \geq 0$.
		In general, ensuring convergence towards the target is challenging  due to local extrema. 
		However, in the numerical experiments, we observed that for both ND and SND, the flows typically performed well in approximating the target.
	\end{example}
	\noindent
	\textbf{Proof of Proposition~\ref{prop:dirac_flow}.}
	The simplification of the iterations to
	\begin{align} \label{eq:single_update}
		x^{(k+1)} &= x^{(k)} + \tau \frac{x^{(k)} - y}{\|x^{(k)} - y\|} F'(\|x^{(k)} - y\|) \\
		&= y + (x^{(k)} - y) \left(1 + \tau \frac{F'(\|x^{(k)} - y\|)}{\|x^{(k)} - y\|} \right)
	\end{align}
	is straightforward.
	
	i) For $F = -\frac12 \abs$, we have $F'(s) = -\frac{1}{2}$ for $s > 0$. 
	Then we obtain by \eqref{eq:single_update} and 
	since $\|x^{(0)} - y\| < \frac{\tau}{2}$ that
	\begin{align} \label{eq:x_k-y}
		\|y - x^{(1)}\| =  \frac{\tau}{2} - \|x^{(0)} - y\| < \frac{\tau}{2}.
	\end{align}
	The second step, $x^{(2)}$ jumps exactly back to $x^{(0)}$ because
	\begin{align*}
		x^{(2)} &= y + (x^{(1)} - y) \left(1 - \frac{\tau}{2\|x^{(1)} - y\|}\right) 
		\\
		&= y + (x^{(0)} - y) \left(1 - \frac{\tau}{2\|x^{0} - y\|}\right) \left(1 - \frac{\tau}{2\|x^{(1)} - y\|}\right) \\
		&= y + (x^{(0)} - y) \frac{(2\|x^{(0)} - y\| - \tau)(- 2\|x^{(0)} - y\|)}{2\|x^{(0)} - y\| (\tau - 2\|x^{(0)} - y\|)} 
		= x^{(0)}.
	\end{align*}
	Consequently, $(x^{(k)})_k$ oscillates between $x^{(0)}$ and $x^{(1)}$.
	
	ii)
	Generally, for $\lambda$-convex functionals with $\lambda > 0$, Baillon-Haddad's theorem \cite[Cor.~18.17]{BHC2017} ensures convergence of \eqref{eq:dirac_up} for $\tau < \lambda^{-1}$. For completeness, we provide a simpler proof for our setting.
	Let $F = \mathcal{I}_d[-|u|]$, with $u \in \mathcal{U}^0(\mathbb{R})$. We know that $F$ is convex and twice differentiable. In particular, we have for 
	$\tilde F(s) \coloneqq \frac{F'(s)}{s}$ 
	that 
	$\tilde{F}(0) = F''(0) < 0$. 
	Since $\tilde{F} \in \mathcal{C}(\mathbb{R})$ by Proposition \ref{prop:of_F}, we can find $\delta > 0$ such that $F(x) < \tfrac{1}{2} F''(0)$ for $|x| < \delta$. If we assume that $\|x^{(k)} - y\| < \tau$ and $\tau < \min\{\delta, \|\tilde{F}'\|^{-1}_\infty\}$, we obtain
	\begin{align*}
		\|x^{(k+1)} - y\| = \|x^{(k)} - y\| \left(1 + \tau \tilde{F}(\|x^{(k)} - y\|)\right).
	\end{align*}
	We always have
	\begin{align*}
		1 + \tau \tilde{F}(\|x^{(k)} - y\|) > 1 - \tau \|\tilde{F}\|_\infty > 0.
	\end{align*}
	Since $\|x^{(k)} - y\| < \delta$, we know that $F(\|x^{(k)} - y\|) < \tfrac{1}{2}F''(0) < 0$, which implies
	\begin{align*}
		1 + \tau \tilde{F}(\|x^{(k)} - y\|) < 1 + \tfrac{\tau}{2} F''(0) < 1.
	\end{align*}
	This yields $\|x^{(k+1)} - y\| < \|x^{(k)} - y\| < \delta$, and thus, by induction,
	\begin{align*}
		\|x^{(k+1)} - y\| \le \|x^{(0)} - y\| (1 + \tfrac{\tau}{2} F''(0))^{k}.
	\end{align*}
	Therefore, we have exponential convergence when $\tau$ and $\|x^{(k)} - y\|$ are sufficiently small.
	\hfill $\Box$
	
	{\section{Additional Numerical Results}\label{app:annulus}%

		\paragraph{Comparison of Filters $M_n$.}
		
		\begin{figure}
			\centering
			\includegraphics[width=0.45\linewidth]{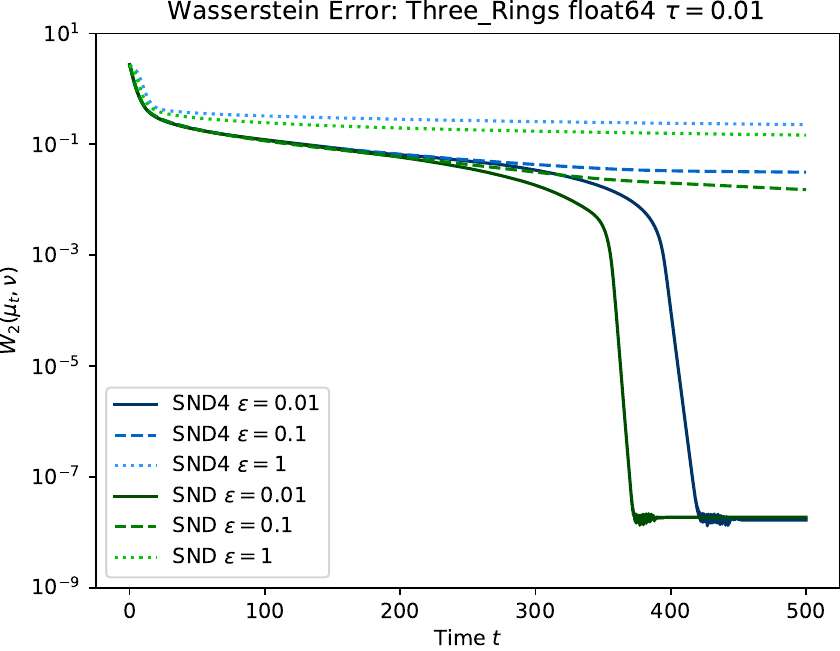}
			\includegraphics[width=0.45\linewidth]{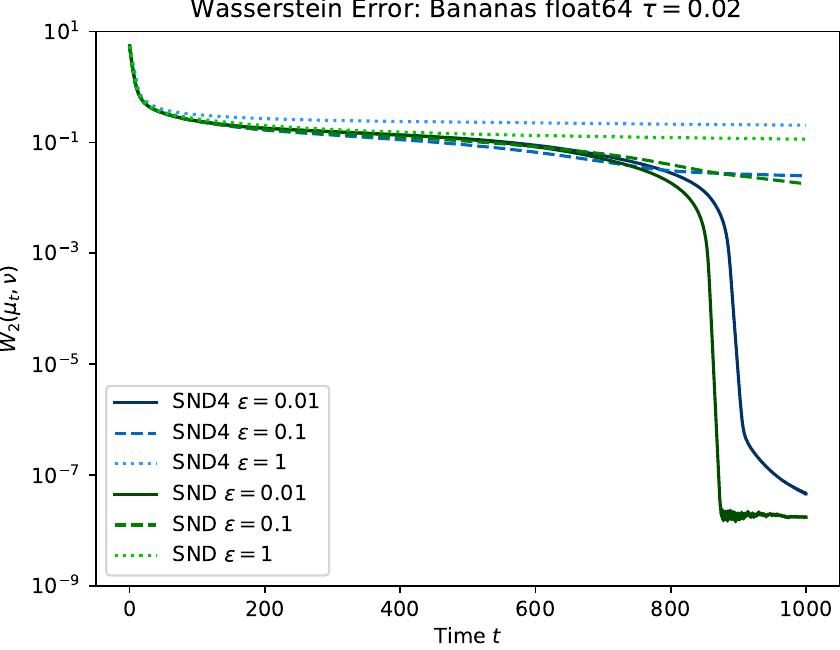}
			\caption{Wasserstein 2 error between target $\nu$ and flow $\gamma_n^\tau$ after $n$ iterations. Horizontal axis in time $t=\tau n$. Both computed with double precision and step size $\tau=0.01$ for the Three-Rings target (left) and $\tau=0.02$ for the Bananas target (right). }
			\label{fig:Bananas_and_3rings_M4}
		\end{figure}%
		
		Figure \ref{fig:Bananas_and_3rings_M4} shows the Wasserstein error between the flow and the target for the SND kernel smoothed with $u=M_2$ and $u=M_4$. 
		Here, we denote by \textbf{SND4} the smoothed negative distance $F\coloneqq -\mathcal I_3[\abs *u_\varepsilon]$ for $u_\varepsilon(x)=\frac{1}{\varepsilon}M_4(\frac{x}{\varepsilon})$. 
		We keep the notation \textbf{SND} if we smooth with $u_\varepsilon(x)=\frac{1}{\varepsilon}M_2(\frac{x}{\varepsilon})$.
		Both SND and SND4 exhibit comparable error decay. Visually, the flows in Figures \ref{fig:Annulus_SND} and \ref{fig:Annulus_SND4} also show similar behavior. This suggests that the choice of the filter $u$ has little impact on the behavior of the gradient flow. 
		Note that using the same $\varepsilon$ with $M_4$ or $M_2$ results in different smoothing strengths, as their supports differ. For large $n$, the derivation of $\mathcal I_3[\abs*M_n]$ becomes increasingly tedious and also the numerical evaluation gets more expensive.

		\paragraph{Annulus Target.}
		The Annulus target consists of two concentric circles with radius $1$ and $0.3$. Each is discretized with $50$ points, so that $\nu$ consists of $M= 100$ points.
		Here we use a step size of $\tau=0.003$ and double precision.
		The MMD flows are depicted in Figure~\ref{fig:Annulus_flow} and the respective errors in Figure~\ref{fig:Annulus_W2}.
		
		\begin{figure}
			\centering
			
			\begin{subfigure}{\textwidth}
				\centering\footnotesize
				\begin{tabular}{c c c c c} 
					\vspace{-4pt}
					\rotatebox{90}{\parbox{2cm}{\centering $\sigma=0.06$}} &
					\includegraphics[width=3cm]{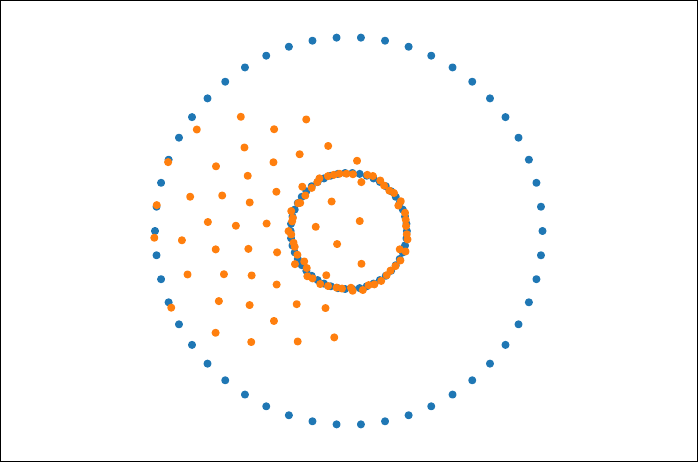} &
					\includegraphics[width=3cm]{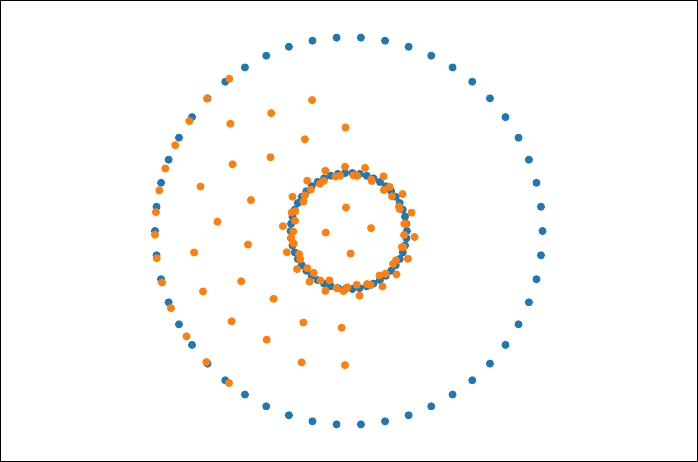} &
					\includegraphics[width=3cm]{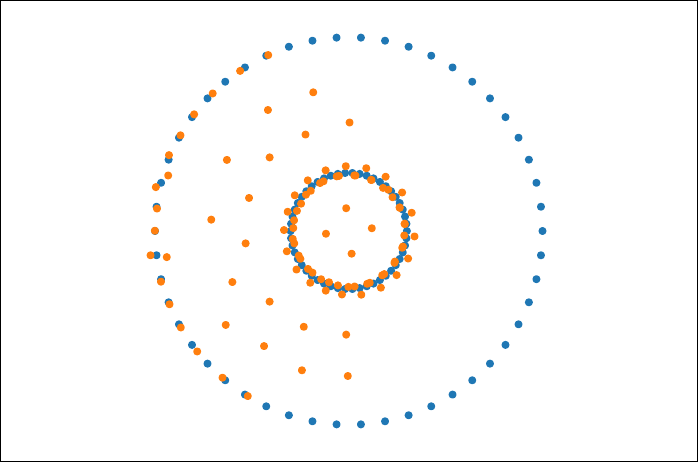} &
					\includegraphics[width=3cm]{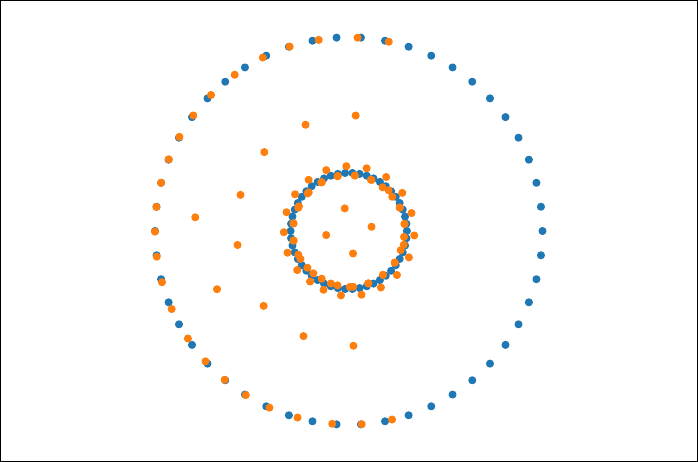} \\
					\vspace{-4pt}
					\rotatebox{90}{\parbox{2cm}{\centering $\sigma=0.3$}} &
					\includegraphics[width=3cm]{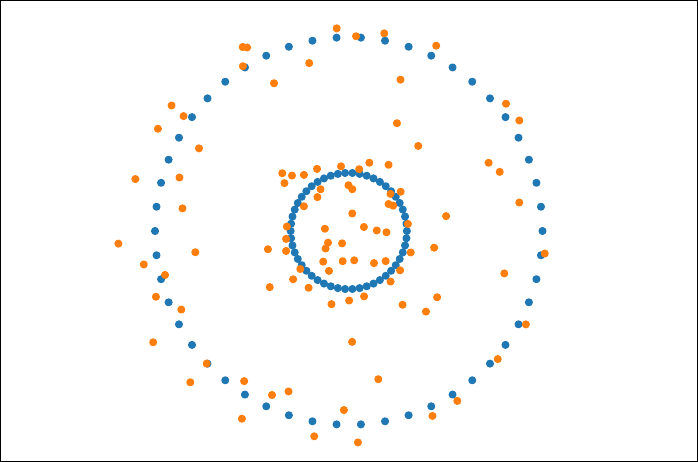} &
					\includegraphics[width=3cm]{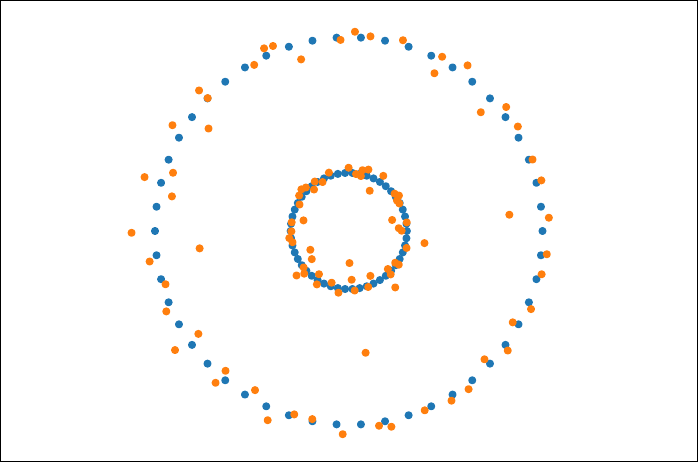} &
					\includegraphics[width=3cm]{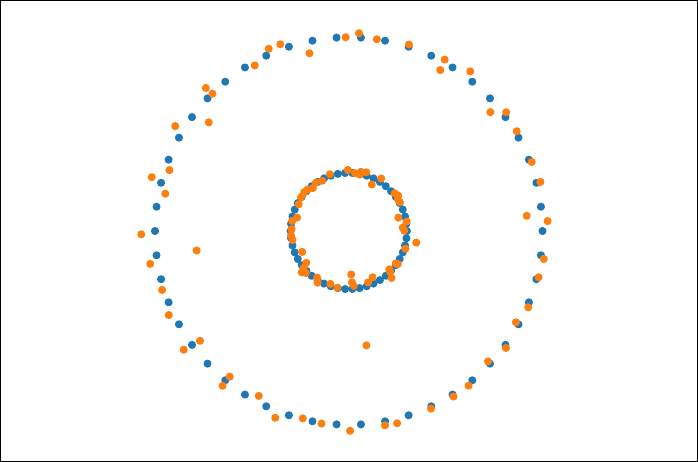} &
					\includegraphics[width=3cm]{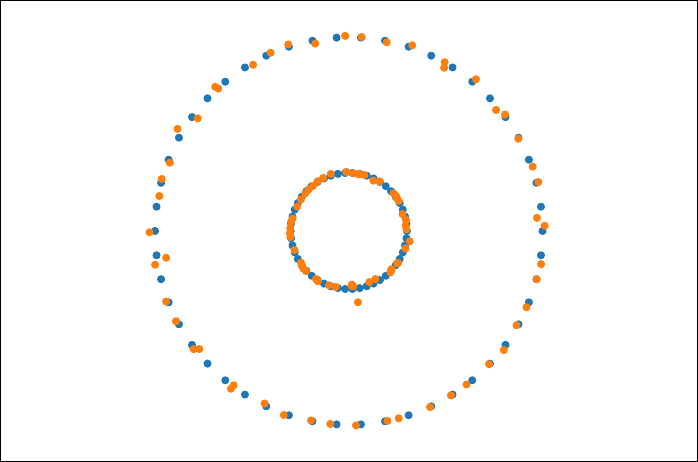} \\
					\vspace{-2pt}
					\rotatebox{90}{\parbox{2cm}{\centering $\sigma=1$}} &
					\includegraphics[width=3cm]{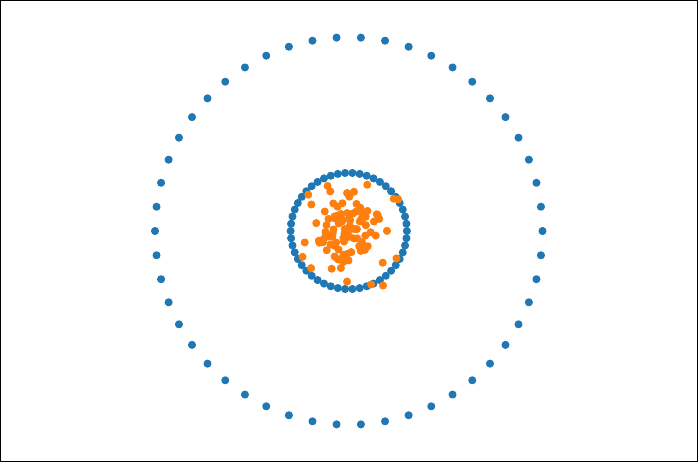} &
					\includegraphics[width=3cm]{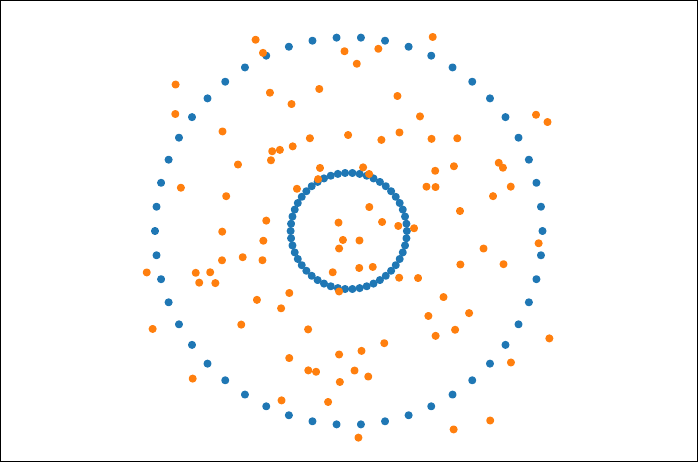} &
					\includegraphics[width=3cm]{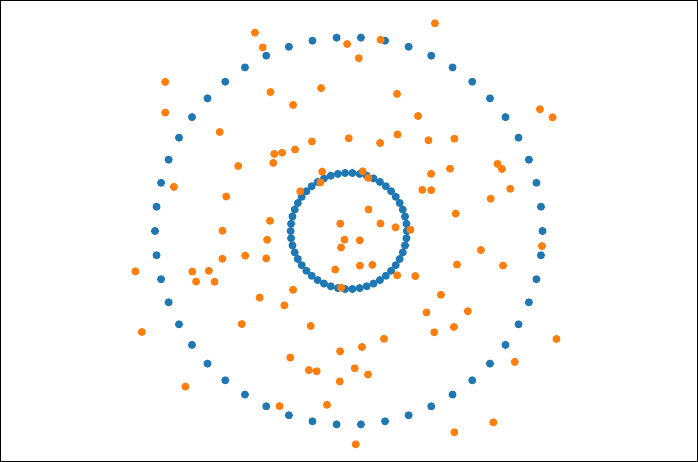} &
					\includegraphics[width=3cm]{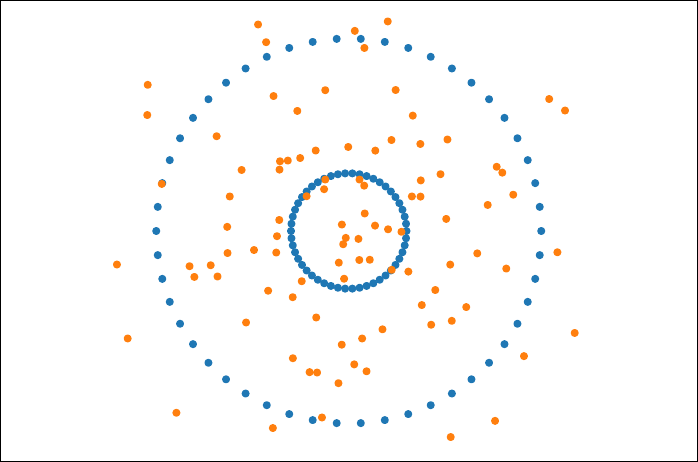} \\
					
					& \multicolumn{1}{c}{$t=3$} & \multicolumn{1}{c}{$t=15$} & \multicolumn{1}{c}{$t=30$} & \multicolumn{1}{c}{$t=150$} \\
				\end{tabular}
				\caption{Gaussian}
				\label{fig:Annulus_GAUSS}
			\end{subfigure}
			
			\begin{subfigure}{\textwidth}
				\centering\footnotesize
				\begin{tabular}{c c c c c} 
					\vspace{-4pt}
					\rotatebox{90}{\parbox{2cm}{\centering $\varepsilon=1$}} &
					\includegraphics[width=3cm]{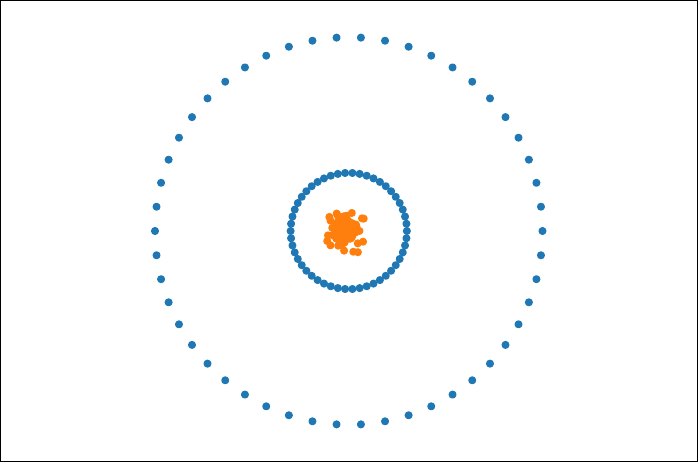} &
					\includegraphics[width=3cm]{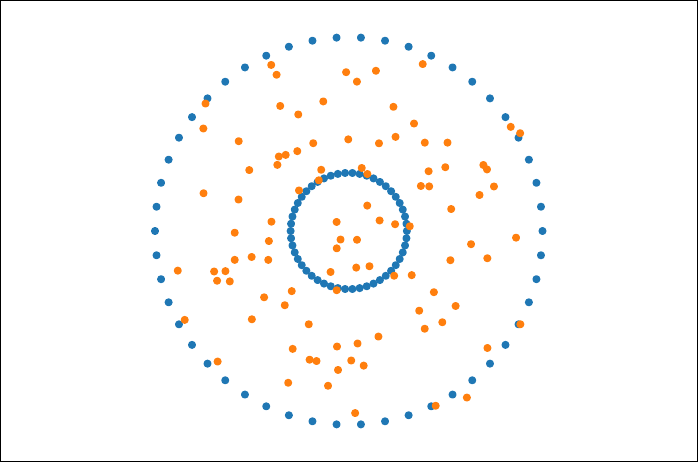} &
					\includegraphics[width=3cm]{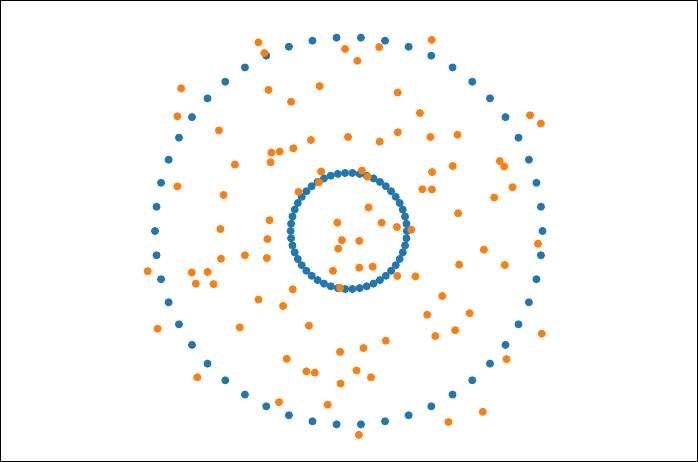} &
					\includegraphics[width=3cm]{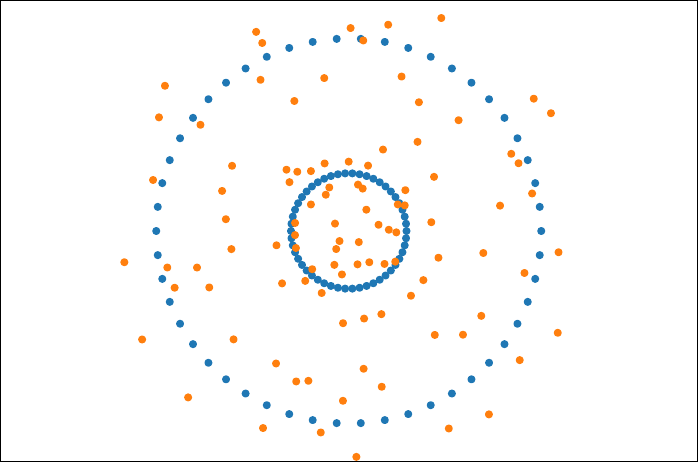} \\
					\vspace{-4pt}
					\rotatebox{90}{\parbox{2cm}{\centering $\varepsilon=0.1$}} &
					\includegraphics[width=3cm]{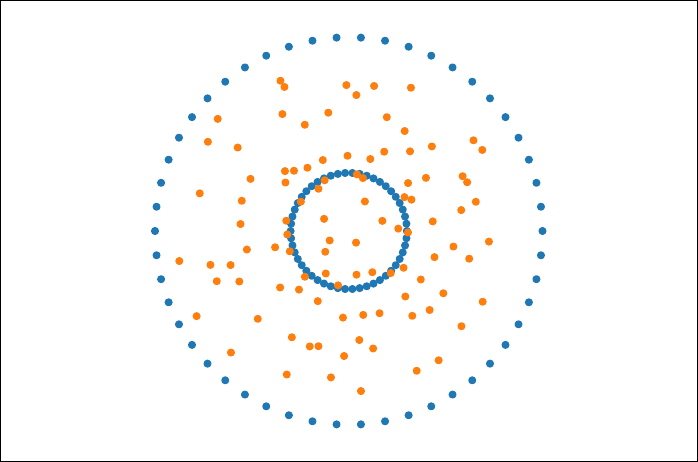} &
					\includegraphics[width=3cm]{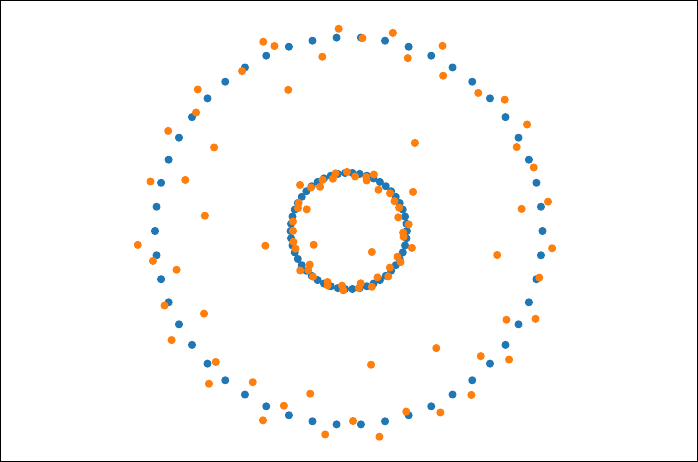} &
					\includegraphics[width=3cm]{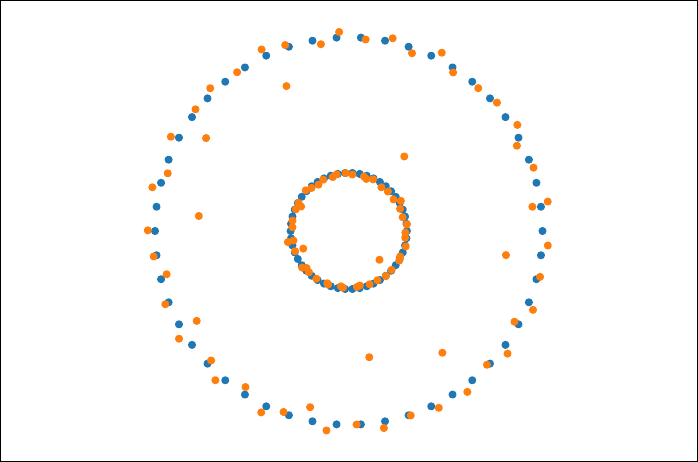} &
					\includegraphics[width=3cm]{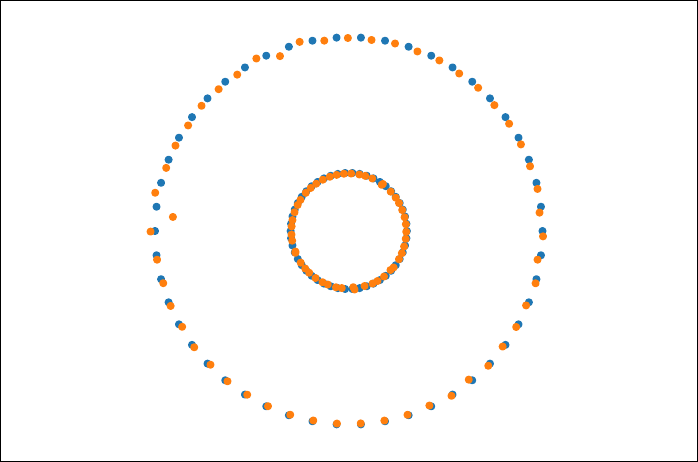} \\
					\vspace{-2pt}
					\rotatebox{90}{\parbox{2cm}{\centering $\varepsilon=0.01$}} &
					\includegraphics[width=3cm]{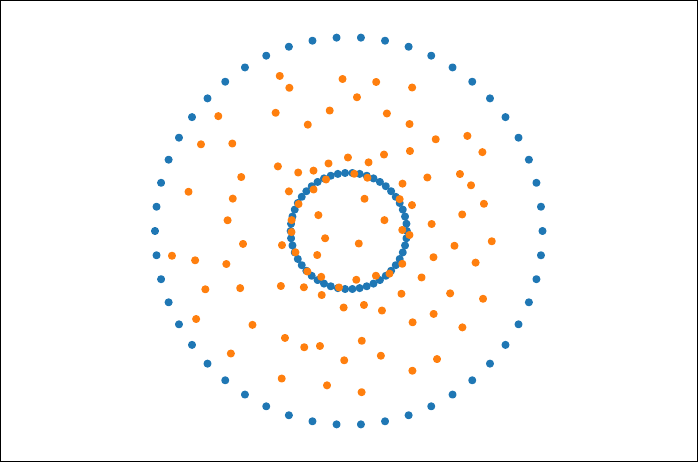} &
					\includegraphics[width=3cm]{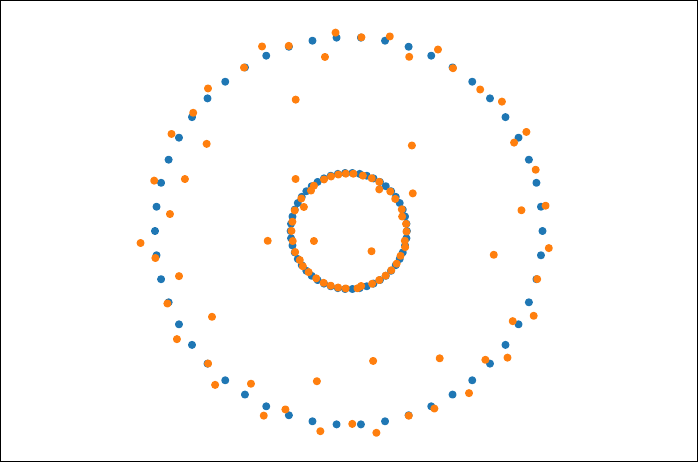} &
					\includegraphics[width=3cm]{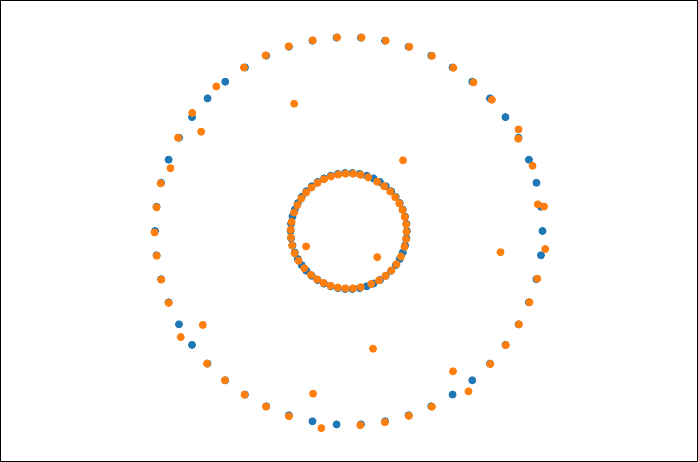} &
					\includegraphics[width=3cm]{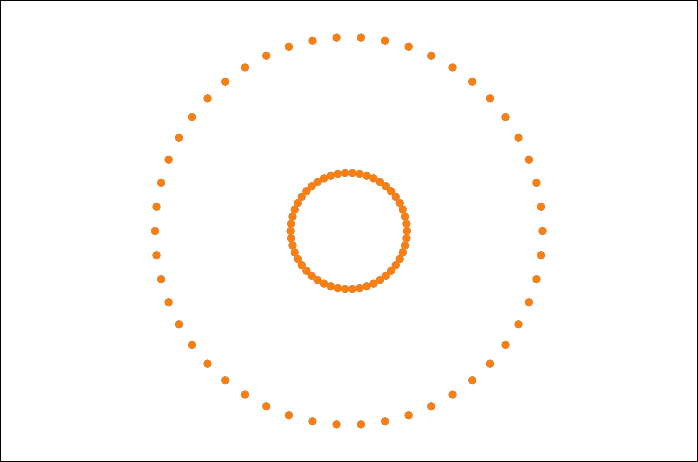} \\

					& \multicolumn{1}{c}{$t=3$} & \multicolumn{1}{c}{$t=15$} & \multicolumn{1}{c}{$t=30$} & \multicolumn{1}{c}{$t=150$} \\
				\end{tabular}
				\caption{SND}
				\label{fig:Annulus_SND}
			\end{subfigure}
			
			\begin{subfigure}{\textwidth}
				\centering\footnotesize
				\begin{tabular}{c c c c c}  
					\vspace{-4pt}
					\rotatebox{90}{\parbox{2cm}{\centering $\varepsilon=1$}} &
					\includegraphics[width=3cm]{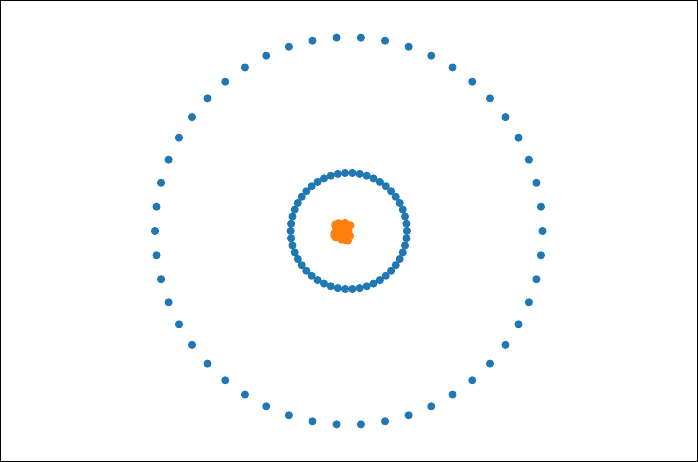} &
					\includegraphics[width=3cm]{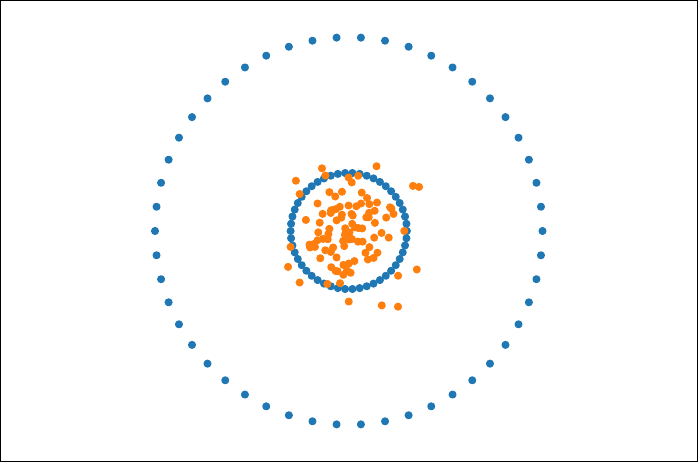} &
					\includegraphics[width=3cm]{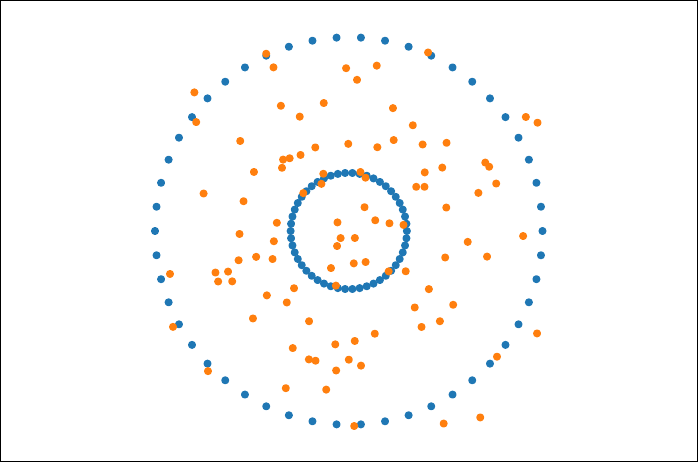} &
					\includegraphics[width=3cm]{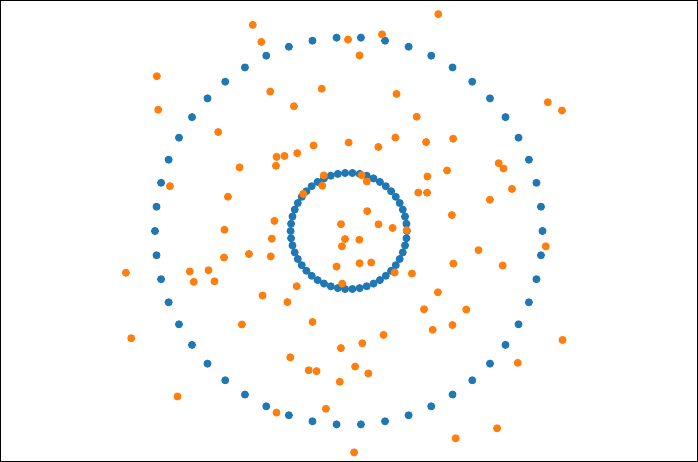} \\
					\vspace{-4pt}
					\rotatebox{90}{\parbox{2cm}{\centering $\varepsilon=0.1$}} &
					\includegraphics[width=3cm]{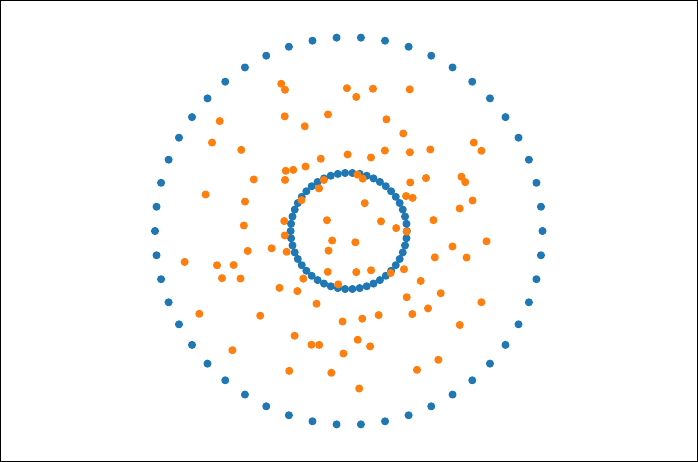} &
					\includegraphics[width=3cm]{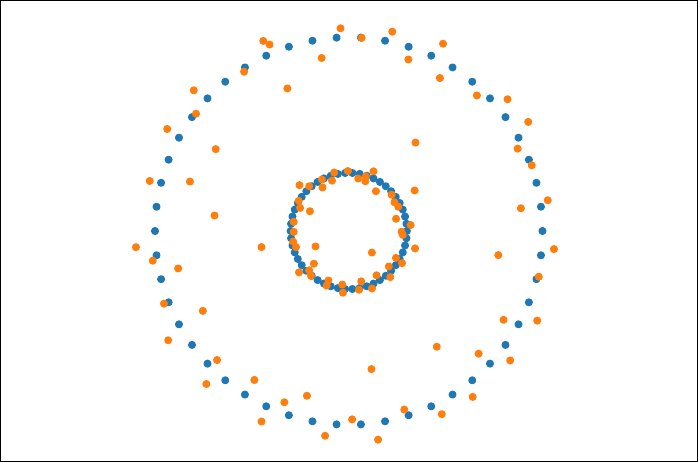} &
					\includegraphics[width=3cm]{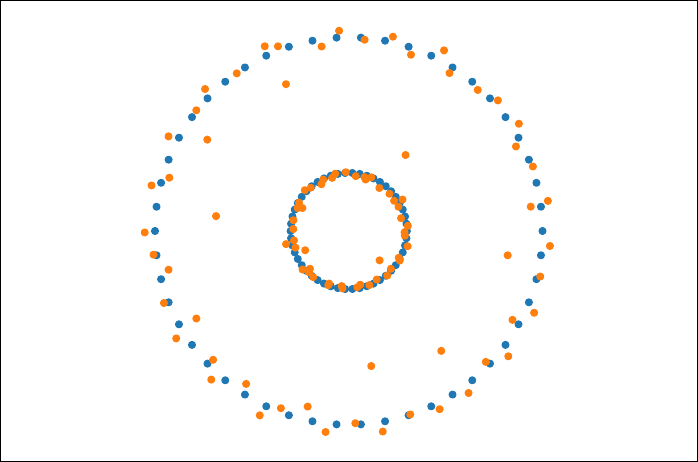} &
					\includegraphics[width=3cm]{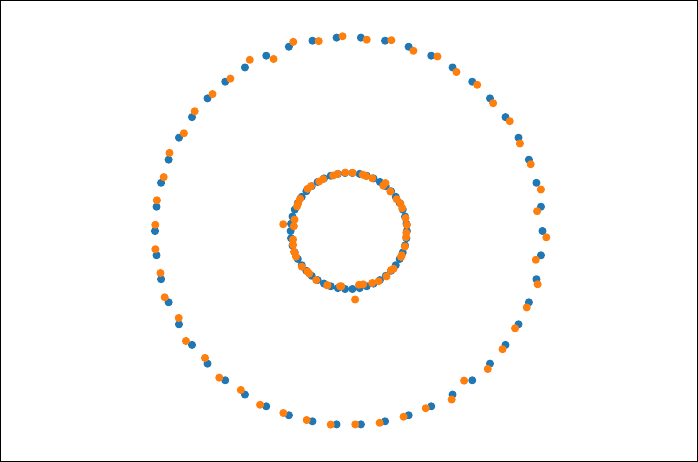} \\
					\vspace{-2pt}
					\rotatebox{90}{\parbox{2cm}{\centering $\varepsilon=0.01$}} &
					\includegraphics[width=3cm]{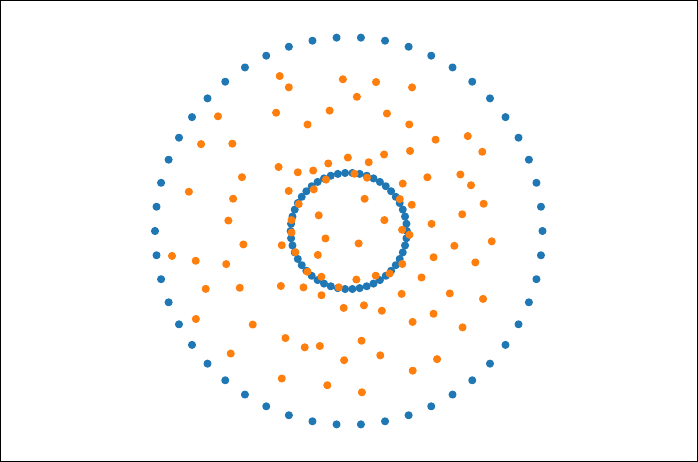} &
					\includegraphics[width=3cm]{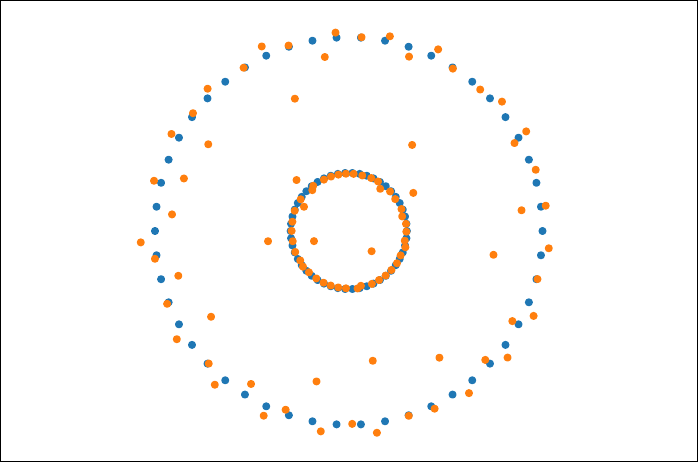} &
					\includegraphics[width=3cm]{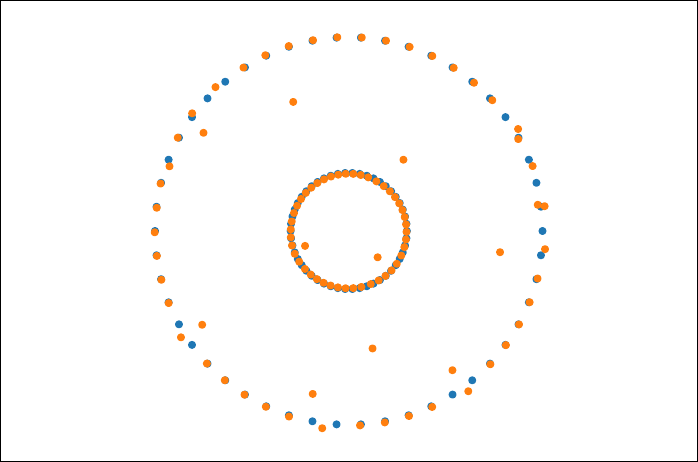} &
					\includegraphics[width=3cm]{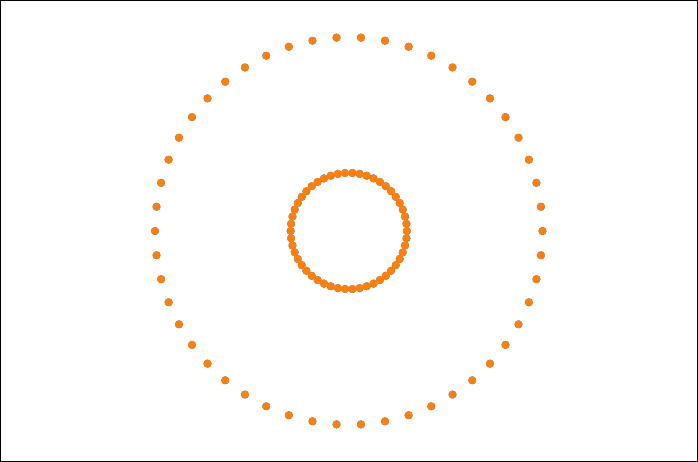} \\
					& \multicolumn{1}{c}{$t=3$} & \multicolumn{1}{c}{$t=15$} & \multicolumn{1}{c}{$t=30$} & \multicolumn{1}{c}{$t=150$} \\
				\end{tabular}
				\caption{SND4}
				\label{fig:Annulus_SND4}
			\end{subfigure}
		\end{figure}
		\begin{figure}[ht]\ContinuedFloat
			
			\begin{subfigure}{\textwidth}
				\centering\footnotesize
				\begin{tabular}{c c c c c} 
					\vspace{-2pt}
					\rotatebox{90}{\parbox{2cm}{}} &
					\includegraphics[width=3cm]{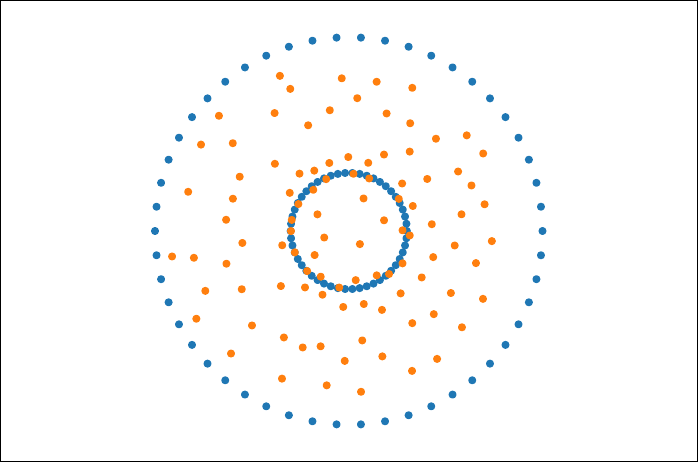} &
					\includegraphics[width=3cm]{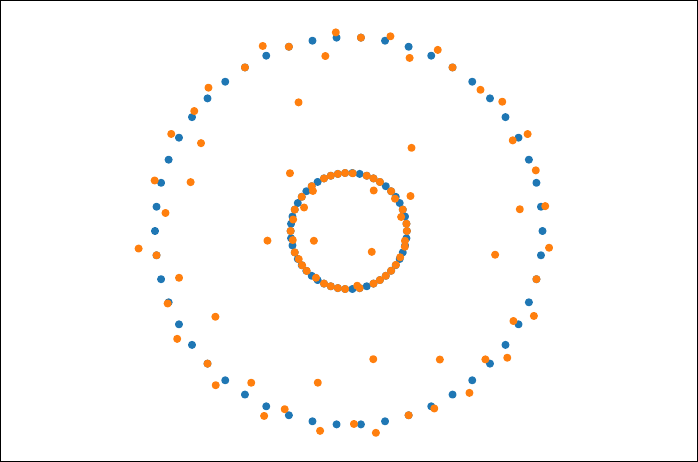} &
					\includegraphics[width=3cm]{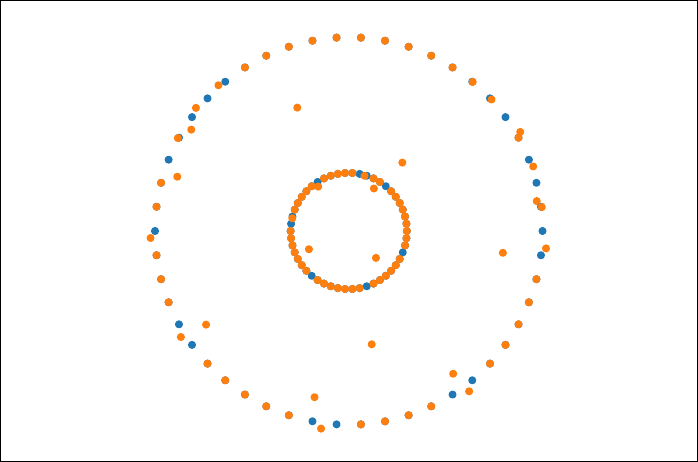} &
					\includegraphics[width=3cm]{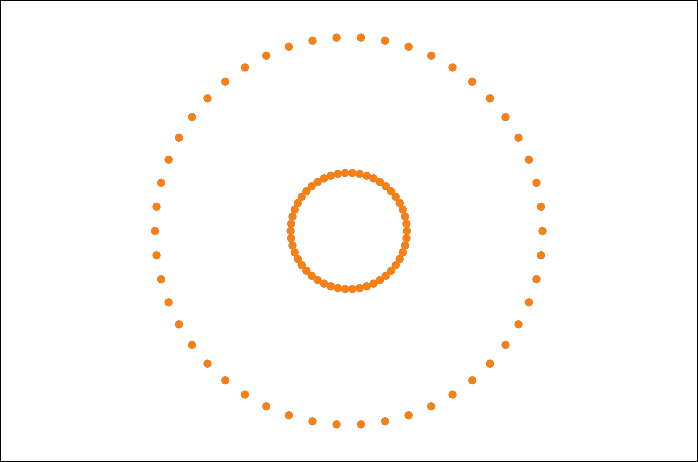} \\
					
					& \multicolumn{1}{c}{$t=3$} & \multicolumn{1}{c}{$t=15$} & \multicolumn{1}{c}{$t=30$} & \multicolumn{1}{c}{$t=150$} \\
				\end{tabular}
				\caption{ND}
				\label{fig:Annulus_ND}
			\end{subfigure}
			\caption{MMD flow \eqref{eq:mmd-flow} with step size $\tau=0.02$.}
			\label{fig:Annulus_flow}
		\end{figure}

		\begin{figure}
			\centering
			\includegraphics[width=6cm]{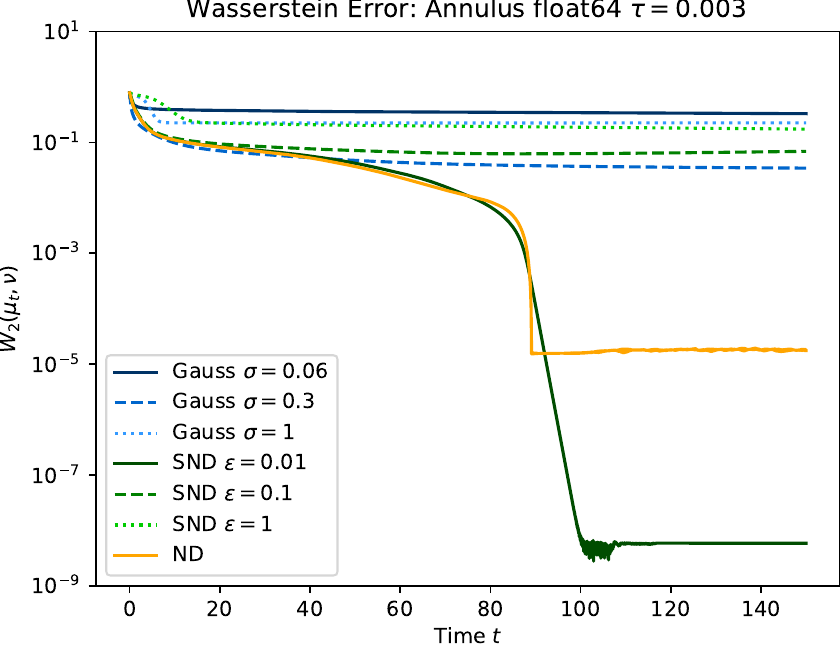}
			\includegraphics[width=6cm]{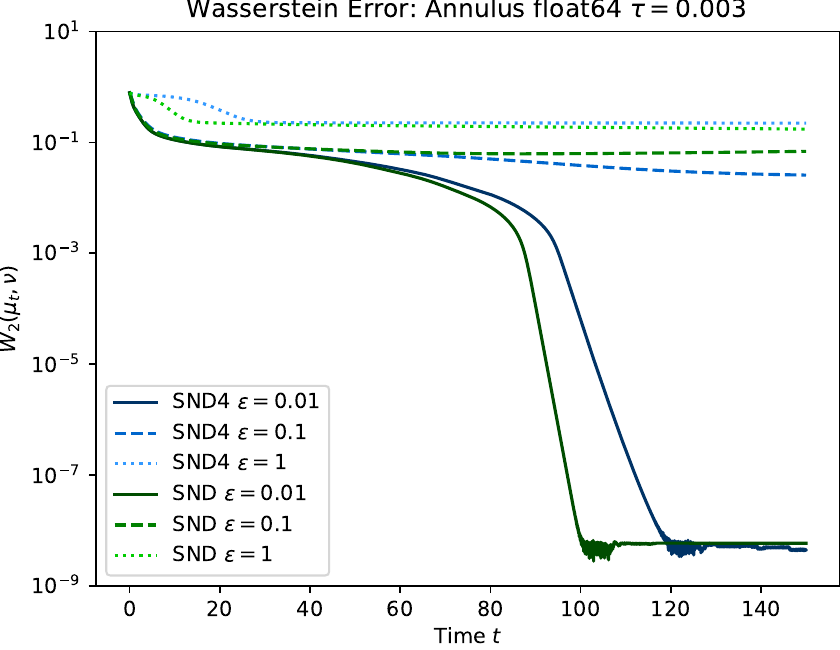}
			\caption{
				Wasserstein 2 error between Annulus target $\nu$ and flow  $\gamma_n^\tau$ after $n$ iterations. Horizontal axis in time $t=\tau n$. Both computed with double precision and step size $\tau=0.003$.
			}
			\label{fig:Annulus_W2}
		\end{figure}

		\paragraph{Computational Times.}
		
		Table \ref{tab:runtime} provides an overview of the runtime of the four considered kernels for the Annulus target. The SND4 is significantly slower than the SND, due to the more complicated structure, see Example~\ref{ex:spline_2}.
		However, as we saw above, it offers barely an advantage in accuracy.
		
		\begin{table}[]
			\centering
			\begin{tabular}{l|cccc}
				Kernel  & Gauss     & SND       & ND      & SND4 \\
				\hline
				Time (s) & $\num{11.44} $ & $\num{20.07} $ & $\num{13.63} $ & $\num{33.37} $\\
			\end{tabular}
			\caption{Runtime in seconds for the Annulus target on a GPU for $50\,000$ gradient steps, averaged over $3$ runs each.}
			\label{tab:runtime}
		\end{table}
		
	}
	
\end{document}